\theoremstyle{thmstyletwo}%
\newtheorem{theorem}{Theorem}[section]
\newtheorem{proposition}[theorem]{Proposition}%
\newtheorem{lemma}[theorem]{Lemma}
\newtheorem{corollary}[theorem]{Corollary}
\newtheorem{remark}[theorem]{Remark}%
\newtheorem{definition}[theorem]{Definition}
\numberwithin{equation}{section}
\begin{document}

\DOI{DOI HERE}
\copyrightyear{2022}
\vol{00}
\pubyear{2022}
\access{Advance Access Publication Date: Day Month Year}
\appnotes{Paper}
\copyrightstatement{Published by Oxford University Press on behalf of the Institute of Mathematics and its Applications. All rights reserved.}
\firstpage{1}


\title[Non-Asymptotic Analysis of Ensemble Kalman Updates]{Non-Asymptotic Analysis of Ensemble Kalman Updates:\\ Effective Dimension and Localization}

\author{Omar Al Ghattas* and Daniel Sanz-Alonso
\address{\orgdiv{Department of Statistics}, \orgname{The University of Chicago} }}

\authormark{Omar Al Ghattas and Daniel Sanz-Alonso}

\corresp[*]{Corresponding author: \href{ghattas@uchicago.edu}{ghattas@uchicago.edu}}

\received{Date}{0}{Year}
\revised{Date}{0}{Year}
\accepted{Date}{0}{Year}


\abstract{Many modern algorithms for inverse problems and data assimilation rely on ensemble Kalman updates to blend prior predictions with observed data. Ensemble Kalman methods often perform well with a small ensemble size, which is essential in applications where generating each particle is costly. This paper develops a non-asymptotic analysis of ensemble Kalman updates that rigorously explains why a small ensemble size suffices if the prior covariance has moderate effective dimension due to fast spectrum decay or approximate sparsity. We present our theory in a unified framework, comparing several implementations of ensemble Kalman updates that use perturbed observations, square root filtering, and localization. As part of our analysis, we develop new dimension-free covariance estimation bounds for approximately sparse matrices that may be of independent interest.} 
\keywords{ensemble Kalman updates; effective dimension; localization; covariance estimation.}


\maketitle

\section{Introduction} \label{sec:Introduction}
Many algorithms for inverse problems and data assimilation rely on ensemble Kalman updates to blend prior predictions with observed data. The main motivation behind ensemble Kalman methods is that they often perform well with a small ensemble size $N$, which is essential in applications where generating each particle is costly. However, theoretical studies have primarily focused on large ensemble asymptotics, that is, on the limit $N\to \infty.$ While these \emph{mean-field} results are mathematically interesting and have led to significant practical improvements, they fail to explain the  empirical success of ensemble Kalman methods when deployed with a small ensemble size. The aim of this paper is to develop a \emph{non-asymptotic} analysis of ensemble Kalman updates that rigorously explains why, and under what circumstances, a small ensemble size may suffice. To that end, we establish non-asymptotic error bounds in terms of suitable notions of effective dimension  of the prior covariance model that account for spectrum decay (which may represent smoothness of a prior random field) and approximate sparsity (which may represent spatial decay of correlations). Our work complements mean-field analyses of ensemble Kalman updates and identifies scenarios where mean-field behavior holds with moderate $N$.

In addition to demystifying the practical success of ensemble Kalman methods with a small ensemble size,  our non-asymptotic perspective allows us to tell apart, on accuracy grounds, implementations of ensemble Kalman updates that use perturbed observations and square root filtering. These implementations become equivalent in the large $N$ limit, and therefore their differences in accuracy cannot be captured by asymptotic results. Furthermore, our non-asymptotic perspective provides new understanding on the importance of localization, a procedure widely used by practitioners that involves tapering or ``localizing'' empirical covariance estimates to avoid spurious correlations.

Rather than providing a complete, definite analysis of any particular ensemble Kalman method, our goal is to bring to bear a new set of tools from high-dimensional probability and statistics to the study of these algorithms. In particular, our work builds on and contributes to the theory of high-dimensional covariance estimation, which we believe is fundamental to the understanding of ensemble Kalman methods. To make the presentation accessible to a wide audience, we assume no background knowledge on covariance estimation or on ensemble Kalman methods. 

\subsection{Problem Description}
Consider the inverse problem of recovering $u \in \R^d$ from data $y \in \R^k,$ corrupted by noise $\eta,$ where
\begin{equation}\label{eq:IP}
y = \mcG(u) + \eta,
\end{equation}
$\mcG: \R^{d} \to \R^k$ is the forward model, and $\eta \sim \P_\eta = \mcN(0, \Gamma)$ is the observation error with positive-definite covariance matrix $\Gamma$. An ensemble Kalman update takes as input a \textit{prior ensemble} $\{ u_n \}_{n=1}^N$ and observed data $y$, and returns as output an \textit{updated ensemble} $\{ \upsilon_n \}_{n=1}^N$ that blends together the information in the prior ensemble and in the data. Two main types of problems will be investigated: posterior approximation and sequential optimization. In the former, ensemble Kalman updates are used to approximate a posterior distribution in a Bayesian linear setting; in the latter, they are used within optimization algorithms for nonlinear inverse problems. 

\subsubsection{Posterior Approximation}
If the forward model is linear, i.e. $\mcG(u) = Au$ for some matrix $A \in \R^{k \times d}$, and $A$ is ill-conditioned or $d \gg k$, naive inversion of the data by means of the (generalized) inverse of $A$ results in an amplification of small observation error $\eta$ into large error in the reconstruction of $u$. In such situations, regularization is needed to stabilize the solution. To this end, one may adopt a Bayesian approach and place a Gaussian prior on the unknown $u \sim \P_u = \mcN(m, C)$ with positive-definite $C$; the prior distribution then acts as a probabilistic regularizer. The Bayesian solution to the inverse problem \eqref{eq:IP} is a full characterization of the posterior distribution $\P_{u | y}$, that is, the distribution of $u$ given $y.$ A standard calculation shows that $\P_{u | y} = \mcN(\mu, \Sigma)$, with 
\begin{align}\label{eq:KFmeancovariance}
\begin{split}
 \mpost  &=  \mpr + \Cpr A^\top (A \Cpr A^\top + \Gamma)^{-1} (y - A\mpr),  \\ 
\Cpost &= \Cpr - \Cpr A^\top(A\Cpr A^\top + \Gamma)^{-1} A \Cpr, 
\end{split}
\end{align}
which require storage of $d \times d$ matrices and consequently are difficult to compute explicitly when the state dimension $d$ is large. 
A posterior-approximation ensemble Kalman update transforms a prior ensemble $\{ u_n \}_{n=1}^N$ drawn from $\P_u$ into an updated ensemble $\{ \upsilon_n \}_{n=1}^N$ whose sample mean and sample covariance approximate the mean and covariance of $\P_{u | y}.$ Ensemble Kalman updates enjoy a low computational and memory cost when the ensemble size $N$ is smaller than the state dimension $d.$  In Section \ref{sec:withoutlocalization} we establish non-asymptotic error bounds that ensure that if $N$ is larger than a suitably defined \emph{effective dimension}, then the sample mean and sample covariance of the updated ensemble approximate well the true posterior mean and covariance in \eqref{eq:KFmeancovariance}. We refer to methods that are capable of approximating well the posterior $\P_{u|y}$ in a linear-Gaussian setting as \textit{posterior-approximation} algorithms. 

\subsubsection{Sequential Optimization}
When faced with a general nonlinear model $\mcG$, exact characterization of the posterior can be challenging. One may then opt for an optimization framework and solve the inverse problem \eqref{eq:IP} by minimizing a user-chosen objective function. Starting from a prior ensemble $\{ u_n \}_{n=1}^N$ drawn from a measure $\P_u$ that encodes prior beliefs about $u$, an ensemble Kalman update returns an updated ensemble $\{ \upsilon_n \}_{n=1}^N$ whose sample mean approximates the desired minimizer. The process can  be iterated by taking the updated ensemble to be the prior ensemble of a new ensemble Kalman update. Under suitable conditions on $\mcG,$ and after a sufficient number of such updates, all particles in the ensemble collapse into the minimizer of the objective. Ensemble Kalman optimization algorithms are derivative-free methods, and are therefore particularly useful when derivatives of the model $\mcG$ are unavailable or expensive to compute. As for posterior-approximation algorithms, implementing each update has low computational and memory cost when the ensemble size $N$ is small. In Section \ref{sec:withlocalization} we will establish non-asymptotic error bounds that ensure that if $N$ is larger than a suitably defined \emph{effective dimension}, then each particle update $u_n \mapsto \upsilon_n,$ $1 \le n \le N,$ approximates well an idealized mean-field update computed with an infinite number of particles; this suggests that the evolution of particles along an ensemble-based sequential optimizer is close to an idealized mean-field evolution. We refer to methods that solve the inverse problem \eqref{eq:IP} by minimization of an objective function as \textit{sequential-optimization} algorithms.

\subsection{Summary of Contributions and Outline}
\begin{itemize}
    \item Section \ref{sec:withoutlocalization} is concerned with posterior-approximation algorithms. The main results, Theorems \ref{lem:meanwithoutlocExpectation} and \ref{lem:covwithoutlocExpectation}, give non-asymptotic bounds on the estimation of the posterior mean and covariance in terms of a standard notion of effective dimension that accounts for spectrum decay in the prior covariance model. Our analysis explains the statistical advantage of square root updates over perturbed observation ones. We also discuss the deterioration of our bounds in small noise limits where the prior and the posterior become mutually singular.
    \item Section \ref{sec:withlocalization} is concerned with sequential-optimization algorithms. The main results, Theorems \ref{thm:EKI} and \ref{thm:LEKI}, give non-asymptotic bounds on the  approximation of mean-field particle updates using ensemble Kalman updates with and without localization. Our analysis explains the advantage of localized updates if the prior covariance satisfies a soft-sparsity condition. For the study of localized updates, we show in Theorems \ref{thm:SoftSparistyCovarianceBound} and \ref{thm:SoftSparistyCrossCovarianceBound} new dimension-free covariance estimation bounds in terms of a new notion of effective dimension that simultaneously accounts for spectrum decay and approximate sparsity in the prior covariance model. 
    \item Section \ref{sec:conclusions} concludes with a summary of our work and several research directions that stem from our non-asymptotic analysis of ensemble Kalman updates. We also discuss the potential and limitations of localization in posterior-approximation algorithms. 
    \item The proofs of all our results are deferred to three appendices.
\end{itemize}

\subsection{Related Work}
Ensemble Kalman methods ---overviewed in \cite{evensen2009data,katzfuss2016understanding,houtekamer2016review,roth2017ensemble,chada2021iterative,sanzstuarttaeb}--- first appeared as filtering algorithms in the data assimilation literature \cite{evensen1994sequential,evensen1996assimilation,burgers1998analysis, houtekamer1995methods,houtekamer1998data}. The goal of data assimilation is to estimate a time-evolving state as new observations become available \cite{reich2015probabilistic,asch2016data,law2015data,majda2012filtering,van2015nonlinear,sarkka2013bayesian,sanzstuarttaeb}. Ensemble Kalman filters (EnKFs) solve an inverse problem of the form \eqref{eq:IP} every time a new observation is acquired. In that filtering context, \eqref{eq:IP} encodes the relationship between the state $u$ and observation $y$ at a given time $t,$ and  the prior on $u$ is specified by propagating a probabilistic estimate of the state at time $t-1$ through the dynamical system that governs the state evolution. To approximate this prior, EnKFs propagate an ensemble of $N$ particles through the dynamics, and subsequently update this prior  \emph{forecast} ensemble into an updated \emph{analysis} ensemble that assimilates the new observation. Thus, an ensemble Kalman update is performed every time a new observation is acquired. The goal is that the sample mean and sample covariance of the updated ensemble approximate well the mean and covariance of the filtering distribution, that is, the conditional distribution of the state at time $t$ given all observations up to time $t$. While only giving provably accurate posterior approximation in linear settings \cite{ernst2015analysis}, EnKFs are among the most popular methods for high-dimensional nonlinear filtering, in particular in numerical weather forecasting. In such applications the state dimension can be very large, but the effective dimension of the filter update is often much lower due to smoothness of the state and decay of correlations in space. Moreover, in practice the analysis step can be constrained to the subspace determined by the expanding directions of the dynamics \cite{trevisan2004assimilation}.

The papers \cite{gu2007iterative,li2007iterative,reynolds2006iterative} introduced ensemble Kalman methods for inverse problems in petroleum engineering and the geophysical sciences. Application-agnostic ensemble Kalman methods for inverse problems were developed in \cite{iglesias2013ensemble,iglesias2016regularizing}, inspired by classical regularization schemes  \cite{hanke1997regularizing}. Since then, a wide range of sequential-optimization algorithms for inverse problems have been proposed that differ in the objective function they seek to minimize and in how ensemble Kalman updates are implemented. We refer to Subsection \ref{ssec:ensembleposteriorapprox} for further background and to \cite{chada2021iterative} for a review.

Ensemble Kalman methods for inverse problems and data assimilation have been studied extensively from a large $N$ asymptotic point of view, see e.g. \cite{li2008numerical,le2009large,mandel2011convergence,kwiatkowski2015convergence,ernst2015analysis,del2018stability,herty2019kinetic,law2016deterministic,garbuno2020interacting,bishop2020mathematical,chen2021auto,ding2021ensemble}. A complementary line of work \cite{harlim2010catastrophic,gottwald2013mechanism,kelly2015concrete,tong2015nonlinear,tong2016nonlinear} has focused on challenges faced by ensemble Kalman methods, including loss of stability and catastrophic filter divergence. 
Two overarching themes that underlie large $N$ asymptotic analyses are to ensure consistency and to derive equations for the mean-field evolution of the ensemble. Related to this second theme, several works (e.g. \cite{schillings2017analysis,blomker2018strongly,blomker2019well,guth2020ensemble,chada2021iterative,tong2022localized}) set the analysis in a \emph{ continuous time limit}; the idea is to view Kalman updates as occurring over an artificial discrete-time variable, and then take the time between updates to be infinitesimally small to formally derive differential equations for the evolution of the ensemble or its density. Large $N$ asymptotics and continuous time limits have resulted in new theoretical insights and practical advancements. However, an important caveat of these results is that they cannot tell apart implementations of ensemble Kalman methods that become equivalent in large $N$ or continuous time asymptotic regimes. Moreover, several papers (e.g. \cite{bergemann2010localization,bergemann2010mollified,kelly2014well,schillings2017analysis,majda2018performance}) have noted that large $N$ asymptotic analyses fail to explain empirical results that report good performance with a moderately sized ensemble in problems with high state dimension; for instance, $d \sim 10^9$ and $N \sim 10^2$ in operational numerical weather prediction. Finally, the note \cite{nusken2019note}  shows subtle but important differences in the evolution of interacting particle systems with finite ensemble size when compared to their mean-field counterparts \cite{garbuno2020interacting}.

In this paper we adopt a non-asymptotic viewpoint to establish sufficient conditions on the ensemble size for posterior-approximation and sequential-optimization algorithms.  Empirical evidence in \cite{ott2004local} suggests that there is a sample size $N^*$ above which ensemble Kalman methods are effective.
The seminal work \cite{furrer2007estimation} conducts insightful explicit calculations that motivate our more general theory. Following the analysis of ensemble Kalman methods in  \cite{furrer2007estimation} and the study of importance sampling and particle filters in \cite{agapiou2017importance,sanz2018importance,sanz2020bayesian,bickel2008sharp,snyder2008obstacles,BBL08,snyder2011alone,chorin2013conditions,snyderberngtsson}, we focus on analyzing a single ensemble Kalman update rather than on investigating the propagation of error across multiple updates. While in practice ensemble Kalman methods for posterior approximation in data assimilation and for sequential optimization in inverse problems often perform many updates, focusing on a single update enables us to clearly demonstrate the tight connection between the sample complexity of ensemble updates and the effective dimension of the prior; additionally, for some posterior-approximation algorithms our theory generalizes in a straightforward way to multi-step implementations, as we shall demonstrate in Section \ref{sec:withoutlocalization}. 
More importantly, the focus on a single update allows us to tell apart, on accuracy grounds, perturbed observations and square root implementations of ensemble Kalman updates, as well as implementations with and without localization. Similar considerations motivate the study of sufficient sample size for importance sampling in \cite{morzfeld2017collapse,snyder2008obstacles,agapiou2017importance,chatterjee2018sample,sanz2018importance,sanz2020bayesian}, where the focus on a single update facilitates establishing clear comparisons between standard and optimal proposals, and identifying meaningful notions of dimension to characterize necessary and sufficient conditions on the required sample size. Our work builds on and develops tools from high-dimensional probability and statistics  \cite{wainwright2019high,vershynin2018high,bickel2008covariance,levina2012partial,chen2012masked,cai2012adaptive,cai2012minimax}. In particular, we bring to bear thresholded \cite{bickel2008covariance,cai2012adaptive} and masked covariance estimators \cite{levina2012partial,chen2012masked} to the understanding of localization in ensemble Kalman methods. In so doing, we
 establish new dimension-free covariance and cross-covariance estimation bounds under approximate sparsity ---see Theorems \ref{thm:SoftSparistyCovarianceBound} and \ref{thm:SoftSparistyCrossCovarianceBound}.

\subsection{Notation}
 Given two positive sequences $\{a_n\}$ and $\{ b_n\}$, the relation $a_n \lesssim b_n$ denotes that $a_n \le c b_n$ for some constant $c>0$. If the constant $c$ depends on some quantity $\tau$, then we write $a \lesssim_{\tau} b$. If both $a_n \lesssim b_n $ and $b_n \lesssim a_n$ hold simultaneously, then we write $a_n \asymp b_n$. Throughout, we denote positive universal constants by $c,c_1,c_2,c_3,c_4$, and the value of a universal constant may differ from line to line. For a vector $v \in \R^N$, $\|v\|_p^p = \sum_{n=1}^N |v_n|^p$. For a matrix $A \in \R^{n \times m}$, the operator norm is given by $\normn{A} = \sup_{\normn{v}_2=1} \|Av\|_2$. $\mcS^d_+$ denotes the set of $d\times d$ symmetric positive-semidefinite matrices, and $\mcS^d_{++}$ denotes the set of $d\times d$ symmetric positive-definite matrices. $A^\dagger$ denotes the pseudo-inverse of $A$. $1_N$ denotes the $N$-dimensional vector vector of ones, $0_d$ denotes the $d$-dimensional vector of zeroes, and $O_{d\times k}$ is the $d\times k$ matrix of zeroes.  $\indicator_B$ denotes the indicator of the set $B$. $\equiv$ denotes a definition. $\circ$ denotes the matrix Hadamard or Schur (elementwise) product. Given a non-decreasing, non-zero convex function $\psi:[0,\infty] \to [0, \infty]$ with $\psi(0)=0$, the Orlicz norm of a real random variable $X$ is $\normn{X}_{\psi} = \inf\{t > 0: \E [ \psi (t^{-1}|X|)] \le 1\}$. In particular, for the choice $\psi_p(x) \equiv e^{x^p}-1$ for $p \ge 1$, real random variables that satisfy $\normn{X}_{\psi_2} < \infty$ are referred to as sub-Gaussian. A random vector $X$ is sub-Gaussian if $\normn{v^\top X}_{\psi_2} < \infty$ for any $v$ such that $\normn{v}_2=1$. For a differentiable function $g: \R^d \to \R^k$,  $D g \in \R^{d \times k}$ denotes the Jacobian of $g.$

 All the methods we study have the same starting point of a prior ensemble 
\begin{align*}
   \upr_1, \dots, \upr_N \iid \mcN(\mpr, \Cpr),
\end{align*}
and observed data $y$ generated according to \eqref{eq:IP}, which are to be used in generating an updated ensemble $\{\upost_n\}_{n=1}^N$. 
We denote the prior sample means by
\begin{align*}
    \hatmpr &\equiv \frac{1}{N} \sum_{n=1}^N \upr_n,
    \qquad 
    \qquad 
    \barG \equiv \frac{1}{N} \sum_{n=1}^N \mcG(\upr_n),
\end{align*}
and the prior sample covariances by
\begin{align}\label{eq:samplecovariancedef}
\begin{split}
    \hatCpr \equiv \frac{1}{N-1} \sum_{n=1}^N (\upr_n- \hatmpr)
    &(\upr_n- \hatmpr)^\top,
    \qquad 
    \hatCpr^{\, pp} \equiv \frac{1}{N-1} \sum_{n=1}^N (\mcG(\upr_n)- \barG)
    (\mcG(\upr_n)- \barG)^\top,\\
    \qquad & 
    \hatCpr^{\, up} \equiv \frac{1}{N-1} \sum_{n=1}^N (\upr_n- \hatmpr)
    (\mcG(\upr_n)- \barG)^\top.
\end{split}
\end{align}
The population versions will be denoted by
\begin{align*}
    \Cpr^{\,pp} \equiv \E \insquare{ \bigl(\mcG(\upr_n) - \E [ \mcG(\upr_n)]\bigr) \bigl(\mcG(\upr_n) - \E[\mcG(\upr_n)]\bigr)^\top },
    \qquad 
    \Cpr^{\,up} \equiv \E \insquare{ \bigl(\upr_n - m \bigr) \bigl(\mcG(\upr_n) - \E[\mcG(\upr_n)]\bigr)^\top }.
\end{align*}

 \section{Ensemble Kalman Updates: Posterior Approximation Algorithms}\label{sec:withoutlocalization}

In posterior-approximation algorithms we consider the inverse problem \eqref{eq:IP} with a linear forward model, i.e.
\begin{align} \label{eq:LinearIP}
    y=Au + \eta, \qquad \eta \sim \mcN(0,\Gamma).
\end{align}
In order to establish comparisons between different posterior-approximation algorithms, as well as to streamline our analysis, we follow the exposition in \cite{kwiatkowski2015convergence} and introduce three operators that are central to the theory: the \emph{Kalman gain} operator $\msK$, the \emph{mean-update} operator $\msM,$ and the \emph{covariance-update} operator $\msC,$ defined respectively by
\begin{align}
  \msK: \mcS_+^d \to \R^{d \times k}, 
  \qquad &
  \msK(\Cpr; A ,\Gamma) = \msK(\Cpr) = \Cpr A^\top (A \Cpr A^\top + \Gamma)^{-1}, \label{eq:KalmanGainOperator}\\
  \msM: \R^d \times \mcS_+^d \to \R^{d},
  \qquad & 
  \msM(\mpr, \Cpr; A, y, \Gamma) = \msM(\mpr, \Cpr)  = \mpr + \msK(\Cpr; A, \Gamma) (y-A\mpr),\label{eq:MeanOperator}\\
  \msC: \mcS_+^d \to \mcS_+^d,
  \qquad &
  \msC(\Cpr; A, \Gamma) =
   \msC(\Cpr) = \bigl(I-\msK(\Cpr; A, \Gamma)A \bigr)\Cpr. \label{eq:CovarianceOperator}
\end{align}
The pointwise continuity and boundedness of all three operators was established in \cite{kwiatkowski2015convergence}, and we summarize these results in Lemmas~\ref{lem:KalmanOpCtyBdd}, \ref{lem:MeanOpCtyBdd}, and \ref{lem:CovarOpCtyBdd}. We note that the Kalman update \eqref{eq:KFmeancovariance} can be rewritten succinctly as
\begin{align}\label{eq:PosteriorMandelOps}
\begin{split}
    \mpost &= \msM(\mpr, \Cpr),\\ 
    \Cpost &= \msC(\Cpr).
\end{split}
\end{align}

\subsection{Ensemble Algorithms for Posterior Approximation}\label{ssec:ensembleposteriorapprox}
We study two main classes of posterior-approximation algorithms based on Perturbed Observation (PO) and  Square Root (SR) ensemble Kalman updates. In both implementations, the updated ensemble has sample mean $\hatmpost$ and sample covariance $\hatSigma$ that are, by design, consistent estimators of the posterior mean $\mu$ and covariance $\Sigma$ in \eqref{eq:PosteriorMandelOps}. Although PO and SR updates are asymptotically equivalent, differences between the two algorithms do exist in finite ensembles, and this difference is captured in our non-asymptotic analysis in Subsection~\ref{ssec:mainresults3}.

\subsubsection{Perturbed Observation Update} \label{sec:POensembleupdate}
The PO update, introduced in \cite{evensen1994sequential}, transforms each particle of the prior ensemble according to 
\begin{align*}
    \upost_n
    &=\upr_n+ \msK(\hatCpr ) \bigl(y  - A\upr_n- \eta_n\bigr)\\
    &= \msM(\upr_n , \hatCpr ) - \msK(\hatCpr )\eta_n, \qquad \eta_n \iid \mathcal{N}(0, \Gamma), \quad 1\le  n \le N.
\end{align*}
The form of the update is similar to the Kalman mean update \eqref{eq:PosteriorMandelOps} albeit with the $n$-th ensemble member being assigned a perturbed observation $y-\eta_n$. Consequently, denoting the sample mean of the perturbations by $\bareta \equiv N^{-1} \sum_{n=1}^N \eta_n$, the updated ensemble has sample mean
\begin{align*}
    \hatmpost 
    \equiv 
    \frac{1}{N} \sum_{n=1}^N \upost_n
    = \msM(\hatmpr, \hatCpr ) - \msK(\hatCpr ) \bareta,
\end{align*}
and sample covariance
\begin{align}\label{eq:sigma}
\begin{split}
    \hatCpost
    &\equiv \frac{1}{N-1} \sum_{n=1}^N
    \bigl(\upost_n- \hatmpost \bigr) \bigl(\upost_n- \hatmpost \bigr)^\top   \\
    &=\bigl(I - \msK(\hatCpr) A \bigr) 
    \hatCpr  \bigl(I - \msK(\hatCpr)  A \bigr)^\top 
     + 
    \msK(\hatCpr) \widehat{\Gamma} \msK^\top(\hatCpr) \\
    & \hspace{2cm} -  
    \bigl(I-\msK(\hatCpr) A\bigr) 
    \whatC^{\, u \eta}
    \msK^\top(\hatCpr)
     -   \msK(\hatCpr) (\whatC^{\, u \eta})^{\top} \bigl(I-A^\top\msK^\top(\hatCpr)\bigr),
\end{split}
\end{align}
 where 
\begin{align*}
    \widehat{\Gamma}
    \equiv \frac{1}{N-1} \sum_{n=1}^{N} 
    (\eta_n - \bareta_N)(\eta_n - \bareta)^\top, 
    \qquad 
    \text{and }
    \qquad 
    \whatC^{\, u\eta}
    \equiv \frac{1}{N-1} \sum_{n=1}^{N} 
    (\upr_n - \hatmpr)(\eta_n - \bareta)^\top.
\end{align*}

To facilitate comparison with the Kalman update in \eqref{eq:PosteriorMandelOps}, we rewrite the PO update as follows: 
\begin{align}\label{eq:POupdate}
\begin{split}
   \hatmpost &= \msM(\hatmpr, \hatCpr ) - \msK(\hatCpr ) \bareta,  \\
   \hatCpost &= \msC(\hatCpr) + \widehat{O},
\end{split}
\end{align}
where the \textit{offset} term $\widehat{O},$ obtained as the difference between \eqref{eq:sigma} and $\msC(\hatCpr)$, 
is given by 
\begin{align}\label{eq:offset}
\widehat{O} =  \msK(\hatCpr) (\widehat{\Gamma} - \Gamma) \msK^\top(\hatCpr)
    -  \bigl(I-\msK(\hatCpr) A\bigr) 
    \whatC^{\, u \eta} 
    \msK^\top(\hatCpr)
      -   \msK(\hatCpr) (\whatC^{\, u \eta})^{\top} \bigl(I-A^\top\msK^\top(\hatCpr)\bigr).
\end{align}
The offset term $\widehat{O}$ was introduced in \cite[Proposition 4]{furrer2007estimation}. The addition of perturbations serves the purpose of correcting the sample covariance, in the sense that without perturbations the sample covariance is an inconsistent estimator of $\Sigma$. To see the consistency of the PO covariance estimator $\hatCpost$ in \eqref{eq:POupdate}, note that by Lemma~\ref{lem:CovarOpCtyBdd} the map $\msC$ is continuous, and so the continuous mapping theorem together with the fact that $\hatCpr$ is consistent for $\Cpr$ imply that $\msC(\hatCpr) \convp \msC(\Cpr) = \Sigma$. Further, the offset $\widehat{O}$ converges in probability to zero, which can be shown using that $\widehat{\Gamma} \convp \Gamma$, $\whatC^{\, u \eta} \convp O_{d \times k},$ and the continuity of $\msK$ established in Lemma~\ref{lem:KalmanOpCtyBdd}.

\subsubsection{Square Root Update} \label{sec:SRensembleupdate}

The PO update relies crucially on the added perturbations to maintain consistency and, as noted for example in \cite{evensen2004sampling, tippett2003ensemble, bishop2001adaptive}, is asymptotically equivalent to the exact posterior update \eqref{eq:KFmeancovariance}. However, for a finite ensemble of size $N$, the addition of random perturbations introduces an extra source of error into the ensemble Kalman update. The SR update, introduced in \cite{evensen2004sampling} and surveyed in \cite{tippett2003ensemble, lange2020mean}, is a deterministic alternative to the PO update. It updates the prior ensemble in a manner that ensures that $\hatSigma \equiv \msC(\hatC)$. This is achieved by first identifying a map $g:\R^{d \times N} \to \R^{d \times N}$ such that $\hatsrSigma = g(\hatsrC),$ where 
\begin{align*}
    \hatCpr= \hatsrC \hatsrC^\top, 
    \qquad \text{and} 
    \qquad \msC(\hatC) = \hatsrSigma\hatsrSigma^\top,
\end{align*}
with both factorizations guaranteed to exist since $\hatCpr, \msC(\hatC) \in \mcS^d_+$. Consistency of $\hatCpost$ can then be ensured by choosing $g$ to satisfy $g(\hatsrC) g(\hatsrC)^\top \equiv \msC(\hatCpr)$, with this being referred to as the \textit{consistency condition} in \cite{lange2020mean}. There are infinitely many such $g,$ each of which lead to a variant of the SR update. Here we describe two of the most popular variants in the literature as outlined in \cite{tippett2003ensemble}: the Ensemble Transform Kalman update \cite{bishop2001adaptive} and the Ensemble Adjustment Kalman update \cite{anderson2001ensemble} with respective transformations $g_T(\hatsrC) = \hatsrC T$ and $g_A(\hatsrC) = B\hatsrC$, for matrices $T$ and $B$. Both $g_T$ and $g_A$ are therefore linear maps, with $g_T$ post-multiplying $\hatsrC$, which implies a transformation on the $N$-dimensional space spanned by the ensemble, and $g_A$ pre-multiplying $\hatsrC$, so that the transformation is applied to the $d$-dimensional state-space instead. In both approaches we identify the relevant matrix by first writing 
\begin{align*}
        \hatsrSigma\hatsrSigma^\top
        =
        \msC(\hatC)
        = \hatsrC (I - V D^{-1} V^\top) \hatsrC^\top,
\end{align*}
where $V = (A \hatsrC)^\top$ and $D= V^\top V + \Gamma$.
\begin{enumerate}
    \item \textit{Ensemble Transform Kalman Update:} taking $\hatsrSigma = \hatsrC F U$ for any $F$ satisfying $FF^\top = I- VD^{-1}V^\top$ and arbitrary orthogonal $U$ satisfies the consistency condition. One approach for finding such a matrix $F$ is by rewriting 
    \begin{align*}
        I - V D^{-1} V^\top
        = 
        (I + \hatsrC^\top A^\top \Gamma^{-1} A \hatsrC)^{-1} = 
        E(I + \Lambda)^{-1/2} (I+\Lambda)^{-1/2}E^\top = FF^\top,
    \end{align*}
    where the first equality follows by the Sherman-Morrison formula, and $E \Lambda E^\top$ is the eigenvalue decomposition of $\hatsrC^\top A^\top \Gamma^{-1} A \hatsrC$. In summary, we have $g_{T}(\hatsrC) = \hatsrC E(I + \Lambda)^{-1/2} U$.
    
    \item \textit{Ensemble Adjustment Kalman Update:} Introducing $M = V \Gamma^{-1/2}$, we can write 
    \begin{align*}
        \hatsrC (I - V D^{-1} V^\top) \hatsrC^\top
        = \hatsrC (I + MM^\top)^{-1} \hatsrC^\top.
    \end{align*}
    Noting that $\hatsrC$ has full column rank, we may then define $B = \hatsrC (I + MM^\top)^{-1/2} (\hatsrC^\top)^\dagger$, and so 
    \begin{align*}
        g_A(\hatsrC) 
        = B \hatsrC =  \hatsrC (I + MM^\top)^{-1/2} (\hatsrC^\top)^\dagger\hatsrC  = \hatsrC (I + MM^\top)^{-1/2}.
    \end{align*}
\end{enumerate}

Once a choice of $g$ has been made, and an estimate $\hatCpost$ has been computed, the updated ensemble has first two moments given by 
\begin{align}\label{eq:SRupdate}
\begin{split}
    \hatmpost&= \msM(\hatmpr, \hatCpr),\\
    \hatCpost &= \msC(\hatCpr).
    \end{split}
\end{align}
Frequently, only $\hatmpost, \hatCpost$ are of concern to the practitioner, but it is still possible to \textit{back-out} the individual members of the updated ensemble as they may be of interest. It is clear that one choice for $\hatsrC$ is 
\begin{align*}
    \hatsrC  = \frac{1}{\sqrt{N-1}}
    \begin{bmatrix}
    \upr_1- \hatmpr,
    \cdots,
    \upr_N - \hatmpr
    \end{bmatrix},
\end{align*}
in which case it holds that $\hatsrC 1_N = 0_d$, and so 
\begin{align}\label{eq:SRensemblemembers}
    \upost_n = \sqrt{N-1}[\hatsrSigma]_n + \msM(\hatmpr, \hatCpr),
    \qquad 1 \le n \le N,
\end{align}
where $[\hatsrSigma]_n $ denotes the $n$-th column of $\hatsrSigma$. 

In Subsection \ref{ssec:mainresults3} we establish error bounds for the approximation of the posterior mean and covariance $(\mpost, \Cpost)$ in \eqref{eq:KFmeancovariance} by $(\hatmpost,\hatCpost)$ as estimated using the PO and SR updates in \eqref{eq:POupdate} and \eqref{eq:SRupdate}. It is clear from \eqref{eq:SRupdate} that as long as the choice of $g$ is valid, in the sense that the resulting $\hatSigma$ is consistent, then the specific choice of $g$ is irrelevant to the accuracy of a single SR update. We therefore make no assumptions in our subsequent analysis of the SR algorithm beyond that of $g$ satisfying the consistency condition. Note that, when compared to the SR update in \eqref{eq:SRupdate}, the PO update in \eqref{eq:POupdate} contains additional stochastic terms that will, as our bounds indicate, hinder the estimation of $(\mpost,\Cpost).$ As noted in the literature, for example in \cite{tippett2003ensemble}, the PO update increases the probability of underestimating the analysis error covariance. While our presentation and analysis of PO and SR updates is carried out in the linear-Gaussian setting, both updates are frequently utilized in nonlinear  and non-Gaussian settings, with empirical evidence suggesting that the PO updates can outperform SR updates \cite{lawson2004implications,leeuwenburgh2005impact}. In fact, the consistency argument outlined above is only valid in the linear case $\mcG(u) = Au,$ and the statistical advantage of SR implementations in linear settings may not translate into the nonlinear case. 

\subsection{Dimension-Free Covariance Estimation} \label{ssec:covest}
 We define the \emph{effective dimension} \cite{wainwright2019high} of  a matrix $Q \in \mcS^d_+$ by 
 \begin{align} \label{eq:effectiveDim}
    r_2(Q) \equiv \frac{\ttrace(Q)}{\normn{Q}}.   
 \end{align}
 The effective dimension quantifies the number of directions where $Q$ has significant spectral content \cite{tropp2015introduction}.
 The monographs \cite{tropp2015introduction,vershynin2018high} refer to $r_2(Q)$ as the intrinsic dimension, while \cite{koltchinskii2017concentration} uses the term effective rank.  This terminology is motivated by the observation that $1 \le r_2(Q) \le \text{rank} (Q) \le d $ and that $r_2(Q)$ is insensitive to changes in the scale of $Q$, see \cite{tropp2015introduction}.  In situations where the eigenvalues of $Q$ decay rapidly, $r_2(Q)$ is a better measure of dimension than the state dimension $d$. The following result \cite[Theorem 9]{koltchinskii2017concentration} gives a non-asymptotic sufficient sample size requirement for accurate covariance estimation in terms of the effective dimension of the covariance matrix. We recall that the sample covariance estimator $\hatCpr$ is defined in \eqref{eq:samplecovariancedef}.  

\begin{proposition}[Covariance Estimation with Sample Covariance ---Unstructured Case] \label{thm:Koltchinski}
    Let $\upr_1,\dots, \upr_N$ be $d$-dimensional i.i.d. sub-Gaussian random vectors with $\E [\upr_1]=m$ and $\tvar [\upr_1]=\Cpr$. Then, for all $t \ge 1$, it holds with probability at least $1- ce^{-t}$ that
    \begin{align*}
        \normn{\hatCpr  - \Cpr} \lesssim 
        \normn{\Cpr} \inparen{ \sqrt{\frac{r_2(\Cpr)}{N}} \lor \frac{r_2(\Cpr)}{N} 
        \lor \sqrt{\frac{t}{N}} \lor \frac{t}{N}}.
    \end{align*}
\end{proposition}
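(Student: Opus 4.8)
The plan is to reduce the statement to a bound on a quadratic empirical process indexed by an \emph{ellipsoid}, and then to read the effective dimension off the geometry of that ellipsoid. By translation invariance of $\hatCpr$ we may assume $m = 0$. Introduce the oracle (known-mean) estimator $\widetilde{C} \equiv N^{-1}\sum_{n=1}^N \upr_n\upr_n^\top$. The identity
\begin{align*}
\hatCpr \;=\; \frac{N}{N-1}\,\widetilde{C} \;-\; \frac{N}{N-1}\,\hatmpr\hatmpr^\top
\end{align*}
shows it is enough to (i) bound $\normn{\widetilde{C} - \Cpr}$ and (ii) bound $\normn{\hatmpr\hatmpr^\top} = \normn{\hatmpr}_2^2$ (the $N/(N-1)$ factor being harmless since $r_2(\Cpr)\ge 1$). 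For (ii), $\hatmpr$ is sub-Gaussian with covariance $\Cpr/N$, so a Hanson--Wright-type bound gives $\normn{\hatmpr}_2^2 \lesssim \ttrace(\Cpr)/N + t\normn{\Cpr}/N$ with probability at least $1-ce^{-t}$; since $\ttrace(\Cpr) = \normn{\Cpr}\,r_2(\Cpr)$, this is already of the claimed order. The substance of the proof is (i).

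For (i), restrict to the range of $\Cpr$ and whiten: write $\upr_n = \Cpr^{1/2} z_n$ with $z_1,\dots,z_N$ i.i.d.\ isotropic sub-Gaussian. Then, since $\widetilde{C}-\Cpr$ is symmetric,
\begin{align*}
\normn{\widetilde{C}-\Cpr} \;=\; \sup_{\normn{v}_2=1}\Bigl|v^\top(\widetilde{C}-\Cpr)v\Bigr| \;=\; \sup_{w\in T}\left|\frac1N\sum_{n=1}^N \langle z_n,w\rangle^2 - \normn{w}_2^2\right|,
\qquad T \equiv \Cpr^{1/2}B_2,
\end{align*}
where $B_2$ is the Euclidean unit ball, so $T$ is an ellipsoid with semi-axes $\lambda_j(\Cpr)^{1/2}$. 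For a \emph{fixed} $w$, the summands $\langle z_n,w\rangle^2 - \normn{w}_2^2$ are i.i.d., centered, and sub-exponential with $\psi_1$-norm $\lesssim \normn{w}_2^2$, so Bernstein's inequality controls the fixed-direction deviations at scale $\normn{w}_2^2(\sqrt{t/N}\lor t/N)$. To pass to the supremum I would invoke a generic-chaining bound for second-order (quadratic) empirical processes --- obtained by writing the increment $\langle z,w\rangle^2-\langle z,w'\rangle^2 = \langle z,w-w'\rangle\langle z,w+w'\rangle$ as a product of linear forms and chaining the resulting product process --- which yields, with probability at least $1-ce^{-t}$,
\begin{align*}
\sup_{w\in T}\left|\frac1N\sum_{n=1}^N \langle z_n,w\rangle^2 - \normn{w}_2^2\right| \;\lesssim\; \frac{\gamma_2(T)^2}{N} + \frac{R(T)\,\gamma_2(T)}{\sqrt N} + R(T)^2\left(\sqrt{\frac tN}\lor \frac tN\right),
\end{align*}
where $\gamma_2(T)$ is Talagrand's functional of $(T,\normn{\cdot}_2)$ and $R(T) \equiv \sup_{w\in T}\normn{w}_2$.

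It remains to compute the two geometric quantities for the ellipsoid $T$: clearly $R(T)=\normn{\Cpr}^{1/2}$, and the majorizing-measure estimate for ellipsoids (equivalently, $\gamma_2(T)\asymp \E\sup_{w\in T}\langle g,w\rangle = \E(\sum_j \lambda_j(\Cpr)g_j^2)^{1/2}$, which concentrates) gives $\gamma_2(T) \asymp \bigl(\sum_j \lambda_j(\Cpr)\bigr)^{1/2} = \ttrace(\Cpr)^{1/2}$. Substituting, $\gamma_2(T)^2/N \asymp \ttrace(\Cpr)/N = \normn{\Cpr}\,r_2(\Cpr)/N$, $R(T)\gamma_2(T)/\sqrt N \asymp \normn{\Cpr}\sqrt{r_2(\Cpr)/N}$, and $R(T)^2(\sqrt{t/N}\lor t/N) = \normn{\Cpr}(\sqrt{t/N}\lor t/N)$; combining these four terms with (ii) gives exactly the asserted inequality. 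The step I expect to be the main obstacle is the passage from fixed-direction Bernstein control to the uniform bound without paying a factor of $d$: a naive $\varepsilon$-net over the sphere $S^{d-1}$ reintroduces the ambient dimension, and the dimension-free bound only emerges once the chaining is carried out over the ellipsoid $T$ rather than over the sphere --- it is precisely there that $r_2(\Cpr)=\gamma_2(T)^2/R(T)^2$ appears. An alternative that sidesteps explicit chaining is the PAC-Bayesian device of Koltchinskii--Lounici (perturb $v$ by Gaussian noise of carefully tuned variance and integrate the fixed-direction bound), but it reduces to the same ellipsoid computation; minor additional care is needed when $\Cpr$ is singular (whiten on its range only) and, for $t\ge1$, to present the bound as a maximum of four terms.
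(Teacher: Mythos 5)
Your proposal is correct and takes essentially the paper's route: the same decomposition of $\hatCpr$ into the known-mean (oracle) estimator plus the rank-one $\hatmpr\hatmpr^\top$ correction, with the latter controlled by sub-Gaussian norm concentration (Theorem~\ref{thm:SubGaussianConcentration}) and the former by a Koltchinskii--Lounici effective-dimension bound. The only difference is that where the paper simply cites \cite[Theorem 9]{koltchinskii2017concentration} as a black box, you correctly sketch how that theorem is established --- chaining the quadratic empirical process over the ellipsoid $\Cpr^{1/2}B_2$, with $R(T)^2=\normn{\Cpr}$ and $\gamma_2(T)^2\asymp\ttrace(\Cpr)$ producing $r_2(\Cpr)$ --- which is both the actual mechanism behind their result and precisely the Dirksen-type exponential chaining bound that the paper restates as Theorem~\ref{thm:GenericConcentration} and deploys itself in the later max-norm argument.
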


 \begin{remark}[Effective Dimension and Smoothness]
Proposition \ref{thm:Koltchinski} motivates defining $r_2(C) \equiv \ttrace(C)/\norm{C}$ to be
the effective dimension of a $d$-dimensional sub-Gaussian random vector $u$ with $\tvar[u] = C$. As in the definition for matrices, $r_2(C)$ quantifies the number of directions where the distribution of $u$ has significant spread. Proposition \ref{thm:Koltchinski} and our results in Subsection \ref{ssec:mainresults3} may be extended to sub-Gaussian random variables defined in an infinite-dimensional separable Hilbert space, say $\mcH = L^2(0,1)$. It is then illustrative to note that any Gaussian measure $\mcN(m, C)$ in $\mcH$ satisfies that $\ttrace(C) <\infty;$ in other words, all Gaussian measures have finite effective dimension. In this context, $r_2(C)$ is related to the rate of decay of the eigenvalues of $C,$ and hence to the almost sure Sobolev regularity of functions $u$ drawn from the Gaussian measure  $\mcN(m, C)$ on $\mcH = L^2(0,1),$ see e.g. \cite{bogachev1998gaussian,AS10}.
In computational inverse problems and data assimilation, $u$ is often a $d$-dimensional vector that represents a fine discretization of a Gaussian random field; then, $r_2(C)$ quantifies the smoothness of the undiscretized field.  
\end{remark}

\subsection{Main Results: Posterior Approximation with Finite Ensemble} \label{ssec:mainresults3}
In this subsection we state finite ensemble approximation results for the posterior mean and covariance with PO and SR ensemble updates. To highlight some key insights, including the dependence of the bounds on the effective dimension of $C$ and the differences between PO and SR updates, we opt to present expectation bounds in Theorems \ref{lem:meanwithoutlocExpectation} and \ref{lem:covwithoutlocExpectation} that are less notationally cumbersome than the stronger exponential tail bounds in Theorems \ref{th:meanwithoutloc} and \ref{th:covwithoutloc} in Appendix \ref{sssec:A3}.  
Throughout this section, the data $y$ is treated as a fixed quantity.

\begin{theorem}[Posterior Mean Approximation with Finite Ensemble ---Expectation Bound] \label{lem:meanwithoutlocExpectation}
 Consider the PO and SR ensemble Kalman updates given by \eqref{eq:POupdate} and \eqref{eq:SRupdate}, respectively, leading to an estimate $\hatmpost$ of the posterior mean $\mpost$ defined in \eqref{eq:KFmeancovariance}. Set $\phi=1$ for the PO update and $\phi=0$ for the SR update. Then, for any $p\ge 1$,
\begin{align} \label{eq:meandeviation}
    \bigl[ \E  \normn{\hatmpost- \mpost}^p_2 \bigr]^{1/p}
    &\lesssim_p
        c_1
        \inparen { \sqrt{\frac{r_2(\Cpr)}{N}} \lor \inparen{\frac{r_2(\Cpr)}{N}}^{3/2}}
        + \phi c_2
        \inparen{
        \sqrt{\frac{r_2(\Gamma)}{N}} \lor 
        \frac{r_2(\Cpr)}{N}
        \sqrt{\frac{r_2(\Gamma)}{N}}
        },
\end{align}
where $c_1 = c_1(\normn{\Cpr}, \normn{A}, \normn{\Gamma^{-1}}, \normn{y-A\mpr}_2 )$ and $c_2 = c_2(\normn{\Cpr}, \normn{A}, \normn{\Gamma^{-1}})$.
\end{theorem}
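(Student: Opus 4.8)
The plan is to write the error $\hatmpost - \mpost$ as a sum of a deterministic-perturbation term and (for the PO update) a stochastic-perturbation term, and to control each in $L^p$ using the operator Lipschitz bounds for $\msM$ and $\msK$ together with the dimension-free covariance estimation bound of Proposition~\ref{thm:Koltchinski}. Using the representations \eqref{eq:POupdate} and \eqref{eq:SRupdate}, in both cases $\hatmpost = \msM(\hatmpr, \hatCpr) - \phi\,\msK(\hatCpr)\bareta$, while $\mpost = \msM(\mpr, \Cpr)$. Hence I would decompose
\begin{align*}
\hatmpost - \mpost
= \bigl[\msM(\hatmpr, \hatCpr) - \msM(\mpr, \hatCpr)\bigr]
+ \bigl[\msM(\mpr, \hatCpr) - \msM(\mpr, \Cpr)\bigr]
- \phi\,\msK(\hatCpr)\bareta,
\end{align*}
so that by the triangle inequality in $L^p$ it suffices to bound the $L^p$ norm of each of the three pieces.

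For the first piece, $\msM(\hatmpr,\hatCpr) - \msM(\mpr,\hatCpr) = (I - \msK(\hatCpr)A)(\hatmpr - \mpr)$; since $\|\msK(\hatCpr)\|$ is uniformly bounded by Lemma~\ref{lem:KalmanOpCtyBdd} (the bound depends only on $\|A\|,\|\Gamma^{-1}\|$, not on $\hatCpr$), this is controlled by $\|\hatmpr - \mpr\|_2$, and $\hatmpr - \mpr$ is an average of $N$ i.i.d. centered sub-Gaussian vectors, so standard sub-Gaussian concentration (or a direct $L^p$ bound on $\|\hatmpr-\mpr\|_2$) gives a term of order $\sqrt{r_2(\Cpr)/N}$ times $\sqrt{\|\Cpr\|}$. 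For the second piece, I would use the Lipschitz-in-$C$ continuity of $\msM$ from Lemma~\ref{lem:MeanOpCtyBdd}: $\|\msM(\mpr,\hatCpr) - \msM(\mpr,\Cpr)\|_2 \lesssim L\,\|\hatCpr - \Cpr\|\,\|y - A\mpr\|_2$ with $L$ depending only on $\|A\|,\|\Gamma^{-1}\|$, and then feed in Proposition~\ref{thm:Koltchinski}. Converting the high-probability bound there into an $L^p$ bound (integrating the tail, or raising to the $p$-th power and integrating) yields the $\sqrt{r_2(\Cpr)/N} \lor (r_2(\Cpr)/N)$ factor; however, because $\msK$ and $\msM$ are only locally Lipschitz, the event where $\hatCpr$ is far from $\Cpr$ must be handled separately, and on that event the crude uniform bounds on $\msK$ and $\msM$ must be combined with the tail decay — this is what produces the higher-order $(r_2(\Cpr)/N)^{3/2}$ term in \eqref{eq:meandeviation}. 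For the third (PO-only) piece, $\bareta \sim \mcN(0,N^{-1}\Gamma)$ independent of the ensemble, so $\|\bareta\|_2$ has $L^p$ norm of order $\sqrt{r_2(\Gamma)/N}\cdot\sqrt{\|\Gamma\|}$; multiplying by $\|\msK(\hatCpr)\|$, the uniform bound gives the $\sqrt{r_2(\Gamma)/N}$ term, and refining on the bad covariance event (where $\|\msK(\hatCpr)\|$ can only be bounded via $\|\hatCpr\|$, which is $O(r_2(\Cpr)/N)$ off from $\|\Cpr\|$) produces the cross term $(r_2(\Cpr)/N)\sqrt{r_2(\Gamma)/N}$.

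The main obstacle is the bookkeeping around the local (not global) Lipschitz continuity of the Kalman operators: Proposition~\ref{thm:Koltchinski} controls $\|\hatCpr - \Cpr\|$ only with high probability, and on its small-probability complement one must fall back on the uniform boundedness lemmas and on a crude bound for $\|\hatCpr\|$, then show the resulting contribution is dominated by the stated higher-order terms after integrating against the exponential tail. Carrying this out cleanly — i.e. organizing the "good event / bad event" split so that the exponents $3/2$ and the cross term $\tfrac{r_2(\Cpr)}{N}\sqrt{\tfrac{r_2(\Gamma)}{N}}$ emerge naturally rather than being lost in slack — is the delicate part; everything else is an application of Proposition~\ref{thm:Koltchinski}, Lemmas~\ref{lem:KalmanOpCtyBdd}–\ref{lem:MeanOpCtyBdd}, sub-Gaussian moment bounds for $\hatmpr - \mpr$ and $\bareta$, and the triangle inequality. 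I would in fact prove the sharper tail bound (the one stated as Theorem~\ref{th:meanwithoutloc} in the appendix) first and then deduce the expectation bound \eqref{eq:meandeviation} by integrating the tail, since that route makes the separation of scales transparent.
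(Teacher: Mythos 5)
Your overall strategy --- prove the sharper tail bound (Theorem~\ref{th:meanwithoutloc}) first and then integrate the tail to obtain the $L^p$ bound --- is exactly the paper's route, and the error decomposition you write down is the one underlying the continuity bound of Lemma~\ref{lem:MeanOpCtyBdd}, so the architecture is sound.

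Two statements in your plan are off, however, and would mislead you if you tried to carry it out. First, Lemma~\ref{lem:KalmanOpCtyBdd} does \emph{not} give a bound on $\normn{\msK(\hatCpr)}$ that is independent of $\hatCpr$: the bounds stated there are $\normn{\msK(Q)} \le \normn{Q}\normn{A}\normn{\Gamma^{-1}}$ and $\normn{I - \msK(Q)A} \le 1 + \normn{Q}\normn{A}^2\normn{\Gamma^{-1}}$, both of which scale with $\normn{Q}$. You in fact acknowledge this later when discussing the PO term, so the proposal is internally inconsistent on this point. Second, the exponent $3/2$ and the cross term $\frac{r_2(\Cpr)}{N}\sqrt{\frac{r_2(\Gamma)}{N}}$ do not arise from handling a separate ``bad covariance event.'' The paper never introduces a good/bad split: it works on a single high-probability event --- the intersection of the events from Theorem~\ref{thm:SubGaussianConcentration}, Proposition~\ref{thm:Koltchinski}, and Lemma~\ref{lem:ForecastCovarianceBound} --- and on that event $\normn{\hatmpr - \mpr}_2 \lesssim \sqrt{\normn{\Cpr}\,r_2(\Cpr)/N}$ while $\normn{\hatCpr} \lesssim \normn{\Cpr}\bigl(1 \lor r_2(\Cpr)/N\bigr)$. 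Multiplying these, as the bound $\normn{\hatmpr - \mpr}_2\bigl(1+\normn{A}^2\normn{\Gamma^{-1}}\normn{\hatCpr}\bigr)$ demands, produces $\sqrt{r_2(\Cpr)/N} \lor (r_2(\Cpr)/N)^{3/2}$ directly at leading order; the cross term comes analogously from $\normn{\msK(\hatCpr)}\normn{\bareta}_2$. Had you implemented the split you describe, the bad-event contribution to $\E\normn{\hatmpost - \mpost}_2^p$ would be exponentially small in the deviation parameter, not polynomial in $r_2(\Cpr)/N$, so it could not supply the $3/2$ power --- you would have misattributed the source of that term.
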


Importantly, the bound \eqref{eq:meandeviation} does not depend on the dimension $d$ of the state-space, and the only dependence on $C$ is through its operator norm and  the effective dimension $r_2(C).$ 
The term 
multiplied by $\phi$  in the PO update accounts for the additional error incurred by the presence of the offset term \eqref{eq:offset} in the PO update \eqref{eq:POupdate}. The following remark discusses another important consequence of Theorem \ref{lem:meanwithoutlocExpectation}: the stable performance of ensemble Kalman updates in small noise regimes when compared with other sampling algorithms. 

\begin{remark}[Dependence of Constants on Model Parameters]\label{rem:dependenceconstants}
The proof of Theorem~\ref{lem:meanwithoutlocExpectation} in Appendix \ref{sec:proofs} provides an explicit definition of $c_1$ and $c_2$ up to constants, i.e. it describes how these quantities rely on their arguments, and Theorem~\ref{th:meanwithoutloc} establishes a high probability bound on $\normn{\hatmpost - \mpost}_2.$   
In particular, it is important to note that the constants $c_1$ and $c_2$ in Theorem \ref{lem:meanwithoutlocExpectation} deteriorate in the small noise limit where the observation noise goes to zero, and $c_2$ deteriorates with $r_2(\Gamma)$. In the small noise limit, the posterior and prior distribution become mutually singular, and it is hence expected for ensemble updates to be unstable. To illustrate this intuition in a concrete setting, assume that $\Gamma = \gamma I$ for a positive constant $\gamma$, and, for simplicity, that $N \ge r_2(\Cpr)$ as well as $\normn{\Cpr} =  \normn{A} = \normn{y-Am}_2 = 1$. Then, the expression for $c_1$ established in Theorem \ref{lem:meanwithoutlocExpectation}  implies that for the SR update, for any error $\varepsilon>0$ and $p \ge 1$ ,
\begin{align*}
    N  \gtrsim \frac{r_2(\Cpr)}{\varepsilon^2 \gamma^4}
    \implies 
    \E \bigl[ \normn{\hatmpost- \mpost}^p_2 \bigr]^{1/p} \lesssim_p \varepsilon.
\end{align*}
Similarly, the expressions for $c_1$ and $c_2$ imply that for the PO update,
\begin{align*}
    N  \gtrsim \frac{r_2(\Cpr)}{\varepsilon^2 \gamma^4} \lor \frac{k}{\varepsilon^2 \gamma}
    \implies 
    \E \bigl[ \normn{\hatmpost- \mpost}^p_2 \bigr]^{1/p} \lesssim_p \varepsilon,
\end{align*}
where we recall that $k$ denotes the dimension of the data $y.$ 
The papers \cite{agapiou2017importance,sanz2020bayesian} show the need to increase the sample size along small noise limits in importance sampling when target and proposal are given, respectively, by posterior and prior. While our bounds here only give sufficient rather than necessary conditions on the ensemble size, it is noteworthy that, for fixed $k,$ the scaling of $N$ as $\gamma \to 0$ shown here is independent of $k.$ In contrast, necessary sample size conditions for importance sampling show a polynomial dependence on $k,$ see \cite{sanz2020bayesian}.              
\end{remark}

\begin{theorem} [Posterior Covariance Approximation with Finite Ensemble ---Expectation Bound] \label{lem:covwithoutlocExpectation}
Consider the PO and SR ensemble Kalman updates given by \eqref{eq:POupdate} and \eqref{eq:SRupdate}, respectively, leading to an estimate $\hatCpost$ of the posterior covariance $\Cpost$ defined in \eqref{eq:KFmeancovariance}. Set $\phi=1$ for the PO update and $\phi=0$ for the SR update. Then, for any $p \ge 1$,
    \begin{align*}
        \bigl[\E  \normn{\hatCpost - \Cpost}^p  \bigr ]^{1/p}
        &\lesssim_p
        c_1
        \inparen{\sqrt{\frac{r_2(\Cpr)}{N}} \lor \inparen{\frac{r_2(\Cpr)}{N}}^2}
        + \phi \msE,
    \end{align*}
    where
    \begin{align*}
    \msE
    &=c_2 
    \inparen{ 
     \sqrt{\frac{r_2(\Cpr)}{N}} \lor 
     \inparen{\frac{r_2(\Cpr)}{N}}^3 \lor 
     \inparen{\sqrt{\frac{r_2(\Gamma)}{N}} \lor \frac{r_2(\Gamma)}{N}}
     \inparen{ 
     1 \lor \inparen{\frac{r_2(\Cpr)}{N}}^2 
     }
     },
    \end{align*}
    where $c_1 = c_1(\normn{\Cpr}, \normn{A}, \normn{\Gamma^{-1}})$ and $c_2 = c_2(\normn{\Cpr}, \normn{A}, \normn{\Gamma^{-1}}, \normn{\Gamma}).$
\end{theorem}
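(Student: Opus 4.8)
The plan is to reduce the estimation error to three moment bounds —on $\normn{\hatCpr-\Cpr}$, on $\normn{\widehat{\Gamma}-\Gamma}$, and on $\normn{\whatC^{\,u\eta}}$— and then transport them through the continuity and boundedness estimates for the Kalman-gain and covariance-update operators. Since $\hatCpost=\msC(\hatCpr)+\phi\,\widehat{O}$ by \eqref{eq:POupdate}--\eqref{eq:SRupdate}, with $\widehat{O}$ as in \eqref{eq:offset}, and $\Cpost=\msC(\Cpr)$, Minkowski's inequality gives
\begin{align*}
 \bigl[\E\normn{\hatCpost-\Cpost}^p\bigr]^{1/p}
 \le \bigl[\E\normn{\msC(\hatCpr)-\msC(\Cpr)}^p\bigr]^{1/p}
 + \phi\,\bigl[\E\normn{\widehat{O}}^p\bigr]^{1/p},
\end{align*}
so it suffices to bound the first term by the $c_1$ term and (when $\phi=1$) the second by $\msE$.

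For the first term I would invoke the local Lipschitz-type estimate from Lemma~\ref{lem:CovarOpCtyBdd}, of the form $\normn{\msC(C_1)-\msC(C_2)}\le\Psi\bigl(\normn{C_1}\lor\normn{C_2}\bigr)\normn{C_1-C_2}$ with $\Psi$ a polynomial whose coefficients depend only on $\normn{A}$ and $\normn{\Gamma^{-1}}$, and then apply Hölder's inequality at the level of $p$-th moments with conjugate exponents $(2,2)$:
\begin{align*}
 \bigl[\E\normn{\msC(\hatCpr)-\msC(\Cpr)}^p\bigr]^{1/p}
 \le \bigl[\E\,\Psi\bigl(\normn{\hatCpr}\lor\normn{\Cpr}\bigr)^{2p}\bigr]^{1/2p}
 \bigl[\E\normn{\hatCpr-\Cpr}^{2p}\bigr]^{1/2p}.
\end{align*}
Integrating the tail bound of Proposition~\ref{thm:Koltchinski} gives, for fixed $p$, $\bigl[\E\normn{\hatCpr-\Cpr}^{2p}\bigr]^{1/2p}\lesssim_p\normn{\Cpr}\bigl(\sqrt{r_2(\Cpr)/N}\lor r_2(\Cpr)/N\bigr)$ and $\bigl[\E\normn{\hatCpr}^{2p}\bigr]^{1/2p}\lesssim_p\normn{\Cpr}\bigl(1\lor r_2(\Cpr)/N\bigr)$, so the first factor above is controlled by a power of $\bigl(1\lor r_2(\Cpr)/N\bigr)$. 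Multiplying the two factors, $\bigl(1\lor r_2(\Cpr)/N\bigr)^{\deg\Psi}\bigl(\sqrt{r_2(\Cpr)/N}\lor r_2(\Cpr)/N\bigr)$ equals $\sqrt{r_2(\Cpr)/N}$ when $N\ge r_2(\Cpr)$ and a power of $r_2(\Cpr)/N$ otherwise; keeping exact track of $\deg\Psi$ from Lemma~\ref{lem:CovarOpCtyBdd} pins this down to the $(r_2(\Cpr)/N)^2$ in the statement (and, in Theorem~\ref{lem:meanwithoutlocExpectation}, to the $(r_2(\Cpr)/N)^{3/2}$ for the mean, using the continuity modulus of $\msM$ instead).

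For the offset term I would expand $\widehat{O}$ into its three summands in \eqref{eq:offset}; each is a product of operator norms of $\msK(\hatCpr)$ and $I-\msK(\hatCpr)A$ —bounded by polynomials in $\normn{\hatCpr}$ with coefficients depending on $\normn{A},\normn{\Gamma^{-1}}$ via the boundedness part of Lemma~\ref{lem:KalmanOpCtyBdd}— times one of the deviations $\normn{\widehat{\Gamma}-\Gamma}$ or $\normn{\whatC^{\,u\eta}}$. Proposition~\ref{thm:Koltchinski} applied to the i.i.d.\ Gaussian perturbations $\eta_n\sim\mcN(0,\Gamma)$ yields $\bigl[\E\normn{\widehat{\Gamma}-\Gamma}^p\bigr]^{1/p}\lesssim_p\normn{\Gamma}\bigl(\sqrt{r_2(\Gamma)/N}\lor r_2(\Gamma)/N\bigr)$, the source of the $r_2(\Gamma)$-dependence in $\msE$. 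For $\whatC^{\,u\eta}$, whose population value is $O_{d\times k}$ since $\upr_n\perp\eta_n$, I would obtain a dimension-free $L^p$ bound by applying Proposition~\ref{thm:Koltchinski} to the stacked vector $(\upr_n^\top,\eta_n^\top)^\top$ and extracting the off-diagonal block (or by a direct cross-covariance concentration argument, as in the analysis leading to Theorem~\ref{thm:SoftSparistyCrossCovarianceBound}). Combining these through Hölder's inequality —pairing the operator norms of $\msK(\hatCpr)$, which are only large on rare events, against the small deviation norms— reproduces the structure of $\msE$: the terms quadratic in $\msK(\hatCpr)$ generate the $\bigl(\sqrt{r_2(\Gamma)/N}\lor r_2(\Gamma)/N\bigr)\bigl(1\lor(r_2(\Cpr)/N)^2\bigr)$ contribution, and, through $\normn{\whatC^{\,u\eta}}$, the $(r_2(\Cpr)/N)^3$ and residual $\sqrt{r_2(\Cpr)/N}$ contributions.

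The main obstacle is the uniform-in-$N$ control of the random norm $\normn{\hatCpr}$ that sits inside the continuity moduli of $\msK$ and $\msC$. On the event $\bigl\{\normn{\hatCpr-\Cpr}\le\tfrac12\normn{\Cpr}\bigr\}$ one has $\normn{\hatCpr}\le\tfrac32\normn{\Cpr}$ and these moduli collapse to constants depending only on $\normn{\Cpr},\normn{A},\normn{\Gamma^{-1}},\normn{\Gamma}$, giving a contribution $\lesssim_p\normn{\Cpr}\bigl(\sqrt{r_2(\Cpr)/N}\lor r_2(\Cpr)/N\bigr)$; when $N\gtrsim r_2(\Cpr)$ this event has probability at least $1-ce^{-cN}$ by Proposition~\ref{thm:Koltchinski}, and the complementary contribution is handled by Cauchy--Schwarz against crude polynomial moment bounds for $\normn{\hatCpr}$, the exponential factor dominating any inverse power of $N$. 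When $N\lesssim r_2(\Cpr)$ the higher-order terms $(r_2(\Cpr)/N)^2$ and $(r_2(\Cpr)/N)^3$ already exceed $1$, so the bound follows trivially from $\normn{\hatCpost-\Cpost}\le\normn{\hatCpr}+\normn{\Cpr}+\phi\normn{\widehat{O}}$ together with the crude moment bounds. Carrying this dichotomy through uniformly, while keeping exact track of which polynomial degree in Lemmas~\ref{lem:KalmanOpCtyBdd} and~\ref{lem:CovarOpCtyBdd} produces which power of $r_2(\Cpr)/N$ in $c_1$ and $\msE$, and establishing the dimension-free bound for $\whatC^{\,u\eta}$ (not literally contained in Proposition~\ref{thm:Koltchinski}), are the delicate technical points.
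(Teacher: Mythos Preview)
Your approach is correct but organized differently from the paper's. The paper does not bound $\bigl[\E\normn{\hatCpost-\Cpost}^p\bigr]^{1/p}$ directly; instead it first establishes a high-probability bound on $\normn{\hatCpost-\Cpost}$ (Theorem~\ref{th:covwithoutloc}) by working on the intersection of events $E_1\cap E_2\cap E_3$ where $\normn{\hatCpr-\Cpr}$, $\normn{\widehat{\Gamma}-\Gamma}$, and $\normn{\whatC^{\,u\eta}}$ are simultaneously controlled, plugs these deterministic bounds into the continuity lemmas, and only then integrates the resulting tail of the \emph{final} quantity via the layer-cake formula, exactly as in the proof of Theorem~\ref{lem:meanwithoutlocExpectation}. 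Your route instead integrates the tails of the \emph{building blocks} first (obtaining $L^{2p}$ bounds for $\normn{\hatCpr-\Cpr}$, $\normn{\hatCpr}$, $\normn{\widehat{\Gamma}-\Gamma}$, $\normn{\whatC^{\,u\eta}}$) and then combines them through H\"older and the continuity moduli. Both orderings rely on the same inputs (Proposition~\ref{thm:Koltchinski}, Lemma~\ref{lem:ForecastNoiseCovarianceBound}, Lemmas~\ref{lem:KalmanOpCtyBdd}--\ref{lem:CovarOpCtyBdd}) and produce the same exponents; the paper's ordering has the advantage of yielding the high-probability Theorem~\ref{th:covwithoutloc} as an intermediate result and avoids H\"older bookkeeping, while yours goes straight to the $L^p$ claim. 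Two small remarks: the cross-covariance bound you need is exactly Lemma~\ref{lem:ForecastNoiseCovarianceBound} (the stacked-vector argument you sketch), not the localization machinery of Theorem~\ref{thm:SoftSparistyCrossCovarianceBound}; and your good/bad-event dichotomy, while valid, is redundant once you commit to H\"older with the moment bound $\bigl[\E\normn{\hatCpr}^{2p}\bigr]^{1/2p}\lesssim_p\normn{\Cpr}(1\lor r_2(\Cpr)/N)$, which already absorbs the rare large-$\normn{\hatCpr}$ events.
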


As in Theorem \ref{lem:meanwithoutlocExpectation}, the bound in Theorem \ref{lem:covwithoutlocExpectation} does not depend on the dimension $d$ of the  state-space, and the dependence on $C$ is through the operator norm and the  effective dimension $r_2(C).$

\begin{remark}[Dependence of Constants on Model Parameters]
The proof of Theorem~\ref{lem:covwithoutlocExpectation} in Appendix \ref{sec:proofs} provides an explicit definition of $c_1$ and $c_2$ up to constants and Theorem~\ref{th:covwithoutloc} establishes a high probability bound on $\normn{\hatCpost - \Cpost}.$  
 As discussed in Remark \ref{rem:dependenceconstants}, these bounds may be used to establish sufficient ensemble size requirements in small noise limits and other singular limits of practical importance. 
\end{remark}

\begin{remark}[Comparison to the Literature]\label{rem:MultiStep}
The results in this section complement many of the existing analyses of ensemble Kalman updates in the literature. In one direction, our Theorems~\ref{lem:meanwithoutlocExpectation} and \ref{lem:covwithoutlocExpectation} can be viewed in the context of \cite[Section 3.4]{furrer2007estimation}, which claims that for finite ensembles the square root filter is always more efficient than the perturbed observation filter, since the latter introduces additional variability through noisy perturbations of the data. Our results quantify this additional variability both in probability and in expectation.  In \cite{majda2018performance}, the authors put forward a non-asymptotic analysis of a multi-step EnKF augmented by a spectral projection step in which the Kalman gain matrix is projected onto the linear span of its leading eigenvalues exceeding a threshold level. They refer to the dimension $d_{subs}$ of this subspace as the effective dimension and provide guarantees on the performance of the algorithm so long as the ensemble size scales with $d_{subs}.$ In contrast, our one-step analysis does not require any augmentation of standard implementations (see e.g. \cite{furrer2007estimation}) of the ensemble update. They also employ (forecast) covariance inflation, which is a de-biasing technique standard in the literature, see for example \cite[Section 5]{furrer2007estimation}, which our results do not require.  In another direction, our results can be directly compared to \cite[Theorem 6.1]{kwiatkowski2015convergence}, which states that for iteration $t$ of the square root EnKF and for any $p\ge 1$
\begin{align}\label{eq:MandelBound}
     \insquare{ \E \normn{\hatmu^{(t)} - \mu^{(t)}}^p_2  }^{1/p} 
     \le \frac{c(p, t)}{\sqrt{N}}
     \qquad 
     \text{and}
     \qquad 
     \insquare{ \E \normn{\hatSigma^{(t)} - \Sigma^{(t)}}^p  }^{1/p} 
     \le \frac{c(p, t)}{\sqrt{N}},
\end{align}
where $\hatmu^{(t)}$ and $\hatSigma^{(t)}$ are the sample mean and covariance of the updated (analysis) ensemble at iteration $t$, and $\mu^{(t)}$ and $\Sigma^{(t)}$ are the corresponding Kalman Filter posterior mean and covariance, respectively. The term $c(p,t)$ that arises in both of their bounds denotes a constant that depends only on $p,$ the iteration index $t$, and the norms of the non-random inputs of the algorithm, but do not depend on dimension or ensemble size. Importantly, they do not distinguish between settings with different effective dimensions as our bounds do. In Appendix~\ref{sec:Multistep}, we provide an explicit outline of the multi-step algorithm considered in their paper along with definitions of all quantities described here. As previously noted, our bounds cover the perturbed observation setting whereas \eqref{eq:MandelBound} is specific to the square root setting. In Appendix~\ref{sec:Multistep} we also establish (see Corollary \ref{cor:MultiStepSREnKF}) a simple extension of Theorems~\ref{lem:meanwithoutlocExpectation} and \ref{lem:covwithoutlocExpectation} to the multi-step square root setting, which shows that for any $p \ge 1$, iteration $t$, and assuming for simplicity that $N \gtrsim r_2(\Cpost^{(0)})$, then 
    \begin{align}
    \begin{split}
    \label{eq:multistep}
        \insquare{ \E \normn{\hatmu^{(t)} - \mu^{(t)}}^p_2  }^{1/p}
        &\lesssim_p 
        \sqrt{\frac{r_2(\Cpost^{(0)})}{N}}
        \times 
        c ( \{ \normn{M^{(l)}}, \normn{A^{(l)}}, \normn{\Cpost^{(l-1)}}, \normn{y^{(l)} - A^{(l)} \mpr^{(l)} }_2 \}_{l=1}^t, \normn{\Gamma^{-1}}), \\
        \insquare{ \E \normn{\hatSigma^{(t)} - \Sigma^{(t)}}^p  }^{1/p} 
        &\lesssim_p 
        \sqrt{\frac{r_2(\Cpost^{(0)})}{N}}
        \times 
        c ( \{ \normn{M^{(l)}}, \normn{A^{(l)}}, \normn{\Cpost^{(l-1)}}\}_{l=1}^t, \normn{\Gamma^{-1}}).
        \end{split}
    \end{align}
     Our bounds therefore refine those in \cite{kwiatkowski2015convergence} as they explicitly capture the dependence on the state dimension through the effective dimension of the initial distribution, $r_2(\Cpost^{(0)})$. It follows then that in the case of the square root EnKF, it suffices to use an ensemble on the order of the effective dimension of $\Cpost^{(0)}$ multiplied by constants depending on the operator norms of the forward model matrices $\{\normn{A^{(l)}} \}_{l=1}^t$, analysis covariance matrices $\{\normn{\Cpost^{(l)}}\}_{l=1}^t$, inverse of the noise covariance, $\normn{\Gamma^{-1}}$ and $\ell_2$-norm of the model errors $\{\normn{y^{(l)} - A^{(l)} \mpr^{(l)} }_2\}_{l=1}^t$. We note that extensions to the multi-step setting  for other variants of the EnKF that do not use SR updates  may not follow as easily. In this direction, the recent work \cite{ghattas2023ensemble} studies a multi-step EnKF with PO updates which incorporates an additional resampling step.  
\end{remark}

\section{Ensemble Kalman Updates: Sequential Optimization Algorithms}\label{sec:withlocalization}
In the optimization approach, the solution to the inverse problem \eqref{eq:IP} is found by minimizing an objective function. As discussed in \cite{chada2021iterative}, an entire suite of ensemble algorithms have been derived that differ in the choice of objective function and optimization scheme. In this subsection we introduce the Ensemble Kalman Inversion (EKI) algorithm \cite{iglesias2013ensemble} and a new localized implementation of EKI, which we call localized EKI (LEKI) following \cite{tong2022localized}. Both EKI and LEKI use an ensemble approximation of a Levenberg-Marquardt (LM) optimization scheme to minimize a data-misfit objective
\begin{align}\label{eq:DMObjective}
    \JJ(u) = \frac{1}{2} \normn{ \Gamma^{-1/2}\bigl(y - \mcG(u)\bigr)}_2^2,
\end{align}
which promotes fitting the data $y$. 
Before deriving EKI the in Subsection \ref{sssec:EKI} and LEKI in Subsection \ref{sssec:LEKI}, we give some background that will help us interpret both methods as ensemble-based implementations of classical gradient-based LM schemes. The finite ensemble approximation of an idealized mean-field EKI update using EKI and LEKI updates will be studied in Subsection \ref{ssec:mainresults4}.

Recall that classical iterative optimization algorithms choose an initialization $u^{(0)}$ and set
\begin{align}\label{eq:iterationScheme}
    u^{(t+1)} = u^{(t)} + w^{(t)}, \qquad t=0,1,\dots,
\end{align}
until a pre-specified convergence criterion is met. Here, $w^{(t)}$ is some favorable direction determined by the optimization algorithm at iteration $t,$ given the current estimate $u^{(t)}$.  
In the case that the inverse problem is ill-posed,  directly minimizing \eqref{eq:DMObjective} leads to a solution that over-fits the data. Then, implicit regularization can be achieved through the optimization scheme used to obtain the update $w^{(t)}$. Under the assumption that $r(u) \equiv y-\mcG(u)$ is differentiable, the Levenberg-Marquardt (LM) algorithm chooses $w^{(t)}$ by solving the constrained minimization problem
\begin{align*}
    \min_{w} 
    \mathsf{J}_t^{\text{lin}}(w) 
    \quad 
    \text{subject to} 
    \quad 
    \normn{\Cpr^{-1/2} w}_2^2 \le \delta_l,
\end{align*}
where 
\begin{align*}
    \mathsf{J}_t^{\text{lin}}(w) 
    \equiv 
    \frac{1}{2} \normn{D r(u^{(t)}) w  + r(u^{(t)}) }_2^2,
\end{align*}
and $Dr$ denotes the Jacobian of $r.$
 The LM algorithm belongs to the class of trust region optimization methods, and it chooses each increment to minimize a linearized objective, $\mathsf{J}_t^{\text{lin}}$, but with the added constraint that the minimizer belongs to the ball $\{\normn{C^{-1/2} w}^2 \le \delta_l \}$, in which we \textit{trust} that the objective may be replaced by its linearization. Equivalently, $w^{(t)}$ can be viewed as the unconstrained minimizer of a regularized objective,
\begin{align}\label{eq:LMoptimization}
    \min_w \mathsf{J}_t^{\text{U}}(w),
    \qquad 
    \mathsf{J}_t^{\text{U}}(w) 
    \equiv \mathsf{J}_t^{\text{lin}}(w) + \frac{1}{2 \alpha_t} 
    \normn{C^{-1/2} w}_2^2,
\end{align}
where $\alpha_t >0$ acts as a Lagrange multiplier. 

We are interested in ensemble sequential-optimization algorithms, which instead of updating a single estimate $u^{(t)}$ ---as in \eqref{eq:iterationScheme}--- propagate an {\emph{ensemble}} of estimates. 
Ensemble-based optimization schemes often rely on \textit{statistical linearization} to avoid the computation of derivatives. Underpinning this idea \cite{ungarala2012iterated,chada2021iterative,kim2022hierarchical} is the argument that if $\mcG(u)=Au$ were linear, then $\hatCpr^{\, up} = \hatC A^\top$, leading to the approximation in the general nonlinear case 
\begin{align}\label{eq:StatLinearization}
    D \mcG(\upr_n) 
    \approx (\hatCpr^{\, up})^\top \hatCpr^\dagger 
    \equiv G. 
\end{align}
This approximation motivates the derivative-free label often attached to ensemble-based algorithms \cite{kovachki2019ensemble}, and we note that they may be employed whenever computing $D\mcG(u)$ is expensive or when $\mcG$ is not differentiable. For the remainder, our analysis focuses on a single step of EKI and LEKI, and so we drop the iteration index $t$ from our notation; we will use instead our previous terminology of prior ensemble and updated ensemble. Finally, similar to our presentation of posterior-approximation algorithms, our exposition is simplified by introducing the \textit{nonlinear gain-update} operator $\msP$,
\begin{align}\label{eq:NonlinearGainOperator}
    \msP: \R^{d \times k} \times \mcS_+^k \to \R^{d \times k},
    \qquad 
    & \msP(\Cpr^{\,up}, \Cpr^{\,pp}; \Gamma) 
    = \msP(\Cpr^{\,up}, \Cpr^{\,pp}) 
    = \Cpr^{\,up}(\Cpr^{\,pp}+\Gamma)^{-1},
\end{align}
which is shown to be both pointwise continuous and bounded in Lemma~\ref{lem:EKIopCtyBdd}.

\subsection{Ensemble Algorithms for Sequential Optimization}

\subsubsection{Ensemble Kalman Inversion Update}\label{sssec:EKI}
In the EKI, each particle in the prior ensemble is updated according to the LM algorithm, so that 
\begin{align*}
    \upost_n = \upr_n + w_n, \qquad 1 \le n \le N,
\end{align*}
where $w_n$ is the minimizer of a linearized and regularized data-misfit objective
\begin{align} \label{eq:EKIUnconstrained}
    \mathsf{J}^{\text{lin}}_n(w)
    = \frac{1}{2} \normn{\Gamma^{-1/2} \bigl( y-\eta_n - \mcG(u_n) - G w\bigr)}^2_2
    + \frac{1}{2 \alpha} \normn{\hatC^{-1/2} w}^2_2,
    \qquad \eta_n \sim \mcN(0, \Gamma).
\end{align}
Following \cite{iglesias2013ensemble}, we henceforth set $\alpha = 1,$  but note that our main results can be readily extended to any $\alpha>0$. Note that each ensemble member solves the optimization \eqref{eq:EKIUnconstrained} with a perturbed observation $y-\eta_n$, similar in spirit to the PO update of Subsection~\ref{sec:POensembleupdate}. The minimizer of \eqref{eq:EKIUnconstrained} (with $\alpha = 1$) is given by
\begin{align*}
    w_n = \hatC G^\top
    ( G \hatCpr G^\top +  \Gamma)^{-1} \bigl(y - \eta_n - \mcG(u_n)\bigr).
\end{align*}
Substituting $\hatCpr G^\top= \hatCpr^{\, up}$, and approximating
\begin{align*}
     G \hatCpr G^\top
     =  G \hatCpr^{\, up}
     =   (\hatCpr^{\, up})^\top \hatCpr^\dagger \hatCpr^{\, up}
     \approx \hatCpr^{\, pp}
\end{align*}
 leads to the EKI update  
\begin{align}\label{eq:EKIupdate}
    \upost_n 
    &= \upr_n + \msP(\hatCpr^{\, up}, \hatCpr^{\, pp}) \bigl(y - \mcG(\upr_n) - \eta_n \bigr), 
    \qquad 
    1 \le n \le N.
\end{align}
In the linear forward-model setting,  $\msP(\hatCpr^{\, up}, \hatCpr^{\, pp}) = \msK(\hatCpr)$, and \eqref{eq:EKIupdate} takes on a form identical to the PO update in \eqref{eq:POupdate}. We further define the \textit{mean-field} EKI update
\begin{align} \label{eq:MFEKIUpdate}
    \upost_n^* 
    &= \upr_n + \msP(C^{\, up}, \Cpr^{\, pp}) \bigl(y - \mcG(\upr_n) - \eta_n \bigr), 
    \qquad 
    1 \le n \le N,
\end{align}
which is the update that would be performed if one had access to the population quantities $C^{up}$ and $C^{pp}$ or, equivalently, to an infinite ensemble. We will analyze the approximation of the update \eqref{eq:EKIupdate} to the mean-field update \eqref{eq:MFEKIUpdate} in Subsection \ref{ssec:mainresults4}.
The study of mean-field ensemble Kalman methods of the form \eqref{eq:MFEKIUpdate} was proposed in \cite{herty2019kinetic} and is overviewed in \cite{calvello2022ensemble}. While mean-field algorithms are not useful for practical implementation, they facilitate a transparent mathematical analysis that can provide understanding on the performance of practical ensemble approximations. Desirable properties of mean-field algorithms include convergence to the desired target in a continuous-time limit \cite{carrillo2021wasserstein}, a gradient flow structure \cite{garbuno2020interacting}, or the ability to approximate derivative-based optimization algorithms \cite{chada2021iterative}. 
The transfer of theoretical insights from mean-field algorithms to particle-based algorithms tacitly presupposes, however, that the ensemble is large enough for ensemble-based updates to approximate well idealized mean-field updates. In this direction, \cite{ding2021ensemble} establishes a $\mathcal{O}(N^{-1/2})$ rate for an approximation of a mean-field evolution equation in terms of the ensemble size $N$.
Our first main result of this section, Theorem \ref{thm:EKI}, will show that the mean-field update \eqref{eq:MFEKIUpdate} can be well approximated with the EKI update \eqref{eq:EKIupdate} with a number of particles of the order of the \emph{effective dimension} of the problem, which is defined as for posterior-approximation algorithms.

\subsubsection{Localized Ensemble Kalman Inversion Update}\label{sssec:LEKI}

In practice, ensemble-based algorithms are often implemented with $N \ll d$, that is, with an ensemble that is much smaller than the state dimension. In this setting, the update is augmented with an additional \textit{localization} procedure applied to $\hatCpr$ in the case of linear forward model, and to 
 both $\hatCpr^{\,pp}$ and $\hatCpr^{\,up}$ in the case of a nonlinear forward model. In either case, localization is seen as an approach to deal both with the extreme rank deficiency and the sampling error that arise from using an ensemble that is significantly smaller than the dimension of the state and/or the dimension of the observation, see for example \cite{houtekamer2001sequential, houtekamer2016review, farchi2019efficiency}. 
Localization is also useful when the state $\upr,$ or the transformed state $\mcG(\upr)$, has elements $\mcE(i)$ and $\mcE(j)$ that represent the values of a variable of interest at physical locations that are a known distance $\mathsf{d}(i,j)$ apart: correlations may decay quickly with the physical distance of the variables and localization may help to remove spurious correlations in the sample covariance estimator.
In ensemble Kalman methods, localization has most commonly been carried out via the Schur (elementwise) product of the estimator and a positive-semidefinite matrix $\mathsf{M}$ of equal dimension. In the vast majority of cases, the elements of $\mathsf{M}$ are taken to be $\mathsf{M}_{ij} = \kappa(\mathsf{d}(i,j)/b)$, where $\kappa$ is a locally supported correlation function ---usually the Gaspari Cohn $5^{\text{th}}$-order compact piecewise polynomial \cite{gaspari1999construction}--- and $b>0$ is a length-scale parameter chosen by the practitioner. Since $\kappa$ tapers off to zero as its argument becomes larger, i.e. when the underlying variables are further apart, the Schur-product operation zeroes out the corresponding elements of the estimator, and the rate at which this tapering occurs is controlled by the size of the length-scale. The localized EKI (LEKI), recently studied in \cite{tong2022localized}, replaces both $\hatCpr^{\,pp}$ and $\hatCpr^{\, up}$ with their localized counterparts, $\mathsf{M}_1 \circ \hatCpr^{\, pp}$ and $\mathsf{M}_2 \circ \hatCpr^{\, up}$, where $\mathsf{M}_1$ and $\mathsf{M}_2$ are localization matrices of appropriate dimension. Two important issues have, in our opinion, hindered the rigorous study of localized ensemble algorithms, and we highlight these next before moving on to introduce our localization framework.

\begin{enumerate}
 \item Optimality: The justification outlined above for localization in the ensemble Kalman literature has been largely heuristic, and relying on these arguments alone one cannot hope to define a localization procedure that is demonstrably optimal. Notably, the widespread usage of the Gaspari-Cohn correlation function is not rooted in any sense of optimality. Generally, focusing solely on a band of entries near the diagonal is a sub-optimal approach to covariance estimation, as noted in the high-dimensional covariance estimation literature, see for example \cite{chen2012masked, levina2012partial, bickel2008regularized}. Moreover, even in cases where focusing on elements near the diagonal is justified, for example by assuming that the underlying target is a banded matrix, the bandwidth $b>0$ must be chosen carefully as a function of the ensemble size, problem dimension, and dependence structure \cite{bickel2008covariance}. This type of analysis has, to the best of our knowledge, not been carried out for the Gaspari-Cohn localization scheme. An important message in the covariance estimation literature is that localization ---regardless of how it is employed--- can only be optimal if the target of estimation itself is sparse, and such sparsity assumptions must be made explicit in order to facilitate a rigorous mathematical analysis of the procedure. 
The difficulty of optimal localization in ensemble updates has also been highlighted in \cite{furrer2007estimation},
where the authors derive an optimal localization matrix $\mathsf{M}$ under the unrealistic assumption that $\Cpr$ is a diagonal matrix. 
 
\item Schur-Product Approximations: In the literature on ensemble Kalman methods, a consensus has not been reached on how best to apply localization in practice. The issue here can be sufficiently described by deferring to the linear forward-model setting, i.e. $\mcG(\upr) = A \upr$, in which the Kalman gain  is a central quantity. As mentioned for example in \cite{houtekamer2001sequential}, in a localized update, the Kalman gain operator should in theory be applied to $ \mathsf{M} \circ \hatCpr$, i.e. one should study the quantity 
\begin{equation*}
\msK(\mathsf{M} \circ \hatCpr) =  
(\mathsf{M} \circ \hatCpr) A^\top \bigl(A (\mathsf{M} \circ \hatCpr )A^\top + \Gamma \bigr)^{-1},
\end{equation*}
although their experimental results are based on the more computationally convenient approximation
\begin{align} \label{eq:SchurApprox}
\msK(\mathsf{M} \circ \hatCpr) \approx 
\bigl(\mathsf{M} \circ(\hatCpr A^\top) \bigr) \bigl( \mathsf{M} \circ (A \hatCpr A^\top) + \Gamma\bigr)^{-1},
\end{align}
which, as they mention, is a reasonable approximation in the case that $A$ is diagonal. Subsequently, much of the literature on localization in ensemble Kalman updates has adopted this or similar approximations, as discussed in greater depth in \cite[Section 3.3]{petrie2008localization}. In general, however, approximations made on the Schur product are difficult to justify without strong assumptions on the forward model $\mcG$.

\end{enumerate}

With these issues in mind, we opt to study an alternative, data-driven approach to localization often employed in the high-dimensional covariance estimation literature \cite{bickel2008covariance,  cai2012minimax, cai2012optimal}, where it is referred to as \textit{thresholding}. We ground our analysis in the assumption that the target of estimation belongs to the following soft sparsity matrix class: 
\begin{align}\label{eq:LocalizationMatrixClass}
    \msU_{d_1, d_2}(q, R_q)
    \equiv 
    \biggl\{
        B \in \R^{d_1 \times d_2} : 
        \max_{i \le d_1} \sum_{j=1}^{d_2} |B_{ij}|^q \le R_q \biggr\},
\end{align}
where $q \in [0,1)$ and $R_q > 0$, and write $\msU_{d}(q, R_q)$ in the case $d_1=d_2=d$. In the special case $q=0$, matrices in $\msU_{d_1,d_2}(0, R_0)$ possess rows that have no more than $R_0$ non-zero entries ---a special case of which are banded matrices---  which is the classical hard-sparsity constraint. In contrast, for $q \in (0,1),$ the class $\msU_{d_1,d_2}(q, R_q)$ contains matrices with rows belonging to the $\ell_q$ ball of radius $R_q^q$. This includes matrices with rows that contain possibly many non-zero entries so long as their magnitudes decay sufficiently rapidly, and so is often referred to as a \textit{soft}-sparsity constraint. Importantly, the class $\msU_d(q, R_q)$ is sufficiently rich to capture the motivating intuition that correlations decay with physical distance in a rigorous manner that avoids the optimality issues mentioned above. Structured covariance matrices, such as those belonging to $\msU_{d_1,d_2}(q,R_q)$ are optimally estimated using localized versions of their sample covariances. To this end, we study the localized matrix estimator $B_{\rho_N} \equiv \mcL_{\rho_N} (B)$, where $\mcL_{\rho_N}(u) = u \indicator_{\{|u| \ge \rho_N\}}$ is a localization operator with localization radius $\rho_N$, and which is applied elementwise to its argument $B$. In Section~\ref{sec:withlocalization} we detail how the localization radius $\rho_N$ can be chosen optimally in terms of the parameters of the inverse problem \eqref{eq:IP} and the ensemble size $N$. 

Throughout our analysis, we refrain from using approximations such as the one outlined in \eqref{eq:SchurApprox}; that is, our analysis of localization replaces all non-localized quantities in the original update \eqref{eq:EKIupdate} with their localized counterparts. We introduce the LEKI update:
\begin{align}\label{eq:LEKIupdate}
    \upost_n^{\rho}
    &= \upr_n + \msP(\hatCpr^{\, up}_{\rho_{N,1}}, \hatCpr^{\, pp}_{\rho_{N,2}}) \bigl(y - \mcG(\upr_n) - \eta_n \bigr), 
    \qquad 
    1 \le n \le N,
\end{align}
where $\rho_{N,1}$ and $\rho_{N,2}$ are two, potentially different localization radii. As in the non-localized case, in Subsection~\ref{ssec:mainresults4} we provide finite sample bounds on the deviation of the LEKI update from the mean-field update of \eqref{eq:MFEKIUpdate}, and describe in detail how the additional structure imposed on $\Cpr^{\,up}$ and $\Cpr^{\,pp}$ leads to  improved bounds relative to the non-localized setting. Our second main result of this section, Theorem \ref{thm:LEKI} will be based on new covariance estimation bounds that may be of independent interest, and on a suitable notion of effective dimension that we introduce in Subsection \ref{ssec:newcovthms}. Our theory explains the improved sample complexity that can be achieved by simultaneously exploiting spectral decay and sparsity of the covariance model.     

An important issue that warrants discussion is that of positive-semidefiniteness of the estimator $\widehat{B}_{\rho_N}$ when the target $B$ is a square covariance matrix. In the case of the Schur-product estimator, any localization matrix $\mathsf{M}$ derived from a valid correlation function $\kappa$ is guaranteed to be positive-semidefinite by definition \cite{gaspari1999construction}, and so by the Schur-product Theorem \cite[Theorem 7.5.3]{horn2012matrix} the estimator $\mathsf{M} \circ \widehat{B}$ is positive-semidefinite as well. In contrast, the localization operator $\mcL_{\rho_N}$ thresholds the sample covariance $\widehat{B}$ elementwise and does not in general preserve positive-semidefiniteness. As discussed in \cite{el2008operator, cai2012optimal}, $\widehat{B}_{\rho_N}$ is positive-semidefinite with high probability, but in practice one may opt to use an augmented estimator that guarantees positive-semidefiniteness. We describe this estimator here for completeness: let $\widehat{B}_{\rho_N} = \sum_{j=1}^d \hat{\lambda}_j v_j v_j^\top$ be the eigen-decomposition of $\widehat{B}_{\rho_N}$, so that $\lambda_j, v_j$ are the $j$-th eigenvalue and eigenvector of $\widehat{B}_{\rho_N}$. Consider then the positive-part estimator $\widehat{B}_{\rho_N}^+ \equiv \sum_{j=1}^d (0 \lor \hat{\lambda}_j) v_j v_j^\top$. Clearly then, $\widehat{B}_{\rho_N}^+$ is positive-semidefinite, and furthermore it achieves the same rate as $\widehat{B}_{\rho_N}$ since
\begin{align*}
    \normn{\widehat{B}_{\rho_N}^+ - B} 
    \le \normn{\widehat{B}_{\rho_N}^+ - \widehat{B}_{\rho_N}} 
    +\normn{\widehat{B}_{\rho_N} - B} 
    \le \max_{j: \hat{\lambda}_j < 0}  |\hat{\lambda}_j - \lambda_j|
    +\normn{\widehat{B}_{\rho_N} - B} 
    \le 2\normn{\widehat{B}_{\rho_N} - B},
\end{align*}
where $\lambda_j$ is the $j$-th eigenvalue of $B$. In light of this fact, we abuse notation slightly and assume that $B_{\rho_N}$ is positive-semidefinite throughout this work.

\subsection{Dimension-Free Covariance Estimation Under Soft Sparsity}\label{ssec:newcovthms}

For the covariance estimation problem under (approximate) sparsity, there are estimators that significantly improve upon the sample covariance. In particular, \cite[Chapter 6.5]{wainwright2019high} notes that for sub-Gaussian data the operator-norm covariance estimation error depends logarithmically on the state dimension $d$ for localized estimators, while the error depends linearly on $d$ for the sample covariance. If no sparse structure is assumed, 
the effective dimension $r_2$ defined in $\eqref{eq:effectiveDim}$
characterizes the error of the sample covariance estimator, as described in Proposition~\ref{thm:Koltchinski}. We introduce an analogous notion of effective dimension that is more suitable than $r_2$ in the sparse covariance estimation problem, termed the \textit{max-log effective dimension} and which, for $Q \in \mcS^d_+,$ is given by
\begin{align*}
    r_{\infty}(Q) 
    \equiv 
    \frac{\max_{j \le d} Q_{(j)} \log(j+1)}{ Q_{(1)}}, 
\end{align*}
where $Q_{(1)} \ge Q_{(2)} \ge \ldots \ge Q_{(d)}$ is the decreasing rearrangement of the diagonal entries of $Q$. To the best of our knowledge, this notion of dimension has not been previously considered in the literature, and, as will be shown, refines the rate of covariance estimation under sparsity 
by incorporating intrinsic properties of the underlying matrix, albeit differently to $\eqref{eq:effectiveDim}$.  In particular, $r_{\infty}(Q)$ is small whenever $Q$ exhibits a decay of the ordered elements $Q_{(1)}, Q_{(2)}, \dots$ that is faster than $\log(j+1)$. We use the subscript $\infty$ to highlight that the quantity $r_\infty$ is related to the dimension-free sub-Gaussian maxima result of Lemma~\ref{lem:DimFreeGaussianMax}. Similarly, we use the subscript $2$ to draw the connection between $r_2$ and the sub-Gaussian $2$-norm concentration of Theorem~\ref{thm:SubGaussianConcentration}. Importantly, bounds based on $r_\infty$ will be dimension-free, in the sense that they exhibit no dependence on the state dimension $d$. The next result is our analog of Proposition~\ref{thm:Koltchinski} for estimation
under sparsity 
using the localized sample covariance estimator. Recall that $\Cpr_{(1)}$ denotes the largest element on the diagonal of $\Cpr$, $ \hatCpr_{\rho_N} \equiv \mcL_{\rho_N}(\hatCpr\, )$ denotes the localized sample covariance matrix, and $\msU_d(q,R_q)$ is the sparse matrix class defined in  \eqref{eq:LocalizationMatrixClass}. All proofs in this subsection have been deferred to Appendix~\ref{ssec:B1}.

\begin{theorem}[Covariance Estimation with Localization ---Soft Sparsity Assumption] \label{thm:SoftSparistyCovarianceBound}
 Let $\upr_1,\dots, \upr_N$ be $d$-dimensional i.i.d. sub-Gaussian random vectors with $\E [\upr_1] = \mpr$ and $\tvar[\upr_1] = \Cpr$. Further, assume that $\Cpr \in \msU_d(q, R_q)$ for some $q \in [0,1)$ and $R_q > 0$. For any $t \ge 1$, set 
\begin{align*}
    \rho_N \asymp 
    \Cpr_{(1)}
    \inparen{
    \sqrt{\frac{r_\infty (\Cpr)}{N}}
    \lor 
    \frac{t r_\infty(\Cpr)}{N}
    \lor 
    \sqrt{\frac{t}{N}}
    \lor 
    \frac{t}{N}
    }
\end{align*}
and let  $\hatCpr_{\rho_N} \equiv \mcL_{\rho_N}(\hatCpr_{\rho_N} )$   be the localized sample covariance estimator. There exists a constant $c>0$ such that, with probability at least $1-ce^{-t}$, 
\begin{align*}
    \normn{\hatCpr_{\rho_N} - \Cpr } \lesssim R_q \rho_N^{1-q}.
\end{align*}
\end{theorem}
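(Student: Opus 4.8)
The plan is to control $\normn{\hatCpr_{\rho_N} - \Cpr}$ by combining a uniform elementwise deviation bound for the entries of the sample covariance with the soft-sparsity structure of $\Cpr$, in the spirit of the thresholding arguments of \cite{bickel2008covariance,cai2012optimal}, but replacing their $\sqrt{\log d / N}$ uniform rate with a dimension-free analogue governed by $r_\infty(\Cpr)$. The key preliminary estimate is an \emph{entrywise} bound: with probability at least $1 - c e^{-t}$, simultaneously over all $i,j$,
\begin{align*}
    |\hatCpr_{ij} - \Cpr_{ij}| \lesssim \sqrt{\Cpr_{(i)} \Cpr_{(j)}} \inparen{ \sqrt{\frac{\log(i+1) \lor \log(j+1)}{N}} \lor \cdots } \lesssim \rho_N,
\end{align*}
where the right-hand side is exactly the stated choice of $\rho_N$ after taking the worst case over $i,j$ using the definition of $r_\infty(\Cpr)$. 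I expect this entrywise bound to follow from a sub-exponential (Bernstein-type) concentration inequality for each $\hatCpr_{ij} - \Cpr_{ij}$ — a quadratic form in sub-Gaussian vectors, hence sub-exponential with parameter $\asymp \sqrt{\Cpr_{ii}\Cpr_{jj}}$ — combined with the dimension-free maxima result the excerpt flags as Lemma~\ref{lem:DimFreeGaussianMax}. The point of $r_\infty$ is precisely that $\sqrt{\Cpr_{(i)}\Cpr_{(j)}}\sqrt{\log(i+1)\vee\log(j+1)}/\sqrt{N}$ is controlled by $\Cpr_{(1)}\sqrt{r_\infty(\Cpr)/N}$, which avoids any $\sqrt{\log d}$ factor. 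A union bound over the $d^2$ entries, each with its own level, is absorbed by the tails of the ordered diagonal entries rather than producing a $\log d$ loss — this is the technical heart of the argument.

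On the event where the entrywise bound holds, I would then run the standard thresholding error decomposition. Writing $\hatCpr_{\rho_N} - \Cpr = (\mcL_{\rho_N}(\hatCpr) - \mcL_{\rho_N}(\Cpr)) + (\mcL_{\rho_N}(\Cpr) - \Cpr)$ and bounding the operator norm by the $\ell_\infty$-operator norm (maximum absolute row sum), it suffices to control, uniformly over rows $i$,
\begin{align*}
    \sum_{j} \abs{ \hatCpr_{ij}\indicator_{\{|\hatCpr_{ij}| \ge \rho_N\}} - \Cpr_{ij} }
    \le \sum_{j} \abs{\hatCpr_{ij} - \Cpr_{ij}}\indicator_{\{|\Cpr_{ij}| \ge \rho_N/2 \text{ or } |\hatCpr_{ij}|\ge \rho_N\}}
    + \sum_j |\Cpr_{ij}| \indicator_{\{|\Cpr_{ij}| < 2\rho_N\}} + \cdots,
\end{align*}
where the omitted terms are handled analogously. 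Each of the resulting three pieces is bounded by a quantity of the form $(\#\{j : |\Cpr_{ij}| \gtrsim \rho_N\}) \cdot \rho_N$ plus a truncated-tail sum $\sum_j |\Cpr_{ij}| \indicator_{\{|\Cpr_{ij}| \lesssim \rho_N\}}$; the soft-sparsity constraint $\sum_j |\Cpr_{ij}|^q \le R_q$ gives $\#\{j : |\Cpr_{ij}| \ge \rho_N\} \le R_q \rho_N^{-q}$ and $\sum_j |\Cpr_{ij}|\indicator_{\{|\Cpr_{ij}|<\rho_N\}} \le R_q \rho_N^{1-q}$, so every piece is $\lesssim R_q \rho_N^{1-q}$, which is the claimed bound.

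The main obstacle is the first step: obtaining the \emph{dimension-free} uniform entrywise bound with per-entry scale $\sqrt{\Cpr_{(i)}\Cpr_{(j)}}$ rather than a uniform $\norm{\Cpr}$ scale, and showing that the union bound over all $d^2$ entries costs nothing beyond $r_\infty(\Cpr)$. This requires a careful Bernstein inequality for the centered quadratic forms $\hatCpr_{ij} - \Cpr_{ij}$ with variance proxy tied to the individual diagonal entries $\Cpr_{ii}, \Cpr_{jj}$ (not their maximum over the matrix), and then an appeal to the dimension-free maxima lemma so that the $\log(j+1)$ weights line up exactly with the decreasing rearrangement in the definition of $r_\infty$. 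Once that estimate is in hand, the remainder is the routine thresholding bookkeeping above, and the choice of $\rho_N$ in the statement is dictated precisely by matching the entrywise fluctuation level.
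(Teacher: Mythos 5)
Your high-level plan — a dimension-free max-norm bound on $\normn{\hatCpr - \Cpr}_{\max}$ followed by the standard Bickel--Levina thresholding decomposition — matches the paper's, and the thresholding half is correct (the paper's version, Theorem~\ref{thm:SoftSparsityCovarianceBoundGeneral}, splits rows into $\mcI_i(\rho_N/2)$ and its complement, but the bookkeeping is equivalent to what you sketch and gives $R_q\rho_N^{1-q}$). The substantive difference, and the gap, is in the max-norm step.

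You propose proving the uniform entrywise bound by a per-entry Bernstein inequality for the sub-exponential quadratic forms $\hatCpr_{ij} - \Cpr_{ij}$ (with scale $\sigma_{ij}\asymp\sqrt{\Cpr_{(i)}\Cpr_{(j)}}$) and then appealing to Lemma~\ref{lem:DimFreeGaussianMax} for a dimension-free union bound. Two problems. First, Lemma~\ref{lem:DimFreeGaussianMax} as stated assumes a pure sub-Gaussian tail $\P(X_n>x)\le ce^{-x^2/c\sigma_n^2}$; it does not apply to the sub-exponential tails of the $\hatCpr_{ij}-\Cpr_{ij}$, so you would need a $\psi_1$-analogue that the paper never proves. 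Second — and this is the real obstacle — even granting such an analogue, the linear (sub-exponential) branch of the Bernstein tail forces a union-bound requirement of the form $\rho_N \gtrsim \max_{i\ge j}\sqrt{\Cpr_{(i)}\Cpr_{(j)}}\,\bigl(\log(i+1)+t\bigr)/N$, and the quantity $\max_{i,j}\sqrt{\Cpr_{(i)}\Cpr_{(j)}}\log(i+1)$ is \emph{not} controlled by $\Cpr_{(1)}r_\infty(\Cpr)$. For instance, if $\Cpr_{(j)}\asymp 1/\log(j+1)$, then $r_\infty(\Cpr)\asymp 1$ but $\sqrt{\Cpr_{(d)}\Cpr_{(1)}}\log(d+1)\asymp\sqrt{\log d}$, so the naive union bound reintroduces exactly the $\sqrt{\log d}$ factor the theorem is designed to remove. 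The definition of $r_\infty$ only controls $\Cpr_{(j)}\log(j+1)$, not $\sqrt{\Cpr_{(j)}}\log(j+1)$, and it is the latter that the sub-exponential branch of a per-entry union bound produces.

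The paper circumvents this by not taking a per-entry union bound at all. It writes $\normn{\hatCpr-\Cpr}_{\max}$ as $\sup_{u\in\mcU}|\langle(\hatCpr-\Cpr)u,u\rangle|$ with $\mcU=\{\pm\frac12(e_i\pm e_j)\}$ and applies a generic-chaining concentration bound for squared empirical processes (Theorem~\ref{thm:GenericConcentration}, from Dirksen / Koltchinskii--Lounici). That bound has the structure
$\sup_f\normn{f}_{\psi_2}\gamma_2/\sqrt N \,\lor\, \gamma_2^2/N \,\lor\, \sup_f\normn{f}^2_{\psi_2}\bigl(\sqrt{t/N}\lor t/N\bigr)$,
and the decisive feature is that the $t$-dependent (linear-in-$1/N$) terms multiply $\sup_f\normn{f}_{\psi_2}^2\lesssim\Cpr_{(1)}$, which is dimension-free, while the complexity $\gamma_2(\mcF_\mcU,\psi_2)$ only ever appears with the sub-Gaussian $1/\sqrt N$ scaling. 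The complexity itself is then bounded via the Majorizing Measure Theorem by $\E\sup_{u\in\mcU}\langle Y,u\rangle$ for a Gaussian $Y\sim\mcN(0,\Cpr)$, and it is \emph{there}, on the Gaussian coordinates $\langle Y,e_j\rangle$, that Lemma~\ref{lem:DimFreeGaussianMax} is legitimately applied to yield $\gamma_2\lesssim\max_j\sqrt{\Cpr_{(j)}\log(j+1)}=\sqrt{\Cpr_{(1)}r_\infty}$. So the lemma you flagged does appear in the paper's proof, but one level removed — to bound a Gaussian process supremum inside $\gamma_2$, not to union-bound the sub-exponential entry deviations themselves. Without the chaining step that decouples $\gamma_2$ from the $t/N$ scaling, your proposed route does not recover the stated rate.
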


\begin{remark}[Max-Log Effective Dimension]\label{rem:maxLogEffDim}
    The proof of Theorem~\ref{thm:SoftSparistyCovarianceBound} can be found in Section~\ref{ssec:B1} and, up to the choice of $\rho_N$, follows an identical approach to the standard proof for localized covariance estimators in the literature, for example \cite[Theorem 6.27]{wainwright2019high}. The result depends crucially on the order of the maximum elementwise distance between the sample and true covariance matrices, $\normn{\hatCpr - \Cpr}_{\max}$, which is where our analysis differs from the exiting literature. Our proof utilizes techniques in \cite{koltchinskii2017concentration} combined with the dimension-free sub-Gaussian maxima bound of Lemma~\ref{lem:DimFreeGaussianMax} to obtain a bound in terms of $r_\infty$. In the worst case, for example when $\Cpr = c I_d$ for some constant $c > 0$ so that the ordered diagonal elements of $\Cpr$ exhibit no decay, we recover exactly the standard logarithmic dependence on the state dimension. In particular, when  $N \ge r_\infty(\Cpr) (= \log d)$, 
     Theorem~\ref{thm:SoftSparistyCovarianceBound} matches the result for recovering $\Cpr$ in operator norm in the sub-Gaussian setting over the class $\msU_d(q,R_q)$, as shown in \cite[Theorem 1]{bickel2008covariance}.  
    If the ordered variances exhibit sufficiently fast decay, our upper bound is significantly better. (Recall that in many applications $d \sim 10^9$ and $N \sim 10^2,$ and so the logarithmic dependence on $d$ may play a significant role in determining a sufficient ensemble size.) Importantly, many of the results in the structured covariance estimation literature rely similarly on the maximum elementwise norm, and so our results can be utilized to achieve refined bounds on the estimation error of the localized estimator under structural assumptions on $\Cpr$ that differ from the soft-sparsity assumption considered in this work.
\end{remark}

A result analogous to Theorem~\ref{thm:SoftSparistyCovarianceBound}  holds for cross-covariance estimation under sparsity. 
For a formal statement we refer to Theorem~\ref{thm:CrossCovarianceEffectiveDim} whose proof is based on a deep generic chaining bound for product empirical processes \cite[Theorem 1.12]{mendelson2016upper}. Here we present a cross-covariance estimation result that is specific to the LEKI setting in that it relies on a smoothness assumption on the forward model.

\begin{theorem}[Cross-Covariance Estimation with Localization ---Soft Sparsity Assumption] \label{thm:SoftSparistyCrossCovarianceBound}
    Let $\upr_1,\dots, \upr_N$ be $d$-dimensional i.i.d. sub-Gaussian random vectors with $\E [\upr_1]=\mpr$ and $\tvar[\upr_1] = \Cpr$.  Let $\mcG: \R^d \to \R^k$ be a Lipschitz continuous forward model and assume that  $\Cpr^{\,up} \in \mcU_{d, k}(q_1, R_{q_1})$ and $\Cpr^{\,pu} \in \mcU_{k,d}(q_2, R_{q_2})$ where $q_1,q_2 \in [0,1)$ and $R_{q_1}, R_{q_2}$ are positive constants. For any $t\ge 1$, set 
    \begin{align*}
        \rho_N \asymp 
       (\Cpr_{(1)} \lor \Cpr_{(1)}^{\,pp})
        \inparen{ 
        \inparen{\frac{t}{N} \lor \sqrt{\frac{t}{N}}}
        \inparen{
        \sqrt{r_\infty(\Cpr)}
        \lor 
        \sqrt{r_\infty(\Cpr^{\,pp})}
        }
        \lor 
        \sqrt{    \frac{ r_{\infty}(\Cpr)}{N}   } 
        \sqrt{    \frac{ r_{\infty}(\Cpr^{\,pp})}{N}   }
        },
    \end{align*}
    and let  $\hatCpr^{\,up}_{\rho_N} \equiv  \mcL_{\rho_N}(\hatCpr^{\,up} )$  be the localized sample cross-covariance estimator. There exist positive universal constants $c_1,c_2$ such that, with probability at least $1-c_1e^{-c_2t}$,
    \begin{align*}
        \normn{\hatCpr^{\,up}_{\rho_N} - \Cpr^{\,up} } \lesssim R_{q_1} \rho_N^{1-q_1} 
        \lor 
        R_{q_2} \rho_N^{1-q_2}
        .
    \end{align*}
\end{theorem}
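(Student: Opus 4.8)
The plan is to follow the same two-part scheme as in the proof of Theorem~\ref{thm:SoftSparistyCovarianceBound}: a deterministic thresholding argument that reduces the claim to a bound on the maximum entrywise error $\normn{\hatCpr^{\,up} - \Cpr^{\,up}}_{\max}$, followed by a probabilistic argument establishing that bound in a way that is free of both $d$ and $k$. For the deterministic part I would work on the event that $\normn{\hatCpr^{\,up} - \Cpr^{\,up}}_{\max} \le c\,\rho_N$ (for a suitable absolute constant $c$) and set $E = \hatCpr^{\,up}_{\rho_N} - \Cpr^{\,up}$. Splitting each entry of $E$ according to whether $|\Cpr^{\,up}_{ij}|$ exceeds a fixed multiple of $\rho_N$ and invoking the soft-sparsity constraint $\sum_{j}|\Cpr^{\,up}_{ij}|^{q_1}\le R_{q_1}$ satisfied by the rows of $\Cpr^{\,up}$, the standard calculation for thresholded covariance estimators (as in \cite[Theorem~6.27]{wainwright2019high} or \cite[Theorem~1]{bickel2008covariance}) gives $\max_{i\le d}\sum_{j\le k}|E_{ij}|\lesssim R_{q_1}\rho_N^{1-q_1}$ on this event; applying the same argument to the columns of $\Cpr^{\,up}$, which are the rows of $\Cpr^{\,pu}$ and hence obey $\sum_i |\Cpr^{\,up}_{ij}|^{q_2}\le R_{q_2}$, yields $\max_{j\le k}\sum_{i\le d}|E_{ij}|\lesssim R_{q_2}\rho_N^{1-q_2}$. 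Since for a rectangular matrix the Euclidean operator norm is bounded by the geometric mean of its largest absolute row sum and its largest absolute column sum (Riesz--Thorin interpolation between the induced $\ell_1$- and $\ell_\infty$-operator norms), on this event
\begin{align*}
    \normn{E} \;\le\; \Bigl(\max_{i\le d}\sum_{j\le k}|E_{ij}|\Bigr)^{1/2}\Bigl(\max_{j\le k}\sum_{i\le d}|E_{ij}|\Bigr)^{1/2} \;\lesssim\; \bigl(R_{q_1}\rho_N^{1-q_1}\bigr)^{1/2}\bigl(R_{q_2}\rho_N^{1-q_2}\bigr)^{1/2} \;\le\; R_{q_1}\rho_N^{1-q_1}\lor R_{q_2}\rho_N^{1-q_2},
\end{align*}
where the last step uses that a geometric mean is dominated by the maximum. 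It thus remains to show that $\normn{\hatCpr^{\,up} - \Cpr^{\,up}}_{\max} > c\,\rho_N$ with probability at most $c_1 e^{-c_2 t}$.

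For the probabilistic part the smoothness hypothesis on $\mcG$ enters: since $u\mapsto\bigl(u,\mcG(u)\bigr)$ is Lipschitz, the concatenated vector $\bigl(\upr_n,\mcG(\upr_n)\bigr)$ is sub-Gaussian, so in particular $\mcG(\upr_n)$ is sub-Gaussian with coordinatewise $\psi_2$-norms controlled by the Lipschitz constant of $\mcG$ together with the diagonal entries of $\Cpr$ and $\Cpr^{\,pp}$. Up to a lower-order contribution coming from replacing population means by sample means (controlled by the same tools), the $(i,j)$ entry of $\hatCpr^{\,up}-\Cpr^{\,up}$ is then an $N$-term average of i.i.d.\ centered variables of the form $(\upr_n - \mpr)_i\bigl(\mcG(\upr_n)-\E[\mcG(\upr_n)]\bigr)_j - \Cpr^{\,up}_{ij}$; each is a product of two sub-Gaussians and hence sub-exponential, with $\psi_1$-scale governed by the corresponding diagonal entries of $\Cpr$ and $\Cpr^{\,pp}$, so a Bernstein inequality gives a mixed sub-Gaussian/sub-exponential tail for every entry. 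To take the maximum over the $dk$ entries without paying a $\log(dk)$ price, I would proceed as for Theorem~\ref{thm:SoftSparistyCovarianceBound}: partition the entries into dyadic levels of the variance proxy and combine the dimension-free sub-Gaussian maxima bound of Lemma~\ref{lem:DimFreeGaussianMax} with the argument of \cite{koltchinskii2017concentration}. The decisive point is that the variance proxy of the $(i,j)$ entry factorizes across the state direction $i$ and the observation direction $j$, so that the adaptive logarithmic weight attached to each entry splits into a part governed by the decreasing rearrangement of the diagonal of $\Cpr$ and a part governed by that of $\Cpr^{\,pp}$; these produce $r_\infty(\Cpr)$ and $r_\infty(\Cpr^{\,pp})$ in place of $\log d$ and $\log k$. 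Keeping track of the Gaussian and exponential regimes of the Bernstein bound and of the two factors then yields precisely the stated localization radius $\rho_N$, and intersecting with the deterministic event finishes the proof.

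The step I expect to be the main obstacle is the last one: producing a maximum-entry bound that is simultaneously free of $d$ and $k$ and correctly reflects the product structure of the entrywise variances. One must bookkeep four contributions at once --- the two Bernstein regimes crossed with the two factors of the product scale --- and verify that the dyadic peeling together with Lemma~\ref{lem:DimFreeGaussianMax} converts the \emph{a priori} double logarithm over the $d\times k$ grid of entries into the cross-term $\sqrt{r_\infty(\Cpr)}\,\sqrt{r_\infty(\Cpr^{\,pp})}$ in the sub-exponential regime and into $\sqrt{r_\infty(\Cpr)}\lor\sqrt{r_\infty(\Cpr^{\,pp})}$ in the sub-Gaussian regime, rather than a coarser combination. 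This is also where the relative magnitudes of the diagonals of $\Cpr$ and $\Cpr^{\,pp}$ must be tracked (hence the prefactor $\Cpr_{(1)}\lor\Cpr_{(1)}^{\,pp}$), and it is what pins down the exact form of $\rho_N$ in the statement.
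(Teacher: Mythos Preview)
Your two-part scheme matches the paper's: reduce to an entrywise max bound $\normn{\hatCpr^{\,up}-\Cpr^{\,up}}_{\max}\lesssim\rho_N$, then run the standard thresholding argument row- and column-wise. Your deterministic step is correct; the only cosmetic difference is that the paper passes to the symmetric dilation $\mcH(E)$ and uses $\normn{E}=\normn{\mcH(E)}\le\normn{\mcH(E)}_\infty=\normn{E}_\infty\lor\normn{E^\top}_\infty$, whereas you use the Riesz--Thorin inequality $\normn{E}\le(\normn{E}_\infty\normn{E}_1)^{1/2}$ and then bound the geometric mean by the maximum. Both routes land on $R_{q_1}\rho_N^{1-q_1}\lor R_{q_2}\rho_N^{1-q_2}$.

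The probabilistic step is where you and the paper genuinely diverge. The paper does \emph{not} attack the entrywise maximum via Bernstein plus peeling. Instead it again dilates, writes
\[
\normn{\hatCpr^{\,up}-\Cpr^{\,up}}_{\max}\;\lesssim\;\sup_{f\in\mcF_1,\;g\in\mcF_2}\Bigl|\tfrac{1}{N}\sum_n f(W_n)g(W_n)-\E[fg]\Bigr|
\]
with $W_n=(\upr_n,\mcG(\upr_n))$ and $\mcF_1,\mcF_2$ the linear functionals picking out state and observation coordinates, and then invokes a product empirical-process concentration bound of Mendelson (\cite[Theorem~1.13]{mendelson2016upper}). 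That result is stated directly in terms of $\gamma_2(\mcF_1,\psi_2)$, $\gamma_2(\mcF_2,\psi_2)$ and the $\psi_2$-diameters; each $\gamma_2$ is then bounded via Talagrand's majorizing measure theorem by the expected maximum of the associated Gaussian over canonical directions, and Lemma~\ref{lem:DimFreeGaussianMax} converts those expectations into $\sqrt{\Cpr_{(1)}\,r_\infty(\Cpr)}$ and $\sqrt{\Cpr^{\,pp}_{(1)}\,r_\infty(\Cpr^{\,pp})}$. The cross term $\sqrt{r_\infty(\Cpr)}\sqrt{r_\infty(\Cpr^{\,pp})}/N$ and the $(\sqrt{t/N}\lor t/N)(\sqrt{r_\infty(\Cpr)}\lor\sqrt{r_\infty(\Cpr^{\,pp})})$ structure drop out of Mendelson's bound automatically, with no product-grid bookkeeping.

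Your Bernstein-plus-peeling route is in principle workable, but the obstacle you flag is real and is exactly what the paper's machinery is designed to sidestep. Lemma~\ref{lem:DimFreeGaussianMax} handles a single ordered list of variance proxies, not a product array; making it produce $\sqrt{r_\infty(\Cpr)}\sqrt{r_\infty(\Cpr^{\,pp})}$ rather than something like $r_\infty$ of the tensor of diagonals requires an additional decoupling of the row and column orderings that you have not spelled out, and it is not obvious that a naive double application recovers the exact form of $\rho_N$ in the statement. If you want to avoid the generic-chaining route, you would need to supply that decoupling argument carefully; otherwise, the cleaner path is the paper's: Mendelson's product bound does the factorization for you.
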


\begin{remark}[Sparsity of the Cross-Covariance]
    To the best of our knowledge, estimation of the cross-covariance matrix under structural assumptions has not been a point of focus in the literature. Indeed, one may implicitly estimate the cross-covariance by applying Theorem~\ref{thm:SoftSparistyCovarianceBound} to the full covariance matrix 
    \begin{align*}
        \begin{bmatrix}
        \Cpr & \Cpr^{\,up}\\
        \Cpr^{\,pu} & \Cpr^{\,pp}
        \end{bmatrix}
    \end{align*}
    of the sub-Gaussian vector $[\upr^\top, \mcG(\upr)^\top]^\top$, and extracting a bound on $\normn{\Cpr_{\rho_N}^{\,up}- \Cpr^{\,up}}$. This approach however requires one to place sparsity assumptions on the full covariance matrix, making the result potentially less useful in practice. That is, one may wish to make structural assumptions on $\Cpr^{\,up}$ and $\Cpr^{\,pp}$ without imposing any restrictions on $\Cpr$, which our result allows for.
\end{remark}

\subsection{Main Results: Approximation of Mean-Field Particle Updates with Finite Ensemble Size} \label{ssec:mainresults4}
In this subsection we state finite ensemble approximation results for EKI and LEKI updates. The main results, Theorems \ref{thm:EKI} and \ref{thm:LEKI}, showcase the dependence on the effective dimension of $C$ and $C^{pp}$ for EKI and on the max-log dimension of these matrices for LEKI. For both algorithms, we study the update of a generic particle $\upr_n$ and the analysis is carried out conditional on both $\upr_n$ and the noise perturbation $\eta_n.$

\begin{theorem}[Approximation of Mean-Field EKI with EKI ---Operator-Norm Bound] \label{thm:EKI}
    Let $y$ be generated according to \eqref{eq:IP} with Lipschitz forward model $\mcG:\R^d \to \R^k$. Let $\upost_n$ and $\upost_n^*$ be the EKI and mean-field EKI updates defined in \eqref{eq:EKIupdate} and \eqref{eq:MFEKIUpdate} respectively. Then, for any $t\ge 1$, there exists universal positive constants $c_1,c_2$ such that, with probability at least $1-c_1 e^{-c_2t}$,
    \begin{align*}
        \normn{ \upost_n - \upost_n^*}_2 
        &\le
        c_1 \inparen{
        \frac{c_2}{N}
        \lor
        \sqrt{\frac{r_2(C)}{N}}
        \lor 
        \frac{r_2(C)}{N}
        \lor 
        \sqrt{\frac{r_2(\Cpr^{\,pp})}{N}}
        \lor 
        \frac{r_2(\Cpr^{\,pp})}{N}
        \lor \sqrt{\frac{t}{N}}
        \lor \frac{t}{N} 
        },
    \end{align*}
    where $c_1=c_1(\normn{y-\mcG(\upr_n) -\eta_n}_2, \normn{\Gamma^{-1}}, \normn{\Cpr}, \normn{\Cpr^{\,up}}, \normn{\Cpr^{\,pp}})$ and for $\upr \sim \mcN(m, \Cpr),$  $c_2=c_2(\normn{\upr_n}_2, \normn{\mpr}_2, \normn{\mcG(\upr_n)}_2, \normn{\E[ \mcG(\upr)]}_2)$. 
\end{theorem}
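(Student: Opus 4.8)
The plan is to push the increment through the nonlinear gain-update operator $\msP$ and then invoke the sample-covariance bound of Proposition~\ref{thm:Koltchinski}. Since $\upost_n$ and $\upost_n^*$ share the factor $y-\mcG(\upr_n)-\eta_n$,
\begin{align*}
\upost_n-\upost_n^* = \bigl(\msP(\hatCpr^{\,up},\hatCpr^{\,pp})-\msP(\Cpr^{\,up},\Cpr^{\,pp})\bigr)\bigl(y-\mcG(\upr_n)-\eta_n\bigr),
\end{align*}
so $\normn{\upost_n-\upost_n^*}_2 \le \normn{\msP(\hatCpr^{\,up},\hatCpr^{\,pp})-\msP(\Cpr^{\,up},\Cpr^{\,pp})}\,\normn{y-\mcG(\upr_n)-\eta_n}_2$, the last factor being an argument of $c_1$. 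A quantitative version of the continuity in Lemma~\ref{lem:EKIopCtyBdd} follows from the resolvent identity
\begin{align*}
\msP(\hatCpr^{\,up},\hatCpr^{\,pp})-\msP(\Cpr^{\,up},\Cpr^{\,pp}) = (\hatCpr^{\,up}-\Cpr^{\,up})(\hatCpr^{\,pp}+\Gamma)^{-1} - \Cpr^{\,up}(\hatCpr^{\,pp}+\Gamma)^{-1}(\hatCpr^{\,pp}-\Cpr^{\,pp})(\Cpr^{\,pp}+\Gamma)^{-1},
\end{align*}
together with $\normn{(\hatCpr^{\,pp}+\Gamma)^{-1}}\le\normn{\Gamma^{-1}}$ and $\normn{(\Cpr^{\,pp}+\Gamma)^{-1}}\le\normn{\Gamma^{-1}}$ (valid since $\hatCpr^{\,pp},\Cpr^{\,pp}\in\mcS_+^k$), which gives
\begin{align*}
\normn{\msP(\hatCpr^{\,up},\hatCpr^{\,pp})-\msP(\Cpr^{\,up},\Cpr^{\,pp})} \le \normn{\Gamma^{-1}}\,\normn{\hatCpr^{\,up}-\Cpr^{\,up}} + \normn{\Cpr^{\,up}}\,\normn{\Gamma^{-1}}^2\,\normn{\hatCpr^{\,pp}-\Cpr^{\,pp}}.
\end{align*}
It remains to bound $\normn{\hatCpr^{\,up}-\Cpr^{\,up}}$ and $\normn{\hatCpr^{\,pp}-\Cpr^{\,pp}}$ conditionally on $\upr_n,\eta_n$.

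Because $\hatCpr^{\,pp}$ and $\hatCpr^{\,up}$ use all $N$ particles, conditioning on $\upr_n$ destroys independence, so I would use a leave-one-out decomposition. Let $\hatmpr^{(-n)},\barG^{(-n)}$ and $\hatCpr^{\,pp,(-n)},\hatCpr^{\,up,(-n)}$ denote the sample means and sample (cross-)covariances of the $N-1$ particles other than $\upr_n$. Expanding $\upr_m-\hatmpr$ and $\mcG(\upr_m)-\barG$ around the leave-one-out means, all cross terms vanish since $\sum_{m\neq n}(\upr_m-\hatmpr^{(-n)})=0$ and $\sum_{m\neq n}(\mcG(\upr_m)-\barG^{(-n)})=0$, leaving the exact rank-one correction identities
\begin{align*}
\hatCpr^{\,pp} = \frac{N-2}{N-1}\,\hatCpr^{\,pp,(-n)} + \frac{1}{N}\bigl(\mcG(\upr_n)-\barG^{(-n)}\bigr)\bigl(\mcG(\upr_n)-\barG^{(-n)}\bigr)^\top,
\end{align*}
\begin{align*}
\hatCpr^{\,up} = \frac{N-2}{N-1}\,\hatCpr^{\,up,(-n)} + \frac{1}{N}\bigl(\upr_n-\hatmpr^{(-n)}\bigr)\bigl(\mcG(\upr_n)-\barG^{(-n)}\bigr)^\top,
\end{align*}
whence $\normn{\hatCpr^{\,pp}-\Cpr^{\,pp}} \le \normn{\hatCpr^{\,pp,(-n)}-\Cpr^{\,pp}} + \normn{\Cpr^{\,pp}}/(N-1) + N^{-1}\normn{\mcG(\upr_n)-\barG^{(-n)}}_2^2$, and similarly for the cross-covariance. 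Bounding $\normn{\barG^{(-n)}}_2\le\normn{\E[\mcG(\upr)]}_2+\normn{\barG^{(-n)}-\E[\mcG(\upr)]}_2$ and $\normn{\hatmpr^{(-n)}}_2\le\normn{\mpr}_2+\normn{\hatmpr^{(-n)}-\mpr}_2$, and controlling the sample-mean deviations by standard sub-Gaussian norm concentration (cf. Theorem~\ref{thm:SubGaussianConcentration}, each of order $\sqrt{r_2(\Cpr)/N}\lor\sqrt{t/N}$, respectively with $\Cpr^{\,pp}$), the correction terms split into a genuinely particle-dependent piece of size $N^{-1}\bigl(\normn{\upr_n}_2+\normn{\mpr}_2\bigr)\bigl(\normn{\mcG(\upr_n)}_2+\normn{\E[\mcG(\upr)]}_2\bigr)$, which produces the $c_1 c_2/N$ term of the bound, plus lower-order remainders that are absorbed into the $r_2/N$ and $t/N$ terms.

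For the leading terms $\normn{\hatCpr^{\,pp,(-n)}-\Cpr^{\,pp}}$ and $\normn{\hatCpr^{\,up,(-n)}-\Cpr^{\,up}}$ I would apply Proposition~\ref{thm:Koltchinski} to the i.i.d. sub-Gaussian vectors $z_m=[\upr_m^\top,\mcG(\upr_m)^\top]^\top\in\R^{d+k}$, $m\neq n$ (sub-Gaussian because each $v^\top z_m$ is a Lipschitz function of the Gaussian $\upr_m$), whose covariance is the block matrix $\Sigma_z=\begin{bmatrix}\Cpr&\Cpr^{\,up}\\\Cpr^{\,pu}&\Cpr^{\,pp}\end{bmatrix}$. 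Both $\hatCpr^{\,pp,(-n)}-\Cpr^{\,pp}$ and $\hatCpr^{\,up,(-n)}-\Cpr^{\,up}$ are sub-blocks of $\hat\Sigma_z^{(-n)}-\Sigma_z$, hence of norm at most $\normn{\hat\Sigma_z^{(-n)}-\Sigma_z}$; moreover $\normn{\Sigma_z}\le\normn{\Cpr}+\normn{\Cpr^{\,pp}}+\normn{\Cpr^{\,up}}$ and, since $\normn{\Sigma_z}\ge\max(\normn{\Cpr},\normn{\Cpr^{\,pp}})$,
\begin{align*}
r_2(\Sigma_z) = \frac{\ttrace(\Cpr)+\ttrace(\Cpr^{\,pp})}{\normn{\Sigma_z}} \le r_2(\Cpr)+r_2(\Cpr^{\,pp}),
\end{align*}
so that $\sqrt{r_2(\Sigma_z)/N}\lesssim\sqrt{r_2(\Cpr)/N}\lor\sqrt{r_2(\Cpr^{\,pp})/N}$ and $r_2(\Sigma_z)/N\lesssim r_2(\Cpr)/N\lor r_2(\Cpr^{\,pp})/N$. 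Proposition~\ref{thm:Koltchinski} (with $N-1\asymp N$ samples) then delivers, with probability at least $1-ce^{-t}$, exactly the $r_2(\Cpr)$, $r_2(\Cpr^{\,pp})$ and $t$ contributions in the statement, up to the prefactor $\normn{\Sigma_z}$ which is folded into $c_1$. Assembling the $\msP$-perturbation bound, the leave-one-out identities, and these concentration estimates, then union-bounding over the $O(1)$ high-probability events and adjusting universal constants, yields the theorem. I expect the main obstacle to be the leave-one-out bookkeeping: establishing the exact rank-one correction identities and, more delicately, verifying that every cross term arising when the corrections are expanded (particle norms multiplied against sample-mean deviations) is dominated either by the $c_2/N$ term or by the $r_2/N$ and $t/N$ terms already present, so that no hidden dependence on the state dimension $d$ or on $\ttrace(\Cpr)$ survives.
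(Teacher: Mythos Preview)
Your proposal is correct and follows essentially the same approach as the paper: the paper also factors through the continuity bound of Lemma~\ref{lem:EKIopCtyBdd}, stacks $[\upr_m^\top,\mcG(\upr_m)^\top]^\top$ into a joint sub-Gaussian vector, applies the Koltchinskii--Lounici bound to its block covariance, and handles the conditioning on $\upr_n$ via a leave-one-out/known-particle decomposition (packaged there as Lemmas~\ref{lemma:SampleCovDeviationN} and~\ref{lemma:SampleCovDeviationNOpNorm}). Your rank-one correction identity is a slightly cleaner variant of the paper's decomposition, but the substance and structure of the argument are the same.
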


\begin{remark}[Dependence of Constants on Model Parameters] \label{rem:dependenceconstantsEKI}
The proof of Theorem \ref{thm:EKI} in Appendix~\ref{sec:localizationProofs} gives an explicit expression for the dependence of $c$ on its arguments.  These bounds may be used to establish the sufficient ensemble size to ensure that the EKI update approximates well the mean-field EKI update in the unstructured covariance setting.
\end{remark}

\begin{theorem}[Approximation of Mean-Field EKI with LEKI ---Operator-Norm Bound] \label{thm:LEKI}
    Let $y$ be generated according to \eqref{eq:IP} with Lipschitz forward model $\mcG:\R^d \to \R^k$. Assume that $\Cpr^{\,up} \in \msU_{d,k}(q_1, R_{q_1})$, $\Cpr^{\,pu} \in \msU_{k,d}(q_2, R_{q_2})$ and $\Cpr^{\,pp} \in \msU_{k}(q_3, R_{q_3})$ for $q_1, q_2, q_3 \in [0,1)$, and positive constants $R_{q_1}, R_{q_2}, R_{q_3}$. Let $\upost_n^{\rho}$ and $\upost_n^*$ be the LEKI and mean-field EKI updates outlined in \eqref{eq:LEKIupdate} and \eqref{eq:MFEKIUpdate} respectively. For any $t\ge 1$, set 
    \begin{align*}
        \rho_{N,1}=\rho_{N,2} \asymp 
        \frac{c_1}{N}
        +
       (\Cpr_{(1)} \lor \Cpr^{\,pp}_{(1)})
        \inparen{ 
        \inparen{\frac{t}{N} \lor \sqrt{\frac{t}{N}}}
        \inparen{
        \sqrt{r_\infty(\Cpr)}
        \lor 
        \sqrt{r_\infty(\Cpr^{\,pp})}
        }
        \lor 
        \sqrt{    \frac{ r_{\infty}(\Cpr)}{N}   } 
        \sqrt{    \frac{ r_{\infty}(\Cpr^{\,pp})}{N}   }
        },
    \end{align*}
    and 
    \begin{align*}
        \rho_{N, 3} \asymp
        \frac{c_2}{N}
        +
        \Cpr_{(1)}^{\,pp}
        \inparen{
        \sqrt{\frac{r_\infty (\Cpr^{\,pp})}{N}}
        \lor 
        \sqrt{\frac{t}{N}}
        \lor 
        \frac{t}{N}
        \lor 
        \frac{t r_\infty(\Cpr^{\,pp})}{N}
        },
    \end{align*}
    where $c_1=c_1(\normn{\upr_n}_{\infty}, \normn{\mpr}_{\infty}, \normn{\mcG(\upr_n)}_{\infty}, \normn{\E[\mcG(\upr)]}_{\infty})$ and $c_2=c_2(\normn{\mcG(\upr_n)}_{\infty}, \normn{\E[\mcG(\upr)]}_{\infty})$, with $\upr \sim \mcN(\mpr,\Cpr).$ 
    There exist positive universal constants $c_3,c_4$ such that, with probability at least $1-c_3e^{-c_4t}$, 
    \begin{align*}  
        \normn{ \upost_n^{\rho} - \upost_n^* }_2 
        \le
        c_5
        (R_{q_1} \rho_{N,1}^{1-q_1} 
        \lor 
        R_{q_2} \rho_{N,2}^{1-q_2}
        \lor 
        R_{q_3} \rho_{N,3}^{1-q_3}),
    \end{align*}
    where $c_5 = c_5(\normn{y-\mcG(\upr_n) -\eta_n}_2 , \normn{\Gamma^{-1}}, \normn{\Cpr^{\,up}}).$
\end{theorem}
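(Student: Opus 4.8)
The plan is to reduce the bound to a quantitative stability estimate for the nonlinear gain operator $\msP$ and then feed it with the dimension-free localized (cross-)covariance bounds of Theorems~\ref{thm:SoftSparistyCovarianceBound} and \ref{thm:SoftSparistyCrossCovarianceBound}. Subtracting the mean-field update \eqref{eq:MFEKIUpdate} from the LEKI update \eqref{eq:LEKIupdate}, the common prior particle and noise perturbation cancel, leaving
\begin{align*}
    \upost_n^{\rho} - \upost_n^*
    = \bigl[ \msP(\hatCpr^{\,up}_{\rho_{N,1}}, \hatCpr^{\,pp}_{\rho_{N,3}}) - \msP(\Cpr^{\,up}, \Cpr^{\,pp}) \bigr]
    \bigl( y - \mcG(\upr_n) - \eta_n \bigr),
\end{align*}
where we localize $\hatCpr^{\,pp}$, which enters the inverse inside $\msP$, at the covariance radius $\rho_{N,3}$ and $\hatCpr^{\,up}$ at the cross-covariance radius $\rho_{N,1}=\rho_{N,2}$; hence, conditionally on $\upr_n$ and $\eta_n$,
\begin{align*}
    \normn{\upost_n^{\rho} - \upost_n^*}_2
    \le \normn{y - \mcG(\upr_n) - \eta_n}_2 \,
    \normn{\msP(\hatCpr^{\,up}_{\rho_{N,1}}, \hatCpr^{\,pp}_{\rho_{N,3}}) - \msP(\Cpr^{\,up}, \Cpr^{\,pp})}.
\end{align*}
It therefore suffices to control the operator-norm perturbation of $\msP$.

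For the stability of $\msP$ I would use the resolvent identity: with $B_1 = \hatCpr^{\,pp}_{\rho_{N,3}} + \Gamma$ and $B_2 = \Cpr^{\,pp} + \Gamma$,
\begin{align*}
    \msP(\hatCpr^{\,up}_{\rho_{N,1}}, \hatCpr^{\,pp}_{\rho_{N,3}}) - \msP(\Cpr^{\,up}, \Cpr^{\,pp})
    = \bigl(\hatCpr^{\,up}_{\rho_{N,1}} - \Cpr^{\,up}\bigr) B_1^{-1}
    - \Cpr^{\,up} B_2^{-1} \bigl(\hatCpr^{\,pp}_{\rho_{N,3}} - \Cpr^{\,pp}\bigr) B_1^{-1}.
\end{align*}
Since the positive-part convention of Subsection~\ref{sssec:LEKI} makes $\hatCpr^{\,pp}_{\rho_{N,3}}$ positive semidefinite, $\normn{B_1^{-1}} \le \normn{\Gamma^{-1}}$, and likewise $\normn{\Cpr^{\,up} B_2^{-1}} \le \normn{\Cpr^{\,up}}\normn{\Gamma^{-1}}$ (or one may invoke the boundedness of $\msP$ in Lemma~\ref{lem:EKIopCtyBdd}). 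Combining,
\begin{align*}
    \normn{\upost_n^{\rho} - \upost_n^*}_2
    \lesssim \normn{y - \mcG(\upr_n) - \eta_n}_2
    \Bigl( \normn{\Gamma^{-1}}\,\normn{\hatCpr^{\,up}_{\rho_{N,1}} - \Cpr^{\,up}}
    + \normn{\Cpr^{\,up}}\normn{\Gamma^{-1}}^2\,\normn{\hatCpr^{\,pp}_{\rho_{N,3}} - \Cpr^{\,pp}} \Bigr),
\end{align*}
and the prefactors here are precisely what makes up $c_5$.

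It remains to bound the two localized estimation errors, which is where Theorems~\ref{thm:SoftSparistyCovarianceBound} and \ref{thm:SoftSparistyCrossCovarianceBound} enter. The one subtlety is that, as the analysis is conditional on $\upr_n$, the matrices $\hatCpr^{\,pp}$ and $\hatCpr^{\,up}$ are not sample (cross-)covariances of i.i.d.\ vectors: they contain the fixed particle $\upr_n$ and are centred by the full $N$-particle sample means $\barG$ and $\hatmpr$. I would pass to the leave-one-out estimators built from $\{\upr_m\}_{m\ne n}$ and their own sample means, which conditionally \emph{are} genuine sample (cross-)covariances of $N-1$ i.i.d.\ sub-Gaussian vectors, so that Theorems~\ref{thm:SoftSparistyCovarianceBound} and \ref{thm:SoftSparistyCrossCovarianceBound} apply (conditionally) verbatim. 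Elementary bookkeeping, together with a high-probability $\ell_\infty$ control of $\barG$ and $\hatmpr$, shows that replacing the leave-one-out estimators by the full ones shifts the maximum-entry error by at most $\mathcal{O}(c_1/N)$ for the cross-covariance and $\mathcal{O}(c_2/N)$ for the covariance, with $c_1,c_2$ exactly the $\ell_\infty$-norm constants in the statement. This is why the localization radii are inflated by the additive $c_1/N$ and $c_2/N$ terms: with that inflation the events $\{\normn{\hatCpr^{\,pp} - \Cpr^{\,pp}}_{\max} \lesssim \rho_{N,3}\}$ and $\{\normn{\hatCpr^{\,up} - \Cpr^{\,up}}_{\max} \lesssim \rho_{N,1}\}$ still hold with probability $\ge 1 - ce^{-t}$, and re-running the purely deterministic thresholding step at the enlarged radius --- which needs only the max-entry error below threshold and the sparsity memberships $\Cpr^{\,pp}\in\msU_k(q_3,R_{q_3})$, $\Cpr^{\,up}\in\msU_{d,k}(q_1,R_{q_1})$, $\Cpr^{\,pu}\in\msU_{k,d}(q_2,R_{q_2})$ --- yields $\normn{\hatCpr^{\,pp}_{\rho_{N,3}} - \Cpr^{\,pp}} \lesssim R_{q_3}\rho_{N,3}^{1-q_3}$ and, using both the row- and column-sparsity of the cross-covariance via Theorem~\ref{thm:SoftSparistyCrossCovarianceBound}, $\normn{\hatCpr^{\,up}_{\rho_{N,1}} - \Cpr^{\,up}} \lesssim R_{q_1}\rho_{N,1}^{1-q_1} \lor R_{q_2}\rho_{N,2}^{1-q_2}$. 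Substituting into the displayed bound, absorbing $\normn{y-\mcG(\upr_n)-\eta_n}_2$, $\normn{\Gamma^{-1}}$ and $\normn{\Cpr^{\,up}}$ into $c_5$, and a union bound over the estimation and mean-control events finishes the proof.

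The step I expect to be the main obstacle is this conditioning reduction. Transferring the i.i.d.\ localized covariance theorems to the conditional setting forces the leave-one-out decoupling and a careful accounting of how the $\mathcal{O}(1/N)$ leave-one-out shift and the $\ell_\infty$ sample-mean fluctuations propagate through the (non-Lipschitz) thresholding operator, which is exactly what dictates the $c_1/N$ and $c_2/N$ inflation of the radii and what must be checked in order to retain the clean $R_q\rho^{1-q}$ rate. A secondary technical point is confirming that the positive-part convention keeps $(\hatCpr^{\,pp}_{\rho_{N,3}} + \Gamma)^{-1}$ bounded by $\normn{\Gamma^{-1}}$ uniformly on the high-probability event, so that the resolvent-identity estimate is valid there.
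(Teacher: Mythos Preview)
Your proposal is correct and mirrors the paper's proof almost exactly: the paper also subtracts the updates, bounds by $\normn{y-\mcG(\upr_n)-\eta_n}_2$ times the $\msP$-perturbation, applies the continuity of $\msP$ (Lemma~\ref{lem:EKIopCtyBdd}, which is precisely your resolvent identity), and then invokes localized (cross-)covariance bounds; the conditioning subtlety you flag is handled there by packaging the leave-one-out correction into dedicated ``known particle'' lemmas (Lemmas~\ref{lemma:SampleCovDeviationNMaxNorm}, \ref{lemma:SampleCrossCovDeviationNMaxNorm}, \ref{lem:SoftSparsityCovarianceBoundGeneralConditional}, \ref{lem:SoftSparsityCrossCovarianceBoundGeneralConditional}), which implement exactly the $\mathcal{O}(c_1/N)$, $\mathcal{O}(c_2/N)$ inflation you describe.
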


\begin{remark}[Dependence of Constants on Model Parameters]
The proof of Theorem \ref{thm:LEKI} in Appendix \ref{sec:localizationProofs} gives an explicit expression for the dependence of $c$ on its arguments.  As discussed in Remark \ref{rem:dependenceconstantsEKI}, these bounds may be used to establish the sufficient ensemble size to ensure that the LEKI update approximates well the mean-field EKI update in the structured covariance setting.
\end{remark}

\begin{remark}[On the Soft-Sparsity Assumptions]
    Importantly, Theorem~\ref{thm:LEKI} makes no assumptions on the covariance matrix $\Cpr$, and so can be used even in cases where $\Cpr$ is dense, but the covariances $\Cpr^{\,up}$, $\Cpr^{\,pu},$ and $\Cpr^{\,pp}$ can be reasonably assumed to be sparse. In the case that sparsity assumptions on $\Cpr$ are appropriate, then an interesting question is: what (explicit) assumptions on $\mcG$ ensure sparsity of $\Cpr^{\,up}$, $\Cpr^{\,pu}$, and $\Cpr^{\,pp}$? We provide here two simple arguments that may provide some insight. Throughout, $c_1,c_2,c_2, c_4,c_5$ are arbitrary positive constants independent of both state and observation dimensions $d$ and $k$, and $q\in [0,1)$.
    \begin{enumerate}
        \item Suppose $\Cpr \in \msU_{d}(q,c_1)$ and $\E[D \mcG]^\top \in \msU_{d,k}(q,c_2)$. Then there exists $c_3$ such that $\Cpr^{\, up} \in \msU_{d,k}(q, c_3)$. We provide a formal statement of this result in Lemma~\ref{lem:CrossCovarianceStein}. 
         Similarly, if $\E[D \mcG] \in \msU_{k,d}(q,c_4)$, then there exists $c_5$ such that $\Cpr^{\, pu} \in \msU_{k,d}(q, c_5)$. 
        The assumptions on the expected Jacobian $\E[D \mcG]$ can be understood as the requirements that, in expectation:
        \begin{enumerate}
            \item Any coordinate function $\mcG_j$ of $\mcG$ depends on its input $\upr$ only through a subset of $\upr$ whose size does not grow with $k$ nor $d$.
            \item Any state coordinate $\upr_j$ of $\upr$ is acted on only by a subset of the coordinate-functions of $\mcG$ whose size does not grow with $k$ nor $d$.
        \end{enumerate}
        For example, a Jacobian that is banded in expectation would satisfy these two properties.

        \item Suppose $\Cpr \in \msU_{d}(q,c_1).$ Then there exists $c_2$ such that $\Cpr^{\,pp} \in \msU_{k}(q,c_2)$ whenever $\mcG(u) = Au$ is a linear map with $A \in \msU_{k,d}(q, c_3)$ and $A^\top \in \msU_{d,k}(q,c_4),$ i.e. whenever $A$ has both rows and columns that are sparse. This condition holds, for example, for banded $A$.  We provide a formal statement of this result in Lemma~\ref{lem:BSBsparsity}. 
    \end{enumerate}
    The two arguments above indicate that if $\mcG$ acts on local subsets of $\upr$, which holds for instance for convolution or moving average operators, then one can expect the sparsity of $\Cpr$ to carry on to $\Cpr^{\,up},$ $\Cpr^{\,pu},$ and $\Cpr^{\,pp}$. 
\end{remark}

\begin{remark}[Comparison to the Literature]
    Although the focus of this subsection is the LEKI, it is useful to compare our Theorems~\ref{thm:EKI} and \ref{thm:LEKI} to existing results for the performance of ensemble based algorithms with localization. In this regard, our results are closest to those of \cite{tong2018performance}, which shows that an ensemble that scales with the logarithm of the state dimension times a localization radius suffices for good performance of the localized EnKF (LEnKF). They study performance over multiple time steps and linear dynamics under a stability assumption which enforces control over the model matrices as well as a sparse ($q=0$) structure of the underlying true covariance matrices. They consider \textit{domain localization} whereas we study \textit{covariance localization}. In contrast to our results, \cite{tong2018performance} employs covariance localization and utilizes a Schur-product localization scheme in which elements whose indices are beyond a certain bandwidth are set to zero, whereas we study localization via thresholding (recall our discussion comparing these two approaches in Subsection~\ref{sssec:LEKI}). Consequently, our required localization radius is in terms of the max-log effective dimension whereas theirs is in terms of the bandwidth of the underlying covariance matrix. Our results are dimension-free in that they do not rely on the state dimension $d$, and so as noted in Remark~\ref{rem:maxLogEffDim}, our bounds can have significantly better than logarithmic dependence on dimension. Our setting also differs from \cite{tong2018performance} in that our dynamics are allowed to be nonlinear, and our prior ensemble can be sub-Gaussian as opposed to Gaussian. Related to this point is that the analysis in \cite{tong2018performance} does not account for noise introduced from adding perturbations to the ensemble update, which is justified by a law of large numbers argument; however in the non-asymptotic and nonlinear settings, it is likely that one must account for this noise  especially  when considering the covariance between the current ensemble and the perturbation noise at a given iteration of the algorithm. We view it as an important avenue to extend the results of this subsection to a multi-step analysis, and a particularly important question is whether dimension-free control of the LEnKF can be rigorously shown utilizing a combination of our results and those of \cite{tong2018performance}.  The LEKI has also been recently studied in \cite{tong2022localized} under a nonlinear, multi-step setting. The authors study convergence of the iterates to a global minimizer and the rate of collapse of the ensemble. 
    They argue that localization is a remedy for the ``subspace property'' of the EKI, which refers to the fact that ensembles at any given iteration live in the linear subspace spanned by the initial ensemble, which cannot capture the true state if $N < d$. Their analysis differs from ours in that they study the continuous-time setting whereas we analyze discrete time updates as implemented in practice. 
    Further, while they discuss that the size of the ensemble may be much smaller than the state dimension, as well as illustrate this with simulations, they do not provide an explicit characterization of the sufficient ensemble size. Our results also show that the LEKI is close to the mean field version of the problem, which is not considered in their set-up. An interesting open question is whether the results of this section can be used in conjunction with results in \cite{tong2022localized} to provide a sufficient ensemble size for LEKI over multiple iterations.
\end{remark}

\section{Conclusions, Discussion, and Future Directions}\label{sec:conclusions}
This paper has introduced a non-asymptotic approach to the study of ensemble Kalman methods. Our theory explains why these algorithms may be accurate provided that the ensemble size is larger than a suitable notion of effective dimension, which may be dramatically smaller than the state dimension due to spectrum decay and/or approximate sparsity. Our non-asymptotic results in Section \ref{sec:withoutlocalization} tell apart PO and SR updates for posterior approximation, and our results in Section \ref{sec:withlocalization} demonstrate the potential advantage of using localization in sequential-optimization algorithms.

As discussed in Subsection~\ref{sssec:LEKI}, localization is also often used in posterior-approximation algorithms. For instance, one may define a localized PO update by
\begin{align}\label{eq:locPOupdate}
\begin{split}
   \hatmpost &= \msM(\hatmpr, \hatCpr_{\rho_N}) - \msK(\hatCpr_{\rho_N}) \bareta,  \\
   \hatCpost &= \msC(\hatCpr_{\rho_N}) + \whatO_{\rho_N},
\end{split}
\end{align}
where $\whatO_{\rho_N}$ is defined replacing $\hatCpr$ with $\hatCpr_{\rho_N}$ in \eqref{eq:offset}.
Similarly, one may define a localized SR update by
\begin{align}\label{eq:locSRupdate}
\begin{split}
    \hatmpost&= \msM(\hatmpr, \hatCpr_{\rho_N}),\\
    \hatCpost &= \msC(\hatCpr_{\rho_N}).
    \end{split}
\end{align}
It is then natural to ask if localized PO and SR updates can yield better approximation of the posterior mean and covariance than those without localization in Theorems~\ref{lem:meanwithoutlocExpectation} and \ref{lem:covwithoutlocExpectation}. The answer for the posterior mean seems to be negative.

To see why, consider for intuition that we are given a random sample $X_1,\dots, X_N$ from a normal distribution with mean $\mu^X$ and covariance $\Sigma^X$ with the objective to estimate $\mu^X$. Standard results, see e.g. \cite[Example 1.14]{lehmann2006theory}, show that the sample mean $\barX$ is minimax optimal for $\ell_2$-loss regardless of whether or not $\Sigma^X$ is known. In other words, the minimax rate of estimating $\mu^X$ can be achieved without making use of information regarding $\Sigma^X$. It follows then that placing assumptions on $\Sigma^X$ can lead to impressive improvements in the covariance estimation problem (as shown in Section~\ref{sec:withlocalization}) but cannot be expected to affect the mean estimation problem. Similarly, in our inverse problem setting, sparsity assumptions on the prior covariance $\Cpr$ cannot be expected to translate into a better bound on $\normn{\hatmpost - \mpost}_2$:  this quantity is a function of both the covariance deviation $\normn{ \hatCpr_{\rho_N} - \Cpr}$ and the prior mean deviation $\normn{\hatmpr - \mpr}_2$ and since the latter is unaffected it dominates the overall bound, yielding an error bound of the same order as that in Theorem~\ref{lem:meanwithoutlocExpectation}. As discussed in Remark~\ref{rem:MultiStep}, a potential avenue for future investigation is to utilize techniques introduced in this manuscript to study alternative localization schemes in the posterior approximation setting, such as \textit{domain localization} considered in \cite{tong2018performance}. In short, covariance localization as defined in \eqref{eq:locSRupdate} does not lead to improved bounds for the posterior-approximation problem.

Similar issues to those arising in the estimation of the posterior mean affect the analysis of the localized offset $\whatO_{\rho_N}$, and we therefore do not expect improvement on the bound in Theorem~\ref{lem:covwithoutlocExpectation} for covariance estimation with the localized PO update. We note, however, that for localized SR it is possible to derive an analog to the high probability version of Theorem~\ref{lem:covwithoutlocExpectation} (see Theorem~\ref{th:covwithoutloc})  with an improved error bound, which we present in Theorem~\ref{th:SRcovwithloc}. 

Our discussion here should not be taken to imply that localization in posterior-approximation algorithms is not useful; it is plausible that localization in one step of the algorithm can lead to improved bounds in later steps, and we leave this multi-step analysis of localized posterior approximation ensemble updates as an important line for future work. A related phenomenon is known to occur in sequential Monte Carlo, where a proposal density that may be optimal for one step of the filter may not be optimal over multiple steps \cite{agapiou2017importance}. Another interesting direction for future study is the non-asymptotic analysis of ensemble Kalman methods for likelihood approximations in state-space models \cite{chen2021auto}. Finally, we envision that the non-asymptotic approach set forth here may be adopted to design and analyze new multi-step methods for posterior-approximation and sequential-optimization in inverse problems and data assimilation.

\section*{Acknowledgments}
DSA is thankful to the National Science Foundation for their support through the grants NSF DMS-2027056 and NSF DMS-2237628, to the BBVA Foundation for the Jos\'e Luis Rubio de Francia start-up grant, and to the Department of Energy for funding DOE DE-SC0022232. The authors are grateful to Jiaheng Chen, Subhodh Kotekal, Yandi Shen, and Nathan Waniorek for many helpful discussions. The authors are also thankful to the anonymous reviewers for their insightful suggestions that improved the manuscript.

\bibliographystyle{plain}
\bibliography{references}

\renewcommand{\theHsection}{A\arabic{section}}
\begin{appendix}

\section*{\Large Appendix}
We provide proofs of all theorems in the main body. We will use the following result extensively and summarise it here for brevity. Given events $E_1,\dots, E_J$ that each occur with probability at least $1-ce^{-t}$, where $t \ge 1$ and $c>0$ is a universal constant that may be different for each event, then 
    \begin{align*}
        \P \inparen{\bigcap_{j=1}^J E_j}
        = 1 - \P \inparen{\bigcup_{j=1}^J \bar{E}_j}  
        \ge  1- \sum_{j=1}^J \P(\bar{E}_j)
        \ge 1-ce^{-t}.
    \end{align*}

\section{Proofs: Section \ref{sec:withoutlocalization}} \label{sec:proofs}
This appendix contains the proofs of all the theorems in Section \ref{sec:withoutlocalization}. Background results on covariance estimation are reviewed in Subsection \ref{sssec:A1} and the continuity and boundedness of the Kalman gain, mean-update, covariance-update, and nonlinear gain-update operators are summarized in Subsection \ref{sssec:A2}. These preliminary results are used in Subsection \ref{sssec:A3} to establish our main theorems. 

\subsection{Preliminaries: Concentration and Covariance Estimation}\label{sssec:A1}

\begin{theorem}[{Sub-Gaussian Norm Concentration, \cite[Exercise 6.3.5]{vershynin2018high}}] \label{thm:SubGaussianConcentration}
    Let $X$ be a $d$-dimensional sub-Gaussian random vector with $\E[X] = \mu^X$, $\tvar [X]=\Sigma^X$. Then, for any $t \ge 1,$ with probability at least $1-ce^{-t}$ it holds that
    \begin{align*}
        \| X- \mu^X\|_2 \lesssim  
        \sqrt{\ttrace(\Sigma^X)} + \sqrt{t\normn{\Sigma^X} } 
        \lesssim 
        \sqrt{\normn{\Sigma^X} (r_2(\Sigma^X) \lor t )} \,.
    \end{align*}
\end{theorem}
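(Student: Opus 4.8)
The plan is to reduce to a mean-zero, isotropic model and then route the argument through the Hanson--Wright inequality so that the \emph{trace} of $\Sigma^X$ (rather than the ambient dimension $d$) appears. First I would assume without loss of generality that $\mu^X = 0$, and write $X = (\Sigma^X)^{1/2} Z$ with $Z$ a mean-zero isotropic sub-Gaussian vector whose one-dimensional marginals have $\psi_2$-norm bounded by a universal constant (this is the normalisation implicit in the notion of sub-Gaussianity used throughout; the constant is absorbed into $\lesssim$). Then $\|X\|_2^2 = Z^\top \Sigma^X Z$ is a quadratic form in $Z$ with $\E\bigl[ Z^\top \Sigma^X Z \bigr] = \ttrace(\Sigma^X)$, since $Z$ is isotropic.

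Next I would apply the Hanson--Wright inequality \cite[Theorem 6.2.1]{vershynin2018high} to this quadratic form. The three quantities that enter the bound are the mean $\ttrace(\Sigma^X)$, the squared Frobenius norm $\sum_i \lambda_i(\Sigma^X)^2 \le \normn{\Sigma^X}\,\ttrace(\Sigma^X)$ (using $\Sigma^X \succeq 0$), and the operator norm $\normn{\Sigma^X}$. Hanson--Wright then gives, for every $s \ge 0$,
\begin{align*}
    \P\bigl( \bigl| \|X\|_2^2 - \ttrace(\Sigma^X) \bigr| > s \bigr)
    \le 2\exp\!\left( -c \min\!\left( \frac{s^2}{\normn{\Sigma^X}\,\ttrace(\Sigma^X)},\ \frac{s}{\normn{\Sigma^X}} \right)\right).
\end{align*}
Equivalently, one may quote the concentration-of-norm corollary \cite[Theorem 6.3.2]{vershynin2018high}, which states $\bigl\| \, \|X\|_2 - \sqrt{\ttrace(\Sigma^X)} \, \bigr\|_{\psi_2} \lesssim \sqrt{\normn{\Sigma^X}}$, and then convert the $\psi_2$-norm bound into a tail bound directly.

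For the third step I would set the right-hand side above equal to $2e^{-t}$, which forces $s \asymp \sqrt{\normn{\Sigma^X}\,\ttrace(\Sigma^X)\,t} \lor \normn{\Sigma^X}\,t$; hence on the corresponding event $\|X\|_2^2 \le \ttrace(\Sigma^X) + s$. Taking square roots via $\sqrt{a+b} \le \sqrt a + \sqrt b$ leaves the term $\sqrt{\normn{\Sigma^X}\,t}$ directly, while the cross term is controlled by AM--GM: $(\normn{\Sigma^X}\,\ttrace(\Sigma^X)\,t)^{1/4} = \sqrt{\,\sqrt{\ttrace(\Sigma^X)}\cdot\sqrt{\normn{\Sigma^X}\,t}\,} \le \tfrac12\bigl(\sqrt{\ttrace(\Sigma^X)} + \sqrt{\normn{\Sigma^X}\,t}\bigr)$. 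This yields the first inequality, $\|X - \mu^X\|_2 \lesssim \sqrt{\ttrace(\Sigma^X)} + \sqrt{t\,\normn{\Sigma^X}}$. The second inequality is then purely algebraic: from the definition \eqref{eq:effectiveDim} we have $\ttrace(\Sigma^X) = \normn{\Sigma^X}\, r_2(\Sigma^X)$, so each summand is at most $\sqrt{\normn{\Sigma^X}}\,\sqrt{r_2(\Sigma^X) \lor t}$, and their sum is at most $2\sqrt{\normn{\Sigma^X}\,(r_2(\Sigma^X)\lor t)}$.

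I expect the only real obstacle to be handling the anisotropy correctly. A naive bound on $\|X\|_2 = \sup_{\|u\|_2 = 1}\langle X, u\rangle$ via an $\varepsilon$-net of the sphere and a union bound produces the ambient dimension $d$ instead of $\ttrace(\Sigma^X)$ (equivalently $r_2(\Sigma^X)$), which would defeat the purpose of the estimate; passing to the quadratic form $Z^\top \Sigma^X Z$ and invoking Hanson--Wright is precisely what captures the trace. The remaining work --- converting the two-regime (sub-exponential) tail of the squared norm into the clean additive sub-Gaussian form and bookkeeping the $\min$/$\lor$ terms --- is routine.
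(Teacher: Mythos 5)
The paper supplies no proof of this theorem; it is cited verbatim as \cite[Exercise 6.3.5]{vershynin2018high}, so there is no in-paper argument to compare against. Your route is exactly the one that exercise intends: pass to the quadratic form $\|X\|_2^2 = Z^\top \Sigma^X Z$, invoke Hanson--Wright (or its corollary, Vershynin's Theorem 6.3.2), bound $\|\Sigma^X\|_F^2 \le \|\Sigma^X\|\,\ttrace(\Sigma^X)$, invert the two-regime tail at level $e^{-t}$, and clean up with $\sqrt{a+b}\le\sqrt a+\sqrt b$ and AM--GM; the final reduction to $\sqrt{\|\Sigma^X\|(r_2(\Sigma^X)\lor t)}$ via $\ttrace(\Sigma^X)=\|\Sigma^X\| r_2(\Sigma^X)$ is correct. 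Your closing remark about why an $\varepsilon$-net argument would give $d$ rather than $\ttrace(\Sigma^X)$ is also the right diagnosis.

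One point you should make explicit. Both Hanson--Wright (Theorem 6.2.1) and the norm-concentration result (Theorem 6.3.2) in Vershynin require the whitened vector $Z$ to have \emph{independent} mean-zero unit-variance sub-Gaussian coordinates with $\max_i\|Z_i\|_{\psi_2}\lesssim 1$. Saying only that ``$Z$ is a mean-zero isotropic sub-Gaussian vector whose one-dimensional marginals have bounded $\psi_2$-norm'' is weaker than what these theorems hypothesize: a general sub-Gaussian vector $X$ in the paper's sense (i.e.\ $\sup_{\|v\|_2=1}\|v^\top X\|_{\psi_2}<\infty$) need not factor as $X=\mu^X+(\Sigma^X)^{1/2}Z$ with $Z$ of this independent form, and without that structure the stated bound can fail unless a sub-Gaussian constant appears on the right-hand side. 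This imprecision is already present in the paper's statement of the theorem and in the cited exercise (where the independence is part of the hypothesis), and it is harmless in all the paper's applications (Gaussian vectors and their averages), so it does not create a downstream problem. But to make your proof watertight, you should either state the independence assumption on the coordinates of $Z$ explicitly, or carry the sub-Gaussian norm $\|Z\|_{\psi_2}^2$ through the bound.
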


    \begin{proof}[Proof of Proposition~\ref{thm:Koltchinski}]
        For $n=1,\dots,N$, let $\upr_n = Z_n + m$, where $Z_n$ is a centered sub-Gaussian random vector with $\tvar[Z_n]=\Cpr.$ Then we may write 
        \begin{align*}
            \hatCpr
            &= \frac{1}{N-1} \sum_{n=1}^N 
            (Z_n - \barZ)
            (Z_n - \barZ)^\top
            \asymp 
            \frac{1}{N} 
            \sum_{n=1}^N 
            Z_n Z_n^\top -\barZ \barZ^\top
            \equiv \hatCpr^0 -\barZ \barZ^\top.
        \end{align*}
        Therefore, 
        \begin{align*}
            \normn{\hatCpr - \Cpr}
            &\le \normn{\hatCpr^0 - \Cpr}
            + \normn{\barZ \barZ^\top}
            = \normn{\hatCpr^0 - \Cpr}
            + \normn{\barZ}_2^2.
        \end{align*} 
        Let $E_1$ denote the event on which 
        \begin{align*}
            \normn{\hatCpr^0 - \Cpr} 
            \lesssim 
            \normn{\Cpr} \inparen{ \sqrt{\frac{r_2(\Cpr)}{N}} \lor \frac{r_2(\Cpr)}{N} 
            \lor \sqrt{\frac{t}{N}} \lor \frac{t}{N}},
        \end{align*}
        and $E_2$ the event on which 
        \begin{align*}
            \normn{\barZ}_2^2 
            \lesssim 
            \normn{\Cpr} \inparen{\frac{r_2(\Cpr)}{N} \lor \frac{t}{N}}.
        \end{align*}
        Then by Theorem 9 of \cite{koltchinskii2017concentration}, $\P(E_1) \ge 1-e^{-t}$, and by Theorem~\ref{thm:SubGaussianConcentration}, $\P(E_2) \ge 1-e^{-t}$. Therefore, the result holds on $E_1 \cap E_2,$ which has probability at least $1-ce^{-t}$. 
    \end{proof}

\begin{lemma} [Sample Covariance Operator Norm Bound] \label{lem:ForecastCovarianceBound}
    Let $\upr_1,\dots, \upr_N$ and $\hatCpr$ be as in Proposition~\ref{thm:Koltchinski}. Then, for any $t \ge 1,$ it holds with probability at least $1-ce^{-t}$ that
    \begin{align*}
        \normn{\hatCpr}
        \lesssim 
        \normn{\Cpr} \inparen{1 \lor \frac{r_2(\Cpr)}{N} \lor \frac{t}{N} }.
    \end{align*}
\end{lemma}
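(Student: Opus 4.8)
The plan is to obtain the bound on $\normn{\hatCpr}$ directly from the triangle inequality together with Proposition~\ref{thm:Koltchinski}. Write $\normn{\hatCpr} \le \normn{\hatCpr - \Cpr} + \normn{\Cpr}$. The second term is already of the desired form, contributing the ``$1$'' inside the parenthesis after factoring out $\normn{\Cpr}$. For the first term, I would invoke Proposition~\ref{thm:Koltchinski}: on an event of probability at least $1 - ce^{-t}$,
\begin{align*}
    \normn{\hatCpr - \Cpr} \lesssim \normn{\Cpr}\inparen{\sqrt{\frac{r_2(\Cpr)}{N}} \lor \frac{r_2(\Cpr)}{N} \lor \sqrt{\frac{t}{N}} \lor \frac{t}{N}}.
\end{align*}

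The only remaining point is to simplify the four-term maximum. Since $r_2(\Cpr) \ge 1$ and $t \ge 1$, for any $x \ge 1$ we have $\sqrt{x/N} \le \max\{1, x/N\}$: indeed if $x \le N$ then $\sqrt{x/N} \le 1$, while if $x > N$ then $\sqrt{x/N} \le x/N$. Applying this with $x = r_2(\Cpr)$ and with $x = t$ shows that $\sqrt{r_2(\Cpr)/N} \lor \sqrt{t/N} \le 1 \lor \frac{r_2(\Cpr)}{N} \lor \frac{t}{N}$, so the whole four-term maximum is dominated by $1 \lor \frac{r_2(\Cpr)}{N} \lor \frac{t}{N}$. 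Combining with the $\normn{\Cpr}$ term and absorbing constants yields
\begin{align*}
    \normn{\hatCpr} \lesssim \normn{\Cpr}\inparen{1 \lor \frac{r_2(\Cpr)}{N} \lor \frac{t}{N}}
\end{align*}
on the same high-probability event.

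There is no real obstacle here; the lemma is a routine corollary of Proposition~\ref{thm:Koltchinski}. The only mild care needed is the elementary observation that the square-root terms in the covariance estimation bound are subsumed by the maximum of $1$ and the linear terms whenever the relevant quantities ($r_2(\Cpr)$ and $t$) are at least one, which holds by definition of effective dimension and by the hypothesis $t \ge 1$. The universal constant $c$ in the probability statement is inherited directly from Proposition~\ref{thm:Koltchinski}.
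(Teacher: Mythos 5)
Your proof is correct and follows exactly the route the paper takes: triangle inequality $\normn{\hatCpr} \le \normn{\hatCpr - \Cpr} + \normn{\Cpr}$, then Proposition~\ref{thm:Koltchinski} for the deviation term, then the elementary observation that $1 \lor \sqrt{x} \lor x = 1 \lor x$ to absorb the square-root terms.
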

\begin{proof}
    By the triangle inequality
    $
        \normn{\hatCpr}
        \le \normn{\hatCpr - \Cpr} + \normn{\Cpr}.
    $
    The result follows by Proposition~\ref{thm:Koltchinski} noting that, for any $x \ge 0$, $1 \lor \sqrt{x} \lor x = 1 \lor x$.
\end{proof}

\begin{lemma}[Cross-Covariance Estimation ---Unstructured Case] \label{lem:ForecastNoiseCovarianceBound}
    Let $\upr_1,\dots, \upr_N$ be $d$-dimensional i.i.d. sub-Gaussian random vectors with $\E[ \upr_1] = m$ and $\tvar[\upr_1] = \Cpr$. Let $\eta_1,\dots,\eta_N$ be $k$-dimensional i.i.d. sub-Gaussian random vectors with $\E [\eta_1] = 0$ and $\tvar[\eta_1]=\Gamma$, and assume that the two sequences are independent. Consider the estimator
    \begin{align*}
        \hatCpr^{\, \upr\eta} = \frac{1}{N-1} \sum_{n=1}^{N} 
        (\upr_n - \hatmpr)(\eta_n  - \bareta)^{\top}
    \end{align*}
    of the cross-covariance $\Cpr^{\upr\eta} \equiv \E \bigl[ (\upr_1-m)\eta_1^\top \bigr].$ Then there exists a constant $c$ such that, for all $t \ge 1,$ it holds with probability at least $1-ce^{-t}$ that
    \begin{align*}
        \normn{\hatCpr^{\, \upr\eta} - \Cpr^{\, \upr\eta}} 
        \lesssim
        (\normn{\Cpr} \lor \normn{\Gamma}) 
        \inparen{
        \sqrt{\frac{r_2(\Cpr)}{N}} 
        \lor 
        \frac{r_2(\Cpr)}{N} 
        \lor  
        \sqrt{\frac{r_2(\Gamma) }{N}} 
        \lor
        \frac{r_2(\Gamma)}{N} 
        \lor 
        \sqrt{\frac{t}{N}}
        \lor 
        \frac{t}{N}}.
    \end{align*}
\end{lemma}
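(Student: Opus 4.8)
The plan is to observe that estimating the cross-covariance between two \emph{independent} sub-Gaussian samples is a special case of estimating the covariance of a single augmented sub-Gaussian sample, so that the problem reduces in essentially one step to Proposition~\ref{thm:Koltchinski}. First, independence of $\upr_1$ and $\eta_1$ gives $\Cpr^{\,\upr\eta} = \E[(\upr_1-m)\eta_1^\top] = \E[\upr_1 - m]\,\E[\eta_1]^\top = O_{d\times k}$, so the lemma is really a concentration statement for $\hatCpr^{\,\upr\eta}$ around the zero matrix. I would then introduce the augmented vectors $\zeta_n \equiv [\upr_n^\top, \eta_n^\top]^\top \in \R^{d+k}$ for $1 \le n \le N$. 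Each $\zeta_n$ is sub-Gaussian (for unit $[a^\top,b^\top]^\top$, the scalar $a^\top\upr_n + b^\top\eta_n$ is a sum of two independent sub-Gaussian scalars, hence sub-Gaussian), the $\zeta_n$ are i.i.d., and by independence $\tvar[\zeta_1]$ is the block-diagonal matrix $\mathrm{diag}(\Cpr,\Gamma)$. Writing $\widehat{\Sigma}^{\zeta}$ for the sample covariance of $\{\zeta_n\}_{n=1}^N$ as in \eqref{eq:samplecovariancedef}, a direct computation (the sample mean of $\zeta_n$ is $[\hatmpr^\top, \bareta^\top]^\top$, so $\zeta_n - \overline{\zeta}$ has blocks $\upr_n-\hatmpr$ and $\eta_n-\bareta$) shows that the $(1,2)$ block of $\widehat{\Sigma}^{\zeta}$ is exactly $\hatCpr^{\,\upr\eta}$, whereas the $(1,2)$ block of $\mathrm{diag}(\Cpr,\Gamma)$ vanishes.

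From here the argument is short. Since the operator norm of a submatrix is at most that of the full matrix (each block has the form $P_1 M P_2^\top$ for coordinate projections $P_1,P_2$),
\begin{align*}
\normn{\hatCpr^{\,\upr\eta}} = \normn{\hatCpr^{\,\upr\eta} - O_{d\times k}} \le \normn{\widehat{\Sigma}^{\zeta} - \mathrm{diag}(\Cpr,\Gamma)}.
\end{align*}
Applying Proposition~\ref{thm:Koltchinski} to the i.i.d. sub-Gaussian sample $\{\zeta_n\}$ yields, for any $t \ge 1$, with probability at least $1 - ce^{-t}$,
\begin{align*}
\normn{\widehat{\Sigma}^{\zeta} - \mathrm{diag}(\Cpr,\Gamma)} \lesssim \normn{\mathrm{diag}(\Cpr,\Gamma)}\inparen{\sqrt{\frac{\varrho}{N}} \lor \frac{\varrho}{N} \lor \sqrt{\frac{t}{N}} \lor \frac{t}{N}}, \qquad \varrho \equiv r_2\bigl(\mathrm{diag}(\Cpr,\Gamma)\bigr).
\end{align*}
It then remains to simplify the block-diagonal quantities: $\normn{\mathrm{diag}(\Cpr,\Gamma)} = \normn{\Cpr}\lor\normn{\Gamma}$, and since $\ttrace\bigl(\mathrm{diag}(\Cpr,\Gamma)\bigr) = \ttrace(\Cpr) + \ttrace(\Gamma) = \normn{\Cpr}\,r_2(\Cpr) + \normn{\Gamma}\,r_2(\Gamma)$, one obtains $\varrho \le r_2(\Cpr) + r_2(\Gamma) \le 2\bigl(r_2(\Cpr)\lor r_2(\Gamma)\bigr)$. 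Substituting and distributing the maxima over the sum (using $\sqrt{a\lor b}=\sqrt a\lor\sqrt b$) produces exactly the claimed right-hand side, and the constant $c$ inherits dependence on the sub-Gaussian norms of $\upr_1$ and $\eta_1$ through the $\lesssim$, just as in Proposition~\ref{thm:Koltchinski}.

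I do not anticipate a serious obstacle. The only genuine observations are that the cross-covariance is an off-diagonal block of an augmented covariance, that independence makes that augmented covariance block-diagonal (so $\Cpr^{\,\upr\eta}=O_{d\times k}$ and the corresponding block is absent from the population object), and that the operator norm of a block is controlled by the full operator norm; the remainder is the elementary effective-dimension algebra of block-diagonal matrices plus a single invocation of Proposition~\ref{thm:Koltchinski}. The one point worth double-checking carefully is precisely the bookkeeping of the sample-mean centering described above, which is what makes the $(1,2)$ block of $\widehat{\Sigma}^{\zeta}$ reproduce $\hatCpr^{\,\upr\eta}$ without any stray lower-order term. (Alternatively, one could mirror the proof of Proposition~\ref{thm:Koltchinski} more literally, decentering to $\upr_n = Z_n+m$ and bounding $\normn{N^{-1}\sum_n Z_n\eta_n^\top}$ by the same dilation together with a separate $\normn{\barZ}_2\normn{\bareta}_2$ term handled by Theorem~\ref{thm:SubGaussianConcentration}; this is strictly more work and yields the same bound, so the augmented-sample route above is preferable.)
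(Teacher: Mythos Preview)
Your proposal is correct and uses essentially the same idea as the paper: form the augmented vector, extract the off-diagonal block via projection matrices, and invoke Proposition~\ref{thm:Koltchinski}. The only difference is packaging: the paper decenters first (writing $Z_n=\upr_n-m$), applies the centered version of Proposition~\ref{thm:Koltchinski} to $\frac{1}{N}\sum W_nW_n^\top$ with $W_n=[Z_n^\top,\eta_n^\top]^\top$, and then controls the leftover $\normn{\barZ}_2\normn{\bareta}_2$ term separately via Theorem~\ref{thm:SubGaussianConcentration} --- precisely the alternative you sketch in your final parenthetical. Your route, applying Proposition~\ref{thm:Koltchinski} directly to the non-centered augmented sample so that the mean-correction is absorbed automatically, is slightly cleaner and avoids that extra step.
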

\begin{proof}
    First, we note that 
    \begin{align*}
        \hatCpr^{\, \upr\eta} 
        \asymp 
        \frac{N-1}{N} 
        \inparen{
            \frac{1}{N} \sum_{n=1}^{N} 
            (\upr_n  - \hatmpr)(\eta_n  - \bareta)^{\top}
            }
        \equiv \frac{N-1}{N} \wtildeC^{\, \upr\eta},
    \end{align*}
    and so it suffices to prove the claim for the biased sample covariance estimator, which we denote by $\wtildeC^{\, \upr\eta}$. Letting $Z_n = \upr_n - m$, it follows that 
    \begin{align} \label{eq:ForecastnoiseCovarianceBoundT1}
        \normn{\wtildeC^{\upr\eta}} 
        =
        \norm{\frac{1}{N}\sum_{n=1}^N Z_n \eta_n^{\top} -  \barZ \bareta^\top}
        \le 
        \norm{\frac{1}{N}\sum_{n=1}^N Z_n \eta_n^{\top}} + 
        \normn{\barZ \bareta^\top}.
    \end{align}
    For the second term in the right-hand side of \eqref{eq:ForecastnoiseCovarianceBoundT1}, let $E_1$ denote the event on which 
    \begin{align*}
        \normn{\barZ}_2  \lesssim \sqrt{\normn{\Cpr} \inparen{\frac{r_2(\Cpr)}{N} \lor \frac{t}{N}}},
    \end{align*}
    and $E_2$ the event on which 
    \begin{align*}
        \normn{\bareta}_2  \lesssim 
        \sqrt{\normn{\Gamma} \inparen{\frac{r_2(\Gamma)}{N} \lor \frac{t}{N}}},
    \end{align*}
    each of which have probability at least $1-e^{-t}$ by Theorem~\ref{thm:SubGaussianConcentration}. Therefore, the event $E_1 \cap E_2$ occurs with probability at least $1-ce^{-t}$, and on which it follows that 
    \begin{align*}
        \normn{\barZ \bareta^\top}
        = \normn{\barZ}_2 \normn{\bareta}_2
        &\lesssim 
        (\normn{\Cpr} \lor \normn{\Gamma} )
        \inparen{
        \frac{r_2(\Cpr)}{N}
        \lor
        \frac{r_2(\Gamma)}{N}
        \lor 
        \frac{t}{N}
        },
    \end{align*}
    where the inequality follows since $\sqrt{ab} \lesssim a\lor b$ for $a,b \ge 0$. To control the first term in the right-hand side of \eqref{eq:ForecastnoiseCovarianceBoundT1}, we define the vector  
    \begin{align*}
        W_n = \begin{bmatrix}
            Z_n \\ \eta_n
        \end{bmatrix}
        \in \R^{ d + k}, \qquad 1 \le n \le N,
    \end{align*}
    and note that $W_1, \dots, W_N$ is an i.i.d. sub-Gaussian sequence with $\E [W_1] = [m^\top, 0^\top_{k}]^\top$ and variance $\Cpr^W = \tdiag(\Cpr, \Gamma)$. Let $E_3$ denote the event on which 
    
    \begin{align*}
        \norm{\frac{1}{N} \sum_{n=1}^N W_n W_n^\top - \Cpr^W}
        &\lesssim \normn{\Cpr^W} 
        \inparen{ 
        \sqrt{\frac{r_2(\Cpr^W)}{N}}
        \lor 
        \frac{r_2(\Cpr^W)}{N} 
        \lor 
        \sqrt{\frac{t}{N}}
        \lor 
        \frac{t}{N}
        }\\
        &\lesssim (\normn{\Cpr} \lor \normn{\Gamma}) 
        \inparen{ 
        \inparen{\sqrt{ \frac{\ttrace(\Cpr)}{N \normn{\Cpr}}} + 
        \sqrt{\frac{\ttrace(\Gamma)}{N \normn{\Gamma}}}} \lor 
        \frac{\ttrace(\Cpr) + \ttrace(\Gamma)}{ N (\normn{\Cpr} \lor \normn{\Gamma})}\lor 
        \sqrt{\frac{t}{N}}
        \lor 
        \frac{t}{N}
        }\\
        &\lesssim (\normn{\Cpr} \lor \normn{\Gamma}) 
        \inparen{
        \inparen{
        \sqrt{ \frac{r_2(\Cpr)}{N}} + 
        \sqrt{\frac{r_2(\Gamma) }{N}}} 
        \lor 
        \inparen{
        \frac{r_2(\Cpr)}{N} + 
        \frac{r_2(\Gamma)}{N} 
        } 
        \lor
        \inparen{
        \sqrt{\frac{t}{N}}
        \lor 
        \frac{t}{N}
        }
        }\\
        &
        \lesssim 
        (\normn{\Cpr} \lor \normn{\Gamma}) 
        \inparen{
        \sqrt{\frac{r_2(\Cpr)}{N}} 
        \lor 
        \frac{r_2(\Cpr)}{N} 
        \lor  
        \sqrt{\frac{r_2(\Gamma) }{N}} 
        \lor
        \frac{r_2(\Gamma)}{N}\lor 
        \sqrt{\frac{t}{N}}
        \lor 
        \frac{t}{N}
        }.
    \end{align*}
    By Proposition~\ref{thm:Koltchinski}, it holds for any $t\ge 1$ that $\P(E_3) \ge 1-e^{-t}$.
   Note that we can express 
    \begin{align*}
        \mcP \equiv \frac{1}{N} \sum_{n=1}^N W_n W_n^\top - \begin{bmatrix}
            \Cpr & O \\  O & \Gamma
        \end{bmatrix}
        =
        \begin{bmatrix}
        N^{-1} \sum_{n=1}^N Z_n Z_n^\top - \Cpr & N^{-1} \sum_{n=1}^N Z_n\eta_n^\top \\  
        N^{-1} \sum_{n=1}^N \eta_n Z_n^\top &  N^{-1} \sum_{n=1}^N\eta_n \eta_n^\top - \Gamma
        \end{bmatrix},
    \end{align*}
    and that
    \begin{align*}
        \norm{\frac{1}{N} \sum_{n=1}^N Z_n\eta_n^\top}
        = \normn{E_{11} \mcP E_{12}}
        \le \normn{E_{11}} \normn{\mcP} \normn{E_{12}}
        =\normn{\mcP},
    \end{align*}
    where $E_{11}, E_{12}$ are block \textit{selection} matrices that pick the relevant sub-block matrix of $\mcP$. Therefore, it holds on $E_3$ that 
    \begin{equation*}
        \norm{\frac{1}{N} \sum_{n=1}^N Z_n\eta_n^\top} 
        \lesssim 
        (\normn{\Cpr} \lor \normn{\Gamma}) 
        \inparen{
        \sqrt{\frac{r_2(\Cpr)}{N}} 
        \lor 
        \frac{r_2(\Cpr)}{N} 
        \lor  
        \sqrt{\frac{r_2(\Gamma) }{N}} 
        \lor
        \frac{r_2(\Gamma)}{N} 
        \lor 
        \sqrt{\frac{t}{N}}
        \lor 
        \frac{t}{N}}.
    \end{equation*}    
    The final result follows by noting that the intersection $E_1\cap E_2 \cap E_3$ has probability at least $1-ce^{-t}$.
\end{proof}

\subsection{Continuity and Boundedness of Update Operators}\label{sssec:A2}
The next three lemmas, shown in \cite{kwiatkowski2015convergence}, ensure the continuity and boundedness of the Kalman gain, mean-update, and covariance-update operators introduced in Section \ref{sec:withoutlocalization}. We include them here for completeness. Lemma \ref{lem:EKIopCtyBdd} below establishes similar properties for the nonlinear gain-update operator  introduced in Section \ref{sec:withlocalization}.  

\begin{lemma}[{Continuity and Boundedness of Kalman Gain Operator \cite[Lemma 4.1 \& Corollary 4.2]{kwiatkowski2015convergence}}] \label{lem:KalmanOpCtyBdd} 
Let $\msK$ be the Kalman gain operator defined in \eqref{eq:KalmanGainOperator}. Let $P, Q \in \mcS_+^d$, $\Gamma \in \mcS_{++}^k,$ and $A \in \R^{k \times d}$. The following hold:
\begin{align*}
\begin{split}
        \normn{\msK(Q) - \msK(P)} 
        &\le 
        \normn{Q - P} 
        \normn{A} \normn{\Gamma^{-1}}
        \Bigl(1 + 
        \min \inparen{\normn{P}, \normn{Q}} 
        \normn{A}^2 \normn{\Gamma^{-1}} \Bigr),\\
        \normn{\msK(Q)} &\le \normn{Q} \normn{A} \normn{\Gamma^{-1}},\\
        \normn{I-\msK(Q) A} 
        &\le 1+\normn{Q} \normn{A}^2 \normn{\Gamma^{-1}}.
\end{split}
     \end{align*}
\end{lemma}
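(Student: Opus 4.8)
The plan is to derive all three inequalities by elementary operator-norm estimates applied directly to the definition $\msK(Q) = QA^\top(AQA^\top + \Gamma)^{-1}$; no probability is involved, only linear algebra. The one structural fact that drives everything is that for any $Q \in \mcS_+^d$ the matrix $S_Q \equiv AQA^\top + \Gamma$ satisfies $S_Q \succeq \Gamma \succ 0$, so that $S_Q^{-1} \preceq \Gamma^{-1}$ and hence $\normn{S_Q^{-1}} \le \normn{\Gamma^{-1}}$; this is where positive-semidefiniteness of $Q$ and positive-definiteness of $\Gamma$ are used.

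For the second bound I would apply submultiplicativity of the operator norm together with $\normn{A^\top} = \normn{A}$: $\normn{\msK(Q)} = \normn{QA^\top S_Q^{-1}} \le \normn{Q}\,\normn{A}\,\normn{S_Q^{-1}} \le \normn{Q}\,\normn{A}\,\normn{\Gamma^{-1}}$. For the third bound I would use the triangle inequality $\normn{I - \msK(Q)A} \le 1 + \normn{\msK(Q)}\,\normn{A}$ and then substitute the second bound.

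The only step requiring computation is the Lipschitz estimate, and I expect the minor obstacle to be finding the algebraic rearrangement that keeps the constant sharp. Writing $S_P \equiv APA^\top + \Gamma$, I would decompose
\begin{align*}
\msK(Q) - \msK(P) = QA^\top S_Q^{-1} - QA^\top S_P^{-1} + (Q - P)A^\top S_P^{-1},
\end{align*}
and then use the resolvent identity $S_Q^{-1} - S_P^{-1} = S_Q^{-1}(S_P - S_Q)S_P^{-1}$ together with $S_P - S_Q = A(P - Q)A^\top$ to rewrite the first two terms as $-\msK(Q)A(Q - P)A^\top S_P^{-1}$. This gives the factorization
\begin{align*}
\msK(Q) - \msK(P) = \bigl(I - \msK(Q)A\bigr)(Q - P)A^\top S_P^{-1},
\end{align*}
after which taking operator norms and invoking the third bound (applied to $Q$) together with $\normn{S_P^{-1}} \le \normn{\Gamma^{-1}}$ yields $\normn{\msK(Q) - \msK(P)} \le \bigl(1 + \normn{Q}\,\normn{A}^2\,\normn{\Gamma^{-1}}\bigr)\normn{Q - P}\,\normn{A}\,\normn{\Gamma^{-1}}$.

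Finally, to replace $\normn{Q}$ by $\min\{\normn{P}, \normn{Q}\}$ I would note that the computation is symmetric under swapping $P$ and $Q$: the same steps with the roles reversed give $\msK(Q) - \msK(P) = \bigl(I - \msK(P)A\bigr)(Q - P)A^\top S_Q^{-1}$ and hence the analogous bound with $\normn{P}$ in place of $\normn{Q}$; taking whichever of the two constants is smaller completes the proof.
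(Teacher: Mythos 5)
Your proof is correct. The paper does not prove this lemma itself (it is imported from Kwiatkowski and Mandel, cited as Lemma 4.1 and Corollary 4.2), but the argument you give is the standard one: the driving observation $\normn{(AQA^\top+\Gamma)^{-1}}\le\normn{\Gamma^{-1}}$ from $\Gamma\succ 0$, submultiplicativity for the boundedness statements, and the two-term split $\msK(Q)-\msK(P)=QA^\top(S_Q^{-1}-S_P^{-1})+(Q-P)A^\top S_P^{-1}$ plus the resolvent identity for the Lipschitz bound. Your additional step of collapsing the two terms into the single factored form $\msK(Q)-\msK(P)=(I-\msK(Q)A)(Q-P)A^\top S_P^{-1}$ is a nice cosmetic refinement — it lets you read the constant off directly from the third inequality rather than summing two pieces — but it is the same underlying decomposition, and the symmetry argument producing the $\min$ is exactly as in the source.
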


\begin{lemma}[{Continuity and Boundedness of Mean-Update Operator 
\cite[Corollary 4.3 \& Lemma 4.7]{kwiatkowski2015convergence}}] \label{lem:MeanOpCtyBdd}
Let $\msM$ be the mean-update operator defined in \eqref{eq:MeanOperator}. Let $P, Q \in \mcS_+^d$, $\Gamma \in \mcS_{++}^k$, $A \in \R^{k \times d}$, $y \in \R^k,$ and $m,m' \in \R^d$. The following hold:
 \begin{align*}
 \begin{split}
        \norm{\msM(m,Q)} &\le \norm{m}
        + \normn{Q}\normn{A} \normn{\Gamma^{-1}} \norm{y-Am}_2,\\
        \norm{\msM(m, Q)- \msM(m', P)} &\le 
        \norm{m-m'} \bigl(1 + \normn{A}^2 \normn{\Gamma^{-1}} \normn{Q}\bigr)  \\ & + 
        \normn{Q-P} \normn{A} \normn{\Gamma^{-1}} 
        \bigl(1 + \normn{A}^2 \normn{\Gamma^{-1}} \normn{P} \bigr) 
        \norm{y-Am'}_2.
\end{split}        
     \end{align*}
\end{lemma}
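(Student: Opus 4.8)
The plan is to obtain both inequalities directly from the definition $\msM(m,Q) = m + \msK(Q)(y-Am)$ in \eqref{eq:MeanOperator}, using the Kalman gain estimates already recorded in Lemma~\ref{lem:KalmanOpCtyBdd}. For the boundedness claim I would simply apply the triangle inequality followed by the operator-norm bound $\normn{\msK(Q)} \le \normn{Q}\normn{A}\normn{\Gamma^{-1}}$, giving
\[
\norm{\msM(m,Q)} \le \norm{m} + \normn{\msK(Q)}\,\norm{y-Am}_2 \le \norm{m} + \normn{Q}\normn{A}\normn{\Gamma^{-1}}\norm{y-Am}_2,
\]
which is exactly the first asserted inequality.

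For the continuity claim, the one nontrivial step is an add-and-subtract rearrangement of the difference. Starting from
\[
\msM(m,Q) - \msM(m',P) = (m-m') + \msK(Q)(y-Am) - \msK(P)(y-Am'),
\]
I would insert $\msK(Q)(y-Am')$ so that the last two terms split as $\msK(Q)\bigl((y-Am)-(y-Am')\bigr) + \bigl(\msK(Q)-\msK(P)\bigr)(y-Am') = -\msK(Q)A(m-m') + \bigl(\msK(Q)-\msK(P)\bigr)(y-Am')$. Combining the pieces yields the clean identity
\[
\msM(m,Q) - \msM(m',P) = \bigl(I - \msK(Q)A\bigr)(m-m') + \bigl(\msK(Q)-\msK(P)\bigr)(y-Am').
\]
Taking norms, applying the triangle inequality, and invoking the bounds $\normn{I-\msK(Q)A} \le 1 + \normn{Q}\normn{A}^2\normn{\Gamma^{-1}}$ and $\normn{\msK(Q)-\msK(P)} \le \normn{Q-P}\normn{A}\normn{\Gamma^{-1}}\bigl(1 + \min(\normn{P},\normn{Q})\normn{A}^2\normn{\Gamma^{-1}}\bigr)$ from Lemma~\ref{lem:KalmanOpCtyBdd}, together with $\min(\normn{P},\normn{Q}) \le \normn{P}$, produces precisely the claimed inequality.

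There is no real obstacle here beyond spotting the decomposition above; once the identity is in hand the result is an immediate consequence of Lemma~\ref{lem:KalmanOpCtyBdd} and the triangle inequality. The only mild subtlety is the choice to bound $\min(\normn{P},\normn{Q})$ by $\normn{P}$ rather than $\normn{Q}$, which is what makes the stated form of the second term come out with $P$ in it; the symmetric choice gives an equally valid bound with the roles of $P$ and $Q$ interchanged.
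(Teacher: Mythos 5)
Your proof is correct, and it is the natural derivation: the paper does not include a proof of this lemma (it simply cites \cite{kwiatkowski2015convergence}), but your decomposition $\msM(m,Q)-\msM(m',P) = (I-\msK(Q)A)(m-m') + (\msK(Q)-\msK(P))(y-Am')$ followed by the operator bounds of Lemma~\ref{lem:KalmanOpCtyBdd} (using $\min(\normn{P},\normn{Q})\le\normn{P}$) is exactly the argument used in the cited reference.
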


\begin{lemma}[{Continuity and Boundedness of Covariance-Update Operator
\cite[Lemma 4.4 \& Lemma 4.6]{kwiatkowski2015convergence}}] \label{lem:CovarOpCtyBdd}
Let $\msC$ be the covariance-update operator defined in \eqref{eq:CovarianceOperator}. Let $P, Q \in \mcS_+^d$, $\Gamma \in \mcS_{++}^k$, $A \in \R^{k \times d}$, $y \in \R^k,$ and $m,m' \in \R^d$. The following hold:
  \begin{align*}
  \begin{split}
        \normn{\msC(Q) - \msC(P)} 
        &\le \normn{Q - P} \Bigl( 1 + \normn{A}^2 \normn{\Gamma^{-1}} (\normn{Q}+\normn{P}) + \normn{A}^4 \normn{\Gamma^{-1}}^2 \normn{Q}\normn{P} \Bigr),\\
        0 &\preccurlyeq \msC(Q) \preccurlyeq Q,\\
        \normn{\msC(Q)} &\le \normn{Q}.
    \end{split}    
     \end{align*}
\end{lemma}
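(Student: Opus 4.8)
The three assertions split into the spectral (Loewner-order) statements and the Lipschitz estimate, and I would treat them in that order. First I would record the closed form obtained by substituting the definition of $\msK$ into \eqref{eq:CovarianceOperator}, namely $\msC(C) = C - CA^\top(ACA^\top + \Gamma)^{-1}AC$. For $0 \preccurlyeq \msC(C) \preccurlyeq C$ the plan is to pass to a symmetric square-root representation: write $C = C^{1/2}C^{1/2}$ with $C^{1/2}\in\mcS_+^d$ the positive-semidefinite square root and set $M = AC^{1/2}\in\R^{k\times d}$, so that $CA^\top = C^{1/2}M^\top$, $ACA^\top = MM^\top$, and $AC = MC^{1/2}$. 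Then $\msC(C) = C^{1/2}\bigl(I - M^\top(MM^\top + \Gamma)^{-1}M\bigr)C^{1/2}$, where $MM^\top + \Gamma$ is invertible because $\Gamma\in\mcS_{++}^k$. The Sherman--Morrison--Woodbury identity gives $I - M^\top(MM^\top + \Gamma)^{-1}M = (I + M^\top\Gamma^{-1}M)^{-1}$, hence $\msC(C) = C^{1/2}(I + C^{1/2}A^\top\Gamma^{-1}AC^{1/2})^{-1}C^{1/2}$. Since $I + C^{1/2}A^\top\Gamma^{-1}AC^{1/2}\succcurlyeq I\succ 0$, its inverse is positive definite and satisfies $0\preccurlyeq (I + C^{1/2}A^\top\Gamma^{-1}AC^{1/2})^{-1}\preccurlyeq I$; congruence by $C^{1/2}$ preserves the Loewner order, which yields $0\preccurlyeq\msC(C)\preccurlyeq C$. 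The operator-norm bound $\normn{\msC(C)}\le\normn{C}$ then follows immediately from monotonicity of $\normn{\cdot}$ on the positive-semidefinite cone.

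For the Lipschitz bound I would start from the algebraic identity obtained by adding and subtracting $(I - \msK(Q)A)P$:
\begin{align*}
\msC(Q) - \msC(P) = \bigl(I - \msK(Q)A\bigr)(Q - P) - \bigl(\msK(Q) - \msK(P)\bigr)AP.
\end{align*}
Taking operator norms and using submultiplicativity gives $\normn{\msC(Q) - \msC(P)} \le \normn{I - \msK(Q)A}\,\normn{Q-P} + \normn{\msK(Q) - \msK(P)}\,\normn{A}\,\normn{P}$. I would then plug in the estimates $\normn{I - \msK(Q)A}\le 1 + \normn{Q}\normn{A}^2\normn{\Gamma^{-1}}$ and $\normn{\msK(Q) - \msK(P)}\le \normn{Q-P}\normn{A}\normn{\Gamma^{-1}}\bigl(1 + \min(\normn{P},\normn{Q})\normn{A}^2\normn{\Gamma^{-1}}\bigr)$ from Lemma~\ref{lem:KalmanOpCtyBdd}, and simplify using the elementary inequality $\min(\normn{P},\normn{Q})\normn{P}\le\normn{Q}\normn{P}$, which reproduces exactly the claimed constant $1 + \normn{A}^2\normn{\Gamma^{-1}}(\normn{Q}+\normn{P}) + \normn{A}^4\normn{\Gamma^{-1}}^2\normn{Q}\normn{P}$.

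Each step is short, so there is no genuine obstacle; the only point that requires care is the bookkeeping in the final simplification, namely ensuring the cross term emerges as $\normn{Q}\normn{P}$ rather than $\normn{P}^2$. This is exactly why one retains the $\min$ in the Kalman-gain continuity bound of Lemma~\ref{lem:KalmanOpCtyBdd} and invokes $\min(\normn{P},\normn{Q})\normn{P}\le\normn{Q}\normn{P}$; adding and subtracting $(I - \msK(P)A)Q$ instead would give the mirror-image splitting and the same bound. I would also remark that the Woodbury manipulation used in the spectral part needs no invertibility hypothesis on $C$, since only $\Gamma$ and $MM^\top + \Gamma$ are inverted.
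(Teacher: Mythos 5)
Your proof is correct. The paper itself does not prove this lemma---it is imported verbatim from \cite[Lemmas 4.4 and 4.6]{kwiatkowski2015convergence}---but your argument is exactly the standard one: the square-root/Woodbury rewriting $\msC(C)=C^{1/2}(I+C^{1/2}A^\top\Gamma^{-1}AC^{1/2})^{-1}C^{1/2}$ (the same identity the paper uses in Subsection~\ref{sec:SRensembleupdate} to construct the square root update) gives the Loewner bounds and hence the norm bound, and the add-and-subtract decomposition combined with the $\min(\normn{P},\normn{Q})$ term in Lemma~\ref{lem:KalmanOpCtyBdd} reproduces the stated constant with the cross term $\normn{Q}\normn{P}$. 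No gaps.
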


\begin{lemma}[Continuity and Boundedness of Nonlinear Gain-Update Operator]\label{lem:EKIopCtyBdd}
Let $\msP$ be the nonlinear gain-update operator defined in \eqref{eq:NonlinearGainOperator}.  Let $P, \tildeP \in \R^{d \times k},$ $Q, \tildeQ \in \mcS_+^k$,  and $\Gamma \in \mcS_{++}^k.$ The following hold: 
\begin{align*}
    \normn{\msP(P,Q) - \msP(\tildeP,\tildeQ)}
    &\le 
    \normn{\Gamma^{-1}} \normn{P - \tildeP} 
    + \normn{\Gamma^{-1}}^2 \normn{P}  \normn{Q - \tildeQ},\\
    \normn{\msP(P,Q)} 
    &\le  \normn{\Gamma^{-1}}\normn{P} + \normn{\Gamma^{-1}}^2 \normn{Q}.
\end{align*}
\end{lemma}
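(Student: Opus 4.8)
The plan is to derive both inequalities by elementary operator-norm manipulations, relying on a single structural fact: since $Q \in \mcS_+^k$ and $\Gamma \in \mcS_{++}^k$ we have $\Gamma \preccurlyeq Q + \Gamma$, and since matrix inversion is antitone on the positive-definite cone this gives $0 \prec (Q+\Gamma)^{-1} \preccurlyeq \Gamma^{-1}$; consequently $\normn{(Q+\Gamma)^{-1}} \le \normn{\Gamma^{-1}}$, and likewise $\normn{(\tildeQ+\Gamma)^{-1}} \le \normn{\Gamma^{-1}}$. This is the nonlinear-gain analogue of the elementary spectral bounds that underlie Lemmas~\ref{lem:KalmanOpCtyBdd}--\ref{lem:CovarOpCtyBdd} in \cite{kwiatkowski2015convergence}.

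For the boundedness claim I would write $\normn{\msP(P,Q)} = \normn{P(Q+\Gamma)^{-1}} \le \normn{P}\,\normn{(Q+\Gamma)^{-1}} \le \normn{\Gamma^{-1}}\normn{P}$, using submultiplicativity of the operator norm and the structural fact above; the stated (looser) bound then follows by adjoining the nonnegative term $\normn{\Gamma^{-1}}^2\normn{Q}$.

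For the Lipschitz bound I would introduce a telescoping term and split
\[
\msP(P,Q) - \msP(\tildeP,\tildeQ) = P\bigl[(Q+\Gamma)^{-1} - (\tildeQ+\Gamma)^{-1}\bigr] + (P - \tildeP)(\tildeQ+\Gamma)^{-1}.
\]
The second summand contributes at most $\normn{P-\tildeP}\,\normn{(\tildeQ+\Gamma)^{-1}} \le \normn{\Gamma^{-1}}\normn{P-\tildeP}$. For the first summand I would invoke the resolvent identity $(Q+\Gamma)^{-1} - (\tildeQ+\Gamma)^{-1} = (Q+\Gamma)^{-1}(\tildeQ-Q)(\tildeQ+\Gamma)^{-1}$, so that applying the structural fact to both resolvents yields $\normn{(Q+\Gamma)^{-1} - (\tildeQ+\Gamma)^{-1}} \le \normn{\Gamma^{-1}}^2\normn{Q-\tildeQ}$; multiplying by $\normn{P}$ and combining the two contributions gives exactly the asserted inequality.

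I do not anticipate any genuine obstacle: the argument is a direct transcription of the continuity and boundedness proofs for the Kalman-gain, mean-update, and covariance-update operators. The only step warranting a moment's care is the antitonicity of matrix inversion used to pass from $\Gamma \preccurlyeq Q+\Gamma$ to $\normn{(Q+\Gamma)^{-1}} \le \normn{\Gamma^{-1}}$; the remainder is submultiplicativity of $\normn{\cdot}$ together with the resolvent identity.
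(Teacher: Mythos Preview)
Your proof is correct and follows essentially the same approach as the paper: the same telescoping split, the same structural fact $\normn{(Q+\Gamma)^{-1}} \le \normn{\Gamma^{-1}}$, and the same final bound. The only cosmetic difference is that for the inverse difference you apply the resolvent identity $(Q+\Gamma)^{-1} - (\tildeQ+\Gamma)^{-1} = (Q+\Gamma)^{-1}(\tildeQ-Q)(\tildeQ+\Gamma)^{-1}$ directly, whereas the paper first sandwiches with $\Gamma^{-1/2}$ and then invokes $\normn{(I+A)^{-1}-(I+B)^{-1}} \le \normn{A-B}$; your route is slightly more direct but the two are equivalent.
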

\begin{proof}
    The proof follows in similar style to Lemma 4.1 in \cite{kwiatkowski2015convergence}. We note that 
    \begin{align*}
        \normn{P(Q+\Gamma)^{-1} - \tildeP(\tildeQ+\Gamma)^{-1}}
        &\le 
        \normn{P(Q+\Gamma)^{-1} - P(\tildeQ+\Gamma)^{-1}} + \normn{P(\tildeQ+\Gamma)^{-1} - \tildeP(\tildeQ+\Gamma)^{-1}}\\
        &\le 
        \normn{P} \normn{(Q+\Gamma)^{-1} - (\tildeQ+\Gamma)^{-1}} + 
        \normn{\tildeP - P} \normn{(Q+\Gamma)^{-1}}.
    \end{align*}
    Since $\Gamma \succ 0$ and $Q \succeq 0$, it holds that $Q+\Gamma \succeq \Gamma$ and so $(Q+\Gamma)^{-1} \preccurlyeq \Gamma^{-1}$, which in turn implies $\normn{(Q+\Gamma)^{-1}} \le \normn{\Gamma^{-1}}$. Further,
    \begin{align*}
        \normn{(Q+\Gamma)^{-1} - (\tildeQ+\Gamma)^{-1}} 
        &= \normn{\Gamma^{-1/2} 
        [(\Gamma^{-1/2}Q\Gamma^{-1/2}+I)^{-1} - 
        (\Gamma^{-1/2}\tildeQ\Gamma^{-1/2}+I)^{-1}]\Gamma^{-1/2}}  \\
        &\le \normn{\Gamma^{-1}} 
        \normn{(\Gamma^{-1/2}Q\Gamma^{-1/2}+I)^{-1} 
        - (\Gamma^{-1/2}\tildeQ\Gamma^{-1/2}+I)^{-1}}\\
        &\le \normn{\Gamma^{-1}} 
        \normn{\Gamma^{-1/2}Q\Gamma^{-1/2} -
        \Gamma^{-1/2}\tildeQ\Gamma^{-1/2}}
        \\
        &\le \normn{\Gamma^{-1}}^2 \normn{Q - \tildeQ},
    \end{align*}
    where the second to last equality follows by the fact that $\normn{(I+A)^{-1}- (I+B)^{-1} } \le \normn{B-A}$ for $A,B\in \mcS^k_+$.
   To prove the pointwise boundedness of $\msP$, take $\tildeP$ to be the $d \times k$ matrix of zeroes, and $\tildeQ$ to be the $k \times k$ matrix of zeroes, and plug these values into the continuity bound. 
\end{proof}

\subsection{Proof of Main Results in Section \ref{sec:withoutlocalization}}
\label{sssec:A3}
 
    \begin{theorem} [Posterior Mean Approximation with Finite Ensemble ---High Probability Bound]\label{th:meanwithoutloc}
    Consider the PO and SR ensemble Kalman updates given by \eqref{eq:POupdate} and \eqref{eq:SRupdate}, respectively, leading to an estimate $\hatmpost$ of the posterior mean $\mpost$ defined in \eqref{eq:KFmeancovariance}. Set $\phi=1$ for the PO update and $\phi=0$ for the SR update. Then there exists a constant $c$ such that, for all $t \ge 1,$ it holds with probability at least $1-ce^{-t}$ that
    \begin{align*}
        \normn{\hatmpost - \mpost}_2
        &\lesssim 
        (\normn{\Cpr}^{1/2} \lor \normn{\Cpr}^2)
        (\normn{A} \lor \normn{A}^4)
        (\normn{\Gamma^{-1}} \lor \normn{\Gamma^{-1}}^2)
        (1 \lor \normn{y-A\mpr}_2)\\
        &\times 
        \inparen{
        \sqrt{\frac{r_2(\Cpr)}{N}} \lor 
        \sqrt{\frac{t}{N}} \lor 
        \inparen{\frac{r_2(\Cpr)}{N}}^{3/2} \lor 
        \inparen{\frac{t}{N}}^{3/2} \lor
        \frac{r_2(\Cpr)}{N} \sqrt{\frac{t}{N}}\lor
        \sqrt{\frac{r_2(\Cpr)}{N}} \frac{t}{N}
        }
        + 
        \phi \msE,
    \end{align*}
  where  
    \begin{align*}
        \msE = \normn{A} \normn{\Gamma^{-1}}\normn{\Gamma}^{1/2} \normn{\Cpr}
        \inparen{
        \sqrt{\frac{r_2(\Gamma)}{N}} \lor
        \sqrt{\frac{t}{N}} \lor
        \frac{r_2(\Cpr)}{N}\sqrt{\frac{r_2(\Gamma)}{N}} \lor
        \frac{r_2(\Cpr)}{N}\sqrt{\frac{t}{N}} \lor
        \frac{t}{N} \sqrt{\frac{r_2(\Gamma)}{N}} \lor
        \inparen{\frac{t}{N}}^{3/2}
        }.
    \end{align*}
\end{theorem}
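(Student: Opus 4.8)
The plan is to split the error into a part common to both updates and a part that appears only in the perturbed-observation (PO) update. By \eqref{eq:POupdate}, \eqref{eq:SRupdate}, and \eqref{eq:PosteriorMandelOps},
\begin{align*}
\hatmpost - \mpost = \bigl(\msM(\hatmpr, \hatCpr) - \msM(\mpr, \Cpr)\bigr) - \phi\, \msK(\hatCpr)\bareta,
\end{align*}
so by the triangle inequality it suffices to bound the two summands separately and then intersect the handful of high-probability events that arise.

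For the common term I would apply the continuity estimate for the mean-update operator, Lemma~\ref{lem:MeanOpCtyBdd}, with $(m,Q)=(\hatmpr,\hatCpr)$ and $(m',P)=(\mpr,\Cpr)$, which bounds $\normn{\msM(\hatmpr,\hatCpr)-\msM(\mpr,\Cpr)}_2$ by $\normn{\hatmpr-\mpr}_2\bigl(1+\normn{A}^2\normn{\Gamma^{-1}}\normn{\hatCpr}\bigr)$ plus $\normn{\hatCpr-\Cpr}\,\normn{A}\normn{\Gamma^{-1}}\bigl(1+\normn{A}^2\normn{\Gamma^{-1}}\normn{\Cpr}\bigr)\normn{y-A\mpr}_2$. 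The three random quantities on the right are then controlled by inputs already available in this appendix: $\normn{\hatmpr-\mpr}_2\lesssim\normn{\Cpr}^{1/2}\bigl(\sqrt{r_2(\Cpr)/N}\lor\sqrt{t/N}\bigr)$ from Theorem~\ref{thm:SubGaussianConcentration} applied to $\hatmpr-\mpr\sim\mcN(0,\Cpr/N)$ (using $r_2(\Cpr/N)=r_2(\Cpr)$); the sample-covariance bound of Proposition~\ref{thm:Koltchinski} for $\normn{\hatCpr-\Cpr}$; and $\normn{\hatCpr}\lesssim\normn{\Cpr}\bigl(1\lor r_2(\Cpr)/N\lor t/N\bigr)$ from Lemma~\ref{lem:ForecastCovarianceBound}. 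Multiplying out the resulting $\lor$-expressions and absorbing lower-order monomials via the elementary inequalities $x\le\sqrt{x}\lor x^{3/2}$ and $a^{3/2}\le a^{1/2}\lor a^{2}$ reproduces the first line of the claimed bound, with the stated constant $(\normn{\Cpr}^{1/2}\lor\normn{\Cpr}^{2})(\normn{A}\lor\normn{A}^{4})(\normn{\Gamma^{-1}}\lor\normn{\Gamma^{-1}}^{2})(1\lor\normn{y-A\mpr}_2)$ and the displayed bracket of six $\lor$-terms.

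For the PO correction I would bound $\normn{\msK(\hatCpr)\bareta}_2\le\normn{\msK(\hatCpr)}\,\normn{\bareta}_2$, estimate $\normn{\msK(\hatCpr)}\le\normn{\hatCpr}\normn{A}\normn{\Gamma^{-1}}$ via Lemma~\ref{lem:KalmanOpCtyBdd} together with the bound on $\normn{\hatCpr}$ from Lemma~\ref{lem:ForecastCovarianceBound}, and bound $\normn{\bareta}_2\lesssim\normn{\Gamma}^{1/2}\bigl(\sqrt{r_2(\Gamma)/N}\lor\sqrt{t/N}\bigr)$ from Theorem~\ref{thm:SubGaussianConcentration} applied to $\bareta\sim\mcN(0,\Gamma/N)$; since the perturbations $\eta_n$ are independent of the prior ensemble, this last event is independent of those above and can be intersected freely. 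Expanding the product $\bigl(1\lor r_2(\Cpr)/N\lor t/N\bigr)\bigl(\sqrt{r_2(\Gamma)/N}\lor\sqrt{t/N}\bigr)$ yields exactly the bracket defining $\msE$, with prefactor $\normn{A}\normn{\Gamma^{-1}}\normn{\Gamma}^{1/2}\normn{\Cpr}$, so the PO contribution is $\phi\,\msE$. A union bound over the $O(1)$ events involved, each of probability at least $1-ce^{-t}$, completes the argument.

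The proof is essentially a bookkeeping exercise once the decomposition and the input lemmas are identified; I expect the only delicate point to be the systematic simplification of the nested $\lor$-expressions — verifying that every monomial produced by multiplying out $\bigl(\sqrt{r_2(\Cpr)/N}\lor\sqrt{t/N}\bigr)\bigl(1\lor r_2(\Cpr)/N\lor t/N\bigr)$ and the sample-covariance term is dominated by one of the six terms in the displayed bracket, and that the powers of $\normn{\Cpr}$, $\normn{A}$, $\normn{\Gamma^{-1}}$, and $\normn{y-A\mpr}_2$ collected along the way do not exceed those in the stated constant.
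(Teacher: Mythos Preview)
Your proposal is correct and follows essentially the same approach as the paper: the same decomposition via Lemma~\ref{lem:MeanOpCtyBdd}, the same three concentration inputs (Theorem~\ref{thm:SubGaussianConcentration}, Proposition~\ref{thm:Koltchinski}, Lemma~\ref{lem:ForecastCovarianceBound}) combined on an intersection of high-probability events, and the same treatment of the PO term via Lemma~\ref{lem:KalmanOpCtyBdd}. The paper's proof is just a slightly more explicit version of the bookkeeping you outline.
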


\begin{proof}
    It follows from Lemma \ref{lem:MeanOpCtyBdd} that 
    \begin{align}
        \normn{\hatmpost - \mpost}_2 
        &= \normn{\msM(\hatmpr , \hatCpr ) - \phi \msK(\hatCpr) \bareta - \msM(\mpr, \Cpr)}_2 \nonumber\\
        &\le \normn{\msM(\hatmpr , \hatCpr ) - \msM(\mpr, \Cpr)}_2 + \phi \normn{\msK(\hatCpr) \bareta}_2 \nonumber\\
        &\le \normn{ \hatmpr - \mpr}_2 \inparen{1 + \normn{A}^2 \normn{\Gamma^{-1}} \normn{\hatCpr}} \label{eq:ctyBddCovT1}\\ 
        &+\normn{\hatCpr  - \Cpr} \normn{A} \normn{\Gamma^{-1}} 
        \inparen{1 + \normn{A}^2 \normn{\Gamma^{-1}} \normn{\Cpr}} 
        \normn{y- A\mpr}_2 \label{eq:ctyBddCovT2} \\
        &+\phi\normn{\msK(\hatCpr)} \norm{\bareta}_2. \label{eq:ctyBddCovT3}
    \end{align}
    We now control each of the terms in equations \eqref{eq:ctyBddCovT1}, \eqref{eq:ctyBddCovT2}, and \eqref{eq:ctyBddCovT3} separately. For \eqref{eq:ctyBddCovT1}, we note that $\hatmpr - \mpr \sim \mathcal{N} \left(0, \Cpr/N \right)$. Let $E_1$ be the set on which 
    \begin{align*}
        \normn{\hatmpr - \mpr }_2
         \lesssim \sqrt{\normn{\Cpr} \inparen{\frac{r_2(\Cpr)}{N} \lor \frac{t}{N}}},
    \end{align*}
    let $E_2$ be the set on which 
    \begin{align*}
        \normn{\hatCpr - \Cpr}
        \lesssim 
        \normn{\Cpr} \inparen{\sqrt{\frac{r_2(\Cpr)}{N}} \lor \frac{r_2(\Cpr)}{N} 
        \lor \sqrt{\frac{t}{N}} \lor \frac{t}{N} }
    \end{align*}
    and 
    \begin{align*}
        \normn{\hatCpr}
        \lesssim 
        \normn{\Cpr} 
        \inparen{
        1  
        \lor \frac{r_2(\Cpr)}{N} 
        \lor \frac{t}{N} },
    \end{align*}
    and let $E_3$ be the set on which 
    \begin{align*}
        \normn{\bareta}_2 \lesssim 
        \sqrt{\normn{\Gamma}\inparen{\frac{r_2(\Gamma)}{N} \lor \frac{t}{N}}} \,.
    \end{align*}
   By Theorem~\ref{thm:SubGaussianConcentration}, Proposition~\ref{thm:Koltchinski}, and Lemma~\ref{lem:ForecastCovarianceBound}, \ the set $E = E_1 \cap E_2 \cap E_3$ has probability at least $1-ce^{-t}$, and it holds on this set that \eqref{eq:ctyBddCovT1} is bounded above by 
    
    \begin{align} \label{eq:ctyBddCovB1}
        &(\normn{\Cpr}^{1/2} \lor \normn{\Cpr}^{3/2})  
        (1 \lor   \normn{A}^2 \normn{\Gamma^{-1}} )
        \bigg (
        \sqrt{\frac{r_2(\Cpr)}{N}}\lor
        \sqrt{\frac{t}{N}} \lor
        \inparen{\frac{r_2(\Cpr)}{N}}^{3/2} \lor 
        \inparen{\frac{t}{N}}^{3/2} \lor \nonumber \\
        & \hspace{9cm} \frac{r_2(\Cpr)}{N} \sqrt{\frac{t}{N}} \lor
        \sqrt{\frac{r_2(\Cpr)}{N}} \frac{t}{N}
        \bigg ).
    \end{align}
    Further, on the set $E$ we can bound \eqref{eq:ctyBddCovT2} above by
    \begin{align} \label{eq:ctyBddCovB2}
        &
        (\normn{\Cpr} \lor \normn{\Cpr}^2)
        (\normn{A} \lor \normn{A}^3)
        \inparen{\normn{\Gamma^{-1}} \lor \normn{\Gamma^{-1}}^2 }
        \norm{y -A\mpr} 
         \inparen{\sqrt{\frac{r_2(\Cpr)}{N}} \lor \frac{r_2(\Cpr)}{N} 
        \lor \sqrt{\frac{t}{N}} \lor \frac{t}{N} }.
    \end{align}
Finally, for \eqref{eq:ctyBddCovT3}, it follows from Lemma~\ref{lem:KalmanOpCtyBdd}, 
    \begin{align*} 
        \normn{\msK(\hatCpr)} \norm{\bareta} \le \normn{A} \normn{\Gamma^{-1}} \normn{\hatCpr}  \norm{\bareta}
    \end{align*}
    and so on the set $E$, we can show that \eqref{eq:ctyBddCovT3} is bounded above by 
    \begin{align}\label{eq:ctyBddCovB3}
        &\normn{A} \normn{\Gamma^{-1}} \normn{\Cpr}\normn{\Gamma}^{1/2} 
        \inparen{ 
        \sqrt{\frac{r_2(\Gamma)}{N}} \lor
        \sqrt{\frac{t}{N}} \lor
        \frac{r_2(\Cpr)}{N}\sqrt{\frac{r_2(\Gamma)}{N}} \lor
        \frac{r_2(\Cpr)}{N}\sqrt{\frac{t}{N}} \lor
        \frac{t}{N} \sqrt{\frac{r_2(\Gamma)}{N}} \lor
        \inparen{\frac{t}{N}}^{3/2}
        } \equiv \mcE.
    \end{align}
    Putting the three bounds \eqref{eq:ctyBddCovB1}, \eqref{eq:ctyBddCovB2} and \eqref{eq:ctyBddCovB3} together we see that on $E$, it holds that 
    \begin{align*}
        \normn{\hatmpost - \mpost}_2
        &\lesssim 
        (\normn{\Cpr}^{1/2} \lor \normn{\Cpr}^2)
        (\normn{A} \lor \normn{A}^4)
        (\normn{\Gamma^{-1}} \lor \normn{\Gamma^{-1}}^2)
        (1 \lor \normn{y-A\mpr}_2)\\
        &\times 
        \inparen{
        \sqrt{\frac{r_2(\Cpr)}{N}} \lor 
        \sqrt{\frac{t}{N}} \lor 
        \inparen{\frac{r_2(\Cpr)}{N}}^{3/2} \lor 
        \inparen{\frac{t}{N}}^{3/2} \lor
        \frac{r_2(\Cpr)}{N} \sqrt{\frac{t}{N}}\lor
        \sqrt{\frac{r_2(\Cpr)}{N}} \frac{t}{N}
        }\\
        &+ 
        \phi  \mcE. \qedhere
    \end{align*}
\end{proof}

\begin{proof}[Proof of Theorem~\ref{lem:meanwithoutlocExpectation}]
    Recall that from Theorem~\ref{th:meanwithoutloc}, for all $t \ge 1$ with probability at least $1-ce^{-t}$,
    \begin{align*}
        \normn{\hatmpost - \mpost}_2
        &\lesssim 
        (\normn{\Cpr}^{1/2} \lor \normn{\Cpr}^2)
        (\normn{A} \lor \normn{A}^4)
        (\normn{\Gamma^{-1}} \lor \normn{\Gamma^{-1}}^2)
        (1 \lor \normn{y-A\mpr}_2)\\
        &\times 
        \inparen{
        \sqrt{\frac{r_2(\Cpr)}{N}} \lor 
        \inparen{\frac{r_2(\Cpr)}{N}}^{3/2} \lor 
        \sqrt{\frac{t}{N}} \lor 
        \inparen{\frac{t}{N}}^{3/2} \lor
        \frac{r_2(\Cpr)}{N} \sqrt{\frac{t}{N}}\lor
        \sqrt{\frac{r_2(\Cpr)}{N}} \frac{t}{N}
        }
        + 
        \phi \msE.
    \end{align*}
    For notational brevity, let 
    \begin{align*}
        \mcW \equiv  (\normn{\Cpr}^{1/2} \lor \normn{\Cpr}^2)
        (\normn{A} \lor \normn{A}^4)
        (\normn{\Gamma^{-1}} \lor \normn{\Gamma^{-1}}^2)
        (1 \lor \normn{y-A\mpr}_2),
    \end{align*}
    and let $B \equiv \mcW \inparen { \sqrt{\frac{r_2(\Cpr)}{N}} \lor \inparen{\frac{r_2(\Cpr)}{N}}^{3/2}}$. Then, for $\phi=0$ and $p \ge 1$, 
    \begin{align*}
        \E \bigl[ \normn{\hatmpost- \mpost}^p_2 \bigr]
        &= p \int_0^{\infty} x^{p-1} \P (\normn{\hatmpost- \mpost}_2 >x ) dx \\
        &\le   p \int_0^{B} x^{p-1} dx
        +
        p \int_B^{\infty} x^{p-1} \P (\normn{\hatmpost- \mpost}_2 >x ) dx \\
        &\lesssim  B^p
        +
        p \int_0^{\infty} x^{p-1} \exp \inparen{ - 
        \min \inparen{
        \frac{Nx^2}{\mcW^2}, \frac{Nx^{2/3}}{\mcW^{2/3}}, 
        \frac{N^3x^2}{\mcW^2 r_2^2(\Cpr)},
        \frac{N^{3/2} x}{\mcW \sqrt{r_2(\Cpr)}}
        }} dx \\
        &=  B^p
        + 
         p\max \bigg \{ 
        \frac{1}{2}\Gamma\inparen{\frac{p}{2}} \inparen{\frac{\mcW}{\sqrt{N}}}^p,
        \frac{1}{2}\Gamma\inparen{\frac{3p}{2}} \inparen{\frac{\mcW}{N^{3/2}}}^p,\\
        & \hspace{3cm}
        \frac{1}{2}\Gamma\inparen{\frac{p}{2}} \inparen{\frac{\mcW r_2(\Cpr)}{N^{3/2}}}^p,
        \Gamma(p) \inparen{\frac{\mcW \sqrt{r_2(\Cpr)}}{N^{3/2}}}^p
        \bigg \},
    \end{align*}
    where the final 
    equality follows by direct integration. It follows then that 
    \begin{align*}
        \insquare{ \E \normn{\hatmpost- \mpost}^p_2 }^{1/p}
        &\lesssim  B
        + 
        c(p) \mcW \max \inparen{
        \frac{1}{\sqrt{N}},
        \frac{1}{N^{3/2}},
        \frac{ r_2(\Cpr)}{N^{3/2}},
        \frac{\sqrt{r_2(\Cpr)}}{N^{3/2}}
        } \lesssim c(p)B,
    \end{align*}
    where the final inequality holds since $r_2(\Cpr)\ge 1$. The result for the $\phi=1$ case is identical and thus omitted. The constants in the statement of the result are then:
    \begin{align*}
    c_1&=(\normn{\Cpr}^{1/2} \lor \normn{\Cpr}^2) (\normn{A} \lor \normn{A}^4) (\normn{\Gamma^{-1}} \lor \normn{\Gamma^{-1}}^2) (1 \lor \normn{y-A\mpr}_2),\\
    c_2&= \normn{A} \normn{\Gamma^{-1}}\normn{\Gamma}^{1/2} \normn{\Cpr} \qedhere.
\end{align*}
    \end{proof}

\begin{theorem} [Posterior Covariance Approximation with Finite Ensemble ---High Probability Bound] \label{th:covwithoutloc}
 Consider the PO and SR ensemble Kalman updates given by \eqref{eq:POupdate} and \eqref{eq:SRupdate}, respectively, leading to an estimate $\hatCpost$ of the posterior covariance $\Cpost$ defined in \eqref{eq:KFmeancovariance}. Set $\phi=1$ for the PO update and $\phi=0$ for the SR update. For any $t\ge 1,$ it holds with probability at least $1-ce^{-t}$ that
    \begin{align*}
        \normn{\hatCpost - \Cpost} 
        &\lesssim
        (\normn{\Cpr} \lor \normn{\Cpr}^3)
        (\normn{A}^2 \lor \normn{A}^4)
        (\normn{\Gamma^{-1}} \lor \normn{\Gamma^{-1}}^2)
        \inparen{\sqrt{\frac{r_2(\Cpr)}{N}} \lor \inparen{\frac{r_2(\Cpr)}{N}}^2 
        \lor \sqrt{\frac{t}{N}} \lor \inparen{\frac{t}{N}}^2 }
        + \phi \msE,
    \end{align*}
    where
    \begin{align*}
    \msE
    &=
    (\normn{A} \lor \normn{A}^3 )
     (\normn{\Gamma^{-1}} \lor \normn{\Gamma^{-1}}^2 )
     (\normn{\Cpr} \lor \normn{\Gamma}) (\normn{\Cpr} \lor \normn{\Cpr}^2)\\
     & \times \inparen{ 
     \sqrt{\frac{r_2(\Cpr)}{N}} \lor 
     \inparen{\frac{r_2(\Cpr)}{N}}^3 \lor 
     \sqrt{\frac{t}{N}} \lor 
     \inparen{\frac{t}{N}}^3 \lor 
     \inparen{\sqrt{\frac{r_2(\Gamma)}{N}} \lor \frac{r_2(\Gamma)}{N}}
     \inparen{ 
     1 \lor \inparen{\frac{r_2(\Cpr)}{N}}^2 \lor \inparen{\frac{t}{N}}^2
     }
     }.
    \end{align*}
\end{theorem}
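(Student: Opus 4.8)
The plan is to treat both updates simultaneously through the decomposition $\hatCpost = \msC(\hatCpr) + \phi\,\widehat{O}$ coming from \eqref{eq:POupdate} and \eqref{eq:SRupdate}, together with $\Cpost = \msC(\Cpr)$ from \eqref{eq:PosteriorMandelOps}. The triangle inequality then gives
\[
\normn{\hatCpost - \Cpost} \;\le\; \normn{\msC(\hatCpr) - \msC(\Cpr)} \;+\; \phi\,\normn{\widehat{O}},
\]
and I would bound the two pieces separately on a common high-probability event.

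\emph{The $\msC$-term.} Applying the continuity estimate of Lemma~\ref{lem:CovarOpCtyBdd},
\[
\normn{\msC(\hatCpr) - \msC(\Cpr)} \le \normn{\hatCpr - \Cpr}\Bigl(1 + \normn{A}^2\normn{\Gamma^{-1}}(\normn{\hatCpr}+\normn{\Cpr}) + \normn{A}^4\normn{\Gamma^{-1}}^2\normn{\hatCpr}\normn{\Cpr}\Bigr),
\]
I would then substitute the sample-covariance deviation bound of Proposition~\ref{thm:Koltchinski}, $\normn{\hatCpr-\Cpr}\lesssim\normn{\Cpr}\bigl(\sqrt{r_2(\Cpr)/N}\lor r_2(\Cpr)/N\lor\sqrt{t/N}\lor t/N\bigr)$, and the operator-norm bound $\normn{\hatCpr}\lesssim\normn{\Cpr}\bigl(1\lor r_2(\Cpr)/N\lor t/N\bigr)$ of Lemma~\ref{lem:ForecastCovarianceBound}. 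The product of the deviation with the $\bigl(1\lor r_2(\Cpr)/N\lor t/N\bigr)$ factor collapses (using $a\lor a^2 \asymp (1\lor a)(\sqrt a\lor a)$ type simplifications) to the claimed leading term $\sqrt{r_2(\Cpr)/N}\lor (r_2(\Cpr)/N)^2\lor\sqrt{t/N}\lor(t/N)^2$, and the polynomial-in-norms prefactor $(\normn{\Cpr}\lor\normn{\Cpr}^3)(\normn{A}^2\lor\normn{A}^4)(\normn{\Gamma^{-1}}\lor\normn{\Gamma^{-1}}^2)$ results from tracking the powers of $\normn{\Cpr},\normn{A},\normn{\Gamma^{-1}}$.

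\emph{The offset term} (needed only when $\phi=1$). From \eqref{eq:offset}, the triangle inequality and submultiplicativity give
\[
\normn{\widehat{O}} \le \normn{\msK(\hatCpr)}^2\,\normn{\widehat{\Gamma}-\Gamma} + 2\,\normn{I-\msK(\hatCpr)A}\,\normn{\msK(\hatCpr)}\,\normn{\whatC^{\,u\eta}}.
\]
For the Kalman-gain factors I would use Lemma~\ref{lem:KalmanOpCtyBdd} ($\normn{\msK(\hatCpr)}\le\normn{\hatCpr}\normn{A}\normn{\Gamma^{-1}}$ and $\normn{I-\msK(\hatCpr)A}\le 1+\normn{\hatCpr}\normn{A}^2\normn{\Gamma^{-1}}$) together with the bound on $\normn{\hatCpr}$ above. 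Since $\eta_1,\dots,\eta_N\iid\mcN(0,\Gamma)$, the matrix $\widehat{\Gamma}$ is their sample covariance and Proposition~\ref{thm:Koltchinski} (applied to $\Gamma$) bounds $\normn{\widehat{\Gamma}-\Gamma}$; since the prior ensemble and the perturbations are independent and $\E[\eta_1]=0$, the population cross-covariance $\Cpr^{\,u\eta}$ vanishes, so $\normn{\whatC^{\,u\eta}}=\normn{\whatC^{\,u\eta}-\Cpr^{\,u\eta}}$ is a pure estimation error controlled by Lemma~\ref{lem:ForecastNoiseCovarianceBound}. Plugging these in, expanding the products, and using $r_2(\Cpr),r_2(\Gamma)\ge 1$ to absorb dominated monomials yields the stated $\msE$: the term $\normn{\msK(\hatCpr)}^2\normn{\widehat{\Gamma}-\Gamma}$ produces the contribution $\bigl(\sqrt{r_2(\Gamma)/N}\lor r_2(\Gamma)/N\bigr)\bigl(1\lor(r_2(\Cpr)/N)^2\lor(t/N)^2\bigr)$, while $\normn{I-\msK(\hatCpr)A}\,\normn{\msK(\hatCpr)}\,\normn{\whatC^{\,u\eta}}$ produces the remaining monomials in $r_2(\Cpr)/N$ and $t/N$, including the $(r_2(\Cpr)/N)^3$ and $(t/N)^3$ terms arising from $\normn{\hatCpr}^2$ against the $r_2(\Cpr)/N$ and $t/N$ rates in $\normn{\whatC^{\,u\eta}}$, all carrying the common prefactor $(\normn{A}\lor\normn{A}^3)(\normn{\Gamma^{-1}}\lor\normn{\Gamma^{-1}}^2)(\normn{\Cpr}\lor\normn{\Gamma})(\normn{\Cpr}\lor\normn{\Cpr}^2)$.

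\emph{Conclusion and main obstacle.} Intersecting the events underlying Proposition~\ref{thm:Koltchinski} (for both $\hatCpr$ and $\widehat{\Gamma}$), Lemmas~\ref{lem:ForecastCovarianceBound} and \ref{lem:ForecastNoiseCovarianceBound}, and Theorem~\ref{thm:SubGaussianConcentration}, the union bound recalled at the start of the Appendix shows this intersection has probability at least $1-ce^{-t}$, on which the asserted inequality holds. The main difficulty is purely one of bookkeeping: carrying all the products of operator norms and collapsing the resulting maxima of monomials in $r_2(\Cpr)/N$, $r_2(\Gamma)/N$, and $t/N$ into the compact displayed form, in particular checking that after using the monotonicity $a\lor a^k$ and $r_2(\Cpr),r_2(\Gamma)\ge 1$ no monomials beyond those listed in the leading term and in $\msE$ survive.
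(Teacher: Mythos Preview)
Your proposal is correct and follows essentially the same approach as the paper: the decomposition $\hatCpost=\msC(\hatCpr)+\phi\,\widehat{O}$, the continuity bound from Lemma~\ref{lem:CovarOpCtyBdd} combined with Proposition~\ref{thm:Koltchinski} and Lemma~\ref{lem:ForecastCovarianceBound} for the $\msC$-term, and Lemma~\ref{lem:KalmanOpCtyBdd} with Proposition~\ref{thm:Koltchinski} and Lemma~\ref{lem:ForecastNoiseCovarianceBound} for the offset, all on a common high-probability event via the union bound. The only superfluous ingredient is Theorem~\ref{thm:SubGaussianConcentration}, which is not needed here (no sample-mean deviation enters the covariance bound), but this does not affect the argument.
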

\begin{proof}
    From Proposition 4 of \cite{furrer2007estimation}, for the PO-ensemble Kalman update we may write 
    \begin{align*}
        \hatCpost = \msC(\hatCpr) + \whatO,
    \end{align*}
    while for the SR-ensemble Kalman update we have $\hatCpost = \msC(\hatCpr)$. We deal initially with the $\msC(\hatCpr)$ term that is common to both expressions, and then proceed to show how the operator norm of the additional $\whatO$ term can be controlled. From Lemma~\ref{lem:CovarOpCtyBdd}, the continuity of $\msC$ immediately implies that 
    \begin{align*}
        \normn{\msC(\hatCpr) - \msC(\Cpr)} 
        &\le \normn{\hatCpr - \Cpr} 
        \inparen{ 1 + 
        \normn{A}^2 \normn{\Gamma^{-1}} \bigl(\normn{\hatCpr} + \normn{\Cpr} \bigr)
        + \normn{A}^4 \normn{\Gamma^{-1}}^2\normn{\hatCpr} \normn{\Cpr} }\\
        &= 
        \insquare{
        \normn{A}^2 \normn{\Gamma^{-1}} + 
        \normn{A}^4 \normn{\Gamma^{-1}}^2 \normn{\Cpr}
        }
        \normn{\hatCpr - \Cpr} \normn{\hatCpr}\\
        &+ 
        \insquare{
        1 + \normn{A}^2 \normn{\Gamma^{-1}} \normn{\Cpr}
        } \normn{\hatCpr - \Cpr}.
    \end{align*}
    For any $N \in \N$ and $a>0$, let $\msR_{N}(a) \equiv \sqrt{\frac{a}{N}}  \lor \frac{a}{N}$. Let $E_1$ be the set on which both
    \begin{align*}
        \normn{\hatCpr - \Cpr}
        \lesssim 
        \normn{\Cpr} 
        \inparen{
        \msR_{N}(r_2(\Cpr))
        \lor 
        \msR_{N}(t) 
        },
    \qquad \text{and} \qquad 
        \normn{\hatCpr}
        \lesssim 
        \normn{\Cpr} 
        \inparen{1 \lor \msR_{N}(r_2(\Cpr)) \lor \msR_{N}(t)}.
    \end{align*}
    Let $E_2$ be the set on which 
    \begin{align*}
        \normn{\hatGamma - \Gamma}
        \lesssim 
        \normn{\Gamma} \inparen{ \msR_{N}(r_2(\Gamma)) \lor  \msR_{N}(t)},
    \end{align*}
    and $E_3$ the set on which
    \begin{align*}
        \normn{\hatCpr^{\, \upr\eta} - \Cpr^{\, \upr\eta}} 
        \lesssim
        (\normn{\Cpr} \lor \normn{\Gamma}) 
        \inparen{
        \msR_{N}(r_2(\Cpr))
        \lor  
        \msR_{N}(r_2(\Gamma))
        \lor 
        \msR_{N}(t)
        }.
    \end{align*}
    
    Then, by Proposition~\ref{thm:Koltchinski} applied separately to $E_1$ and $E_2$, and Lemma~\ref{lem:ForecastNoiseCovarianceBound} applied to $E_3$, the intersection $E=E_1 \cap E_2 \cap E_3$ has probability at least $1-ce^{-t}$. It follows that on $E$:
    \begin{align}
        \normn{\hatCpr - \Cpr} \normn{\hatCpr} 
        &\lesssim 
        \normn{\Cpr}^2 
        \inparen{ \msR_{N}(r_2(\Cpr))\lor \msR_{N}(t)}
        \inparen{1 \lor \msR_{N}(r_2(\Cpr)) \lor \msR_{N}(t)} \nonumber \\
        &\lesssim \normn{\Cpr}^2 
        \inparen{ \msR_{N}(r_2(\Cpr))\lor \msR_{N}(t) 
        \lor \msR^2_{N,2}(\Cpr)\lor \msR^2_{N,2}(t)
        }
        , \label{eq:(C-C)C}\\
        \normn{\hatGamma - \Gamma} \normn{\hatCpr}^2 
        &\lesssim 
        \normn{\Cpr}^2
        \normn{\Gamma}
        \inparen{ \msR_{N}(r_2(\Gamma)) \lor  \msR_{N}(t)}
        \inparen{1 \lor \msR^2_{N,2}(\Cpr) \lor \msR^2_{N,2}(t)}
        ,\label{eq:(G-G)C^2}\\
        \normn{\hatCpr^{\,u\eta}- \Cpr^{\,u\eta}} \normn{\hatCpr}
        &\lesssim 
        \normn{\Cpr}(\normn{\Cpr} \lor \normn{\Gamma})
        \inparen{1 \lor \msR_{N}(r_2(\Cpr)) \lor \msR_{N}(t)}
        \inparen{\msR_{N}(r_2(\Cpr))\lor  \msR_{N}(r_2(\Gamma))\lor \msR_{N}(t)}, \label{eq:(Cueta-Cueta)C}\\
        \normn{\hatCpr^{\,u\eta}- \Cpr^{\,u\eta}} \normn{\hatCpr}^2
        &\lesssim 
        \normn{\Cpr}^2(\normn{\Cpr} \lor \normn{\Gamma})
        \inparen{1 \lor \msR^2_{N,2}(\Cpr) \lor \msR^2_{N,2}(t)}
        \inparen{\msR_{N}(r_2(\Cpr))\lor  \msR_{N}(r_2(\Gamma))\lor \msR_{N}(t)}. \label{eq:(Cueta-Cueta)C^2}
    \end{align}
    Using \eqref{eq:(C-C)C}, it follows that on $E$,
    \begin{align*}
        \normn{\hatCpost - \Cpost} 
        &\lesssim 
        (\normn{\Cpr} \lor \normn{\Cpr}^3)
        (\normn{A}^2 \lor \normn{A}^4)
        (\normn{\Gamma^{-1}} \lor \normn{\Gamma^{-1}}^2)
       \inparen{ \msR_{N}(r_2(\Cpr))\lor \msR_{N}(t) 
        \lor \msR^2_{N,2}(\Cpr)\lor \msR^2_{N,2}(t)
        }\\
        &=
        (\normn{\Cpr} \lor \normn{\Cpr}^3)
        (\normn{A}^2 \lor \normn{A}^4)
        (\normn{\Gamma^{-1}} \lor \normn{\Gamma^{-1}}^2)
       \inparen{ 
       \sqrt{\frac{r_2(\Cpr)}{N}}
       \lor 
       \inparen{\frac{r_2(\Cpr)}{N}}^2
       \lor 
       \sqrt{\frac{t}{N}}
       \lor 
       \inparen{\frac{t}{N}}^2
        }
    \end{align*}
    Next, for the PO-ensemble Kalman update, it follows by the triangle inequality that 
    \begin{align}
        \normn{\whatO} &\le  
        \normn{\msK(\hatCpr) (\widehat{\Gamma} - \Gamma) \msK^\top(\hatCpr)} \label{eq:OneStepSRENKFCovarianceBoundT1}\\
        &+\normn{(I-\msK(\hatCpr) A) \whatC^{\, u \eta} \msK^\top(\hatCpr)}\label{eq:OneStepSRENKFCovarianceBoundT2}\\
        &+\normn{ \msK(\hatCpr) (\whatC^{\, u \eta})^\top (I-A^\top\msK^\top(\hatCpr))}\label{eq:OneStepSRENKFCovarianceBoundT3},
    \end{align}
    and so we may proceed by bounding each of the three terms \eqref{eq:OneStepSRENKFCovarianceBoundT1}, \eqref{eq:OneStepSRENKFCovarianceBoundT2}, and \eqref{eq:OneStepSRENKFCovarianceBoundT3} separately. For \eqref{eq:OneStepSRENKFCovarianceBoundT1}, invoking first the bound on $\msK$ from Lemma~\ref{lem:KalmanOpCtyBdd} as well as the inequality in \eqref{eq:(G-G)C^2}, it holds on $E$ that 
 \begin{align*}
     \normn{\msK(\hatCpr) (\widehat{\Gamma} - \Gamma) \msK^\top(\hatCpr)}
     &\le
     \normn{\msK(\hatCpr)}^2 \normn{\widehat{\Gamma} - \Gamma}\\
     &\le
     \normn{A}^2 \normn{\Gamma^{-1}}^2 \normn{\hatCpr}^2  \normn{\widehat{\Gamma} - \Gamma}\\
     &\lesssim \normn{A}^2 \normn{\Gamma^{-1}}^2 
     \normn{\Cpr}^2 \normn{\Gamma}
        \inparen{ \msR_{N}(r_2(\Gamma)) \lor  \msR_{N}(t)}
        \inparen{1 \lor \msR^2_{N,2}(\Cpr) \lor \msR^2_{N,2}(t)}
 \end{align*}
 
 Both \eqref{eq:OneStepSRENKFCovarianceBoundT2} and \eqref{eq:OneStepSRENKFCovarianceBoundT3} are equal in operator norm, and so we consider only \eqref{eq:OneStepSRENKFCovarianceBoundT2}. We use Lemma~\ref{lem:KalmanOpCtyBdd} and Lemma~\ref{lem:ForecastCovarianceBound}, along with the inequalities \eqref{eq:(Cueta-Cueta)C} and \eqref{eq:(Cueta-Cueta)C^2} to show that on $E$,
 \begin{align*}
     \normn{(I-\msK(\hatCpr) A) \whatC^{\, u \eta} \msK^\top(\hatCpr)}
     &\le 
      \normn{\msK(\hatCpr)} \normn{I-\msK(\hatCpr) A}\normn{\whatC^{\, u \eta} } \\
     &\le 
     \normn{\msK(\hatCpr)} \inparen{1+\normn{\msK(\hatCpr)}\normn{A}}  \normn{\whatC^{\, u \eta} }\\
     &\le \normn{A} \normn{\Gamma^{-1}} \normn{\hatCpr} 
     \inparen{1+\normn{A}^2 \normn{\Gamma^{-1}} \normn{\hatCpr}}  
     \normn{\whatC^{\, u \eta} }\\
     &\lesssim
     (\normn{A} \lor \normn{A}^3 )
     (\normn{\Gamma^{-1}} \lor \normn{\Gamma^{-1}}^2 )
     [ \normn{\hatCpr} +  \normn{\hatCpr}^2] \normn{\whatC^{\, u \eta}}\\
      &\lesssim
     (\normn{A} \lor \normn{A}^3 )
     (\normn{\Gamma^{-1}} \lor \normn{\Gamma^{-1}}^2 )
     (\normn{\Cpr} \lor \normn{\Gamma}) (\normn{\Cpr} \lor \normn{\Cpr}^2)\\
     &\times \inparen{1 \lor \msR_{N}(r_2(\Cpr))\lor \msR^2_{N,2}(\Cpr) 
     \lor \msR_{N}(t) \lor \msR^2_{N,2}(t)}\\
     & \times \inparen{\msR_{N}(r_2(\Cpr))\lor  \msR_{N}(r_2(\Gamma))\lor \msR_{N}(t)}.
 \end{align*}

 Some algebra shows that 
 \begin{align*}
     &\inparen{1 \lor \msR_{N}(r_2(\Cpr))\lor \msR^2_{N,2}(\Cpr) 
     \lor \msR_{N}(t) \lor \msR^2_{N,2}(t)}
     \inparen{\msR_{N}(r_2(\Cpr))\lor  \msR_{N}(r_2(\Gamma))\lor \msR_{N}(t)}\\
     =
     & \inparen{ 
     \sqrt{\frac{r_2(\Cpr)}{N}} \lor 
     \inparen{\frac{r_2(\Cpr)}{N}}^3 \lor 
     \sqrt{\frac{t}{N}} \lor 
     \inparen{\frac{t}{N}}^3 \lor 
     \msR_{N}(r_2(\Gamma)) \inparen{ 
     1 \lor \inparen{\frac{r_2(\Cpr)}{N}}^2 \lor \inparen{\frac{t}{N}}^2
     }
     },
 \end{align*}
 and so 
 \begin{align*}
     \normn{\whatO} 
     &\lesssim 
     (\normn{A} \lor \normn{A}^3 )
     (\normn{\Gamma^{-1}} \lor \normn{\Gamma^{-1}}^2 )
     (\normn{\Cpr} \lor \normn{\Gamma}) (\normn{\Cpr} \lor \normn{\Cpr}^2)\\
     & \times \inparen{ 
     \sqrt{\frac{r_2(\Cpr)}{N}} \lor 
     \inparen{\frac{r_2(\Cpr)}{N}}^3 \lor 
     \sqrt{\frac{t}{N}} \lor 
     \inparen{\frac{t}{N}}^3 \lor 
     \msR_{N}(r_2(\Gamma)) \inparen{ 
     1 \lor \inparen{\frac{r_2(\Cpr)}{N}}^2 \lor \inparen{\frac{t}{N}}^2
     }
     }. \qedhere
 \end{align*}
\end{proof}

\begin{proof}[Proof of Theorem~\ref{lem:covwithoutlocExpectation}]
    The proof follows similarly to that of Theorem~\ref{lem:meanwithoutlocExpectation} and is therefore omitted. The constants in the statement of the result are:
    \begin{align*}
        c_1 &=  (\normn{\Cpr} \lor \normn{\Cpr}^3)
        (\normn{A}^2 \lor \normn{A}^4)
        (\normn{\Gamma^{-1}} \lor \normn{\Gamma^{-1}}^2),\\
        c_2 &= (\normn{A} \lor \normn{A}^3 ) (\normn{\Gamma^{-1}} \lor \normn{\Gamma^{-1}}^2 ) (\normn{\Cpr} \lor \normn{\Gamma}) (\normn{\Cpr} \lor \normn{\Cpr}^2). \qedhere
    \end{align*}
\end{proof}

\newcommand{\Mt}{M^{(t)}}

\subsection{Multi-Step Analysis of the Square Root Ensemble Kalman Filter} \label{sec:Multistep}

Here we provide a description of the multi-step EnKF algorithm discussed in Remark~\ref{rem:MultiStep}. As described there, we focus on the square root EnKF studied in \cite{kwiatkowski2015convergence}. Given an initial ensemble $\{\upost_n^{(0)}\}_{n=1}^N$, the algorithm iterates the steps of the square root ensemble update \eqref{eq:SRupdate} with new observations $\yt$ and with possibly varying model matrices $\At$. We assume that the noise distribution does not change over time, though this assumption can easily be relaxed at the expense of more cumbersome notation. 
We summarize both the Kalman filter and the square root EnKF in Table~\ref{tab:multi-step}. 
In this filtering set-up, $\Mt \in \R^{d \times d}$ is the dynamics map and $\At \in \R^{ k\times d}$ is the observation map at time $t\ge 1$. As detailed in \cite{sanzstuarttaeb}, such a filtering set-up leads to a sequence of inverse problems of the form \eqref{eq:LinearIP}, where the forward model is given by the observation map, and the prior \emph{forecast distribution} blends the dynamics map with previous probabilistic estimates. 
Throughout this subsection, we write $\normn{\hatmpost - \mpostt}_p \equiv \insquare{ \E \normn{\hatmu^{(t)} - \mu^{(t)}}^p_2  }^{1/p}$ and $\normn{\hatCpostt - \Cpostt}_p \equiv \insquare{ \E \normn{\hatCpost^{(t)} - \Cpost^{(t)}}^p  }^{1/p}$.

\begin{table}[h]
    \def\arraystretch{1.7}
    \begin{tabular}{c|l|l}
        \hline
        &Kalman filter & Square root EnKF \\
        \hline\hline
        Input & $\{\yt, \At, \Mt \}_{t=1}^T$, $\Gamma$, $\mu^{(0)}, \Sigma^{(0)}$ &
        $\{\yt, \At, \Mt\}_{t=1}^T$, $\Gamma$, $\{ \upost_n^{(0)}\}_{n=1}^N \iid \mathcal{N}(\mu^{(0)}, \Sigma^{(0)})$ \\
        \hline
         \multirow{3}{*}{Forecast}&
         & $\uprt_n = \Mt \upostone_n, ~~ n=1,\dots, N$\\
         &$\mprt = \Mt \mpostone$ &
         $\hatmprt = \frac{1}{N}\sum_{n=1}^N \uprt_n$
         \\
         &$\Cprt = \Mt \Cpostone (\Mt)^\top$ 
         & $\hatCprt 
    = \frac{1}{N-1}\sum_{n=1}^N (\uprt_n - \hatmprt) (\uprt_n - \hatmprt)^\top$ \\
        \hline
         \multirow{3}{*}{Analysis}&
         & $\upostt_n = \msM(\uprt_n, \hatCprt; \At, \yt, \Gamma), ~~ n=1,\dots, N$\\
         &$\mpostt =  \msM(\mprt, \Cprt; \At, \yt, \Gamma)$ &
         $\hatmpostt = \frac{1}{N}\sum_{n=1}^N \upostt_n$
         \\
         &$\Cpostt = \msC(\Cprt; \At, \Gamma)$ 
         & $\hatCpostt = \frac{1}{N-1}\sum_{n=1}^N (\upostt_n - \hatmpostt) (\upostt_n - \hatmpostt)^\top$ \\
    \hline
    Output& $\{\mpost^{(t)}, \Cpost^{(t)}\}_{t=1}^T$ &  $\{\hatmpostt ,\hatCpostt\}_{t=1}^T$\\ 
    \hline 
    \end{tabular}
    \caption{\label{tab:multi-step} Comparison of the Kalman filter and square root EnKF considered in \cite{kwiatkowski2015convergence}. The forecast and analysis steps are to be repeated for $t=1,\dots,T$ iterations.}
\end{table}

We will use two auxiliary lemmas to prove the main result of this subsection, Corollary~\ref{cor:MultiStepSREnKF} below.

\begin{lemma}[{Continuity and Boundedness of Covariance-Update Operator in $L^p$
\cite[Corollary 4.8]{kwiatkowski2015convergence}}] \label{lem:CovarOpCtyBddLp} 
    Let $\msC$ be the covariance-update operator defined in \eqref{eq:CovarianceOperator}. Let $Q \in \mcS_+^d$ be a random matrix and $P \in \mcS_+^d$ be a deterministic matrix, $\Gamma \in \mcS_{++}^k$, $A \in \R^{k \times d}$, $y \in \R^k,$ and $m,m' \in \R^d$. Then, for any $1 \le p <\infty$, the following holds:
      \begin{align*}
      \begin{split}
            \normn{\msC(Q) - \msC(P)}_{p} 
            &\le
            \normn{Q - P}_{p}
            (1 + \normn{A}^2 \normn{\Gamma^{-1}} \normn{P})\\
            &+ (
                \normn{A}^2 \normn{\Gamma^{-1}} + 
                \normn{A}^4 \normn{\Gamma^{-1}}^2 \normn{P}
            ) 
            \normn{Q}_{2p} \normn{Q-P}_{2p}.
        \end{split}    
         \end{align*}
    \end{lemma}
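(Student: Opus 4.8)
The plan is to obtain this $L^p$ estimate directly from the pointwise (deterministic) continuity bound for $\msC$ recorded in Lemma~\ref{lem:CovarOpCtyBdd}. That bound holds for every realization of $Q$ and reads
\[
\normn{\msC(Q) - \msC(P)} \le \normn{Q-P}\Bigl(1 + \normn{A}^2\normn{\Gamma^{-1}}\bigl(\normn{Q}+\normn{P}\bigr) + \normn{A}^4\normn{\Gamma^{-1}}^2\normn{Q}\normn{P}\Bigr).
\]
First I would regroup the right-hand side, separating the terms that do not carry a factor of $\normn{Q}$ from those that carry exactly one such factor:
\[
\normn{\msC(Q) - \msC(P)} \le \normn{Q-P}\bigl(1 + \normn{A}^2\normn{\Gamma^{-1}}\normn{P}\bigr) + \bigl(\normn{A}^2\normn{\Gamma^{-1}} + \normn{A}^4\normn{\Gamma^{-1}}^2\normn{P}\bigr)\normn{Q}\,\normn{Q-P}.
\]
This is an algebraic rearrangement, not a new inequality: the two displays share the same right-hand side.

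Next I would take $L^p$ norms of both sides, with the operator norm living inside the expectation, and apply Minkowski's inequality to split the bound into the sum of the $L^p$ norms of the two summands. Since $A$, $\Gamma$, and $P$ are deterministic, the factor $1 + \normn{A}^2\normn{\Gamma^{-1}}\normn{P}$ comes out of the first $L^p$ norm, leaving $\normn{Q-P}_p\bigl(1 + \normn{A}^2\normn{\Gamma^{-1}}\normn{P}\bigr)$, and likewise the factor $\normn{A}^2\normn{\Gamma^{-1}} + \normn{A}^4\normn{\Gamma^{-1}}^2\normn{P}$ comes out of the second.

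It then remains to control $\bigl(\E\bigl[\normn{Q}^p\normn{Q-P}^p\bigr]\bigr)^{1/p}$, and for this I would apply the Cauchy--Schwarz inequality (H\"older with conjugate exponents $2$ and $2$) to the random variables $\normn{Q}^p$ and $\normn{Q-P}^p$, obtaining $\E\bigl[\normn{Q}^p\normn{Q-P}^p\bigr] \le \bigl(\E\normn{Q}^{2p}\bigr)^{1/2}\bigl(\E\normn{Q-P}^{2p}\bigr)^{1/2}$; taking $p$-th roots gives $\bigl(\E\bigl[\normn{Q}^p\normn{Q-P}^p\bigr]\bigr)^{1/p} \le \normn{Q}_{2p}\normn{Q-P}_{2p}$. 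Substituting this into the previous step produces exactly the asserted inequality. I do not expect a genuine obstacle: all of the substance sits in the deterministic Lemma~\ref{lem:CovarOpCtyBdd}, and the passage to $L^p$ is a one-line use of Minkowski followed by Cauchy--Schwarz. The only points to watch are the bookkeeping in the regrouping --- so that no term is double counted --- and making sure the surviving power of $\normn{Q}$ is absorbed by the $2p$-norm, as forced by how the exponent splits across Cauchy--Schwarz, rather than by the $p$-norm.
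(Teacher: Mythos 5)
Your proof is correct and follows the only natural route: the paper itself does not reprove this lemma but cites it from Kwiatkowski--Mandel (Corollary 4.8), and their derivation is exactly the one you describe — regroup the pointwise Lipschitz bound of Lemma~\ref{lem:CovarOpCtyBdd} so that the coefficient of $\normn{Q-P}$ is deterministic, apply Minkowski to the two summands, and use Cauchy--Schwarz to split $\bigl(\E[\normn{Q}^p\normn{Q-P}^p]\bigr)^{1/p}$ into $\normn{Q}_{2p}\normn{Q-P}_{2p}$. The regrouping is algebraically exact and the exponent bookkeeping ($2p$, not $p$) is as you noted.
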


    \begin{lemma}[{Continuity and Boundedness of Mean-Update Operator in $L^p$ \cite[Corollary 4.10]{kwiatkowski2015convergence}}] \label{lem:MeanOpCtyBddLp}
        Let $\msM$ be the mean-update operator defined in \eqref{eq:MeanOperator}. Let $P, Q \in \mcS_+^d$, $\Gamma \in \mcS_{++}^k$, $A \in \R^{k \times d}$, $y \in \R^k,$ and $m,m' \in \R^d$. Assume that $Q$ and $m$ are random, and that $P$ and $m'$ are deterministic. The following holds:
         \begin{align*}
         \begin{split}
                \norm{\msM(m, Q)- \msM(m', P)}_{p} 
                &\le 
                \norm{m-m'}_{p}
                + 
                \normn{A}^2 \normn{\Gamma^{-1}} \normn{Q}_{2p} \norm{m-m'}_{2p}\\   
                & + 
                \normn{Q-P}_{p} \normn{A} \normn{\Gamma^{-1}} 
                \bigl(1 + \normn{A}^2 \normn{\Gamma^{-1}} \normn{P} \bigr) 
                \norm{y-Am'}_2.
        \end{split}        
             \end{align*}
        \end{lemma}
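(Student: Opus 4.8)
The plan is to mirror the proof of the pointwise continuity bound for $\msM$ in Lemma~\ref{lem:MeanOpCtyBdd} (i.e.\ \cite[Corollary 4.3]{kwiatkowski2015convergence}), upgrading each estimate to an $L^p$ statement by means of Minkowski's inequality together with a single application of Cauchy--Schwarz. First I would record the algebraic identity
\begin{align*}
    \msM(m, Q) - \msM(m', P)
    &= (m - m') + \msK(Q)(y - Am) - \msK(P)(y - Am')\\
    &= \bigl(I - \msK(Q)A\bigr)(m - m') + \bigl(\msK(Q) - \msK(P)\bigr)(y - Am'),
\end{align*}
obtained by adding and subtracting $\msK(Q)(y - Am')$. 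Taking Euclidean norms pointwise and invoking the operator-norm bounds of Lemma~\ref{lem:KalmanOpCtyBdd} --- namely $\normn{I - \msK(Q)A} \le 1 + \normn{Q}\normn{A}^2\normn{\Gamma^{-1}}$ and, choosing the minimum appearing in the $\msK$-continuity bound to be $\normn{P}$ (legitimate since $P$ is deterministic), $\normn{\msK(Q) - \msK(P)} \le \normn{Q-P}\normn{A}\normn{\Gamma^{-1}}\bigl(1 + \normn{P}\normn{A}^2\normn{\Gamma^{-1}}\bigr)$ --- yields the pointwise estimate
\begin{align*}
    \norm{\msM(m,Q) - \msM(m',P)}_2
    &\le \norm{m-m'}_2 + \normn{A}^2\normn{\Gamma^{-1}}\normn{Q}\norm{m-m'}_2\\
    &\quad + \normn{Q-P}\normn{A}\normn{\Gamma^{-1}}\bigl(1 + \normn{P}\normn{A}^2\normn{\Gamma^{-1}}\bigr)\norm{y - Am'}_2.
\end{align*}

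Next I would take the $L^p$ norm of both sides and apply Minkowski's inequality to split the three summands. The first and third terms are handled directly: since $m'$ and $y$ are deterministic, $\norm{y - Am'}_2$ is a constant, so $\bigl\|\,\normn{Q-P}\,\norm{y-Am'}_2\,\bigr\|_p = \normn{Q-P}_p\,\norm{y-Am'}_2$. The only genuinely new point is the middle term, which contains the product $\normn{Q}\,\norm{m-m'}_2$ of two \emph{random} quantities; here I would invoke Cauchy--Schwarz in the form $\E\bigl[\,\normn{Q}^p\norm{m-m'}_2^p\,\bigr] \le \bigl(\E\normn{Q}^{2p}\bigr)^{1/2}\bigl(\E\norm{m-m'}_2^{2p}\bigr)^{1/2}$, i.e.\ $\bigl\|\,\normn{Q}\,\norm{m-m'}_2\,\bigr\|_p \le \normn{Q}_{2p}\,\norm{m-m'}_{2p}$. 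Reassembling the three pieces then gives exactly the claimed inequality.

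The main --- and essentially the only --- obstacle is bookkeeping around which quantities are random. In the deterministic Lemma~\ref{lem:MeanOpCtyBdd} one simply absorbs $\normn{Q}$ into a constant, but in the random setting the product $\normn{Q}\norm{m-m'}_2$ forces the appearance of $2p$-norms through Cauchy--Schwarz; this is precisely why the stated bound carries $\normn{Q}_{2p}$ and $\norm{m-m'}_{2p}$ rather than their $p$-norm counterparts. Taking the minimum in the $\msK$-continuity bound of Lemma~\ref{lem:KalmanOpCtyBdd} to be $\normn{P}$ (rather than $\normn{Q}$) is what keeps the term multiplying the deterministic quantity $\norm{y-Am'}_2$ free of any $L^p$-norm of $Q$, matching the form of the statement.
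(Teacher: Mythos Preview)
The paper does not prove this lemma; it is quoted verbatim from \cite[Corollary 4.10]{kwiatkowski2015convergence} as an auxiliary tool for the multi-step analysis in Appendix~\ref{sec:Multistep}. Your proposed argument is correct and is exactly the natural derivation: start from the pointwise decomposition underlying Lemma~\ref{lem:MeanOpCtyBdd}, take $L^p$ norms, apply Minkowski, and use Cauchy--Schwarz on the single term containing the product of the two random factors $\normn{Q}$ and $\norm{m-m'}_2$ to produce the $2p$-norms. Your observation that selecting $\normn{P}$ in the minimum of the $\msK$-continuity bound is what keeps the last term free of random $Q$-dependence is the right explanation for the structure of the stated inequality.
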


        The next result shows how our one-step bounds in Theorems~\ref{lem:meanwithoutlocExpectation} and \ref{lem:covwithoutlocExpectation} can be extended to provide non-asymptotic bounds on the performance of the multi-step square root EnKF. The proof follows a similar argument to the proof of \cite[Theorem 6.1]{kwiatkowski2015convergence}.
        \begin{corollary}\label{cor:MultiStepSREnKF} 
            Consider the square root EnKF defined in Table~\ref{tab:multi-step}. Suppose that $N \gtrsim r_2(\Cpost^{(0)})$. Then, for any $t \ge 1$ and $p \ge 1,$
            \begin{align*}
               \normn{\hatmpostt - \mpostt}_p
                &\lesssim_p 
                \sqrt{\frac{r_2(\Cpost^{(0)})}{N}}
                \times 
                c ( \{ \normn{M^{(l)}}, \normn{A^{(l)}}, \normn{\Cpost^{(l-1)}}, \normn{y^{(l)} - A^{(l)} \mpr^{(l)} } \}_{l=1}^t, \normn{\Gamma^{-1}}), \\
              \normn{\hatCpostt - \Cpostt}_p 
                &\lesssim_p 
                \sqrt{\frac{r_2(\Cpost^{(0)})}{N}}
                \times 
                c ( \{ \normn{M^{(l)}}, \normn{A^{(l)}}, \normn{\Cpost^{(l-1)}}\}_{l=1}^t, \normn{\Gamma^{-1}}).
            \end{align*}
        \end{corollary}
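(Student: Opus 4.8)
The plan is to run the induction on $t$ from the proof of \cite[Theorem 6.1]{kwiatkowski2015convergence}, but to carry the effective dimension of the \emph{initial} covariance $\Cpost^{(0)}=\Sigma^{(0)}$ all the way through. The starting point is that, since $\msM(\cdot,Q)$ is affine in its mean argument and the square root update is built so that the analysis ensemble has sample covariance exactly $\msC(\hatCprt)$, the sample quantities obey the same recursion as the Kalman filter quantities in Table~\ref{tab:multi-step}: $\hatmprt=\Mt\hatmpostone$, $\hatCprt=\Mt\hatCpostone(\Mt)^\top$, $\hatmpostt=\msM(\hatmprt,\hatCprt)$, $\hatCpostt=\msC(\hatCprt)$, mirroring the recursion for $(\mprt,\Cprt,\mpostt,\Cpostt)$. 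Writing $e^m_t=\normn{\hatmpostt-\mpostt}_q$ and $e^C_t=\normn{\hatCpostt-\Cpostt}_q$ for $L^q$ errors, the goal is to prove by induction on $t$ that for every $q\ge 1$,
\begin{align*}
e^m_t\lor e^C_t\lesssim_q C_t\,\sqrt{\tfrac{r_2(\Cpost^{(0)})}{N}},
\end{align*}
with $C_t$ a product of factors depending only on $\{\normn{M^{(l)}},\normn{A^{(l)}},\normn{\Cpost^{(l-1)}}\}_{l=1}^t$, $\normn{\Gamma^{-1}}$, and, for the mean, $\{\normn{y^{(l)}-A^{(l)}\mpr^{(l)}}_2\}_{l=1}^t$.

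For the base case $t=1$, the forecast error is $\hatmpr^{(1)}-\mpr^{(1)}=M^{(1)}(\hatmpost^{(0)}-\mu^{(0)})$ and $\hatCpr^{(1)}-\Cpr^{(1)}=M^{(1)}(\hatCpost^{(0)}-\Cpost^{(0)})(M^{(1)})^\top$, where $\hatmpost^{(0)},\hatCpost^{(0)}$ are the sample mean and covariance of the i.i.d.\ $\mcN(\mu^{(0)},\Cpost^{(0)})$ initial ensemble. I would bound these via Gaussian concentration and covariance estimation applied \emph{to} $\Cpost^{(0)}$: integrating Theorem~\ref{thm:SubGaussianConcentration} gives $\normn{\hatmpost^{(0)}-\mu^{(0)}}_q\lesssim_q\sqrt{\ttrace(\Cpost^{(0)})/N}$, and integrating Proposition~\ref{thm:Koltchinski} together with $N\gtrsim r_2(\Cpost^{(0)})$ gives $\normn{\hatCpost^{(0)}-\Cpost^{(0)}}_q\lesssim_q\normn{\Cpost^{(0)}}\sqrt{r_2(\Cpost^{(0)})/N}$. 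Pushing forward through $M^{(1)}$ and using $\ttrace(\Cpr^{(1)})\le\normn{M^{(1)}}^2\ttrace(\Cpost^{(0)})=\normn{M^{(1)}}^2\normn{\Cpost^{(0)}}r_2(\Cpost^{(0)})$ turns these into $\normn{\hatmpr^{(1)}-\mpr^{(1)}}_q\lesssim_q\normn{M^{(1)}}\sqrt{\normn{\Cpost^{(0)}}}\sqrt{r_2(\Cpost^{(0)})/N}$, $\normn{\hatCpr^{(1)}-\Cpr^{(1)}}_q\lesssim_q\normn{M^{(1)}}^2\normn{\Cpost^{(0)}}\sqrt{r_2(\Cpost^{(0)})/N}$, and $\normn{\hatCpr^{(1)}}_q\le\normn{\hatCpr^{(1)}-\Cpr^{(1)}}_q+\normn{M^{(1)}}^2\normn{\Cpost^{(0)}}=O(1)$. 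Feeding these into the $L^q$ continuity bounds of Lemmas~\ref{lem:MeanOpCtyBddLp} and \ref{lem:CovarOpCtyBddLp} (with $m=\hatmpr^{(1)}$, $m'=\mpr^{(1)}$, $Q=\hatCpr^{(1)}$, $P=\Cpr^{(1)}$, $A=A^{(1)}$, $y=y^{(1)}$) yields $e^m_1\lor e^C_1\lesssim_q C_1\sqrt{r_2(\Cpost^{(0)})/N}$.

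For the inductive step, assume the claim at $t-1$ in every $L^q$. Propagating through the dynamics, $\normn{\hatmprt-\mprt}_q\le\normn{\Mt}e^m_{t-1}$, $\normn{\hatCprt-\Cprt}_q\le\normn{\Mt}^2e^C_{t-1}$, and $\normn{\hatCprt}_{2q}\le\normn{\Mt}^2(e^C_{t-1}+\normn{\Cpost^{(t-1)}})=O(1)$ (again using $N\gtrsim r_2(\Cpost^{(0)})$). Applying Lemmas~\ref{lem:MeanOpCtyBddLp} and \ref{lem:CovarOpCtyBddLp} to $\normn{\hatmpostt-\mpostt}_q=\normn{\msM(\hatmprt,\hatCprt)-\msM(\mprt,\Cprt)}_q$ and $\normn{\hatCpostt-\Cpostt}_q=\normn{\msC(\hatCprt)-\msC(\Cprt)}_q$ with $m=\hatmprt$, $m'=\mprt$, $Q=\hatCprt$, $P=\Cprt$ (so $\normn{P}\le\normn{\Mt}^2\normn{\Cpost^{(t-1)}}$), $A=\At$, $y=\yt$: each summand on the right is a bounded operator-norm factor (built from $\normn{\At},\normn{\Gamma^{-1}},\normn{\Cprt}$ and the $O(1)$ quantity $\normn{\hatCprt}_{2q}$) times exactly one of $\normn{\hatmprt-\mprt}_{q'}$ or $\normn{\hatCprt-\Cprt}_{q'}$, $q'\in\{q,2q\}$, which by induction is $\lesssim_q C_{t-1}\sqrt{r_2(\Cpost^{(0)})/N}$. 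Collecting terms, the constants multiply over $l=1,\dots,t$, giving the displayed bound with the stated dependence (the covariance constant omits the misfit terms since $\msC$ does not involve $y$). A bookkeeping point: Lemmas~\ref{lem:MeanOpCtyBddLp}--\ref{lem:CovarOpCtyBddLp} bound the $L^p$ error at time $t$ using $L^{2p}$ errors at time $t-1$, hence one needs the $L^{2^{t-1}p}$ base-case bounds; since $t$ is fixed this only inflates the ($p$- and $t$-dependent) constant, which is the origin of $\lesssim_p$.

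The main obstacle, and the place the argument departs from a black-box use of Theorems~\ref{lem:meanwithoutlocExpectation}--\ref{lem:covwithoutlocExpectation} at $t=1$, is that one must not feed the time-$1$ forecast covariance $\Cpr^{(1)}=M^{(1)}\Cpost^{(0)}(M^{(1)})^\top$ into the single-step bounds directly: those bounds are written in terms of $r_2(\Cpr^{(1)})=\ttrace(\Cpr^{(1)})/\normn{\Cpr^{(1)}}$, and if $M^{(1)}$ contracts the leading directions of $\Cpost^{(0)}$ one can have $r_2(\Cpr^{(1)})\gg r_2(\Cpost^{(0)})$ with $\normn{\Cpr^{(1)}}$ arbitrarily small, so the higher-order terms $(r_2(\Cpr^{(1)})/N)^{3/2}$ are not controlled by the hypothesis $N\gtrsim r_2(\Cpost^{(0)})$ and the rate fails to collapse to $\sqrt{r_2(\Cpost^{(0)})/N}$. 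Working instead through the explicit $L^q$ continuity bounds, applying concentration to $\Cpost^{(0)}$ itself, and only then transporting the trace via $\ttrace(M^{(1)}\Cpost^{(0)}(M^{(1)})^\top)\le\normn{M^{(1)}}^2\ttrace(\Cpost^{(0)})$ — while checking that the only amplification factors $\normn{\hatCprt}_{2q}$ stay $O(1)$ under $N\gtrsim r_2(\Cpost^{(0)})$ — is exactly what keeps the effective dimension in the final bound equal to $r_2(\Cpost^{(0)})$.
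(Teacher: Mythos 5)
Your proposal is correct and follows the same inductive skeleton as the paper: strong induction in $t$, propagating the analysis error through $M^{(t)}$ and feeding it into the $L^p$ continuity bounds of Lemmas~\ref{lem:CovarOpCtyBddLp} and~\ref{lem:MeanOpCtyBddLp}. The inductive step you write is essentially verbatim the paper's. Where you diverge is the base case $t=1$: the paper simply invokes Theorems~\ref{lem:meanwithoutlocExpectation}--\ref{lem:covwithoutlocExpectation}, whereas you treat $t=1$ uniformly with the same $L^p$-continuity-plus-push-forward machinery used for $t\ge 2$, applying Theorem~\ref{thm:SubGaussianConcentration} and Proposition~\ref{thm:Koltchinski} directly to the i.i.d.\ initial ensemble with covariance $\Cpost^{(0)}$ and only then pushing through $M^{(1)}$.

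Your closing paragraph correctly identifies that plugging $\Cpr^{(1)}=M^{(1)}\Cpost^{(0)}(M^{(1)})^\top$ into Theorem~\ref{lem:meanwithoutlocExpectation} as the prior covariance would \emph{not} give the stated bound: the higher-order term $(\normn{\Cpr^{(1)}}^{1/2}\lor\normn{\Cpr^{(1)}}^2)(r_2(\Cpr^{(1)})/N)^{3/2}$ contains a factor $\ttrace(\Cpr^{(1)})^{3/2}/\normn{\Cpr^{(1)}}$ that is not controlled by $N\gtrsim r_2(\Cpost^{(0)})$ when $M^{(1)}$ contracts the leading directions. But this is not actually an obstruction for the paper's shortcut, because the forecast-plus-analysis at $t=1$ is algebraically a single analysis step applied to the initial ensemble with forward map $A^{(1)}M^{(1)}$: one checks from \eqref{eq:KalmanGainOperator}--\eqref{eq:CovarianceOperator} that
\begin{align*}
\msM\bigl(M^{(1)}m,\,M^{(1)}Q(M^{(1)})^\top;A^{(1)}\bigr)&=M^{(1)}\,\msM\bigl(m,Q;A^{(1)}M^{(1)}\bigr),\\
\msC\bigl(M^{(1)}Q(M^{(1)})^\top;A^{(1)}\bigr)&=M^{(1)}\,\msC\bigl(Q;A^{(1)}M^{(1)}\bigr)(M^{(1)})^\top,
\end{align*}
so Theorems~\ref{lem:meanwithoutlocExpectation}--\ref{lem:covwithoutlocExpectation} apply with prior covariance $\Cpost^{(0)}$ (not $\Cpr^{(1)}$), forward map $A^{(1)}M^{(1)}$, misfit $\normn{y^{(1)}-A^{(1)}M^{(1)}\mu^{(0)}}_2=\normn{y^{(1)}-A^{(1)}\mpr^{(1)}}_2$, and an extra push-forward factor $\normn{M^{(1)}}$ (or $\normn{M^{(1)}}^2$) absorbed into $c$. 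So both routes reach the same base case: yours has the advantage of not requiring one to notice the composition identity, and of making the boundedness of $\normn{\hatCpr^{(1)}}_{2p}$ under $N\gtrsim r_2(\Cpost^{(0)})$ explicit; the paper's is shorter once the identity is in hand. Your $L^{2p}$-to-$L^p$ bookkeeping remark is accurate and matches how the paper's inductive predicate (``for any $p\ge 1$'') handles the same point.
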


        \begin{proof}
            The proof follows by strong induction on the predicate in the statement of the theorem. To that end, the base case ($t=1$) holds by Theorems~\ref{lem:meanwithoutlocExpectation} and \ref{lem:covwithoutlocExpectation}, which state that, for any $p \ge 1,$
            \begin{align*}
                \normn{\hatmpost^{(1)} - \mpost^{(1)}}_p 
                &\lesssim_p 
                \sqrt{\frac{r_2(\Cpost^{(0)})}{N}}
                \times 
                c ( \normn{M^{(1)}}, \normn{A^{(1)}}, \normn{\Cpost^{(0)}}, \normn{y^{(1)} - A^{(1)} \mpr^{(1)} }, \normn{\Gamma^{-1}}), \\
               \normn{\hatCpost^{(1)} - \Cpost^{(1)}}_p 
                & \lesssim_p 
                \sqrt{\frac{r_2(\Cpost^{(0)})}{N}}
                \times 
                c (  \normn{M^{(1)}}, \normn{A^{(1)}}, \normn{\Cpost^{(0)}}, \normn{\Gamma^{-1}}).
            \end{align*}
            Suppose now that the claim holds for $l=2,\dots, t-1.$ Then, for $l=t$, we have by Lemma~\ref{lem:CovarOpCtyBddLp}
            \begin{align} \label{eq:InductiveProof1}
                \normn{\hatCpostt - \Cpostt}_p 
                &=\normn{\msC(\hatCprt) - \msC(\Cprt)}_{p} \nonumber \\ 
                &\le
                \normn{\hatCprt - \Cprt}_{p}
                (1 + \normn{\At}^2 \normn{\Gamma^{-1}} \normn{\Cprt}) \nonumber  \\ 
                &+ (
                    \normn{\At}^2 \normn{\Gamma^{-1}} + 
                    \normn{\At}^4 \normn{\Gamma^{-1}}^2 \normn{\Cprt}
                ) 
                \normn{\hatCprt}_{2p} \normn{\hatCprt-\Cprt}_{2p}.   
            \end{align}
            By the definition of $\hatCprt, \Cprt$ together with the inductive hypothesis, it follows that, for $\mathfrak{p} \in \{ p, 2p\},$ 
            \begin{align*}
                \normn{\hatCprt - \Cprt}_{\mathfrak{p}}
                &= \normn{\Mt (\hatCpost^{(t-1)} - \Cpost^{(t-1)})(\Mt)^\top}_{\mathfrak{p}}\\
                &\le  \normn{\Mt}^2 \normn{\hatCpost^{(t-1)} - \Cpost^{(t-1)}}_{\mathfrak{p}}\\
                &\lesssim_p  
                \normn{\Mt}^2 
                \sqrt{\frac{r_2(\Cpost^{(0)})}{N}}
                \times c ( \{ \normn{M^{(l)}}, \normn{A^{(l)}}, \normn{\Cpost^{(l-1)}}\}_{l=1}^{t-1}, \normn{\Gamma^{-1}}) \\
                &=
                \sqrt{\frac{r_2(\Cpost^{(0)})}{N}}
                \times c ( \{ \normn{M^{(l)}}, \normn{A^{(l)}}, \normn{\Cpost^{(l-1)}}\}_{l=1}^{t}, \normn{\Gamma^{-1}}).
            \end{align*}
            Further, we have 
            \begin{align*}
                \normn{\hatCprt}_{2p} 
                &\le \normn{\hatCprt-\Cprt}_{2p}+\normn{\Cprt} \\
                & \lesssim_p  \sqrt{\frac{r_2(\Cpost^{(0)})}{N}}
                c ( \{ \normn{M^{(l)}}, \normn{A^{(l)}}, \normn{\Cpost^{(l-1)}}\}_{l=1}^{t}, \normn{\Gamma^{-1}}) + \normn{\Mt}^2\normn{\Cpost^{(t-1)}}.
            \end{align*}
            Plugging these two results into \eqref{eq:InductiveProof1} gives 
            \begin{align*}
                \normn{\hatCpostt - \Cpostt}_p 
                &\lesssim_p 
                \sqrt{\frac{r_2(\Cpost^{(0)})}{N}}
                \times c ( \{ \normn{M^{(l)}}, \normn{A^{(l)}}, \normn{\Cpost^{(l-1)}}\}_{l=1}^{t}, \normn{\Gamma^{-1}}).
            \end{align*}
            Similarly, by Lemma~\ref{lem:MeanOpCtyBddLp} we have 
            \begin{align} \label{eq:InductiveProof2}
                \normn{\hatmpostt - \mpostt}_p 
                &= 
                \normn{\msM(\hatmprt, \hatCprt)- \msM(\mprt, \Cprt)}_{p} \nonumber \\
                &\le 
                \normn{\hatmprt-\mprt}_{p}
                + 
                \normn{\At}^2 \normn{\Gamma^{-1}} \normn{\hatCprt}_{2p} 
                \normn{\hatmprt-\mprt}_{2p} \nonumber \\   
                & + 
                \normn{\hatCprt-\Cprt}_{p} \normn{\At} \normn{\Gamma^{-1}} 
                \bigl(1 + \normn{\At}^2 \normn{\Gamma^{-1}} \normn{\Cprt} \bigr) 
                \normn{\yt-\At \mprt}_2.
            \end{align}
            By the definition of $\hatmprt, \mprt$ together with the inductive hypothesis, we have, for $\mathfrak{p} \in \{ p, 2p\},$ 
            \begin{align*}
                \normn{\hatmprt-\mprt}_{\mathfrak{p}}
                &=\normn{ \Mt (\hatmpost^{(t-1)}-\mpost^{(t-1)})  }_{\mathfrak{p}}\\
                &\le \normn{\Mt}\normn{ \hatmpost^{(t-1)}-\mpost^{(t-1)}}_{\mathfrak{p}}\\
                &\lesssim_p 
                \normn{\Mt} 
                \sqrt{\frac{r_2(\Cpost^{(0)})}{N}}
                \times 
                c ( \{ \normn{M^{(l)}}, \normn{A^{(l)}}, \normn{\Cpost^{(l-1)}}, \normn{y^{(l)} - A^{(l)} \mpr^{(l)} } \}_{l=1}^{t-1}, \normn{\Gamma^{-1}})
                \\
                &=
                \sqrt{\frac{r_2(\Cpost^{(0)})}{N}}
                \times 
                c ( \{ \normn{M^{(l)}}, \normn{A^{(l)}}, \normn{\Cpost^{(l-1)}}, \normn{y^{(l)} - A^{(l)} \mpr^{(l)} } \}_{l=1}^{t}, \normn{\Gamma^{-1}}).
            \end{align*}
            Plugging this bound and the one for $\normn{\hatCprt}_{2p}$ derived previously in the proof into \eqref{eq:InductiveProof2} yields 
            \begin{equation*}
                \normn{\hatmpostt - \mpostt}_p 
                \lesssim_p 
                \sqrt{\frac{r_2(\Cpost^{(0)})}{N}}
                \times 
                c ( \{ \normn{M^{(l)}}, \normn{A^{(l)}}, \normn{\Cpost^{(l-1)}}, \normn{y^{(l)} - A^{(l)} \mpr^{(l)} } \}_{l=1}^t, \normn{\Gamma^{-1}}).  \qedhere
            \end{equation*}
        \end{proof}

\section{Proofs: Section \ref{sec:withlocalization}}\label{sec:Sub-Gaussian}
This appendix contains the proofs of all the theorems in Section \ref{sec:withlocalization}. Results on covariance estimation are in Subsection \ref{ssec:B1} and our main results on ensemble Kalman updates are in Subsection \ref{sec:localizationProofs}.

\subsection{Covariance Estimation}\label{ssec:B1}
Here we establish Theorems \ref{thm:SoftSparistyCovarianceBound} and \ref{thm:SoftSparistyCrossCovarianceBound}. We first collect some required technical results in Subsection \ref{sssec:backgroundpreliminaries}. Next we study covariance and cross-covariance estimation under soft sparsity in Subsections \ref{sssec:proof4.2} and \ref{sssec:proof4.3}, respectively.

\subsubsection{Background and Preliminaries} \label{sssec:backgroundpreliminaries}
\begin{definition}[{\cite[Definition 2.2.17]{talagrand2014upper}}]
    Given a set $T$, an admissible sequence of partitions of $T$ is an increasing sequence $(\Delta_n)$ of partitions of $T$ such that $\text{card}(\Delta_0) = 1$ and $\text{card}(\Delta_n) \le 2^{2^n}$ for $n \ge 1$.
\end{definition}

The notion of an admissible sequence of partitions allows us to define the following notion of complexity of a set $T$, often referred to as \textit{generic complexity}.

\begin{definition}[{\cite[Definition 2.2.19]{talagrand2014upper}}]
    Let $(T,\mathsf{d})$ be a possibly infinite metric space, and define 
    $$
    \gamma_2(T,\mathsf{d}) = \inf \sup_{t \in T}
    \sum_{n \ge 0} 2^{n/2} \text{Diam} \bigl(\Delta_n(t)\bigr),
    $$
    where $\Delta_n(t)$ denotes the unique element of the partition to which $t$ belongs, and the infimum is taken over all admissible sequences of partitions.
\end{definition}

The following theorem is known as the Majorizing Measure Theorem and provides upper and lower bounds for centered Gaussian processes in terms of the generic complexity.

\begin{theorem}[{\cite[Theorem~2.4.1]{talagrand2014upper}}] \label{thm:TalagrandUpperLowerGP}
    Let $X_t$, $t \in T$ be a centered Gaussian process which induces a metric $\mathsf{d}_X : T\times T \to [0, \infty]$ defined by \begin{align*}
        \mathsf{d}_X^2(s,t) = \E \insquare{ (X_s - X_t)^2 } .
    \end{align*}
    Then there exists an absolute constant $L > 0$ such that 
    \begin{align*}
        \frac{1}{L} \gamma_2(T,\mathsf{d}_X) 
        \le 
        \E \insquare{\, \sup_{t \in T} X_t \, }
        \le 
        L \gamma_2(T,\mathsf{d}_X).
    \end{align*}
\end{theorem}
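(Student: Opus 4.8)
The statement to be established is the Majorizing Measure Theorem, and since its two inequalities have very different depth I would treat them separately. For the upper bound $\E[\sup_{t\in T} X_t]\le L\,\gamma_2(T,\mathsf{d}_X)$ the plan is to run the \emph{generic chaining} argument. I would fix an arbitrary admissible sequence $(\Delta_n)$, pick in each set of $\Delta_n$ a distinguished point so as to obtain maps $\pi_n:T\to T$ with $\pi_n(t)\in\Delta_n(t)$ and $\pi_0\equiv t_0$ constant, and (after a routine reduction to finite subsets of $T$) use the telescoping identity $X_t-X_{t_0}=\sum_{n\ge1}\bigl(X_{\pi_n(t)}-X_{\pi_{n-1}(t)}\bigr)$. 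Each increment is centered Gaussian with standard deviation $\mathsf{d}_X(\pi_n(t),\pi_{n-1}(t))\le\mathrm{Diam}(\Delta_{n-1}(t))$, and at level $n$ there are at most $\mathrm{card}(\Delta_n)\,\mathrm{card}(\Delta_{n-1})\le 2^{2^{n+1}}$ distinct increments. I would then combine the Gaussian tail bound with a union bound over increments and over levels to get $\sup_t(X_t-X_{t_0})\lesssim\sum_{n\ge1}2^{n/2}\sup_t\mathrm{Diam}(\Delta_{n-1}(t))$ with high probability, integrate this tail, and finally take the infimum over admissible sequences.

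The lower bound $\gamma_2(T,\mathsf{d}_X)\le L\,\E[\sup_t X_t]$ is Talagrand's theorem, and I would prove it by the \emph{growth-functional / partitioning} method. The idea is to work with the functional $F(A)=\E[\sup_{t\in A}X_t]$ on subsets $A\subseteq T$ and to use two probabilistic facts: Sudakov minoration (if $t_1,\dots,t_m$ are $r$-separated then $\E[\max_{i\le m}X_{t_i}]\gtrsim r\sqrt{\log m}$) and Gaussian concentration of the supremum. Combining these one establishes a \emph{growth condition}: whenever $A$ contains $m$ points pairwise at distance $\ge a$ whose surrounding balls $B(t_i,a/c)$ are well separated, then $F(A)\gtrsim a\sqrt{\log m}+\min_i F\bigl(B(t_i,a/c)\bigr)$. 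I would then construct the required admissible sequence by a greedy recursion: split each current piece either into one piece of controllably small diameter or, using a maximal $2^{-n}$-separated family inside it, into about $2^{2^n}$ sub-balls, arranging the branching so the admissibility cardinality budget is respected. The growth condition then forces the cost $\sum_n 2^{n/2}\,\mathrm{Diam}(\Delta_n(t))$ accumulated along every branch to telescope against the decreasing values of $F$, hence to be bounded by $F(T)=\E[\sup_t X_t]$ up to an absolute constant.

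I expect essentially all of the difficulty to be in the lower bound; the upper bound is bookkeeping once the chaining decomposition is set up. The two delicate points I anticipate are: formulating the growth functional and growth condition so that the recursion genuinely closes — this needs Gaussian concentration, since the trivial estimate $F(\bigcup_i A_i)\ge\max_i F(A_i)$ discards the additive $a\sqrt{\log m}$ ``spread'' term that drives the argument — and organizing the greedy partition so that the branching number at stage $n$ never exceeds the budget $2^{2^n}$ while still extracting enough separated points for the growth condition to bite. Making these steps precise, rather than the chaining half, is where the real work lies.
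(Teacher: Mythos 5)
This statement is cited in the paper as \cite[Theorem~2.4.1]{talagrand2014upper} and is \emph{not} proved there; it is invoked as a black-box ingredient (together with generic-chaining concentration results) in the proofs of the max-norm covariance bounds. So there is no ``paper's own proof'' against which to compare your argument.

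That said, your outline is a faithful sketch of the standard proof as it appears in Talagrand's book. The upper bound via generic chaining (choosing $\pi_n(t)\in\Delta_n(t)$, telescoping, Gaussian tail plus union bound over the at most $2^{2^{n+1}}$ increments at stage $n$, then integrating the tail) is the right bookkeeping; the lower bound via the growth functional $F(A)=\E[\sup_{t\in A}X_t]$, Sudakov minoration, Gaussian concentration, and the greedy partitioning recursion is indeed Talagrand's argument and is where all the depth sits. Your identification of the two delicate points --- making the growth condition close under the recursion, and keeping the branching within the $2^{2^n}$ admissibility budget --- is also accurate. If you actually wanted to carry this out, the place where a sketch most often goes wrong is in the precise formulation of the growth condition (the separation scale of the points versus the radius of the balls around them must be calibrated so that the additive $a\sqrt{\log m}$ term survives the recursion), and in the measurability/reduction-to-finite-$T$ step for the upper bound; neither is a conceptual gap in your plan, but both require care to execute. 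For the purposes of this paper you would simply cite Talagrand rather than reproduce the argument.
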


We will be primarily interested in the case that $T= \mcF$ is some function class on the probability space $(\mcX, \mcA, \P)$, and with $\mathsf{d}$ being the metric induced either by $\| \cdot \|_{L_2}$ or $\| \cdot \|_{\psi_2}$. We denote these spaces by $(\mcF, L_2)$ and $(\mcF, \psi_2)$ respectively throughout this section. The next result is an exponential generic chaining bound, which was introduced in \cite[Corollary 5.7]{dirksen2015tail} and described in \cite[Theorem 8]{koltchinskii2017concentration}. We present it as it was described in the latter reference.

\begin{theorem}[{\cite[Theorem 8]{koltchinskii2017concentration}}] \label{thm:GenericConcentration}
Let $(\mcX, \mcA, \P)$ be a probability space and consider the random sample $X, X_1,\dots,X_N \iid \P$. Let $\mcF$ be a class of measurable functions on $(\mcX, \mcA)$. There exists a universal constant $c>0$ such that, for all $t \ge 1,$ it holds with probability at least $1-e^{-t}$ that
\begin{align*}
    \sup_{f \in \mcF}
    \abs{
    \frac{1}{N} \sum_{n=1}^N f^2(X_n) - \E [f^2 (X)]}
    \le
    c
    \inparen{
    \sup_{f \in \mcF} \normn{f}_{\psi_2} \frac{\gamma_2(\mcF, \psi_2)}{\sqrt{N}}
    \lor 
    \frac{\gamma^2_2(\mcF, \psi_2)}{N}
    \lor 
    \sup_{f \in \mcF} \normn{f}^2_{\psi_2} \sqrt{\frac{t}{N}}
    \lor 
    \sup_{f \in \mcF} \normn{f}^2_{\psi_2} \frac{t}{N}
    }.
\end{align*}
\end{theorem}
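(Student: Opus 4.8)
The plan is to recognize the left-hand side as the supremum of a \emph{quadratic} empirical process indexed by $\mcF$, and to control it via the generic chaining tail bound for processes with mixed sub-Gaussian/sub-exponential increments. Write $X_f \equiv \frac1N\sum_{n=1}^N\bigl(f^2(X_n) - \E f^2(X)\bigr)$ and abbreviate $b \equiv \sup_{f\in\mcF}\normn{f}_{\psi_2}$; the quantity to bound is $\sup_{f\in\mcF}|X_f|$. Fixing a reference function $f_*\in\mcF$, it suffices to bound $\sup_{f\in\mcF}|X_f - X_{f_*}|$ together with the single term $|X_{f_*}|$; the latter is handled directly by Bernstein's inequality for an average of i.i.d.\ sub-exponential variables, and since $\mathrm{Var}(f_*^2(X))\lesssim b^2$ and $\normn{f_*^2(X)}_{\psi_1}\lesssim b^2$, it contributes only the $b^2\sqrt{t/N}\lor b^2 t/N$ part of the bound.

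The first step is an increment estimate. Using the product identity $f^2 - g^2 = (f-g)(f+g)$ together with $\normn{(f-g)(f+g)}_{\psi_1}\le\normn{f-g}_{\psi_2}\normn{f+g}_{\psi_2}$ and $\normn{f+g}_{\psi_2}\le 2b + \normn{f-g}_{\psi_2}\lesssim b$, one sees that the centered variable $f^2(X)-g^2(X) - \E[f^2(X)-g^2(X)]$ is sub-exponential with $\psi_1$-scale $\kappa(f,g)\lesssim b\,\normn{f-g}_{\psi_2}$ and variance $\sigma^2(f,g)\lesssim b^2\normn{f-g}_{\psi_2}^2 \lor \normn{f-g}_{\psi_2}^4$ (the last bound coming from $\E[(f-g)^2(f+g)^2]\le\normn{f-g}_{L_4}^2\normn{f+g}_{L_4}^2$ and $\normn{\cdot}_{L_4}\lesssim\normn{\cdot}_{\psi_2}$; centering alters these only by constants). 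Bernstein's inequality for the i.i.d.\ average then gives, for all $f,g\in\mcF$ and $u\ge 0$,
\begin{equation*}
\P\Bigl(|X_f - X_g| \ge c\sqrt u\,\tfrac{\sigma(f,g)}{\sqrt N} + c\,u\,\tfrac{\kappa(f,g)}{N}\Bigr)\le 2e^{-u}.
\end{equation*}
This is exactly the mixed-tail increment hypothesis needed for the generic chaining bound \cite[Corollary~5.7]{dirksen2015tail} (as used in \cite[Theorem~8]{koltchinskii2017concentration}), where the two governing metrics on the squared class $\mcF^2\equiv\{f^2:f\in\mcF\}$ are $d_2(f,g)\equiv\sigma(f,g)/\sqrt N$ and $d_1(f,g)\equiv\kappa(f,g)/N$.

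Applying that chaining bound yields, with probability at least $1-2e^{-u}$,
\begin{equation*}
\sup_{f,g}|X_f - X_g| \lesssim \gamma_2(\mcF^2, d_2) + \gamma_1(\mcF^2, d_1) + \sqrt u\,\operatorname{diam}_{d_2}(\mcF^2) + u\,\operatorname{diam}_{d_1}(\mcF^2),
\end{equation*}
where $\gamma_1$ is the companion of the functional $\gamma_2$ from the excerpt, defined with $2^{n}$ in place of $2^{n/2}$. The remaining (and technically central) step is to translate these functionals of the squared class back to $(\mcF,\psi_2)$. Since $d_2(f,g)\lesssim \frac1{\sqrt N}\bigl(b\,\normn{f-g}_{\psi_2}\lor\normn{f-g}_{\psi_2}^2\bigr)$ and $\gamma_2\bigl(\mcF,\normn{\cdot}_{\psi_2}^2\bigr)\le\operatorname{diam}_{\psi_2}(\mcF)\,\gamma_2(\mcF,\psi_2)\lesssim b\,\gamma_2(\mcF,\psi_2)$ (using $a_n^2\le a_0 a_n$ on partition diameters) and subadditivity of $\gamma_2$ in the metric, one gets $\gamma_2(\mcF^2,d_2)\lesssim \frac{1}{\sqrt N}\,b\,\gamma_2(\mcF,\psi_2)$; for the sub-exponential term the quadratic part of the increment is controlled by the elementary inequality $\sum_n 2^{n}a_n^2\le\bigl(\sum_n 2^{n/2}a_n\bigr)^2$, which gives $\gamma_1\bigl(\mcF,\normn{\cdot}_{\psi_2}^2\bigr)\le\gamma_2(\mcF,\psi_2)^2$ and hence $\gamma_1(\mcF^2,d_1)\lesssim\frac1N\gamma_2(\mcF,\psi_2)^2$ (with the companion multiplier contribution, of the same two-scale form, producing no term larger than those already listed); finally $\operatorname{diam}_{d_2}(\mcF^2)\lesssim b^2/\sqrt N$ and $\operatorname{diam}_{d_1}(\mcF^2)\lesssim b^2/N$ because $\normn{f^2-g^2}_{\psi_1}\lesssim b^2$ on $\mcF$. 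Taking $u=t$, combining with the single-point estimate for $|X_{f_*}|$, and rescaling the constant gives the claim.

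The main obstacle is precisely this last translation: obtaining the sharp $\gamma_2^2(\mcF,\psi_2)/N$ term (rather than a cruder $\gamma_1$-type term with $1/N$ scaling) relies on the two-scale structure of the product/quadratic process, which is what the chaining machinery of \cite{dirksen2015tail,mendelson2016upper} is built to deliver; everything else is bookkeeping with Orlicz norms, Bernstein's inequality, and the Talagrand functionals.
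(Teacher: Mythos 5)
The paper does not actually prove this statement: it is imported verbatim as Theorem~8 of Koltchinskii--Lounici, itself a restatement of Corollary~5.7 of Dirksen, so there is no in-paper argument to compare yours against. Judged on its own, your sketch has the right architecture (a single-point Bernstein term plus a chained bound on the increments of the quadratic process), but the decisive step is asserted rather than proved, and the justification you offer for it fails. The increment metric you feed into the two-metric chaining bound is $d_1(f,g)=\normn{f^2-g^2}_{\psi_1}/N\lesssim b\,\normn{f-g}_{\psi_2}/N$ with $b=\sup_{f}\normn{f}_{\psi_2}$, and this is \emph{linear} in $\normn{f-g}_{\psi_2}$, not quadratic: taking $f=g+\varepsilon\phi$ with $g$ of $\psi_2$-size $b$, the increment $f^2-g^2\approx 2\varepsilon g\phi$ has $\psi_1$-norm of order $b\varepsilon$. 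Consequently $\gamma_1(\mcF^2,d_1)\asymp b\,\gamma_1(\mcF,\psi_2)/N$. Your inequality $\sum_n 2^n a_n^2\le\bigl(\sum_n 2^{n/2}a_n\bigr)^2$ correctly shows $\gamma_1(\mcF,\normn{\cdot}_{\psi_2}^2)\le\gamma_2^2(\mcF,\psi_2)$, but it is not applicable here because $d_1$ is not dominated by $\normn{f-g}_{\psi_2}^2/N$. The two resulting bounds genuinely differ: for a class of $M$ functions each of $\psi_2$-norm $b$ but mutual distance $\delta\ll b$, one has $b\,\gamma_1(\mcF,\psi_2)/N\asymp b\delta\log M/N$ while $\gamma_2^2(\mcF,\psi_2)/N\asymp\delta^2\log M/N$, and the former is not controlled by any of the four terms in the theorem.

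The $\gamma_2^2/N$ term is exactly the point where the quadratic process is better than a generic mixed-tail process, and extracting it requires splitting each chaining link $(\pi_{n+1}f)^2-(\pi_nf)^2$ into a pure-square part, whose $\psi_1$-scale is genuinely quadratic in $\Delta_n=\normn{\pi_{n+1}f-\pi_nf}_{\psi_2}$ (this is where your elementary inequality does apply), and a multiplier part of the form $(\pi_{n+1}f-\pi_nf)\cdot\pi_nf$, whose $\psi_1$-scale is $b\Delta_n$ and which, if handled by Bernstein-plus-chaining as you propose, reproduces the unwanted $b\,\gamma_1/N$ term. Showing that this multiplier part contributes only $b\,\gamma_2(\mcF,\psi_2)/\sqrt N$ plus deviation terms, with no $\gamma_1$ appearing, is the substantive content of the quadratic- and product-process bounds of Krahmer--Mendelson--Rauhut, of Dirksen's Theorem~5.5 (as opposed to the generic Corollary~5.7 you invoke), and of Mendelson's product-process theorem; your parenthetical claim that the multiplier contribution ``produc[es] no term larger than those already listed'' is precisely the assertion that needs proof. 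As written, your argument establishes only the weaker bound with $b\,\gamma_1(\mcF,\psi_2)/N$ in place of $\gamma_2^2(\mcF,\psi_2)/N$.
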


\begin{lemma} [{Expectation Bound from Probability Bound, \cite[Lemma 2.2.3]{talagrand2014upper}}] \label{lem:TalagrandTrick}
    Let $Y \ge 0$ be a random variable satisfying 
    \begin{align*}
        \P(Y \ge r) \le a \exp \inparen{-\frac{r^2}{b^2}}, \qquad r \ge 0,
    \end{align*}
    for certain numbers $a \ge 2$ and $b>0$. Then there is a universal constant $c$ such that
    \begin{align*}
        \E [Y] \le c b \sqrt{\log a}.
    \end{align*}
\end{lemma}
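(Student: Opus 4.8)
The plan is to use the layer-cake (tail-integration) representation $\E[Y] = \int_0^\infty \P(Y \ge r)\, dr$, valid since $Y \ge 0$, and split the integral at the natural threshold $r_0 = b\sqrt{\log a}$. On the low range $[0, r_0]$ I would simply bound $\P(Y \ge r) \le 1$, which contributes at most $r_0 = b\sqrt{\log a}$. On the high range $[r_0, \infty)$ I would invoke the hypothesis $\P(Y \ge r) \le a\exp(-r^2/b^2)$ and estimate the resulting Gaussian-type integral.

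For the high range, after the substitution $s = r/b$ the task reduces to controlling $ab \int_{\sqrt{\log a}}^\infty e^{-s^2}\, ds$. The key elementary observation is that on $[\sqrt{\log a}, \infty)$ one has $1 \le s/\sqrt{\log a}$ (here is where $a \ge 2$, hence $\log a > 0$, is used), so that
\begin{align*}
    \int_{\sqrt{\log a}}^\infty e^{-s^2}\, ds
    \le \frac{1}{\sqrt{\log a}} \int_{\sqrt{\log a}}^\infty s\, e^{-s^2}\, ds
    = \frac{1}{2\sqrt{\log a}}\, e^{-\log a}
    = \frac{1}{2a\sqrt{\log a}}.
\end{align*}
Multiplying by $ab$ gives a contribution of at most $\tfrac{b}{2\sqrt{\log a}}$ from the high range.

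Combining the two pieces yields $\E[Y] \le b\sqrt{\log a} + \tfrac{b}{2\sqrt{\log a}}$, and I would finish by absorbing the second term into the first: since $a \ge 2$ we have $\log a \ge \log 2$, hence $\tfrac{1}{2\sqrt{\log a}} \le \tfrac{1}{2\log 2}\sqrt{\log a}$, so $\E[Y] \le \bigl(1 + \tfrac{1}{2\log 2}\bigr) b \sqrt{\log a}$, which is the claim with $c = 1 + \tfrac{1}{2\log 2}$ a universal constant.

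There is no real obstacle here: the statement is a standard conversion of a sub-Gaussian tail bound into a first-moment bound, and the only mild subtlety is choosing the split point $r_0$ so that the exponential factor $e^{-r_0^2/b^2}$ exactly cancels the prefactor $a$, leaving a harmless $1/\sqrt{\log a}$ residue. If one wanted a cleaner constant, an alternative is to write $\E[Y] \le \bigl(\E[Y^2]\bigr)^{1/2}$ and integrate $\P(Y^2 \ge u)$, but the direct split above is shortest and gives the stated form immediately.
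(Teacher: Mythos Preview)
Your proof is correct. The paper does not actually supply its own proof of this lemma; it simply cites it as \cite[Lemma 2.2.3]{talagrand2014upper} and moves on. Your layer-cake split at $r_0 = b\sqrt{\log a}$ together with the standard Mills-ratio bound $\int_x^\infty e^{-s^2}\,ds \le \tfrac{1}{2x}e^{-x^2}$ is exactly the textbook argument (and is essentially how Talagrand proves it), so there is nothing to compare.
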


Finally, we recall the following dimension-free bound for the maxima of sub-Gaussian random variables. 

\begin{lemma} [{Dimension-Free Sub-Gaussian Maxima, \cite[Lemma 2.4]{van2017spectral}}] \label{lem:DimFreeGaussianMax}
    Let $X_1,\dots X_N$ be not necessarily independent sub-Gaussian random variables with
    \begin{align*}
        \P(X_n > x) \le c e^{-x^2/c\sigma^2_n}, \qquad \text{for all } 
        x \ge 0, ~1 \le n\le N,
    \end{align*}
    where $\sigma_n \ge 0$ is given, or alternatively $\normn{X_n}_{\psi_2} \lesssim \sigma_n$.  Then, for any $t \ge 1,$ it holds with probability at least $1-ce^{-ct}$ that
    \begin{align*}
        \max_{n \le N} X_n \lesssim \sqrt{t} \max_{n \le N } \sigma_{(n)} \sqrt{\log(n+1)} ,
    \end{align*}
    where $\sigma_{(1)} \ge \sigma_{(2)} \ge \dots \sigma_{(N)}$ is the decreasing rearrangement of $\sigma_1,\dots, \sigma_N$. Further
    \begin{align*}
        \E \insquare{\max_{n \le N} X_n }\lesssim \max_{n\le N} \sigma_{(n)} \sqrt{\log(n+1)}.
    \end{align*}
\end{lemma}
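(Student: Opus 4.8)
The plan is to run a \emph{stratified} union bound with thresholds tuned to the decreasing rearrangement of the $\sigma_n$'s. First I would relabel the indices so that $\sigma_1 \ge \sigma_2 \ge \cdots \ge \sigma_N$; since the only probabilistic input below is the marginal tail bound on each $X_n$, the possible dependence among the $X_n$ is irrelevant and this relabeling is harmless. After relabeling $\sigma_{(n)} = \sigma_n$, set $M \equiv \max_{n \le N} \sigma_n \sqrt{\log(n+1)}$, so that $\sigma_n \sqrt{\log(n+1)} \le M$ for every $n$. (If the hypothesis is given as $\normn{X_n}_{\psi_2} \lesssim \sigma_n$, I would first convert it to the tail form via the standard sub-Gaussian estimate $\P(X_n > x) \le c\, e^{-x^2/(c\sigma_n^2)}$.)

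The key step is the union bound with a non-uniform level. For a universal $\lambda$ to be fixed, I would bound $\P\bigl(\max_n X_n > \lambda\sqrt t\, M\bigr) \le \sum_{n=1}^N \P\bigl(X_n > \lambda\sqrt t\, M\bigr)$, and for each $n$ use $M \ge \sigma_n\sqrt{\log(n+1)}$ together with the tail hypothesis:
\begin{align*}
    \P\bigl(X_n > \lambda \sqrt t\, M\bigr)
    \le c\, \exp\inparen{-\frac{\lambda^2 t M^2}{c\,\sigma_n^2}}
    \le c\, \exp\inparen{-\frac{\lambda^2 t}{c}\log(n+1)}
    = c\,(n+1)^{-\lambda^2 t / c}.
\end{align*}
Choosing $\lambda$ with $\lambda^2/c \ge 2$ and using $t \ge 1$ forces $\alpha \equiv \lambda^2 t/c \ge 2$, so $\sum_{n \ge 1}(n+1)^{-\alpha} \le 3\cdot 2^{-\alpha} = 3\,e^{-(\lambda^2 \ln 2/c)\,t}$, and therefore $\P\bigl(\max_n X_n > \lambda\sqrt t\, M\bigr) \le 3c\,e^{-c' t}$ with $c' = \lambda^2 \ln 2/c$. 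Absorbing $\lambda$, $3c$, and $c'$ into the paper's generic constants gives the first assertion.

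For the expectation bound I would pass to $Y \equiv (\max_n X_n)^+ = \max_n X_n^+ \ge 0$ and invoke Lemma~\ref{lem:TalagrandTrick}. The same union bound (valid verbatim for $X_n^+$ at positive levels) gives $\P(Y \ge r) \le 3c\,\exp\bigl(-c' r^2/(\lambda^2 M^2)\bigr)$ for $r \ge \lambda M$, while trivially $\P(Y \ge r) \le 1 \le e^{c'}\exp\bigl(-c' r^2/(\lambda^2 M^2)\bigr)$ for $0 \le r < \lambda M$. Taking $a \equiv \max\{3c,\, e^{c'},\, 2\}$ and $b^2 \equiv \lambda^2 M^2/c'$ puts $Y$ in the form required by Lemma~\ref{lem:TalagrandTrick}, which yields $\E[\max_n X_n] \le \E[Y] \lesssim b\sqrt{\log a} \lesssim M = \max_{n \le N}\sigma_{(n)}\sqrt{\log(n+1)}$, as claimed.

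I do not expect any genuine obstacle here: the entire content is the non-uniform threshold $\lambda\sqrt t\, \sigma_{(n)}\sqrt{\log(n+1)}$, tuned precisely so that the $n$-th tail probability is at most $c(n+1)^{-\lambda^2 t/c}$ and the sum over $n$ collapses to a convergent series that is exponentially small in $t$, rather than picking up a $\log N$ (let alone $N$) factor from a crude union bound. The only points needing mild care are the bookkeeping of universal constants (ensuring $\lambda^2/c \ge 2$) and upgrading the single-level tail bound at $r = \lambda\sqrt t\, M$ to one valid for all $r \ge 0$, as required to apply Lemma~\ref{lem:TalagrandTrick}.
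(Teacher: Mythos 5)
Your proof is correct and follows essentially the same route as the paper's: reorder so the $\sigma_n$ are decreasing, union-bound at the non-uniform threshold $\sqrt{t}\,\sigma_{(n)}\sqrt{\log(n+1)}$, sum the resulting series $\sum_n(n+1)^{-\lambda^2 t/c}$, and feed the tail bound into Lemma~\ref{lem:TalagrandTrick} for the expectation estimate. In fact you are a touch more careful than the paper on the last step, since you explicitly pass to the positive part $(\max_n X_n)^+$ and extend the sub-Gaussian tail estimate to all $r \ge 0$ rather than only $r \ge \lambda M$ before invoking Lemma~\ref{lem:TalagrandTrick}, two points the paper leaves implicit; your introduction of the slack factor $\lambda$ also sidesteps the paper's two-stage argument that first handles $t \ge 2c$ and then stretches to all $t \ge 1$.
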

\begin{proof}
    The proof of the upper bound is based on the proof of Proposition 2.4.16 in \cite{talagrand2014upper}. By permutation invariance, we can assume without loss of generality that $\sigma_1\ge \sigma_2 \ge \dots \ge \sigma_N$. Then 
    \begin{align*}
        \P \inparen{
            \max_{n \le N} \frac{ X_n}{\sigma_n \sqrt{\log(n+1)}}
            \ge \sqrt{t}
        } 
        \le 
        \sum_{n=1}^{N} \P \Bigl(X_n \ge  \sigma_n \sqrt{t\log(n+1)} \Bigr)
        \lesssim
         \sum_{n=1}^{N}  \exp \inparen{ -\frac{t}{c}\log(n+1)}.
    \end{align*}
    For $t \ge 2c$, the final expression in the above display is finite, and we may write 
    \begin{align*}
        \sum_{n=1}^{N}  \exp \inparen{ -\frac{t}{c}\log(n+1)}
        &= 
        \sum_{n=2}^{N+1}  \exp \inparen{ -\frac{t}{c}\log(n)}
        \le \exp \inparen{ -\frac{t}{c}\log(2)}
        +\int_2^\infty x^{-t/c}dx \le c e^{-t/c}.
    \end{align*}
    Therefore, for any $t \ge 2c$, it holds with probability at least $1-ce^{-t/c}$ that 
    \begin{align*}
        \max_{n \le N} X_n 
        \lesssim 
        \sqrt{t}  \max_{n \le N } \sigma_{(n)} \sqrt{\log(n+1)}.
    \end{align*}
    This implies that, for any $t\ge 1$, it holds with probability at least $1-ce^{-(t \lor 2c)/c}$ that 
    \begin{align*}
        \max_{n \le N} X_n 
        \lesssim 
        (\sqrt{t} \lor \sqrt{2c}) 
        \max_{n \le N } \sigma_{(n)} \sqrt{\log(n+1)}
        \lesssim 
        \sqrt{t}\max_{n \le N } \sigma_{(n)} \sqrt{\log(n+1)}.
    \end{align*}
    Since $1-ce^{-(t \lor 2c)/c} \ge 1-ce^{-t/c}$ it holds that, for any $t \ge 1$, with probability at least $1-ce^{-t/c}$
    \begin{align*}
        \max_{n \le N} X_n 
        \lesssim 
        \sqrt{t}\max_{n \le N } \sigma_{(n)} \sqrt{\log(n+1)}.
    \end{align*}
    It follows by Lemma~\ref{lem:TalagrandTrick} that
    \begin{align*}
        \E \insquare{\max_{n \le N} \frac{ X_n}{\sigma_n \sqrt{\log(n+1)}}}
        \le c,
    \end{align*}
    which in turn implies 
    \begin{equation*}
        \E \insquare{\max_{n \le N} X_n }
        \lesssim
        \max_{n \le N} \sigma_n \sqrt{\log(n+1)}
        = 
        \max_{n \le N} \sigma_{(n)} \sqrt{\log(n+1)}.   \qedhere
    \end{equation*}
\end{proof}

\subsubsection{Covariance Estimation under Soft Sparsity}\label{sssec:proof4.2}
This subsection contains the proof of Theorem \ref{thm:SoftSparistyCovarianceBound}. We follow the approach in \cite[Theorem~4]{koltchinskii2017concentration}, but we restrict our attention to finite dimensional spaces. 
Our proof will rely on the following max-norm covariance estimation bound, which may be of independent interest.

\begin{theorem}[Covariance Estimation with Sample Covariance ---Max-Norm Bound] \label{thm:MaxNormEffectiveDim}
Let $X_1,\dots,X_N$ be $d$-dimensional i.i.d. sub-Gaussian random vectors with $\E[X_1]=\mu^X$ and $\tvar(X_1) = \Sigma^X.$ Let $\hatSigma^X = (N-1)^{-1}\sum_{n=1}^N (X_n-\mu^X)(X_n-\mu^X)^\top$. Then there exists a constant $c$ such that, for all $t \ge 1,$ it holds with probability at least $1-ce^{-t}$ that
\begin{align*}
    \normn{\hatSigma^X - \Sigma^X}_{\max} 
    \le
    c 
    \Sigma^X_{(1)}
    \inparen{
    \sqrt{\frac{r_\infty (\Sigma^X)}{N}}
    \lor 
    \sqrt{\frac{t}{N}}
    \lor 
    \frac{t}{N}
    \lor 
    \frac{t r_\infty(\Sigma^X)}{N}
    }, 
\end{align*}
where 
\begin{align*}
    r_\infty(\Sigma^X) \equiv \frac{\max_{j} \Sigma^X_{(j)}  \log(j+1) }{\Sigma^X_{(1)}}.
\end{align*}
\end{theorem}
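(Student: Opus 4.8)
The plan is to reduce the entrywise error to the supremum of a \emph{squared} empirical process over a small function class, apply the generic chaining bound of Theorem~\ref{thm:GenericConcentration}, and then control the relevant $\gamma_2$-functional by a Gaussian comparison followed by the dimension-free maximal inequality of Lemma~\ref{lem:DimFreeGaussianMax}; the ordered variances (and hence $r_\infty$) enter only at this last step. \emph{Step 1 (polarization).} Let $\mathcal{V}\subset\R^d$ be the set of vectors with at most two nonzero coordinates, each equal to $\pm1$, and set $\mcF = \{\langle v, X - \mu^X\rangle : v\in\mathcal{V}\}$, a finite class of centered sub-Gaussian functions of $X$. Writing $\hatSigma^X = \tfrac{N}{N-1}\cdot\tfrac1N\sum_n (X_n-\mu^X)(X_n-\mu^X)^\top$, the identity $4(X_i-\mu^X_i)(X_j-\mu^X_j) = (\xi^{(i)}+\xi^{(j)})^2-(\xi^{(i)}-\xi^{(j)})^2$ (with the diagonal entries corresponding directly to $v=e_i\in\mathcal{V}$) gives, after accounting for the $N/(N-1)$ factor whose contribution in $\normn{\cdot}_{\max}$ is $\lesssim\Sigma^X_{(1)} t/N$,
\[
    \normn{\hatSigma^X - \Sigma^X}_{\max}
    \lesssim
    \Sigma^X_{(1)}\frac{t}{N}
    +
    \sup_{f\in\mcF}\Bigl|\frac1N\sum_{n=1}^N f^2(X_n) - \E[f^2(X)]\Bigr|,
\]
since the supremum over $\mcF$ of the absolute deviation controls each of the squared differences $(\xi^{(i)}\pm\xi^{(j)})^2$.

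\emph{Step 2 (generic chaining).} By Theorem~\ref{thm:GenericConcentration}, for $t\ge1$ with probability at least $1-e^{-t}$,
\[
    \sup_{f\in\mcF}\Bigl|\frac1N\sum_{n=1}^N f^2(X_n) - \E[f^2(X)]\Bigr|
    \lesssim
    \sup_{f\in\mcF}\normn{f}_{\psi_2}\frac{\gamma_2(\mcF,\psi_2)}{\sqrt N}
    \lor \frac{\gamma_2^2(\mcF,\psi_2)}{N}
    \lor \sup_{f\in\mcF}\normn{f}_{\psi_2}^2\sqrt{\frac tN}
    \lor \sup_{f\in\mcF}\normn{f}_{\psi_2}^2\frac tN .
\]
Every $f\in\mcF$ is a linear functional of the sub-Gaussian vector $\xi=X-\mu^X$ along some $v$ with $\normn{v}_2\le\sqrt2$, so $\normn{f}_{\psi_2}\lesssim\sqrt{v^\top\Sigma^X v}\lesssim\sqrt{\Sigma^X_{(1)}}$ and thus $\sup_{f\in\mcF}\normn{f}_{\psi_2}^2\lesssim\Sigma^X_{(1)}$. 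It therefore suffices to show $\gamma_2(\mcF,\psi_2)\lesssim\sqrt{\Sigma^X_{(1)}\,r_\infty(\Sigma^X)}$: the four terms then become of order $\Sigma^X_{(1)}\sqrt{r_\infty/N}$, $\Sigma^X_{(1)}r_\infty/N$, $\Sigma^X_{(1)}\sqrt{t/N}$ and $\Sigma^X_{(1)}t/N$; since $t\ge1$ the second is $\le\Sigma^X_{(1)}t r_\infty/N$, and combining with Step~1 gives the claimed bound.

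\emph{Step 3 ($\gamma_2$ via Gaussian comparison).} Identify $\mcF$ with $\mathcal{V}$. For $v,w\in\mathcal{V}$, the quantitative sub-Gaussian property $\normn{\langle u,\xi\rangle}_{\psi_2}\lesssim\sqrt{u^\top\Sigma^X u}$ gives $\mathsf d_{\psi_2}(v,w)\lesssim\sqrt{(v-w)^\top\Sigma^X(v-w)}=\mathsf d_G(v,w)$, the canonical metric of the Gaussian process $v\mapsto\langle v,G\rangle$ with $G\sim\mcN(0,\Sigma^X)$. Monotonicity of $\gamma_2$ in the metric and the Majorizing Measure Theorem (Theorem~\ref{thm:TalagrandUpperLowerGP}) yield
\[
    \gamma_2(\mcF,\psi_2)\lesssim\gamma_2(\mathcal{V},\mathsf d_G)\lesssim\E\Bigl[\sup_{v\in\mathcal{V}}\langle v,G\rangle\Bigr] = 2\,\E\Bigl[\max_{i\le d}|G_i|\Bigr],
\]
using that $\sup_{v\in\mathcal{V}}\langle v,G\rangle=\max_{i,j}(|G_i|+|G_j|)=2\max_i|G_i|$. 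Since $|G_i|$ is sub-Gaussian with parameter $\asymp\sqrt{\Sigma^X_{ii}}$, Lemma~\ref{lem:DimFreeGaussianMax} gives
\[
    \E\Bigl[\max_{i\le d}|G_i|\Bigr]
    \lesssim
    \max_{i\le d}\sqrt{\Sigma^X_{(i)}}\sqrt{\log(i+1)}
    = \sqrt{\max_{i\le d}\Sigma^X_{(i)}\log(i+1)}
    = \sqrt{\Sigma^X_{(1)}\,r_\infty(\Sigma^X)},
\]
which is exactly the required bound on $\gamma_2(\mcF,\psi_2)$. The substantive point is Step~3: trading the $\psi_2$-geometry of the $\Theta(d^2)$-size class for the $L_2$-geometry of a Gaussian process and invoking the Majorizing Measure Theorem is what removes the dependence on $d$, and the identity $\sup_{v\in\mathcal{V}}\langle v,G\rangle=2\max_i|G_i|$ is precisely what lets the max-log quantity $r_\infty$ emerge when Lemma~\ref{lem:DimFreeGaussianMax} is applied; some care is also needed that the ``sub-Gaussian'' hypothesis is used in the form $\normn{\langle v,\xi\rangle}_{\psi_2}\lesssim\sqrt{v^\top\Sigma^X v}$.
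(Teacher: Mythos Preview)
Your proof is correct and follows essentially the same route as the paper: both polarize the entrywise deviations into a squared empirical process over a finite class of one- and two-sparse sign vectors, apply Theorem~\ref{thm:GenericConcentration}, pass from the $\psi_2$-metric to the Gaussian $L_2$-metric and invoke the Majorizing Measure Theorem, and finish with Lemma~\ref{lem:DimFreeGaussianMax} to obtain the $r_\infty$ dependence. The only minor difference is that the paper's proof additionally bounds a $\|\bar Z\bar Z^\top\|_{\max}$ term coming from sample-mean centering (evidently the intended estimator, though the statement as written centers at $\mu^X$), which your version does not need.
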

\begin{proof}
The proof of this result is based on the proof of the upper bound of Theorem 4 of \cite{koltchinskii2017concentration}, in conjunction with Theorem~\ref{thm:GenericConcentration}. We deal with the case $\mu^X=0$ first. To this end, let $Z_1,\dots,Z_N$ be $d$-dimensional i.i.d. sub-Gaussian random vectors with zero mean and $\tvar[Z_1]=\Sigma^X$. We denote the distribution of $Z_1$ by $\P$, and note that $\normn{\cdot}_{\psi_1}$, $\normn{\cdot}_{\psi_2},$ and $ \normn{\cdot}_{L_2}$ are defined implicitly with respect to $\P$. Let $\hatSigma^0 = N^{-1} \sum_{n=1}^N Z_nZ_n^\top$. We rewrite the expectation of interest as a squared empirical process term over an appropriate class of functions. For $j \ge 1$ we denote the $j$-th canonical vector (the vector with $1$ in the $j$-th index and zero otherwise) by $e_j$. Then, we note that 
\begin{align*}
    \normn{\hatSigma^0 - \Sigma^X}_{\max}
    &=  \sup_{i,j} \inp{ e_i, (\hatSigma^0-\Sigma^X) e_j}\\
    &=  \sup_{i,j} 
    \insquare{
    \inp{ \frac{e_i+e_j}{2}, (\hatSigma^0-\Sigma^X) \frac{e_i+e_j}{2}}
    -
    \inp{ \frac{e_i-e_j}{2}, (\hatSigma^0-\Sigma^X) \frac{e_i-e_j}{2}}
    }\\
    &\le 2  \sup_{u \in \mcU} 
    \abs{\inp{(\hatSigma^0-\Sigma^X) u, u}},
  \end{align*}
where $\mcU = \bigl\{ u \in \R^d: u = \pm \frac{1}{2}(e_i \pm e_j), ~ 1 \le  i, j  \le d\ \bigr\}$. Define the set of functions $\mcF_U = \bigl\{ \inp{\cdot, u}: u \in \mcU \bigr\}$, and note that, for any $f \in \mcF_\mcU$, $-f \in \mcF_\mcU$ and $\E [f(Z_1)] = 0$. It then follows by Theorem~\ref{thm:GenericConcentration} that for the same universal constant $c$ in the statement of the theorem, 
\begin{align*}
    2 \sup_{u \in \mcU} 
    \abs{\inp{(\hatSigma^0-\Sigma^X) u, u}}
    &= 2  \sup_{u \in \mcU} \abs{
    \frac{1}{N} \sum_{n=1}^N \inp{Z_n, u}^2 - \inp{ u, \Sigma^X u}}\\
    &= 2  \sup_{f \in \mcF_\mcU}
    \abs{
    \frac{1}{N}\sum_{n=1}^N f^2(Z_n) - \E [ f^2(Z_1)]
    }\\
    &\le
    2c \inparen{
    \sup_{f \in \mcF_\mcU} \normn{f}_{\psi_2}
    \frac{\gamma_2(\mcF_\mcU; \psi_2)}{\sqrt{N}}
    \lor 
    \frac{\gamma^2_2(\mcF_\mcU; \psi_2)}{N}
    \lor 
    \sup_{f \in \mcF_\mcU} \normn{f}^2_{\psi_2} 
    \sqrt{\frac{t}{N}}
    \lor 
    \sup_{f \in \mcF_\mcU} \normn{f}^2_{\psi_2} 
    \frac{t}{N}
    }.
\end{align*}
Using the equivalence of the $\psi_2$ and $L_2$ norms for linear functionals, we have 
\begin{align*}
     \sup_{f \in \mcF_\mcU} \normn{f}_{\psi_2}
     &\lesssim
      \sup_{f \in \mcF_\mcU} \normn{f}_{L_2}
      =\max_{u \in \mcU} 
      \sqrt{ \E \bigl[\inp{Z_1, u}^2 \bigr] }
      =\max_{u \in \mcU} 
      \sqrt{ \langle u, \Sigma^X u \rangle }
      = \frac{1}{2}\max_{i,j} \sqrt{ \bigl\langle e_i \pm e_j, \Sigma^X (e_i \pm e_j) \bigr\rangle } \\
      &= \frac{1}{2}\max_{i,j} \sqrt{ \langle e_i, \Sigma^X e_i \rangle + \langle e_j, \Sigma^X e_j\rangle 
      \pm 2 \langle e_i, \Sigma^X e_j \rangle}
       = \frac{1}{2}\max_{i,j} \sqrt{\Sigma^X_{ii}+ \Sigma^X_{jj}
      \pm 2 \Sigma^X_{ij}}
      \le \sqrt{\Sigma^X_{(1)}}.
\end{align*}

To control the generic complexity $\gamma_2(\mcF_U, \psi_2)$, let $Y \sim \mcN(0,\Sigma^X)$ be a $d$-dimensional Gaussian vector, with induced metric
\begin{align*}
    \mathsf{d}_Y(u,v) = \sqrt{\E \bigl[( \inp{Y, u}- \inp{Y, v})^2 \bigr]}
     = \normn{\inp{\cdot, u}-\inp{\cdot, v}}_{L_2}, 
    \qquad u,v \in \mcU.
\end{align*}
Using again the equivalence of the $\psi_2$ and $L_2$ norms for linear functionals, we have that 
\begin{align*}
    \gamma_2(\mcF_\mcU; \psi_2)
     \lesssim 
     \gamma_2(\mcF_\mcU; L_2)
     = \gamma_2(\mcU; \mathsf{d}_Y).
\end{align*}
It follows then by Theorem~\ref{thm:TalagrandUpperLowerGP} that
\begin{align*}
    \gamma_2(\mcU; \mathsf{d}_Y) 
    &\lesssim \E \insquare{ \, \sup_{u \in \mcU} \inp{Y, u} }\\
    &=\E  \insquare{\max_{i,j} \inp{Y, \pm \frac{1}{2} (e_i \pm e_j)}}\\
    &\le \E \insquare{\max_{j} \Bigl|\inp{Y, e_j}\Bigr|}\\
    &\lesssim \max_{j} \sqrt{\Sigma^X_{(j)} \log(j+1)},
\end{align*}
where the final inequality follows by Lemma~\ref{lem:DimFreeGaussianMax}. We have shown that with probability at least $1-e^{-t}$
\begin{align}
    \normn{\hatSigma^0 - \Sigma^X}_{\max}
    &\lesssim 
    \inparen{
    \sqrt{ \Sigma^X_{(1)} 
    \max_{j}
    \frac{ \Sigma^X_{(j)}\log(j+1)}{N}
    }
    \lor 
     \max_{j} 
    \frac{ \Sigma^X_{(j)} \log(j+1)}{N}
    \lor 
    \Sigma^X_{(1)} \sqrt{\frac{t}{N}}
    \lor 
    \Sigma^X_{(1)} \frac{t}{N}
    } \nonumber 
    \\
    &=\Sigma^X_{(1)}
     \inparen{
    \sqrt{\frac{r_\infty (\Sigma^X)}{N}}
    \lor 
    \frac{r_\infty(\Sigma^X)}{N}
    \lor 
    \sqrt{\frac{t}{N}}
    \lor 
    \frac{t}{N}
    } \label{eq:E1forMaxNormBd}.
\end{align}
In the un-centered case, taking $X_n = Z_n + \mu^X$, we have $\hatSigma^X = \hatSigma^0 - \barZ \barZ^\top$ and it follows that
    \begin{align*}
        \normn{\hatSigma^X - \Sigma^X}_{\max}
        \le 
        \normn{\hatSigma^0 - \Sigma^X}_{\max} + 
        \normn{\barZ \barZ^\top}_{\max}.
    \end{align*}
    By Lemma~\ref{lem:DimFreeGaussianMax}, with probability at least $1-ce^{-t}$
    \begin{align}
        \normn{\barZ \barZ^\top}_{\max} 
        \le \normn{\barZ}_{\max}^2 
        \le \frac{t}{N} \max_{j\le d} \Sigma^X_{(j)} \log(j+1)
        = t \Sigma^X_{(1)} \frac{ r_\infty(\Sigma^X)}{N}  \label{eq:E2forMaxNormBd}. 
    \end{align}
    Denote the set on which \eqref{eq:E1forMaxNormBd} occurs by $E_1$, and the set on which \eqref{eq:E2forMaxNormBd} occurs by $E_2$. Then the intersection $E=E_1 \cap E_2$ has probability at least $1-ce^{-t}$, and it holds on $E$ that 
    \begin{align*}
        \normn{\hatSigma^X - \Sigma^X}_{\max}
        &\lesssim 
        \Sigma^X_{(1)}
     \inparen{
    \sqrt{\frac{r_\infty (\Sigma^X)}{N}}
    \lor 
    \frac{r_\infty(\Sigma^X)}{N}
    \lor 
    \sqrt{\frac{t}{N}}
    \lor 
    \frac{t}{N}
    \lor 
    \frac{t r_\infty(\Sigma^X)}{N}
    }\\
    &=
    \Sigma^X_{(1)}
     \inparen{
    \sqrt{\frac{r_\infty (\Sigma^X)}{N}}
    \lor 
    \sqrt{\frac{t}{N}}
    \lor 
    \frac{t}{N}
    \lor 
    \frac{t r_\infty(\Sigma^X)}{N}
    }.
    \qedhere
    \end{align*} 
\end{proof}

\begin{lemma} \label{lem:pMomentEffectiveDim}
   Let $X_1,\dots,X_N$ be $d$-dimensional i.i.d. sub-Gaussian random vectors with $\E[X_1]=\mu^X$ and $\tvar[X_1] = \Sigma^X.$ Let $\hatSigma^X = (N-1)^{-1}\sum_{n=1}^N (X_n-\mu^X)(X_n-\mu^X)^\top$. Then, for any $p \ge 1,$ 
   \begin{align*}
       \insquare{ \E \normn{\hatSigma^X - \Sigma^X}^p_{\max}}^{1/p}
        \lesssim_p 
        \Sigma^X_{(1)}
            \inparen{
            \sqrt{\frac{r_{\infty}(\Sigma^X)}{N}}    
            \lor 
            \frac{r_{\infty}(\Sigma^X)}{N}
            }.
   \end{align*}
\end{lemma}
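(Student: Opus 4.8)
```latex
The plan is to derive the $L^p$ bound from the high-probability bound of Theorem~\ref{thm:MaxNormEffectiveDim} using a standard layer-cake integration argument, exactly as was done in the proof of Theorem~\ref{lem:meanwithoutlocExpectation}. First I would recall that Theorem~\ref{thm:MaxNormEffectiveDim} gives, for all $t \ge 1$, with probability at least $1-ce^{-t}$,
\begin{align*}
    \normn{\hatSigma^X - \Sigma^X}_{\max}
    \le c\, \Sigma^X_{(1)}
    \inparen{
    \sqrt{\tfrac{r_\infty (\Sigma^X)}{N}}
    \lor \sqrt{\tfrac{t}{N}}
    \lor \tfrac{t}{N}
    \lor \tfrac{t\, r_\infty(\Sigma^X)}{N}
    }.
\end{align*}
Writing $\mcW \equiv \Sigma^X_{(1)}$, $\mcR \equiv r_\infty(\Sigma^X)$, and $B \equiv \mcW\bigl(\sqrt{\mcR/N} \lor \mcR/N\bigr)$, I would split the tail integral $\E\normn{\hatSigma^X-\Sigma^X}_{\max}^p = p\int_0^\infty x^{p-1}\P(\normn{\hatSigma^X-\Sigma^X}_{\max} > x)\,dx$ at $x = cB$. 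On $[0, cB]$ the integral is trivially bounded by $(cB)^p$; on $[cB, \infty)$ I invert the tail bound, solving for $t$ in terms of $x$ for each of the two remaining branches $\sqrt{t/N}$ and $t r_\infty(\Sigma^X)/N$, which yields an exponential tail of the form $\P(\normn{\hatSigma^X-\Sigma^X}_{\max} > x) \le c\exp\bigl(-\min(Nx^2/\mcW^2,\ Nx/(\mcW\mcR))\bigr)$.

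Next I would carry out the resulting integrals by direct computation: $\int_0^\infty x^{p-1}e^{-Nx^2/\mcW^2}\,dx$ evaluates to a Gamma-function constant times $(\mcW/\sqrt{N})^p$, and $\int_0^\infty x^{p-1}e^{-Nx/(\mcW\mcR)}\,dx$ evaluates to $\Gamma(p)(\mcW\mcR/N)^p$. Taking the $p$-th root and combining with the $(cB)^p$ term gives
\begin{align*}
    \insquare{\E \normn{\hatSigma^X - \Sigma^X}^p_{\max}}^{1/p}
    \lesssim_p B + \mcW\inparen{\tfrac{1}{\sqrt{N}} \lor \tfrac{\mcR}{N}}
    \lesssim_p \mcW\inparen{\sqrt{\tfrac{\mcR}{N}} \lor \tfrac{\mcR}{N}},
\end{align*}
where the last step uses $\mcR = r_\infty(\Sigma^X) \ge 1$ together with the elementary facts that $1/\sqrt{N} \le \sqrt{\mcR/N}$ and, by $\mcR \ge 1$, $\mcR/N \le \sqrt{\mcR/N}$ whenever $\mcR \le N$ while $\mcR/N$ dominates otherwise---so in all cases $\tfrac{1}{\sqrt N}\lor\tfrac{\mcR}{N}\lesssim \sqrt{\mcR/N}\lor\mcR/N$. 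This is exactly the claimed bound. Substituting back $\mcW = \Sigma^X_{(1)}$ and $\mcR = r_\infty(\Sigma^X)$ completes the argument.

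I do not expect a genuine obstacle here; the proof is a routine transfer from a high-probability bound to a moment bound, and the paper has already executed this template in the proof of Theorem~\ref{lem:meanwithoutlocExpectation}. The one place requiring a small amount of care is the bookkeeping when inverting the four-way maximum in the tail bound into a two-way minimum in the exponent, and making sure the split point $cB$ correctly absorbs the two ``$\sqrt{r_\infty/N}$'' and ``$r_\infty/N$'' branches that do not depend on $t$; beyond that it is direct Gamma-integral computation and the observation $r_\infty(\Sigma^X) \ge 1$.
```
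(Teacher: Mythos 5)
Your proposal is correct and follows essentially the same route as the paper's proof: split the layer-cake integral at the $t$-independent threshold $B$, invert the tail bound to get an exponential tail, compute the resulting Gamma integrals, and absorb the leftover terms into $B$ using $r_\infty(\Sigma^X)\ge 1$. The only (harmless) deviation is that you drop the $t/N$ branch when forming the exponent; this is justified since $r_\infty(\Sigma^X)\ge 1$ makes $tr_\infty/N$ dominate $t/N$, whereas the paper keeps all three $t$-dependent branches in the $\min$ and discards the redundant one only at the final step.
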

\begin{proof}
    To ease notation, let $B \equiv 
    \Sigma_{(1)}^X \inparen{
    \sqrt{\frac{r_\infty(\Sigma^X)}{N}}
    \lor 
    \frac{r_\infty(\Sigma^X)}{N}}$, then using that for positive $W$,  $\E [W^p] = p \int_0^\infty w^{p-1} \P(W>w) \, dw$ gives
    \begin{align*}
        \insquare{ \E\normn{\hatSigma - \Sigma}^p_{\max}}
        &= p 
        \int_0^{\infty} x^{p-1}
        \P(\normn{\hatSigma^X - \Sigma^X}_{\max} > x) \, dx \\
        &\le 
        p \int_0^B x^{p-1}dx 
        + 
        \int_B^\infty 
        x^{p-1}
        \P(\normn{\hatSigma^X - \Sigma^X}_{\max} > x) \, dx\\
        &\lesssim 
        B^p 
        + 
        p \int_0^{\infty} 
        x^{p-1}
        \exp 
        \inparen{-\min 
        \inparen{
        \frac{Nx^2}{ (\Sigma^X_{(1)})^2}, 
        \frac{Nx}{\Sigma^X_{(1)}},
        \frac{Nx}{r_{\infty}(\Sigma^X) \Sigma^X_{(1)} }
        }
        }
         \, dx\\
        &=
        B^p + p \max \inparen{
        \frac{\Gamma(p/2)}{2} 
        \inparen{\frac{ (\Sigma^X_{(1)})^2 }{N}}^{p/2}
        ,
        \Gamma(p)
        \inparen{\frac{ \Sigma^X_{(1)} }{N}}^{p}
        ,
        \Gamma(p)
        \inparen{\frac{ r_{\infty}(\Sigma^X) \Sigma^X_{(1)} }{N}}^{p}
        },
    \end{align*}
    where the last line follows by direct integration. We therefore have 
    \begin{align*}
        \insquare{ \E \normn{\hatSigma^X - \Sigma^X}^p_{\max}}^{1/p}
        &\lesssim 
        B + c(p) \max \inparen{
        \frac{ \Sigma^X_{(1)} }{\sqrt{N}},
        \frac{ \Sigma^X_{(1)} }{N},
        \frac{ r_{\infty}(\Sigma^X) \Sigma^X_{(1)} }{N}
        } \le 
        c(p) \Sigma^X_{(1)}
            \inparen{
            \sqrt{\frac{r_{\infty}(\Sigma^X)}{N}}    
            \lor 
            \frac{r_{\infty}(\Sigma^X)}{N}
            },
    \end{align*}
    where the final inequality holds due to the fact that $r_\infty(\Sigma^X) \gtrsim 1$.
\end{proof}

\begin{theorem}[Covariance Estimation with Localized Sample Covariance ---Operator-Norm Bound]\label{thm:SoftSparsityCovarianceBoundGeneral}
Let $X_1,\dots, X_N$ be $d$-dimensional i.i.d. sub-Gaussian random vectors with $\E [X_1] = \mu^X$ and $\tvar[X_1] = \Sigma^X$. Further, assume that $\Sigma^X \in \msU_d(q, R_q)$ for some $q \in [0,1)$ and $R_q > 0$. Let $\hatSigma^X = (N-1)^{-1}\sum_{n=1}^N(X_n - \barX) (X_n - \barX)^\top$ and, for any $t \ge 1$, set 
\begin{align*}
    \rho_N \asymp 
    \Sigma^X_{(1)}
    \inparen{
    \sqrt{\frac{r_\infty (\Sigma^X)}{N}}
    \lor 
    \sqrt{\frac{t}{N}}
    \lor 
    \frac{t}{N}
    \lor 
    \frac{t r_\infty(\Sigma^X)}{N}
    }
\end{align*}
and let $\hatSigma_{\rho_N}^X$ be the localized sample covariance estimator. There exists a constant $c>0$ such that, with probability at least $1-ce^{-t}$, it holds that
\begin{align*}
    \normn{\hatSigma_{\rho_N}^X - \Sigma^X } \lesssim R_q \rho_N^{1-q}.
\end{align*}
\end{theorem}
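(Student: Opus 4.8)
The plan is to run the now-standard thresholding argument for sparse covariance estimation (as in \cite[Theorem~1]{bickel2008covariance} or \cite[Theorem~6.27]{wainwright2019high}), with the only genuinely new input being the max-norm control of the sample covariance supplied by Theorem~\ref{thm:MaxNormEffectiveDim}. First I would fix the universal constant hidden in the relation $\rho_N \asymp \Sigma^X_{(1)}(\cdots)$ to be at least twice the constant appearing in Theorem~\ref{thm:MaxNormEffectiveDim}, so that on the event $E$ of probability at least $1-ce^{-t}$ provided by that theorem one has the clean deterministic bound $\normn{\hatSigma^X - \Sigma^X}_{\max} \le \rho_N/2$. Everything that follows takes place on $E$ and is purely deterministic.

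On $E$, since $\Sigma^X$ is symmetric and $\mcL_{\rho_N}$ acts entrywise, the error matrix $\hatSigma^X_{\rho_N} - \Sigma^X$ is symmetric, so its operator norm is bounded by the maximal absolute row sum: $\normn{\hatSigma^X_{\rho_N} - \Sigma^X} \le \max_{i \le d} \sum_{j=1}^d \bigl|(\hatSigma^X_{\rho_N})_{ij} - \Sigma^X_{ij}\bigr|$. Fix a row $i$ and split the sum according to whether the entry is thresholded or kept. For the indices $j$ with $|\hatSigma^X_{ij}| < \rho_N$ (thresholded to zero), the triangle inequality together with $E$ gives $|\Sigma^X_{ij}| \le |\hatSigma^X_{ij}| + \rho_N/2 \le \tfrac32 \rho_N$, hence $|\Sigma^X_{ij}| \le |\Sigma^X_{ij}|^q (\tfrac32\rho_N)^{1-q}$, and summing over such $j$ and using the $\msU_d(q,R_q)$ constraint bounds this piece by $(\tfrac32)^{1-q} R_q \rho_N^{1-q}$. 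For the indices $j$ with $|\hatSigma^X_{ij}| \ge \rho_N$ (kept), $E$ forces $|\Sigma^X_{ij}| \ge \rho_N/2$, so the number of such $j$ is at most $\sum_{j=1}^d (2|\Sigma^X_{ij}|/\rho_N)^q \le 2^q R_q \rho_N^{-q}$, while each of these terms satisfies $|\hatSigma^X_{ij} - \Sigma^X_{ij}| \le \rho_N/2$; this piece is therefore at most $2^{q-1} R_q \rho_N^{1-q}$. Adding the two pieces and maximizing over $i$ gives $\normn{\hatSigma^X_{\rho_N} - \Sigma^X} \lesssim R_q \rho_N^{1-q}$ on $E$, which is the assertion.

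The main — and essentially the only — obstacle is that the entire argument rests on having the sharp, dimension-free control of $\normn{\hatSigma^X - \Sigma^X}_{\max}$ at the rate $\rho_N$ governed by the max-log effective dimension $r_\infty(\Sigma^X)$; this is precisely the content of Theorem~\ref{thm:MaxNormEffectiveDim}, whose proof (through the generic-chaining concentration bound of Theorem~\ref{thm:GenericConcentration} and the dimension-free sub-Gaussian maxima bound of Lemma~\ref{lem:DimFreeGaussianMax}) is where the real work lies. Given that result, the present theorem is a routine deterministic reduction. An expectation version would follow the same splitting with Lemma~\ref{lem:pMomentEffectiveDim} substituted for Theorem~\ref{thm:MaxNormEffectiveDim} and the event $E$ replaced by a moment bound.
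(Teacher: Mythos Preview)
Your proposal is correct and follows essentially the same approach as the paper: invoke Theorem~\ref{thm:MaxNormEffectiveDim} to control $\normn{\hatSigma^X - \Sigma^X}_{\max}$ by (a constant times) $\rho_N$ on a high-probability event, then run the standard thresholding row-sum argument from \cite{bickel2008covariance,wainwright2019high}. The only cosmetic difference is that you split the $i$-th row according to whether $|\hatSigma^X_{ij}|$ is above or below the threshold, whereas the paper splits according to whether $|\Sigma^X_{ij}| \ge \rho_N/2$; both splits are textbook variants leading to the same bound, and your version is arguably slightly cleaner since it aligns directly with the thresholding operation.
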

\begin{proof}
The localized sample covariance matrix has elements 
\begin{align*}
    [\hatSigma_{\rho_N}^X]_{ij}
    = \hatSigma^X_{ij} \indicator_{|\hatSigma^X_{ij}| \ge \rho_N}, \qquad 1 \le i,j \le d.
\end{align*}
By Theorem~\ref{thm:MaxNormEffectiveDim}, it holds with probability at least $1-ce^{-t}$ that 
\begin{align*}
    \normn{\hatSigma^X - \Sigma^X}_{\max} 
    \lesssim  
    \rho_N.
\end{align*}
The remainder of the analysis is carried out conditional on this event, following the approach taken in \cite[Theorem 6.27]{wainwright2019high}. Define the set of indices of the $i$-th row of $\Sigma^X$ that exceed $\rho_N/2$ by
\begin{align*}
    \mcI_i(\rho_N/2) \equiv 
    \inparen{
        j \in \inparen{1,\dots,d} :
        \abs{\Sigma^X_{ij}} \ge \rho_N/2
    }, \qquad i=1,\dots,d.
\end{align*}
We then have 
\begin{align*}
    \normn{\Sigma^X - \hatSigma_{\rho_N}^X}
    &\le \normn{\Sigma^X - \hatSigma_{\rho_N}^X}_\infty\\
    &= \max_{i = 1,\dots, d} 
    \sum_{j=1}^d \abs{\Sigma^X_{ij} - \hatSigma^X_{ij} \indicator_{|\hatSigma^X_{ij}| \ge \rho_N} }\\
    &= \max_{i = 1,\dots, d} \inparen{
    \sum_{j \in \mcI_i(\rho_N/2)} 
    \abs{\Sigma^X_{ij} - \hatSigma^X_{ij} \indicator_{|\hatSigma^X_{ij}| \ge \rho_N}} 
    +
    \sum_{j \notin \mcI_i(\rho_N/2)} 
    \abs{\Sigma^X_{ij} - \hatSigma^X_{ij} \indicator_{|\hatSigma^X_{ij}| \ge \rho_N} }
    },
\end{align*}
where $\hatSigma^X_{ij}$ is element $(i,j)$ of $\hatSigma^X$. For $j \in \mcI_i(\rho_N/2)$, it holds that $|\Sigma^X_{ij}| \ge \rho_N/2$ so that 
\begin{align*}
    \sum_{j \in \mcI_i(\rho_N/2)} 
    \abs{\Sigma^X_{ij} - \hatSigma^X_{ij} \indicator_{|\hatSigma^X_{ij}| \ge \rho_N}} 
    &\le 
    \sum_{j \in \mcI_i(\rho_N/2)} 
    \abs{\Sigma^X_{ij} - \hatSigma^X_{ij}}
    +
    \abs{\hatSigma^X_{ij} - \hatSigma^X_{ij} \indicator_{|\hatSigma^X_{ij}| \ge \rho_N}} 
    \\
    &\le 
    \sum_{j \in \mcI_i(\rho_N/2)} 
    \normn{\Sigma^X_{ij} - \hatSigma^X_{ij}}_{\max}
    +
    \abs{\hatSigma^X_{ij} - \hatSigma^X_{ij} \indicator_{|\hatSigma^X_{ij}| \ge \rho_N}} 
    \\
    &\le \sum_{j \in \mcI_i(\rho_N/2)} \inparen{\frac{\rho_N}{2} + \rho_N}\\
    &= |\mcI_i(\rho_N/2) | \frac{3\rho_N}{2},
\end{align*}
where we have used the fact that 
\begin{align*}
    \abs{ \hatSigma^X_{ij} - \hatSigma^X_{ij}\indicator_{|\hatSigma^X_{ij}| \ge \rho_N}}
    = 0 \times \indicator_{|\hatSigma^X_{ij}| \ge \rho_N}
    + \hatSigma^X_{ij} \times \indicator_{|\hatSigma^X_{ij}| \le \rho_N}
    \le \rho_N.
\end{align*}
Further, since 
\begin{align*}
    R_q \ge \sum_{j=1}^d |\Sigma^X_{ij}|^q \ge   |\mcI_i(\rho_N/2) | \inparen{\frac{\rho_N}{2}}^q,
\end{align*}
it follows that $|\mcI_i(\rho_N/2) | \le 2^{q} \rho_N^{-q} R_q$, and so 
\begin{align*}
    \sum_{j \in \mcI_i(\rho_N/2)} 
    \abs{\Sigma^X_{ij} - \hatSigma^X_{ij} \indicator_{|\hatSigma^X_{ij}| \ge \rho_N}} 
    \le |\mcI_i(\rho_N/2) | \frac{3\rho_N}{2}
    \le  \frac{3}{2} 2^{-q} \rho_N^{1-q} R_q.
\end{align*}

For $j \notin \mcI_i(\rho_N)$, then $|\Sigma^X_{ij}| \le \rho_N/2$ and so 
\begin{align*}
    |\hatSigma^X_{ij}| \le 
    |\hatSigma^X_{ij} - \Sigma^X_{ij}| + |\Sigma^X_{ij}|
    \le 
    \|\hatSigma^X - \Sigma^X \|_{\max} + |\Sigma^X_{ij}|
    \le \frac{\rho_N}{2} + \frac{\rho_N}{2} 
    = \rho_N.
\end{align*}
This implies that $\hatSigma^X_{ij}\indicator_{|\hatSigma^X_{ij}| \ge \rho_N} = 0$, and therefore for $ q \in [0,1)$, since $|\Sigma^X_{ij}|/(\rho_N/2) \le 1$, it holds that 
\begin{align*}
    \sum_{j \notin \mcI_i(\rho_N/2)} 
    \abs{\Sigma^X_{ij} - \hatSigma^X_{ij} \indicator_{|\hatSigma^X_{ij}| \ge \rho_N}}
    &\le \sum_{j \notin \mcI_i(\rho_N/2) } |\Sigma^X_{ij}| 
    = \frac{\rho_N}{2}\sum_{j \notin \mcI_i(\rho_N/2) } \frac{|\Sigma^X_{ij}|}{\frac{\rho_N}{2}}\\
    &\le \frac{\rho_N}{2}\sum_{j \notin \mcI_i(\rho_N/2) } \inparen {\frac{|\Sigma^X_{ij}|}{\rho_N/2} }^q 
    \le \rho_N^{1-q} R_q.
\end{align*}
Combining these two results gives 
\begin{equation*}
    \normn{\Sigma^X - \hatSigma_{\rho_N}^X}
    \le 4 \rho_N^{1-q} R_q.   \qedhere
\end{equation*}
\end{proof}

\begin{proof}[Proof of Theorem \ref{thm:SoftSparistyCovarianceBound}]
The result follows immediately by Theorem~\ref{thm:SoftSparsityCovarianceBoundGeneral}.
\end{proof}

\subsubsection{Cross-Covariance Estimation under Soft Sparsity}\label{sssec:proof4.3}
This subsection contains the proof of Theorem \ref{thm:SoftSparistyCrossCovarianceBound}. The presentation is parallel to that in Subsection \ref{sssec:proof4.2}.  We will use a max-norm cross-covariance estimation bound, analogous to Theorem 
\ref{thm:MaxNormEffectiveDim}. The proof relies on a high probability bound for product function classes that was shown in \cite[Theorem 1.13]{mendelson2016upper}. We present here a simplified version of that more general statement that suffices for our purposes. 

\begin{theorem} \label{thm:MendelsonProducts}
    Let $(\mcX, \mcA, \P)$ be a probability space and consider the random sample $X, X_1,\dots, X_N \iid \P$. Let $\mcF, \mcG$ be two classes of measurable functions on $(\mcX, \mcA)$ such that $0 \in \mcF$ and $0 \in \mcG$. There exist positive universal constants $c_1,c_2, c_3$ such that, for all $t \ge 1$, it holds with probability at least $1-c_1 e^{-c_2 t}$ that
    \begin{align*}
        &\sup_{f \in \mcF, g \in \mcG}\abs{ \frac{1}{N} \sum_{n=1}^N f(X_n)g(X_n) - \E \bigl[ f(X)g(X) \bigr]}
        \\
        & \hspace{1cm}
        \le
        c_3 \insquare{
        \inparen{\frac{t}{N} \lor \sqrt{\frac{t}{N}}} 
        \inparen{
        \sup_{f \in \mcF} \normn{f}_{\psi_2}  
        \gamma_2(\mcG, \psi_2)
        \lor 
        \sup_{g \in \mcG} \normn{g}_{\psi_2}  
        \gamma_2(\mcF, \psi_2)}
        \lor 
        \frac{\gamma_2(\mcF, \psi_2)\gamma_2(\mcG, \psi_2)}{N}
        }.
    \end{align*}
\end{theorem}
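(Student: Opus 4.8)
The plan is to obtain Theorem~\ref{thm:MendelsonProducts} as a specialization of the general product empirical process bound \cite[Theorem 1.13]{mendelson2016upper}. That reference controls $\sup_{f\in\mcF,\,g\in\mcG}\bigl|N^{-1}\sum_{n=1}^N f(X_n)g(X_n)-\E[f(X)g(X)]\bigr|$, at any deviation level $u\ge 1$ and with probability at least $1-2e^{-cu}$, by a sum of terms built from the $\gamma_1$- and $\gamma_2$-functionals of $\mcF$ and $\mcG$ taken with respect to the $\psi_1$- and $\psi_2$-metrics, together with the $\psi_2$-radii $\sup_{f\in\mcF}\normn{f}_{\psi_2}$ and $\sup_{g\in\mcG}\normn{g}_{\psi_2}$ (which coincide with the corresponding diameters up to a factor of two, since $0\in\mcF$ and $0\in\mcG$), with $u$ entering polynomially. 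The first step is to invoke that theorem verbatim and record the explicit form of its right-hand side and probability bound.

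The second step is to simplify the functionals so that only $\gamma_2(\mcF,\psi_2)$, $\gamma_2(\mcG,\psi_2)$, $\sup_f\normn{f}_{\psi_2}$, and $\sup_g\normn{g}_{\psi_2}$ survive. The key elementary facts are the sub-multiplicativity $\normn{fg}_{\psi_1}\le\normn{f}_{\psi_2}\normn{g}_{\psi_2}$ of the Orlicz norms, together with the induced sub-multiplicative estimate for chaining functionals of a product set, $\gamma_2(\mcF\cdot\mcG,\psi_1)\lesssim \gamma_2(\mcF,\psi_2)\sup_g\normn{g}_{\psi_2}+\gamma_2(\mcG,\psi_2)\sup_f\normn{f}_{\psi_2}+\gamma_2(\mcF,\psi_2)\gamma_2(\mcG,\psi_2)$. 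Using these, every $\psi_1$-type term produced by \cite[Theorem 1.13]{mendelson2016upper} is dominated either by the deterministic term $\gamma_2(\mcF,\psi_2)\gamma_2(\mcG,\psi_2)/N$ or by a term pairing a $\psi_2$-radius with a $\gamma_2$-functional; and since we work at level $u\ge1$, the various powers of $u/\sqrt N$ multiplying the latter collapse into the single factor $u/N\lor\sqrt{u/N}$. The third step is the change of variable $u\asymp t$, which converts the probability $1-2e^{-cu}$ into the stated $1-c_1e^{-c_2t}$ and leaves exactly $(t/N\lor\sqrt{t/N})\bigl(\sup_f\normn{f}_{\psi_2}\,\gamma_2(\mcG,\psi_2)\lor\sup_g\normn{g}_{\psi_2}\,\gamma_2(\mcF,\psi_2)\bigr)\lor \gamma_2(\mcF,\psi_2)\gamma_2(\mcG,\psi_2)/N$; absorbing the remaining numerical constants into a single $c_3$ completes the argument.

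The main obstacle is the bookkeeping in the second step rather than any new probabilistic input: Mendelson's general statement carries several terms — genuine $\psi_1$-chaining functionals of the product class, mixed contributions pairing a diameter with a $\gamma_2$-functional, and deviation terms with several distinct powers of $u$ — and one must verify carefully that, after applying the sub-multiplicative estimate above and using $t\ge1$, each of them is dominated by one of the two terms retained in the statement. Establishing the sub-multiplicativity estimate for $\gamma_2$ over product sets (by building an admissible sequence of partitions of $\mcF\cdot\mcG$ from admissible sequences for $\mcF$ and $\mcG$ and controlling the $\psi_1$-diameters of the product cells via the triangle inequality) is where most of the care is needed; once it is in hand, the rest of the reduction is routine.
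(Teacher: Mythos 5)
Your proposal misstates the form of the tail bound in \cite[Theorem 1.13]{mendelson2016upper}, and as a consequence it skips the actual crux of the argument. The probability of the favorable event there is \emph{not} $1-2e^{-cu}$ with $u$ a free, class-independent parameter; it is
$1-2\exp\bigl(-c\,u^2\bigl(\gamma_2^2(\mcF,\psi_2)/\mathsf{d}_{\psi_2}^2(\mcF)\land\gamma_2^2(\mcG,\psi_2)/\mathsf{d}_{\psi_2}^2(\mcG)\bigr)\bigr)$,
so the exponent is quadratic in $u$ and carries a class-dependent prefactor. Your ``change of variable $u\asymp t$'' therefore cannot produce a probability of the universal form $1-c_1e^{-c_2t}$. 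The substitution that does work, and that the paper actually performs, is
$t=u^2\bigl(\gamma_2^2(\mcF)/\mathsf{d}_{\psi_2}^2(\mcF)\land\gamma_2^2(\mcG)/\mathsf{d}_{\psi_2}^2(\mcG)\bigr)$,
i.e. $u=\sqrt t\,\bigl(\mathsf{d}_{\psi_2}(\mcF)/\gamma_2(\mcF)\lor\mathsf{d}_{\psi_2}(\mcG)/\gamma_2(\mcG)\bigr)$. Plugging this $u$ back into Mendelson's deviation bound is where the work lies: it introduces extra factors of $\mathsf{d}_{\psi_2}/\gamma_2$ on each term, and one must then use $\mathsf{d}_{\psi_2}(\mcF)\le\gamma_2(\mcF,\psi_2)$ and $\mathsf{d}_{\psi_2}(\mcG)\le\gamma_2(\mcG,\psi_2)$ — valid because $0\in\mcF$, $0\in\mcG$, so the $\psi_2$-radius is bounded by the diameter, which is the $n=0$ summand of any admissible sequence — to see that these factors are harmless and collapse the expression to the stated two-term bound. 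One must also handle the constraint correctly: $u\ge1$ becomes $t\ge\gamma_2^2/\mathsf{d}_{\psi_2}^2\land\cdots$, not $t\ge1$, and an additional step (replacing $t$ by $t\lor(\gamma_2^2/\mathsf{d}_{\psi_2}^2\land\cdots)$ and using monotonicity of the resulting probability bound) is needed to obtain a statement valid for all $t\ge1$. None of this appears in your outline, and it is precisely what makes the reduction nontrivial.

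Separately, the detour through $\gamma_1$-functionals and a sub-multiplicative bound for $\gamma_2(\mcF\cdot\mcG,\psi_1)$ is unnecessary here. Mendelson's tail bound \cite[Theorem 1.13]{mendelson2016upper}, in the form the paper quotes, is already expressed directly in terms of $\gamma_2(\mcF,\psi_2)$, $\gamma_2(\mcG,\psi_2)$ and the individual $\psi_2$-radii; no chaining functional of the product class and no $\psi_1$-metric appears. The product-class/$\psi_1$ machinery you sketch would itself be a genuine lemma to prove and seems to conflate this statement with the expectation bound \cite[Theorem 1.12]{mendelson2016upper}, which the paper uses elsewhere but not here. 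So while the high-level plan — quote Mendelson and simplify — is the right one, the specific reduction you describe would not go through as written.
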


\begin{proof}
    For notational brevity, throughout this proof we write $\gamma_2(\mcF)$ instead of $\gamma_2(\mcF, \psi_2)$ and $\mathsf{d}_{\psi_2}(\mcF)$ instead of $\sup_{f \in \mcF} \normn{f}$ and similarly for the class $\mcG$. The result follows by an application of \cite[Theorem 1.13]{mendelson2016upper} and the ensuing remark, which deals with the case $\mcF= \mcG$, but is easily extended to the general case considered here. Together they imply that, for any $u \ge 1$, it holds with probability at least 
    $1-2\exp \inparen{-c
        u^2 \inparen{
        \frac{\gamma_2^2(\mcF)}{ \mathsf{d}^2_{\psi_2}(\mcF)}
        \land 
        \frac{\gamma_2^2(\mcG)}{ \mathsf{d}^2_{\psi_2}(\mcG)}}
        } $
     that 
    \begin{align} \label{eq:RHSProductBound}
        \sup_{f \in \mcF, g \in \mcG}\abs{ \frac{1}{N} \sum_{n=1}^N f(X_n)g(X_n) - \E \bigl[ f(X)g(X) \bigr]}
        \lesssim 
        \frac{u^2}{N} \gamma_2(\mcF) \gamma_2(\mcG)
        + 
        \frac{u}{\sqrt{N}} 
        \inparen{
        \gamma_2(\mcF) \mathsf{d}_{\psi_2}(\mcG) + 
        \gamma_2(\mcG) \mathsf{d}_{\psi_2}(\mcF)
        }.
    \end{align}
    We seek to rewrite \eqref{eq:RHSProductBound} so that all problem specific terms appear only in the upper bound. To this end, let     
    \begin{align*}
        t \equiv u^2 \inparen{
        \frac{\gamma_2^2(\mcF)}{ \mathsf{d}^2_{\psi_2}(\mcF)}
        \land 
        \frac{\gamma_2^2(\mcG)}{ \mathsf{d}^2_{\psi_2}(\mcG)}} 
        \implies 
        u = 
        \sqrt{t} \inparen{
        \frac{ \mathsf{d}_{\psi_2}(\mcF) }{\gamma_2(\mcF) }
        \lor
        \frac{ \mathsf{d}_{\psi_2}(\mcG)}{\gamma_2(\mcG)} 
        }
    \end{align*}
    and note that since $u \ge 1$, it must hold that $t \ge \inparen{\frac{\gamma_2^2(\mcF)}{ \mathsf{d}^2_{\psi_2}(\mcF)} \land \frac{\gamma_2^2(\mcG)}{ \mathsf{d}^2_{\psi_2}(\mcG)}}$. Therefore, for any $t \ge \inparen{\frac{\gamma_2^2(\mcF)}{ \mathsf{d}^2_{\psi_2}(\mcF)} \land \frac{\gamma_2^2(\mcG)}{ \mathsf{d}^2_{\psi_2}(\mcG)}}$, we have that with probability at least $1-2e^{-ct}$, the right-hand side of \eqref{eq:RHSProductBound} becomes 
    \begin{align*}
        \frac{t}{N} 
         \inparen{
        \frac{ \mathsf{d}^2_{\psi_2}(\mcF) }{\gamma^2_2(\mcF) }
        \lor
        \frac{ \mathsf{d}^2_{\psi_2}(\mcG)}{\gamma^2_2(\mcG)} 
        }
        \gamma_2(\mcF) \gamma_2(\mcG) 
        +
        \frac{\sqrt{t}}{\sqrt{N}} 
        \inparen{
        \frac{ \mathsf{d}_{\psi_2}(\mcF) }{\gamma_2(\mcF) }
        \lor
        \frac{ \mathsf{d}_{\psi_2}(\mcG)}{\gamma_2(\mcG)} 
        }
        \inparen{
        \gamma_2(\mcF) \mathsf{d}_{\psi_2}(\mcG) + 
        \gamma_2(\mcG) \mathsf{d}_{\psi_2}(\mcF)
        }.
    \end{align*}
    The above implies that, for any $t \ge 1$, it holds with probability at least 
    $$1-2\exp \inparen{-c \inparen{t \lor \inparen{\frac{\gamma_2^2(\mcF)}{ \mathsf{d}^2_{\psi_2}(\mcF)} \land \frac{\gamma_2^2(\mcG)}{ \mathsf{d}^2_{\psi_2}(\mcG)}}} } \ge 1-2e^{-ct},$$
    that
     \begin{align}
        &\frac{1}{N} 
        \inparen{t \lor \inparen{\frac{\gamma_2^2(\mcF)}{ \mathsf{d}^2_{\psi_2}(\mcF)} \land \frac{\gamma_2^2(\mcG)}{ \mathsf{d}^2_{\psi_2}(\mcG)}}}
         \inparen{
        \frac{ \mathsf{d}^2_{\psi_2}(\mcF) }{\gamma^2_2(\mcF) }
        \lor
        \frac{ \mathsf{d}^2_{\psi_2}(\mcG)}{\gamma^2_2(\mcG)} 
        }
        \gamma_2(\mcF) \gamma_2(\mcG) \label{eq:b1Prod}\\
        +& 
        \frac{1}{\sqrt{N}}
        \inparen{
        \sqrt{t} \lor 
        \inparen{\frac{\gamma_2(\mcF)}{ \mathsf{d}_{\psi_2}(\mcF)} \land \frac{\gamma_2(\mcG)}{ \mathsf{d}_{\psi_2}(\mcG)}}}
        \inparen{
        \frac{ \mathsf{d}_{\psi_2}(\mcF) }{\gamma_2(\mcF) }
        \lor
        \frac{ \mathsf{d}_{\psi_2}(\mcG)}{\gamma_2(\mcG)} 
        }
        \inparen{
        \gamma_2(\mcF) \mathsf{d}_{\psi_2}(\mcG) + 
        \gamma_2(\mcG) \mathsf{d}_{\psi_2}(\mcF)
        }\label{eq:b2Prod}.
    \end{align}
    Straightforward calculations then show that the first of the two terms, \eqref{eq:b1Prod}, is bounded above by
    \begin{align*}
        \frac{t}{N} \frac{ \mathsf{d}^2_{\psi_2}(\mcF)  \gamma_2(\mcG)}{ \gamma_2(\mcF)}
        \lor 
        \frac{t}{N} \frac{ \mathsf{d}^2_{\psi_2}(\mcG)  \gamma_2(\mcF)}{ \gamma_2(\mcG)}
        \lor 
        \frac{\gamma_2(\mcF)\gamma_2(\mcG)}{N},
    \end{align*}
    and \eqref{eq:b2Prod} is similarly bounded above by 
    \begin{align*}
        \sqrt{\frac{t}{N}} \frac{ \mathsf{d}^2_{\psi_2}(\mcF)  \gamma_2(\mcG)}{ \gamma_2(\mcF)}
        \lor 
        \sqrt{\frac{t}{N}} \frac{ \mathsf{d}^2_{\psi_2}(\mcG)  \gamma_2(\mcF)}{\gamma_2(\mcG)}
        \lor 
        \frac{\mathsf{d}_{\psi_2}(\mcG) \gamma_2(\mcF)}{\sqrt{N}}
        \lor 
        \frac{\mathsf{d}_{\psi_2}(\mcF) \gamma_2(\mcG)}{\sqrt{N}}.
    \end{align*} 
    Note then that since $0 \in \mcF$, 
    \begin{align*}
        \mathsf{d}_{\psi_2}(\mcF) 
        = \sup_{f \in \mcF} \normn{f}_{\psi_2}
        \le  \sup_{f_1, f_2 \in \mcF} \normn{f_1 - f_2}_{\psi_2}
        = \tdiam_{\psi_2}(\mcF) \le \gamma_2(\mcF),
    \end{align*}
    where the final equality holds since $\gamma_2(\mcF) = \inf \sup_{ f \in \mcF} \sum_{n=0}^{\infty} 2^{n/2} \tdiam_{\psi_2}(\Delta_n(f))$, and for $n=0$, $\Delta_0 = \mcF$. Similarly, $\mathsf{d}_{\psi_2}(\mcG) \le \gamma_2(\mcG)$, and so $\frac{ \mathsf{d}^2_{\psi_2}(\mcF)  \gamma_2(\mcG)}{ \gamma_2(\mcF)} \le d_{\psi_2}(\mcF)\gamma_2(\mcG)$ and $\frac{ \mathsf{d}^2_{\psi_2}(\mcG)  \gamma_2(\mcF)}{ \gamma_2(\mcG)} \le d_{\psi_2}(\mcG) \gamma_2(\mcF)$ which along with the fact that $t\ge 1$ completes the proof.
\end{proof}

\begin{theorem}[Cross-Covariance Estimation ---Max-Norm Bound] \label{thm:CrossCovarianceEffectiveDim}
    Let $X_1,\dots, X_N$ be $d$-dimensional i.i.d. sub-Gaussian random vectors with $\E [X_1] = \mu^X$ and $\tvar [X_1] = \Sigma^X$. Let $Y_1,\dots, Y_N$ be $k$-dimensional i.i.d. sub-Gaussian random vectors with $\E [Y_1] = \mu^Y$ and $\tvar[Y_1] = \Sigma^Y$. Define $\Sigma^{XY} = \E \bigl[(X-\mu^X)(Y-\mu^Y)^\top \bigr]$ and consider the cross-covariance estimator 
    \begin{align*}
        \hatSigma^{XY} = \frac{1}{N-1} \sum_{n=1}^N (X_n - \barX)(Y_n-\barY)^\top.
    \end{align*}
    Then there exist positive universal constants $c_1, c_2$ such that, for all $t \ge 1,$ it holds with probability at least $1-c_1e^{-c_2t}$ that
    \begin{align*}
        \normn{\hatSigma^{XY} - \Sigma^{XY}}_{\max}
        &\lesssim 
        (\Sigma_{(1)}^X \lor \Sigma_{(1)}^Y)
        \inparen{ 
        \inparen{\frac{t}{N} \lor \sqrt{\frac{t}{N}}}
        \inparen{
        \sqrt{r_\infty(\Sigma^X)}
        \lor 
        \sqrt{r_\infty(\Sigma^Y)}
        }
        \lor 
        \sqrt{    \frac{ r_{\infty}(\Sigma^X)}{N}   } 
        \sqrt{    \frac{ r_{\infty}(\Sigma^Y)}{N}   }
        }.
        \end{align*}
        
\end{theorem}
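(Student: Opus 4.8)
The plan is to mirror the proof of Theorem~\ref{thm:MaxNormEffectiveDim}, replacing the squared empirical process bound of Theorem~\ref{thm:GenericConcentration} with the product empirical process bound of Theorem~\ref{thm:MendelsonProducts}. First I would reduce to the centered case: writing $\bar{Z}^X \equiv \frac{1}{N}\sum_{n=1}^N(X_n-\mu^X)$, $\bar{Z}^Y \equiv \frac{1}{N}\sum_{n=1}^N(Y_n-\mu^Y)$, and $\hatSigma^{XY}_0 \equiv \frac{1}{N}\sum_{n=1}^N(X_n-\mu^X)(Y_n-\mu^Y)^\top$, the usual demeaning identity gives $\hatSigma^{XY} = \frac{N}{N-1}\bigl(\hatSigma^{XY}_0 - \bar{Z}^X(\bar{Z}^Y)^\top\bigr)$, so that
\begin{align*}
    \normn{\hatSigma^{XY} - \Sigma^{XY}}_{\max}
    \lesssim \normn{\hatSigma^{XY}_0 - \Sigma^{XY}}_{\max} + \normn{\bar{Z}^X (\bar{Z}^Y)^\top}_{\max} + \tfrac{1}{N}\normn{\Sigma^{XY}}_{\max},
\end{align*}
and the last term is negligible since $\normn{\Sigma^{XY}}_{\max} \le \sqrt{\Sigma^X_{(1)}\Sigma^Y_{(1)}}$ by Cauchy--Schwarz.

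For the main term I would form the i.i.d.\ sub-Gaussian concatenations $W_n = [X_n^\top, Y_n^\top]^\top \in \R^{d+k}$ and observe, with $e_i$ the canonical basis of $\R^d$ and $e_j'$ that of $\R^k$, that
\begin{align*}
    \normn{\hatSigma^{XY}_0 - \Sigma^{XY}}_{\max}
    &= \max_{i \le d,\, j \le k} \abs{\tfrac{1}{N}\sum_{n=1}^N \langle X_n, e_i\rangle \langle Y_n, e_j'\rangle - \E\bigl[\langle X, e_i\rangle\langle Y, e_j'\rangle\bigr]}\\
    &= \sup_{f \in \mcF,\, g \in \mcG} \abs{\tfrac{1}{N}\sum_{n=1}^N f(W_n) g(W_n) - \E[f(W)g(W)]},
\end{align*}
where $\mcF = \{w \mapsto \langle w_{1:d}, e_i\rangle : i \le d\} \cup \{0\}$ and $\mcG = \{w \mapsto \langle w_{d+1:d+k}, e_j'\rangle : j \le k\} \cup \{0\}$; note $0 \in \mcF$, $0 \in \mcG$ as required by Theorem~\ref{thm:MendelsonProducts}, and that---unlike the symmetric covariance case of Theorem~\ref{thm:MaxNormEffectiveDim}---no polarization identity is needed here. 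Applying Theorem~\ref{thm:MendelsonProducts} reduces the task to estimating $\sup_{f\in\mcF}\normn{f}_{\psi_2}$, $\sup_{g\in\mcG}\normn{g}_{\psi_2}$, $\gamma_2(\mcF,\psi_2)$ and $\gamma_2(\mcG,\psi_2)$. The $\psi_2$-diameters are immediate from the equivalence of the $\psi_2$ and $L_2$ norms for linear functionals of a sub-Gaussian vector: $\sup_{f\in\mcF}\normn{f}_{\psi_2}\lesssim\max_i\sqrt{\Sigma^X_{ii}}=\sqrt{\Sigma^X_{(1)}}$ and likewise $\sup_{g\in\mcG}\normn{g}_{\psi_2}\lesssim\sqrt{\Sigma^Y_{(1)}}$.

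The crux is the dimension-free control of the generic complexities, which is precisely where the max-log effective dimension enters. Using the $\psi_2$--$L_2$ equivalence again, $\gamma_2(\mcF,\psi_2)\lesssim\gamma_2(\mcF,L_2)$, and the latter equals the generic complexity of the index set $\{e_i\}_{i\le d}$ under the metric induced by $G^X\sim\mcN(0,\Sigma^X)$; by the Majorizing Measure Theorem (Theorem~\ref{thm:TalagrandUpperLowerGP}) this is $\asymp\E\bigl[\max_{i\le d}\abs{\langle G^X,e_i\rangle}\bigr]$, which by Lemma~\ref{lem:DimFreeGaussianMax} is $\lesssim\max_i\sqrt{\Sigma^X_{(i)}\log(i+1)}=\sqrt{\Sigma^X_{(1)}\,r_\infty(\Sigma^X)}$; symmetrically $\gamma_2(\mcG,\psi_2)\lesssim\sqrt{\Sigma^Y_{(1)}\,r_\infty(\Sigma^Y)}$. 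Substituting these four estimates into Theorem~\ref{thm:MendelsonProducts}, the cross terms $\sup_f\normn{f}_{\psi_2}\gamma_2(\mcG,\psi_2)$ and $\sup_g\normn{g}_{\psi_2}\gamma_2(\mcF,\psi_2)$ collapse to $\sqrt{\Sigma^X_{(1)}\Sigma^Y_{(1)}}\bigl(\sqrt{r_\infty(\Sigma^X)}\lor\sqrt{r_\infty(\Sigma^Y)}\bigr)$ and the term $\gamma_2(\mcF,\psi_2)\gamma_2(\mcG,\psi_2)/N$ to $\sqrt{\Sigma^X_{(1)}\Sigma^Y_{(1)}}\sqrt{r_\infty(\Sigma^X)/N}\sqrt{r_\infty(\Sigma^Y)/N}$; bounding $\sqrt{\Sigma^X_{(1)}\Sigma^Y_{(1)}}\le\Sigma^X_{(1)}\lor\Sigma^Y_{(1)}$ reproduces the claimed bound for the centered part. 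The mean-correction term I would handle exactly as in Theorem~\ref{thm:MaxNormEffectiveDim}: Lemma~\ref{lem:DimFreeGaussianMax} applied to the coordinates of $\bar{Z}^X$, $\bar{Z}^Y$ (variance proxies $\Sigma^X_{ii}/N$, $\Sigma^Y_{jj}/N$) gives $\normn{\bar{Z}^X(\bar{Z}^Y)^\top}_{\max}=\normn{\bar{Z}^X}_\infty\normn{\bar{Z}^Y}_\infty\lesssim\frac{t}{N}\sqrt{\Sigma^X_{(1)}\Sigma^Y_{(1)}\,r_\infty(\Sigma^X)\,r_\infty(\Sigma^Y)}$, which a short comparison shows does not alter the stated rate; a union bound over the two events (each of probability at least $1-c_1e^{-c_2t}$) then completes the argument.

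I expect the main obstacle to be exactly this dimension-free bookkeeping of the generic complexities: the textbook chaining estimate for a class of $d$ coordinate functionals produces a $\sqrt{\log d}$ factor, and trading it for $\sqrt{r_\infty}$ requires combining the Majorizing Measure Theorem with the sharp sub-Gaussian maxima bound of Lemma~\ref{lem:DimFreeGaussianMax}, while tracking carefully how the two (possibly distinct) effective dimensions $r_\infty(\Sigma^X)$ and $r_\infty(\Sigma^Y)$ distribute across the cross terms of Theorem~\ref{thm:MendelsonProducts}. A secondary and more routine point is verifying that the mean-correction contribution is of lower order relative to the product-process term.
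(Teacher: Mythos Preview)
Your proposal is correct and follows essentially the same route as the paper: reduce to the centered case, express the max-norm deviation as a product empirical process over two classes of linear functionals, apply Theorem~\ref{thm:MendelsonProducts}, and bound the $\psi_2$-diameters and generic complexities via the Majorizing Measure Theorem combined with Lemma~\ref{lem:DimFreeGaussianMax}, then handle the mean-correction term with Lemma~\ref{lem:DimFreeGaussianMax} again. Your version is in fact slightly more direct than the paper's: the paper first passes through the dilation $\mcH(A)=\begin{bmatrix}O&A\\A^\top&O\end{bmatrix}$ and a polarization identity (ending up with index sets of the form $\{\pm\tfrac12(e_i\pm e_j)\}$) before arriving at the product structure, whereas you recognize immediately that each entry $(\hatSigma^{XY}_0-\Sigma^{XY})_{ij}$ is already a product and take $\mcF,\mcG$ to be the bare coordinate functionals plus $0$.
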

\begin{proof}
Assume first that $\mu^X=\mu^Y = 0$. Let $Z_1,\dots, Z_N$ be $d$-dimensional i.i.d. sub-Gaussian random vectors with zero mean and $\tvar[Z_1] = \Sigma^X$, and similarly let $V_1,\dots,V_N$ be $k$-dimensional i.i.d. sub-Gaussian random vectors with zero mean and $\tvar[V_1] = \Sigma^Y$. Further, let $W_n \equiv [Z_n^\top, V_n^\top]^\top$ for $n=1,\dots, N$.  We denote the distribution of $W_1$ by $\P$ and note that $\normn{\cdot}_{\psi_2}$ and $\normn{\cdot}_{L_2}$ are defined implicitly with respect to $\P$ throughout this proof. Define  $\hatSigma^{0} = N^{-1}\sum_{n=1}^N Z_n V_n^\top$. Define the dilation operator: $\mcH: \R^{d\times k} \to \R^{(d+k) \times (d+k)}$ by 
\begin{align*}
    \mcH(A) = \begin{bmatrix}
        O& A\\ A^\top& O
    \end{bmatrix},
\end{align*}
see for example \cite[Section 2.1.16]{tropp2015introduction}, and note that $\normn{A}_{\max} = \normn{\mcH(A)}_{\max}$. Let $\mcB^{m}$ be the space of standard basis vectors in $m$ dimensions, i.e. any $b \in \mcB^m$ is an $m$-dimensional vector with $1$ in a single coordinate and $0$ otherwise.  Then, for $e_i, e_j \in \mcB^{d+k}$, we have 

\begin{align*}
    \normn{\hatSigma^{0} - \Sigma^{XY}}_{\max}
    =
    \normn{\mcH(\hatSigma^{0}) - \mcH(\Sigma^{XY})}_{\max}
    &=
    \max_{1 \le i,j \le d+k} 
    \inp{(\mcH(\hatSigma^{0}) - \mcH(\Sigma^{XY})) e_i, e_j } \\
    &\le 2
    \sup_{ u \in \mcU } 
    \abs{\inp{(\mcH(\hatSigma^{0}) - \mcH(\Sigma^{XY}))u, u}},
\end{align*}
where 
\begin{align*}
    \mcU \equiv \biggl\{u \in \R^{d+k}: u= \pm \frac{1}{2}(e_i \pm e_j) \text{ and } e_i, e_j \in \mcB^{d+k} \biggr\}.
\end{align*}
Writing $u=[u_1^\top, u_2^\top]^\top$ where $u_1 \in \R^d$ and $u_2 \in \R^k$, we have 
\begin{align*}
    \inp{\mcH(\hatSigma^{0})u, u}
    = \frac{2}{N}\sum_{n=1}^N \inp{u_1,Z_n} \inp{u_2, V_n}
    = \frac{2}{N}\sum_{n=1}^N f_u(W_n),
\end{align*}
where $f_u(W_n) \equiv \inp{\msA_1 W_n, u_1} \inp{\msA_2 W_n, u_2}$ and where $\msA_1 \equiv [I_d, O_{d\times k}] \in \R^{d \times (d+k)}$ and $\msA_2\equiv [O_{k\times d}, I_{k}] \in \R^{k \times (d+k)}$ are the relevant selection matrices so that $\msA_1 W_n = Z_n$ and $\msA_2 W_n = V_n$. We define the class of functions 
\begin{align*}
    \mcF_{\mcU} \equiv 
    \biggl\{f_u(\cdot) = \inp{\msA_1 \cdot, u_1}\inp{\msA_2 \cdot, u_2} : u=[u_1^\top, u_2^{\top}]^{\top} \in \mcU \biggr\}.
\end{align*}
It is clear then that $\mcF_{\mcU} \subset \mcF_1 \cdot \mcF_2,$ where
\begin{align*}
    \mcU_1 &\equiv 
    \biggl\{u_1 \in \R^d: u_1= \pm \frac{1}{2}(e_i \pm e_j) \text{ and } e_i, e_j \in \mcB_d \biggr\}, 
    \quad
    \mcF_1 \equiv 
    \{f(\cdot) = \inp{\msA_1 \cdot, u_1}: u_1 \in \mcU_1 \}, \\
    \mcU_2 &\equiv 
    \biggl\{u_2 \in \R^k: u_2= \pm \frac{1}{2}(e_i \pm e_j)  \text{ and } e_i, e_j \in \mcB_k \biggr\},
    \quad 
    \mcF_2 \equiv 
    \{f(\cdot) = \inp{\msA_2 \cdot, u_2}: u_2 \in \mcU_2 \},
\end{align*}
and 
\begin{align*}
    \mcF_1 \cdot \mcF_2 \equiv \bigl\{f(\cdot) = f_1(\cdot)f_2(\cdot): f_1\in\mcF_1, f_2 \in \mcF_2 \bigr\}.
\end{align*}

We can then apply the product empirical process concentration bound of Theorem~\ref{thm:MendelsonProducts}, which implies that, with probability $1-c_1e^{-c_2t}$, 
\begin{align}
     \sup_{ u \in \mcU } 
    \abs{\inp{(\mcH(\hatSigma^{0}) - \mcH(\Sigma^{XY}))u, u}} \nonumber 
    =&  \sup_{f_u \in \mcF_{\mcU}} 
    \abs{
        \frac{1}{N} \sum_{n=1}^N f_u(W_n) - \E \bigl[f_u(W_n)\bigr]
    }\nonumber\\
    \lesssim &
        \inparen{\frac{t}{N} \lor \sqrt{\frac{t}{N}}} 
        \inparen{\mathsf{d}_{\psi_2}(\mcF_1)  \gamma_2(\mcF_2)
        \lor 
        \mathsf{d}_{\psi_2}(\mcF_2)  \gamma_2(\mcF_1)}
        \lor
        \frac{\gamma_2(\mcF_1)\gamma_2(\mcF_2)}{N}, \label{eq:CCT1}
\end{align}
where we use the notational shorthand $\gamma_2(\mcF_1) = \gamma_2(\mcF_1, \psi_2)$ and $\mathsf{d}_{\psi_2} (\mcF_1) = \sup_{f \in \mcF_1} \normn{f}_{\psi_2},$ and similarly for $\mcF_2$. Following a similar approach to the one taken in the proof of Theorem~\ref{thm:MaxNormEffectiveDim}, it follows by the equivalence of $\psi_2$ and $L_2$ norms for linear functionals that 
\begin{align*}
    d_{\psi_2}(\mcF_1) 
    = \sup_{f_1 \in \mcF_1} \normn{f_1}_{\psi_2}
    \lesssim 
    \sup_{f_1 \in \mcF_1} \normn{f_1}_{L_2}
    = \max_{u_1 \in \mcU_1} \sqrt{ \langle u_1, \Sigma^X u_1 \rangle} \le \sqrt{\Sigma^{X}_{(1)}},
\end{align*}
and similarly that $d_{\psi_2}(\mcF_2)  \le \sqrt{\Sigma^{Y}_{(1)}}$. Further, 
\begin{align*}
    \gamma_2(\mcF_1)
    = \gamma_2(\mcF_1, \psi_2)  
     \lesssim \gamma_2(\mcF_1, L_2) 
    =\gamma_2(\mcU_1, \mathsf{d}_X),
\end{align*}
 where 
\begin{align*}
    \mathsf{d}_X(u, v) &= \sqrt{\E \insquare{(\inp{g_X,u}- \inp{g_X,v})^2}}, 
    \qquad
    g_X \sim \mcN(0,\Sigma^{X}).
\end{align*}
By Theorem~\ref{thm:TalagrandUpperLowerGP} and Lemma~\ref{lem:DimFreeGaussianMax}, 
\begin{align*}
    \gamma_2(\mcU_1, \mathsf{d}_X)
    &\lesssim  \E \insquare{\sup_{u_1 \in \mcU_1} \inp{g_X, u_1}}
    = \E \insquare{\max_{i,j \le d} \inp{g_X, \pm\frac{1}{2}(e_i \pm e_j)}} \\
    &\le \E \insquare{ \max_{i \le d} \inp{g_X, e_i}}
    \lesssim \max_{i \le d} \sqrt{\Sigma^{X}_{(i)} \log(i+1)}.
\end{align*}
Similarly, $\gamma_2(\mcF_2) \lesssim  \max_{j \le k} \sqrt{\Sigma^{Y}_{(j)} \log(j+1)}.$ In summary, we have that 
\begin{align*}
    \normn{\hatSigma^{0} - \Sigma^{XY}}_{\max}
    & \lesssim 
        \inparen{\frac{t}{N} \lor \sqrt{\frac{t}{N}}} 
        \inparen{
        \sqrt{\Sigma^X_{(1)}}  \max_{j \le k} \sqrt{\Sigma^Y_{(j)} \log (j+1) }
        \lor 
         \sqrt{\Sigma^Y_{(1)}}  \max_{i \le d} \sqrt{\Sigma^X_{(i)} \log (i+1) }
         }
         \nonumber \\
        &
        \lor
        \frac{\max_{i \le d} \sqrt{\Sigma^X_{(i)} \log (i+1)}  \max_{j \le k} \sqrt{\Sigma^Y_{(j)} \log (j+1)}     }{N}\\
        &\lesssim 
        \inparen{\frac{t}{N} \lor \sqrt{\frac{t}{N}}}  
        \inparen{
        \Sigma_{(1)}^{X} \sqrt{r_\infty(\Sigma^X)}
        \lor 
        \Sigma_{(1)}^{Y} \sqrt{r_\infty(\Sigma^Y)}
        }
        \lor 
        \sqrt{
        \frac{ \Sigma_{(1)}^X r_{\infty}(\Sigma^X)}{N}
        } 
        \sqrt{
        \frac{\Sigma_{(1)}^Y r_{\infty}(\Sigma^Y)}{N}
        }\\
        &\lesssim 
        (\Sigma_{(1)}^X \lor \Sigma_{(1)}^Y)
        \inparen{
        \inparen{\frac{t}{N} \lor \sqrt{\frac{t}{N}}}  
        \inparen{
         \sqrt{r_\infty(\Sigma^X)}
        \lor 
        \sqrt{r_\infty(\Sigma^Y)}
        }
        \lor 
        \sqrt{
        \frac{ r_{\infty}(\Sigma^X)}{N}
        } 
        \sqrt{
        \frac{ r_{\infty}(\Sigma^Y)}{N}
        }}.
\end{align*}
 In the un-centered case, take $X_n = Z_n + \mu^X$ and $Y_n = V_n + \mu^Y$ for $n=1,\dots, N$, then $\hatSigma^{XY} = \hatSigma^{0} - \barX \barY^\top$, and so 
\begin{align*}
    \normn{\hatSigma^{XY} - \Sigma^{XY}}_{\max}
    &\le \normn{\hatSigma^{0} - \Sigma^{XY}}_{\max} +  \normn{\barX \barY^\top}_{\max}.
\end{align*}
The first term is controlled by appealing to the result in the centered case. For the second term, we note that by Lemma~\ref{lem:DimFreeGaussianMax} 
\begin{align*}
    \normn{\barX \barY^\top}_{\max} 
    \le \normn{\barX}_{\max} \normn{\barY}_{\max}
    &\lesssim 
    \frac{1}{N}
    \max_{i\le d} \sqrt{\Sigma^X_{(i)} \log(i+1)}
    \max_{j\le k} \sqrt{\Sigma^Y_{(j)} \log(j+1)}\\
    &\le 
    \sqrt{
        \frac{ \Sigma_{(1)}^X r_{\infty}(\Sigma^X)}{N}
        } 
        \sqrt{
        \frac{\Sigma_{(1)}^Y r_{\infty}(\Sigma^Y)}{N}
    }.  \qedhere
\end{align*}
\end{proof}

\begin{theorem}[Cross-Covariance Estimation with Localized Sample Cross-Covariance  ---Operator-Norm bound]\label{thm:SoftSparsityCrossCovarianceBoundGeneric}
    Let $X_1,\dots, X_N$ be $d$-dimensional i.i.d. sub-Gaussian random vectors with $\E [X_1]=\mu^X$ and $\tvar[X_1] = \Sigma^X$. Let $Y_1, \dots, Y_N$ be $k$-dimensional i.i.d. sub-Gaussian random vectors with $\E [Y_1]=\mu^Y$ and $\tvar[Y_1] = \Sigma^Y$. Define $\Sigma^{XY} = \E \bigl[(X-\mu^X)(Y-\mu^Y)^\top \bigr]$ and consider the estimator 
    \begin{align*}
        \hatSigma^{XY} = \frac{1}{N-1} \sum_{n=1}^N (X_n - \barX)(Y_n-\barY)^\top.
    \end{align*}
    Assume that $\Sigma^{XY} \in \mcU_{d, k}(q_1, R_{q_1})$ and  $\Sigma^{YX} \in \mcU_{k,d}(q_2, R_{q_2})$ where $q_1,q_2 \in [0,1)$ and $R_{q_1}, R_{q_2}$ are positive constants. For any $t\ge 1$, set 
    \begin{align*}
        \rho_N \asymp 
       (\Sigma_{(1)}^X \lor \Sigma_{(1)}^Y)
        \inparen{ 
        \inparen{\frac{t}{N} \lor \sqrt{\frac{t}{N}}}
        \inparen{
        \sqrt{r_\infty(\Sigma^X)}
        \lor 
        \sqrt{r_\infty(\Sigma^Y)}
        }
        \lor 
        \sqrt{    \frac{ r_{\infty}(\Sigma^X)}{N}   } 
        \sqrt{    \frac{ r_{\infty}(\Sigma^Y)}{N}   }
        },
    \end{align*}
    and let $\hatSigma^{XY}_{\rho_N}$ be the localized sample cross-covariance estimator. There exist positive universal constants $c_1,c_2$ such that, with probability at least $1-c_1e^{-c_2t}$,
    \begin{align*}
        \normn{\hatSigma^{XY}_{\rho_N} - \Sigma^{XY} } \lesssim R_{q_1}\rho_N^{1-q_1} \lor R_{q_2} \rho_N^{1-q_2}.
    \end{align*}
\end{theorem}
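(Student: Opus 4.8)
The plan is to reduce the operator-norm bound to the max-norm cross-covariance bound of Theorem~\ref{thm:CrossCovarianceEffectiveDim} via a thresholding argument applied \emph{simultaneously} to the rows and to the columns of the estimator. First I would note that the prescribed radius $\rho_N$ coincides, up to constants, with the right-hand side of Theorem~\ref{thm:CrossCovarianceEffectiveDim} applied with the given $X_1,\dots,X_N$ and $Y_1,\dots,Y_N$ (so that $\Sigma^X,\Sigma^Y$ play the roles of the two covariances there). Hence there is an event $E$ with $\P(E)\ge 1-c_1e^{-c_2t}$ on which $\normn{\hatSigma^{XY} - \Sigma^{XY}}_{\max} \lesssim \rho_N$. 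The remainder of the argument is entirely deterministic and is carried out on $E$.

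Next I would invoke the rectangular-matrix interpolation inequality $\normn{M} \le \sqrt{\normn{M}_1 \normn{M}_\infty}$, valid for any $M\in\R^{d\times k}$, where $\normn{M}_\infty = \max_i \sum_j |M_{ij}|$ is the maximal absolute row sum and $\normn{M}_1 = \max_j \sum_i |M_{ij}| = \normn{M^\top}_\infty$ is the maximal absolute column sum. Applying this with $M = \hatSigma^{XY}_{\rho_N} - \Sigma^{XY}$, it suffices to bound $\normn{\hatSigma^{XY}_{\rho_N} - \Sigma^{XY}}_\infty$ and $\normn{(\hatSigma^{XY}_{\rho_N})^\top - \Sigma^{YX}}_\infty$ separately. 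Here I would use that elementwise thresholding commutes with transposition, so $(\hatSigma^{XY}_{\rho_N})^\top = \widehat{\Sigma^{YX}}_{\rho_N}$ with $\widehat{\Sigma^{YX}} \equiv (\hatSigma^{XY})^\top$, and that $\normn{(\hatSigma^{XY})^\top - \Sigma^{YX}}_{\max} = \normn{\hatSigma^{XY} - \Sigma^{XY}}_{\max} \lesssim \rho_N$ on $E$; thus the \emph{same} high-probability max-norm event feeds both row-sum computations.

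Each of the two row-sum bounds is then obtained by a verbatim repetition of the thresholding computation in the proof of Theorem~\ref{thm:SoftSparsityCovarianceBoundGeneral} (following \cite[Theorem~6.27]{wainwright2019high}): for a fixed row, split the column indices into $\mcI_i(\rho_N/2) = \{\, j : |\Sigma^{XY}_{ij}| \ge \rho_N/2\,\}$ and its complement; on $\mcI_i(\rho_N/2)$ control the discrepancy using $\normn{\hatSigma^{XY}-\Sigma^{XY}}_{\max}\lesssim\rho_N$ together with the bound $|\hatSigma^{XY}_{ij} - \hatSigma^{XY}_{ij}\indicator_{|\hatSigma^{XY}_{ij}|\ge\rho_N}|\le\rho_N$, off $\mcI_i(\rho_N/2)$ note that the thresholded entry vanishes, and then use $\Sigma^{XY}\in\mcU_{d,k}(q_1,R_{q_1})$ to bound $|\mcI_i(\rho_N/2)| \lesssim \rho_N^{-q_1} R_{q_1}$ and the residual tail mass by $\lesssim \rho_N^{1-q_1} R_{q_1}$. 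This gives $\normn{\hatSigma^{XY}_{\rho_N} - \Sigma^{XY}}_\infty \lesssim R_{q_1}\rho_N^{1-q_1}$, and the identical argument applied to $\widehat{\Sigma^{YX}}$ using $\Sigma^{YX}\in\mcU_{k,d}(q_2,R_{q_2})$ gives $\normn{(\hatSigma^{XY}_{\rho_N})^\top - \Sigma^{YX}}_\infty \lesssim R_{q_2}\rho_N^{1-q_2}$. Combining via the interpolation inequality and $\sqrt{ab}\le a\lor b$ yields $\normn{\hatSigma^{XY}_{\rho_N} - \Sigma^{XY}} \lesssim \sqrt{R_{q_1}\rho_N^{1-q_1}\, R_{q_2}\rho_N^{1-q_2}} \le R_{q_1}\rho_N^{1-q_1}\lor R_{q_2}\rho_N^{1-q_2}$, as claimed.

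I do not anticipate a genuine obstacle: all the analytic difficulty is absorbed into Theorem~\ref{thm:CrossCovarianceEffectiveDim} (which rests on the generic-chaining product-process bound of \cite{mendelson2016upper}). The only point demanding care is the asymmetric bookkeeping — the row-sum bound must be driven by sparsity of $\Sigma^{XY}$ and the column-sum bound by sparsity of its transpose $\Sigma^{YX}$ — and verifying that thresholding commutes with transposition so that one max-norm event suffices for both.
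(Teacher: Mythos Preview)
Your proposal is correct and follows essentially the same approach as the paper: condition on the max-norm event from Theorem~\ref{thm:CrossCovarianceEffectiveDim}, run the thresholding row-sum argument of Theorem~\ref{thm:SoftSparsityCovarianceBoundGeneral} on both $\Sigma^{XY}$ and its transpose, then pass to the operator norm. The only cosmetic difference is in the last step: the paper uses the dilation trick $\normn{M} = \normn{\mcH(M)} \le \normn{\mcH(M)}_\infty = \normn{M}_\infty \lor \normn{M^\top}_\infty$ (valid since $\mcH(M)$ is symmetric), whereas you use the interpolation inequality $\normn{M}\le\sqrt{\normn{M}_1\normn{M}_\infty}$ followed by $\sqrt{ab}\le a\lor b$; both routes yield the same bound.
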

\begin{proof}[Proof of Theorem~\ref{thm:SoftSparsityCrossCovarianceBoundGeneric}]
    Let $E$ denote the event on which $\normn{\hatSigma^{XY} - \Sigma^{XY}}_{\max}=\normn{\hatSigma^{YX} - \Sigma^{YX}}_{\max} \lesssim \rho_N$. By Theorem~\ref{thm:CrossCovarianceEffectiveDim}, $E$ holds with probability at least $1-c_1e^{-c_2 t}$. Conditional on $E$, and following an analysis identical to the one in the proof of Theorem~\ref{thm:SoftSparsityCovarianceBoundGeneral} with $\hatSigma^{XY} (\hatSigma^{YX})$ and $\Sigma^{XY} (\Sigma^{YX})$ in place of $\hatSigma^X$ and $\Sigma^X$ respectively, it follows that
    \begin{align*}
        \normn{\hatSigma^{XY}_{\rho_N} - \Sigma^{XY}}_\infty &\lesssim R_{q_1} \rho_N^{1-q_1},
    \end{align*}
    and
     \begin{align*}
        \normn{\hatSigma^{YX}_{\rho_N} - \Sigma^{YX}}_\infty \lesssim R_{q_2} \rho_N^{1-q_2}.
    \end{align*}
    The result then follows by noting that
    \begin{align*}
        \normn{\hatSigma^{XY}_{\rho_N} - \Sigma^{XY} } 
        &= \normn{\mcH (\hatSigma^{XY}_{\rho_N} - \Sigma^{XY} )} 
        \le \normn{\mcH (\hatSigma^{XY}_{\rho_N} - \Sigma^{XY} )}_\infty \\
        &= \normn{\hatSigma^{XY}_{\rho_N} - \Sigma^{XY}}_\infty \lor \normn{\hatSigma^{YX}_{\rho_N} - \Sigma^{YX}}_\infty
        \lesssim R_{q_1}\rho_N^{1-q_1} \lor R_{q_2} \rho_N^{1-q_2},
    \end{align*} 
    where $\mcH$ is the dilation operator defined in the proof of Theorem~\ref{thm:CrossCovarianceEffectiveDim}.
\end{proof}

\begin{proof}[Proof of Theorem \ref{thm:SoftSparistyCrossCovarianceBound}]
    The proof follows immediately from Theorem~\ref{thm:SoftSparsityCrossCovarianceBoundGeneric}: since $\upr_1, \dots, \upr_N$ are i.i.d. Gaussian they are sub-Gaussian. Moreover, since $\mcG$ is Lipschitz, by \cite[Theorem 5.2.2]{vershynin2018high}, $\normn{\mcG(\upr_1) - \E[\mcG(\upr_1)]}_{\psi_2} \le \normn{\mcG}_{\text{Lip}} \normn{\Cpr}^{1/2} < \infty$, and so $\mcG(\upr_1), \dots, \mcG(\upr_N)$ are i.i.d. sub-Gaussian random vectors.
\end{proof}

\begin{lemma}[Stein's Lemma \cite{stein1972bound}]\label{lem:Stein} 
   Let $\upr \sim \mcN(\mpr, \Cpr)$ be a $d$-dimensional Gaussian vector. Let $h:\R^d \to \R$ such that $\partial_j h \equiv \partial h(u)/\partial u_j$ exists almost everywhere and $\E[|\partial_j h(\upr)|] < \infty$, $j=1,\dots, d$. Then 
    \begin{align*}
        \text{Cov}\bigl(\upr_j, h(\upr)\bigr) = \sum_{l=1}^d \Cpr_{jl} \E [\partial_l h(\upr)].
    \end{align*}
\end{lemma}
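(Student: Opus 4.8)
The plan is to reduce the claim to the one-dimensional Gaussian integration-by-parts identity $\E[Z\varphi(Z)] = \E[\varphi'(Z)]$ for $Z \sim \mcN(0,1)$, via a linear change of variables that whitens $\upr$. Write $\upr = \mpr + \Cpr^{1/2}\xi$ with $\xi \sim \mcN(0, I_d)$, where $\Cpr^{1/2} \in \mcS^d_{+}$ denotes the symmetric square root of $\Cpr$ (invertible under the standing positive-definiteness assumption on prior covariances), and set $g(z) \equiv h(\mpr + \Cpr^{1/2} z)$. Since adding the constant $\mpr_j$ does not change covariances, $\text{Cov}(\upr_j, h(\upr)) = \E\bigl[(\Cpr^{1/2}\xi)_j\, g(\xi)\bigr] = \sum_{m=1}^d (\Cpr^{1/2})_{jm}\, \E[\xi_m\, g(\xi)]$.

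First I would handle the regularity bookkeeping. Because $h$ is differentiable Lebesgue-almost everywhere with $\E[|\partial_l h(\upr)|] < \infty$ for each $l$, and the affine map $z \mapsto \mpr + \Cpr^{1/2}z$ is a bijection that preserves Lebesgue-null sets, the composition $g$ is differentiable almost everywhere with $\partial_m g(z) = \sum_{l=1}^d (\Cpr^{1/2})_{lm}\,\partial_l h(\mpr + \Cpr^{1/2}z)$ for a.e.\ $z$; the change-of-variables formula together with the triangle inequality then gives $\E[|\partial_m g(\xi)|] < \infty$. Next I would apply, coordinate by coordinate, the univariate identity: for $Z \sim \mcN(0,1)$ and $\varphi$ absolutely continuous on compacts with $\E[|\varphi'(Z)|]<\infty$, one has $\E[Z\varphi(Z)] = \E[\varphi'(Z)]$. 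Using that the standard $d$-dimensional density $\phi$ is a product and satisfies $\partial_m\phi(z) = -z_m\phi(z)$, conditioning on the remaining coordinates of $\xi$ and invoking Fubini yields $\E[\xi_m g(\xi)] = \E[\partial_m g(\xi)]$.

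Substituting the chain-rule expression for $\partial_m g$ and resumming then gives the result:
\begin{align*}
\text{Cov}\bigl(\upr_j, h(\upr)\bigr)
&= \sum_{m=1}^d (\Cpr^{1/2})_{jm}\, \E[\partial_m g(\xi)]
= \sum_{m=1}^d (\Cpr^{1/2})_{jm} \sum_{l=1}^d (\Cpr^{1/2})_{lm}\, \E[\partial_l h(\upr)]\\
&= \sum_{l=1}^d \inparen{\sum_{m=1}^d (\Cpr^{1/2})_{jm}(\Cpr^{1/2})_{lm}}\E[\partial_l h(\upr)]
= \sum_{l=1}^d \Cpr_{jl}\, \E[\partial_l h(\upr)],
\end{align*}
where the last step uses that $\Cpr^{1/2}$ is symmetric, so $\sum_m (\Cpr^{1/2})_{jm}(\Cpr^{1/2})_{lm} = (\Cpr^{1/2}\Cpr^{1/2})_{jl} = \Cpr_{jl}$.

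The one genuine technical point, and the step I expect to require the most care, is the justification of the univariate integration-by-parts identity under only almost-everywhere differentiability rather than classical differentiability. The clean route is to first establish $\E[Z\varphi(Z)] = \E[\varphi'(Z)]$ for $\varphi \in C_c^\infty(\R)$ by integrating against the Gaussian density (the boundary terms vanish by the super-polynomial decay of $\phi$), and then to extend to the stated class by a standard mollification and truncation argument, passing to the limit via dominated convergence and using $\E[|\varphi'(Z)|]<\infty$ to control the derivative term. The rapid decay of the Gaussian density also ensures that every integral appearing above is absolutely convergent, so the applications of Fubini's theorem are legitimate without further comment.
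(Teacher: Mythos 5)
The paper does not prove this lemma; it cites \cite{stein1972bound} and uses the statement as a black box, so there is no in-paper argument to compare against. Your whitening reduction --- writing $u = m + C^{1/2}\xi$ with $\xi \sim \mcN(0,I_d)$, passing to $g(z) = h(m + C^{1/2}z)$, applying the univariate Gaussian integration-by-parts identity coordinate by coordinate via Fubini, and resumming through $C^{1/2}C^{1/2} = C$ --- is the standard proof of the multivariate Stein identity, and the algebra (chain rule, symmetry of $C^{1/2}$, preservation of null sets under the invertible affine map) is carried out correctly. Note that you do rely on $C \in \mcS^d_{++}$ for the invertibility of $C^{1/2}$, which is consistent with the paper's standing positive-definiteness assumption.

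The one point that deserves more care is the regularity extension at the end. You correctly state the univariate identity for ``$\varphi$ absolutely continuous on compacts with $\E[|\varphi'(Z)|]<\infty$,'' but you then claim to reach this from the paper's literal hypothesis --- mere almost-everywhere differentiability with integrable a.e.\ derivative --- by mollification, truncation, and dominated convergence. That passage does not close as written: a function that is differentiable a.e.\ with integrable (even identically zero) a.e.\ derivative need not satisfy integration by parts, the Cantor function being the canonical counterexample, and mollification does not rescue it because the derivatives of the mollified approximants converge to a singular measure rather than to the pointwise a.e.\ derivative. What one actually needs, and what Stein's original statement assumes, is local absolute continuity along each coordinate (equivalently, weak differentiability, with the weak and a.e.\ derivatives agreeing); under that hypothesis the $C_c^\infty$ extension is routine. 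This is arguably a looseness in the paper's restatement of the lemma rather than a defect in your argument, but if you want your proof to be self-contained you should either strengthen the hypothesis to absolute continuity explicitly, or make clear that ``differentiable a.e.'' is being used as shorthand for it.
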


\begin{lemma}[Soft-Sparsity of Cross-Covariance ---Nonlinear Forward Map]\label{lem:CrossCovarianceStein} 
   Let $\upr$ be a $d$-dimensional Gaussian random vector with $\E[\upr]=\mpr$ and $\tvar[\upr] = \Cpr \in \msU_d(q, c)$. Consider the function $\mcG:\R^d \to \R^k$ with coordinate functions $\mcG_1,\dots, \mcG_k$. Assume that for each $i=1,\dots, d$ and $j=1,\dots,k$, $\mcG_j:\R^d \to \R$ for $j=1,\dots, k$, such that $\partial_i \mcG_j \equiv \partial \mcG_j(\upr)/\partial \upr_i$ exists almost everywhere, and $\E [|\partial_i \mcG_j|] < \infty$. Let $D \mcG \in \R^{k \times d}$ denote the Jacobian of $\mcG$, and assume that $\E \bigl[(D \mcG)^\top \bigr] \in \msU_{d, k}(q , a)$ for some $q \in [0,1)$ and $a > 0$. Then, 
   \begin{align*}
       \Cpr^{\,up} \in \msU_{d,k} \bigl(q, ac \normn{\E [ D \mcG]}_{\max}^{1-q} \normn{\Cpr}_{\max}^{1-q}\bigr).
   \end{align*}
\end{lemma}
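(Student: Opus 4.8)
The plan is to use Stein's lemma to rewrite the cross-covariance as a Jacobian-weighted linear image of $\Cpr$, and then deduce its soft-sparsity directly from the soft-sparsity of the two factors.

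First, I would fix indices $i\le d$ and $j\le k$ and apply Stein's lemma (Lemma~\ref{lem:Stein}) with $h=\mcG_j$, which is legitimate because $\partial_l\mcG_j$ exists almost everywhere and $\E[|\partial_l\mcG_j(\upr)|]<\infty$ for every $l$. Since $\Cpr^{\,up}_{ij}=\text{Cov}\bigl(\upr_i,\mcG_j(\upr)\bigr)$, this gives
\begin{align*}
    \Cpr^{\,up}_{ij}=\sum_{l=1}^d \Cpr_{il}\,\E\bigl[\partial_l\mcG_j(\upr)\bigr],
    \qquad\text{i.e.}\qquad
    \Cpr^{\,up}=\Cpr\,\E\bigl[(D\mcG)^\top\bigr].
\end{align*}
Thus the statement reduces to bounding the row-wise quantities $\sum_{j\le k}|\Cpr^{\,up}_{ij}|^q$ in terms of the analogous quantities for the two (soft-sparse) factors.

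Next, I would fix $i$ and estimate $\sum_{j\le k}|\Cpr^{\,up}_{ij}|^q$. Using the triangle inequality on the Stein identity, the subadditivity of $t\mapsto t^q$ on $[0,\infty)$ (valid since $q\in[0,1)$), and the entrywise bounds $|\Cpr_{il}|\le\normn{\Cpr}_{\max}$ and $|\E[\partial_l\mcG_j]|\le\normn{\E[D\mcG]}_{\max}$ to absorb the surplus powers of the entries into the prefactor, one reaches an estimate of the form
\begin{align*}
    \sum_{j=1}^k\bigl|\Cpr^{\,up}_{ij}\bigr|^q
    \lesssim
    \normn{\Cpr}_{\max}^{1-q}\normn{\E[D\mcG]}_{\max}^{1-q}\,
    \sum_{l=1}^d |\Cpr_{il}|^q\Bigl(\sum_{j=1}^k\bigl|\E[\partial_l\mcG_j]\bigr|^q\Bigr).
\end{align*}
Applying the hypothesis $\E[(D\mcG)^\top]\in\msU_{d,k}(q,a)$ bounds the inner sum over the $k$ output coordinates by $a$, uniformly in $l$; applying $\Cpr\in\msU_d(q,c)$ bounds $\sum_l|\Cpr_{il}|^q$ by $c$, uniformly in $i$. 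Taking the supremum over $i$ then yields $\max_i\sum_j|\Cpr^{\,up}_{ij}|^q\lesssim ac\,\normn{\E[D\mcG]}_{\max}^{1-q}\normn{\Cpr}_{\max}^{1-q}$, which is the asserted membership in $\msU_{d,k}\bigl(q,ac\,\normn{\E[D\mcG]}_{\max}^{1-q}\normn{\Cpr}_{\max}^{1-q}\bigr)$.

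The only real point of care — bookkeeping rather than anything deep — is ensuring that the two sparsity hypotheses are contracted along the correct indices: the $j$-sparsity of $\E[(D\mcG)^\top]$ must kill the sum over the $k$ output coordinates, and the $l$-sparsity of $\Cpr$ must kill the sum over the $d$ state coordinates, so that the resulting radius depends on neither $d$ nor $k$. The elementary inequality doing the work is $\bigl(\sum_l a_l\bigr)^q\le\sum_l a_l^q$ for $q<1$; one must be attentive to the order in which this is applied relative to the sum over $l$ produced by Stein's identity and to the entrywise max-norm estimates used to put the constant into the stated form.
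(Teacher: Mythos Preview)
Your approach matches the paper's at the high level—both apply Stein's identity to write $\Cpr^{\,up}_{ij}=\sum_l\Cpr_{il}\,\E[\partial_l\mcG_j]$ and then control the row sums using the soft-sparsity of the two factors. The difference is in how the $q$-th powers are handled. The paper bounds the $\ell_1$ row sum $\sum_j\Cpr^{\,up}_{ij}$ and converts the inner sums to $\ell_q$ via the normalization trick $x=M\cdot(x/M)\le M^{1-q}x^q$ (valid for $0\le x\le M$), applied once with $M=\normn{\E[D\mcG]}_{\max}$ and once with $M=\normn{\Cpr}_{\max}$; this is precisely where the two max-norm factors in the stated radius come from. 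You instead bound $\sum_j|\Cpr^{\,up}_{ij}|^q$ directly via the subadditivity of $t\mapsto t^q$.

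Your subadditivity computation actually gives
\[
\sum_{j=1}^k\bigl|\Cpr^{\,up}_{ij}\bigr|^q
\;\le\;\sum_{l=1}^d|\Cpr_{il}|^q\sum_{j=1}^k\bigl|\E[\partial_l\mcG_j]\bigr|^q
\;\le\; ac,
\]
with \emph{no} max-norm prefactors appearing—so the intermediate display you wrote is not what the steps you describe produce. The factors $\normn{\Cpr}_{\max}^{1-q}\normn{\E[D\mcG]}_{\max}^{1-q}$ are an artifact of the paper's $\ell_1$-based route, not of subadditivity. Your argument is correct and in fact delivers the cleaner radius $ac$; simply drop those prefactors from the displayed bound.
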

\begin{proof}
    By Stein's Lemma (Lemma~\ref{lem:Stein}), the $i$-th row sum of $\Cpr^{\, up}$ is given by 
    \begin{align*}
    \sum_{j=1}^k C^{\,up}_{ij} 
    = 
    \sum_{j=1}^k \sum_{l=1}^d \Cpr_{il} \E \bigl[\partial_l \mcG_j(\upr)\bigr]
    &=
    \sum_{l=1}^d \Cpr_{il} \sum_{j=1}^k  \E \bigl[\partial_l \mcG_j(\upr)\bigr]\\
    &= \normn{\E[D \mcG]}_{\max} \sum_{l=1}^d \Cpr_{il} \sum_{j=1}^k  \frac{\E [\partial_l \mcG_j(\upr)]}{\normn{\E[D \mcG]}_{\max}}\\
    &\le \normn{\E [D \mcG]}_{\max}^{1-q} \sum_{l=1}^d \Cpr_{il} \sum_{j=1}^k \E [\partial_l \mcG_j(\upr)]^q\\
    &\le a \normn{\E [D \mcG]}_{\max}^{1-q} \sum_{l=1}^d \Cpr_{il}\\
    &\le ac \normn{\E [D \mcG]}_{\max}^{1-q} \normn{\Cpr}_{\max}^{1-q},
\end{align*}
where the first inequality holds since $q \in [0,1)$ and $\E [\partial_l \mcG_j(\upr)] \le \normn{\E[D \mcG]}_{\max}$.
\end{proof}

\begin{lemma}[Product of Two Soft-Sparse Matrices] \label{lem:BSsparsity}
Fix $q\in [0,1)$ and let $S \in \msU_d(q, s)$ and assume $S^\top =S$. Let $B \in \msU_{k,d}(q, b)$. Then $BS \in \msU_{k,d}(q,b s \normn{B}_{\max}^{1-q} \normn{S}_{\max}^{1-q})$.
\end{lemma}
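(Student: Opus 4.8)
The plan is to verify the defining row condition of the class $\msU_{k,d}(q,\cdot)$ for the product $BS$ directly, the only analytic input being the subadditivity of $t\mapsto t^{q}$ on $[0,\infty)$, which holds since $q\in[0,1)$, together with the row-sparsity budgets of $S$ and of $B$ and the trivial bounds $|B_{il}|\le\|B\|_{\max}$, $|S_{lj}|\le\|S\|_{\max}$. This is the same scheme used to prove Lemma~\ref{lem:CrossCovarianceStein}, and the output is exactly the ingredient needed to push soft sparsity through the triple products $A\Cpr A^{\top}$ that appear in $\Cpr^{\,pp}$.

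First I would fix a row index $i\in\{1,\dots,k\}$ and expand $(BS)_{ij}=\sum_{l=1}^{d}B_{il}S_{lj}$, so that by the triangle inequality and subadditivity of $t\mapsto t^{q}$,
\begin{equation*}
|(BS)_{ij}|^{q}\le\Bigl(\sum_{l=1}^{d}|B_{il}|\,|S_{lj}|\Bigr)^{q}\le\sum_{l=1}^{d}|B_{il}|^{q}|S_{lj}|^{q}.
\end{equation*}
Summing over the column index $j=1,\dots,d$ of $BS$ and interchanging the order of summation gives
\begin{equation*}
\sum_{j=1}^{d}|(BS)_{ij}|^{q}\le\sum_{l=1}^{d}|B_{il}|^{q}\sum_{j=1}^{d}|S_{lj}|^{q}\le s\sum_{l=1}^{d}|B_{il}|^{q}\le bs,
\end{equation*}
where the inner sum is controlled by $s$ because each row of $S$ lies in $\msU_{d}(q,s)$, and the outer sum by $b$ because the $i$-th row of $B$ obeys the $\ell_{q}$-budget of $\msU_{k,d}(q,b)$. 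To obtain the constant in the form $bs\,\|B\|_{\max}^{1-q}\|S\|_{\max}^{1-q}$ stated in the lemma, I would, before raising to the power $q$, trade one power of each magnitude for a $\max$-norm factor, $|B_{il}|\,|S_{lj}|=|B_{il}|^{1-q}|B_{il}|^{q}\,|S_{lj}|^{1-q}|S_{lj}|^{q}\le\|B\|_{\max}^{1-q}\|S\|_{\max}^{1-q}\,|B_{il}|^{q}|S_{lj}|^{q}$, and carry the factor $\|B\|_{\max}^{1-q}\|S\|_{\max}^{1-q}$ through the computation above, exactly as in the proof of Lemma~\ref{lem:CrossCovarianceStein}. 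Taking the maximum over $i$ then yields $BS\in\msU_{k,d}(q,\,bs\|B\|_{\max}^{1-q}\|S\|_{\max}^{1-q})$.

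The main obstacle is one of bookkeeping rather than of analysis: one must order the three operations — raising to the power $q$, inserting the magnitude trade-off, and performing the two nested summations over $l$ and over $j$ — so that subadditivity is applicable at each stage and each of the two sparsity budgets ($b$ for the rows of $B$, $s$ for the rows of $S$) is spent exactly once, while the factors $\|B\|_{\max}^{1-q}$ and $\|S\|_{\max}^{1-q}$ are extracted uniformly over the indices. No property of $S$ beyond its row-sparsity is actually needed for $BS$ itself (symmetry of $S$ becomes relevant only when a column bound on $S$ is also required, as in the companion statement for $BSB^{\top}$).
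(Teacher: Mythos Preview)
Your subadditivity argument in the first display is correct and already delivers the right inclusion: from $|(BS)_{ij}|^q\le\sum_l|B_{il}|^q|S_{lj}|^q$ you get $\sum_j|(BS)_{ij}|^q\le bs$, hence $BS\in\msU_{k,d}(q,bs)$. This is a genuinely different and cleaner route than the paper's. The paper does not invoke subadditivity of $t\mapsto t^q$; it instead normalizes each entry by the corresponding max-norm and uses $x\le x^q$ for $0\le x\le 1$. Moreover, as written, the paper's displayed chain bounds the plain row sum $\sum_j[BS]_{ij}$ rather than the $\ell_q$ row sum $\sum_j|(BS)_{ij}|^q$ that the definition of $\msU_{k,d}(q,\cdot)$ actually requires; the factors $\normn{B}_{\max}^{1-q}\normn{S}_{\max}^{1-q}$ in the stated constant are an artifact of that detour, not something your argument needs to reproduce.

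Your second step, where you try to insert the magnitude trade-off to recover those max-norm factors, is therefore unnecessary and in fact cannot succeed in general: with $q=\tfrac12$ and $B=S=[\epsilon]$ for $\epsilon\in(0,1)$ one has $b=s=\epsilon^{1/2}$, $\normn{B}_{\max}=\normn{S}_{\max}=\epsilon$, so the stated constant equals $\epsilon^2$, while the $\ell_q$ row sum of $BS=[\epsilon^2]$ is $\epsilon>\epsilon^2$. Stop at your first computation; it already gives the correct constant $bs$, and this implies the paper's stated inclusion whenever $\normn{B}_{\max}\normn{S}_{\max}\ge 1$. Your closing remark that the symmetry of $S$ is not actually used in bounding the rows of $BS$ is also correct.
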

\begin{proof}
    The $(i,j)$-th element of $BS$ is given by $[BS]_{ij} = \sum_{l=1}^d B_{il}S_{lj}$, and so the sum of the $i$-th row of $BS$ satisfies
    \begin{align*}
        \sum_{j=1}^d [BS]_{ij}
        = \sum_{j=1}^d \sum_{l=1}^d B_{il}S_{lj}
        =  \sum_{l=1}^d B_{il} \sum_{j=1}^d S_{lj}
        &= \normn{B}_{\max}\normn{S}_{\max} \sum_{l=1}^d \frac{B_{il}}{\normn{B}_{\max}} \sum_{j=1}^d \frac{S_{lj}}{\normn{S}_{\max}}\\
        &\le \normn{B}_{\max}\normn{S}_{\max} 
        \sum_{l=1}^d \inparen{\frac{B_{il}}{\normn{B}_{\max}}}^q \sum_{j=1}^d \inparen{\frac{S_{lj}}{\normn{S}_{\max}}}^q\\
        &\le \normn{B}_{\max}^{1-q} \normn{S}_{\max}^{1-q} b s,
    \end{align*}
    where the first inequality holds since $q \in [0,1)$, and the second follows by the symmetry of $S$.
\end{proof}

\begin{lemma}[Product of Three Soft-Sparse Matrices]\label{lem:BSBsparsity}
Fix $q\in [0,1)$ and let $S \in \msU_d(q, s)$ with $S^\top =S$. Let $B \in \msU_{k,d}(q, b_1)$ and $B^\top \in \msU_{d,k}(q,b_2)$, that is, $B$ is both row and column sparse.  Then $BSB^\top \in \msU_{k,d}(q, b_1 b_2 s \normn{B}^{2(1-q)}_{\max} \normn{S}_{\max}^{1-q})$.
\end{lemma}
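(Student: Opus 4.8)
The plan is to bound, for each fixed row index $i$, the $i$-th row of $BSB^\top$ directly, following the same template used for the companion results Lemma~\ref{lem:BSsparsity} and Lemma~\ref{lem:CrossCovarianceStein}: expand the matrix product into a triple sum, regroup it, and then trade each of the three one-dimensional sums that appear for its $\ell_q$-counterpart by means of the elementary inequality $x\le x^q$ valid for $x\in[0,1]$ and $q\in[0,1)$. Going through the intermediate matrix $BS$ and invoking Lemma~\ref{lem:BSsparsity} twice is tempting, but the second multiplication is by $B^\top$, which is not square and symmetric, so a single direct computation on $BSB^\top$ is the cleaner route and also avoids introducing $\normn{BS}_{\max}$.

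Concretely, I would first write, for $1\le i,m\le k$,
\begin{align*}
    [BSB^\top]_{im} = \sum_{l=1}^d \sum_{j=1}^d B_{il}\, S_{lj}\, B_{mj},
\end{align*}
and then, fixing $i$ and summing the entries over $m$,
\begin{align*}
    \sum_{m=1}^k [BSB^\top]_{im} = \sum_{l=1}^d B_{il} \sum_{j=1}^d S_{lj} \Bigl( \sum_{m=1}^k B_{mj} \Bigr).
\end{align*}
The innermost sum $\sum_{m=1}^k B_{mj}$ is the $j$-th column sum of $B$, i.e.\ the $j$-th row sum of $B^\top$, so using $B^\top\in\msU_{d,k}(q,b_2)$ together with $\normn{B^\top}_{\max}=\normn{B}_{\max}$ gives $\sum_{m=1}^k B_{mj}\le \normn{B}_{\max}^{1-q}\, b_2$. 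Next, $\sum_{j=1}^d S_{lj}\le \normn{S}_{\max}^{1-q}\, s$ since $S\in\msU_d(q,s)$, and finally $\sum_{l=1}^d B_{il}\le \normn{B}_{\max}^{1-q}\, b_1$ since $B\in\msU_{k,d}(q,b_1)$. Chaining these three bounds yields $\sum_{m=1}^k [BSB^\top]_{im}\le b_1 b_2 s\,\normn{B}_{\max}^{2(1-q)}\normn{S}_{\max}^{1-q}$ uniformly in $i$, which is the claimed membership in $\msU_{k}\bigl(q,\,b_1 b_2 s\,\normn{B}_{\max}^{2(1-q)}\normn{S}_{\max}^{1-q}\bigr)$; the identical estimate for column sums is obtained from the identity $(BSB^\top)^\top = B S^\top B^\top = BSB^\top$, which is precisely where the hypothesis $S^\top=S$ is used.

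I do not expect a genuine obstacle here: the argument is pure bookkeeping of exactly the kind that recurs throughout this subsection. The one point that requires care is keeping track of which sparsity hypothesis controls which sum — the outer row sum of $B$ is bounded via $B\in\msU_{k,d}(q,b_1)$, whereas the column sum of $B$ that arises from the transpose must be bounded via the separate hypothesis $B^\top\in\msU_{d,k}(q,b_2)$, so that both the row- and column-sparsity of $B$ are genuinely needed; the middle factor $S$ contributes the remaining $\normn{S}_{\max}^{1-q}$, and the two occurrences of $B$ contribute $\normn{B}_{\max}^{2(1-q)}$. (If one prefers to bound $\sum_m |[BSB^\top]_{im}|^q$ rather than the row sum, the same steps go through using subadditivity of $t\mapsto t^q$ in place of $x\le x^q$.)
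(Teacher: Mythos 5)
Your proof is correct and follows essentially the same approach as the paper: expand $BSB^\top$ into a triple sum, swap the order of summation, and bound each factor via its $\ell_q$-row-sparsity hypothesis and the normalization trick $\sum_j |B_{ij}| \le \normn{B}_{\max}^{1-q}\sum_j |B_{ij}|^q$. The only cosmetic difference is that the paper dispatches the remaining inner double sum $\sum_m\sum_l B_{il}S_{lm}$ by citing Lemma~\ref{lem:BSsparsity}, whereas you unroll it into direct bounds on $\sum_j S_{lj}$ and $\sum_l B_{il}$ --- precisely the computation carried out inside that lemma's own proof.
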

\begin{proof}
    The $(i,j)$-th element of $BSB^\top$ is given by 
    \begin{align*}
        [BSB^\top]_{ij} 
        = \sum_{m=1}^d [BS]_{im} B^\top_{mj}
        = \sum_{m=1}^d [BS]_{im} B_{jm}
        = \sum_{m=1}^d \inparen{\sum_{l=1}^d B_{il}S_{lm} }  B_{jm}.
    \end{align*}
    Therefore, the sum of the $i$-th row of $BSB^\top$ satisfies
    \begin{align*}
        \sum_{j=1}^k [BSB^\top]_{ij}
        &= \sum_{j=1}^k \sum_{m=1}^d \sum_{l=1}^d B_{il}S_{lm}  B_{jm}
        =  \sum_{m=1}^d \sum_{l=1}^d B_{il}S_{lm} \sum_{j=1}^k B_{jm} \\
        &\le \normn{B}^{1-q}_{\max} b_2  \sum_{m=1}^d \sum_{l=1}^d B_{il}S_{lm}
        \le b_1 b_2 s \normn{B}^{2(1-q)}_{\max} \normn{S}_{\max}^{1-q},
    \end{align*}
    where the final inequality follows by Lemma~\ref{lem:BSsparsity}.
\end{proof}

\begin{lemma}[Sample Covariance Deviation] \label{lemma:SampleCovDeviationN}
Let $X_1,\dots,X_N$ be $d$-dimensional i.i.d. sub-Gaussian random vectors with $\E[X_1]=\mu^X$ and $\tvar[X_1]= \Sigma^X.$ Let $\hatSigma^X_N = (N-1)^{-1}\sum_{n=1}^N (X_n-\mu^X)(X_n-\mu^X)^\top$. Then 
\begin{align*}
    \hatSigma_N^X - \hatSigma_{N-1}^X
&\asymp 
\frac{1}{N} X_N X_N^\top
    -\frac{1}{N} \hatSigma^0_{N-1}
    -
    \frac{1}{N^2} X_N X_N^\top 
    -
    \inparen{\inparen{\frac{N-1}{N}}^2-1}
    \barX_{N-1}\barX_{N-1}^\top \\
    &-
    \inparen{\frac{N-1}{N^2}}
    \inparen{
    X_N \barX_{N-1}^\top
    +
    \barX_{N-1} X_N^\top
    },
\end{align*}
where $\hatSigma^0_N=N^{-1} \sum_{n=1}^N X_n X_n^\top$.
\end{lemma}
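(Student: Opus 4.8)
The plan is to reduce the claim to a deterministic matrix identity for the biased, sample-mean-centered covariance and then expand a rank-one update; there is no probabilistic content beyond the sub-Gaussian hypothesis, which only serves to guarantee that the relevant second moments are finite so that every quantity below is well defined (the lemma carries those assumptions because it will be applied in that setting). First I would recall, exactly as in the proof of Proposition~\ref{thm:Koltchinski}, that for any index $M$,
\[
\hatSigma^X_M = \frac{M}{M-1}\,\widetilde{\Sigma}_M,
\qquad
\widetilde{\Sigma}_M \equiv \hatSigma^0_M - \barX_M\barX_M^\top,
\qquad
\hatSigma^0_M \equiv \frac{1}{M}\sum_{n=1}^M X_n X_n^\top ,
\]
so that, since $\frac{M}{M-1}\asymp 1$, it is equivalent (in the sense of the paper's $\asymp$ convention, which here absorbs the mismatch between the $\frac{N}{N-1}$ and $\frac{N-1}{N-2}$ normalizations) to compute $\widetilde{\Sigma}_N-\widetilde{\Sigma}_{N-1}$ and to verify that it equals the displayed right-hand side.

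The computation then proceeds from the two one-step recursions obtained by isolating the $N$-th summand,
\[
\hatSigma^0_N = \frac{N-1}{N}\,\hatSigma^0_{N-1} + \frac{1}{N}X_N X_N^\top,
\qquad
\barX_N = \frac{N-1}{N}\,\barX_{N-1} + \frac{1}{N}X_N .
\]
The first recursion gives directly $\hatSigma^0_N-\hatSigma^0_{N-1}=\frac{1}{N}X_N X_N^\top-\frac{1}{N}\hatSigma^0_{N-1}$, which accounts for the first pair of terms. Forming the outer product of the second recursion yields
\[
\barX_N\barX_N^\top
= \Bigl(\frac{N-1}{N}\Bigr)^{2}\barX_{N-1}\barX_{N-1}^\top
+ \frac{N-1}{N^{2}}\bigl(X_N\barX_{N-1}^\top + \barX_{N-1}X_N^\top\bigr)
+ \frac{1}{N^{2}}X_N X_N^\top ,
\]
so that $\barX_N\barX_N^\top-\barX_{N-1}\barX_{N-1}^\top$ supplies the remaining three terms with the signs as stated. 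Assembling the pieces through $\widetilde{\Sigma}_N-\widetilde{\Sigma}_{N-1}=\bigl(\hatSigma^0_N-\hatSigma^0_{N-1}\bigr)-\bigl(\barX_N\barX_N^\top-\barX_{N-1}\barX_{N-1}^\top\bigr)$ and collecting terms then produces the asserted expression.

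I do not anticipate any analytic obstacle: the statement is an algebraic identity and the proof is essentially bookkeeping. The only points requiring care are tracking the coefficients $(N-1)/N$, $(N-1)/N^{2}$ and $1/N^{2}$ in the rank-one expansion of $\barX_N\barX_N^\top$ and getting the sign of the $\bigl(\frac{N-1}{N}\bigr)^{2}-1$ term right; if one wished to avoid the $\asymp$ altogether, one would simply carry the exact factors $\frac{N}{N-1}$ and $\frac{N-1}{N-2}$ through the computation, but this adds nothing of substance, and the looseness in $\asymp$ here is the same convention already adopted in the proof of Proposition~\ref{thm:Koltchinski}.
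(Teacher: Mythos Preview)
Your proposal is correct and follows essentially the same approach as the paper: both reduce to the biased, $\barX$-centered sample covariance via the $\asymp$ convention, expand $\barX_N\barX_N^\top$ using the one-step recursion $\barX_N=\frac{N-1}{N}\barX_{N-1}+\frac{1}{N}X_N$, and assemble the result from $\bigl(\hatSigma^0_N-\hatSigma^0_{N-1}\bigr)-\bigl(\barX_N\barX_N^\top-\barX_{N-1}\barX_{N-1}^\top\bigr)$. The only cosmetic difference is that you make the recursion $\hatSigma^0_N=\frac{N-1}{N}\hatSigma^0_{N-1}+\frac{1}{N}X_NX_N^\top$ explicit, whereas the paper writes the difference directly.
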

\begin{proof}
    We work with the biased sample covariance estimator $\frac{1}{N} \sum_{n=1}^n (X_n - \barX_N)(X_n - \barX_N)^\top,$ which is equivalent to the unbiased covariance estimator up to constants. Note then that 
\begin{align*}
    \hatSigma_N^X
    &\asymp \frac{1}{N} \sum_{n=1}^N (X_n - \barX_N)(X_n - \barX_N)^\top
    = \frac{1}{N} \sum_{n=1}^N X_nX_n^\top - \barX_N\barX_N^\top
    = \hatSigma_N^0 - \barX_N\barX_N^\top.
\end{align*}
We now seek to control the difference  $\hatSigma_N^X - \hatSigma_{N-1}^X.$ To that end, note that 
\begin{align*}
    \barX_N \barX_N^\top
    &=
    \inparen{\frac{1}{N} X_N + \frac{N-1}{N}\barX_{N-1}}
    \inparen{\frac{1}{N} X_N + \frac{N-1}{N}\barX_{N-1}}^\top\\
    &=
    \frac{1}{N^2} X_N X_N^\top 
    +
    \inparen{\frac{N-1}{N} }^2
    \barX_{N-1}\barX_{N-1}^\top 
    +
    \inparen{\frac{N-1}{N^2}}
    \inparen{
    X_N \barX_{N-1}^\top
    +
    \barX_{N-1} X_N^\top
    },
\end{align*}
and so
\begin{align}\label{eq:diffSampleMeans}
    \barX_N\barX_N^\top - \barX_{N-1}\barX_{N-1}^\top
    &=
    \frac{1}{N^2} X_N X_N^\top 
    +
    \inparen{\inparen{\frac{N-1}{N}}^2-1}
    \barX_{N-1}\barX_{N-1}^\top \nonumber\\
    &+
    \inparen{\frac{N-1}{N^2}}
    \inparen{
    X_N \barX_{N-1}^\top
    +
    \barX_{N-1} X_N^\top
    }.
\end{align}
Therefore, 
\begin{align} \label{eq:diffSampleCovars}
    \hatSigma_N^X - \hatSigma_{N-1}^X
    &\asymp 
    \inparen{\frac{1}{N} \sum_{n=1}^N X_nX_n^\top - \barX_N\barX_N^\top}
    -
    \inparen{\frac{1}{N-1} \sum_{n=1}^{N-1} X_n X_{n}^\top - \barX_{N-1}\barX_{N-1}^\top} \nonumber\\
    &= 
    \frac{1}{N} X_N X_N^\top
    +
    \inparen{ \inparen{\frac{1}{N}-\frac{1}{N-1}} \sum_{n=1}^{N-1} X_nX_n^\top }
    +\inparen{ \barX_{N-1}\barX_{N-1}^\top- \barX_N\barX_N^\top}\nonumber\\
    &= 
    \frac{1}{N} X_N X_N^\top
    -\frac{1}{N} \hatSigma^0_{N-1}
    -
    \frac{1}{N^2} X_N X_N^\top 
    -
    \inparen{\inparen{\frac{N-1}{N}}^2-1}
    \barX_{N-1}\barX_{N-1}^\top \nonumber\\
    &-
    \inparen{\frac{N-1}{N^2}}
    \inparen{
    X_N \barX_{N-1}^\top
    +
    \barX_{N-1} X_N^\top
    },
\end{align}
where the last equality follows by \eqref{eq:diffSampleMeans}.
\end{proof}

\begin{lemma}[Sample Cross-Covariance Deviation] \label{lemma:SampleCrossCovDeviationN}
Let $X_1,\dots,X_N$ be $d$-dimensional i.i.d. sub-Gaussian random vectors with $\E[X_1]=\mu^X$ and $\tvar[X_1]= \Sigma^X.$ Let $Y_1,\dots,Y_N$ be $k$-dimensional i.i.d. sub-Gaussian random vectors with $\E[Y_1]=\mu^Y$ and $\tvar[Y_1]= \Sigma^Y.$ Let $\hatSigma^{XY}_N = (N-1)^{-1}\sum_{n=1}^N (X_n-\mu^X)(Y_n-\mu^Y)^\top$. Then 
\begin{align*}
    \hatSigma_N^{XY} - \hatSigma_{N-1}^{XY}
&\asymp 
\frac{1}{N} X_N Y_N^\top
    -\frac{1}{N} \hatSigma^{0,XY}_{N-1}
    -
    \frac{1}{N^2} X_N Y_N^\top 
    -
    \inparen{\inparen{\frac{N-1}{N}}^2-1}
    \barX_{N-1}\barY_{N-1}^\top \\
    &-
    \inparen{\frac{N-1}{N^2}}
    \inparen{
    X_N \barY_{N-1}^\top
    +
    \barX_{N-1} Y_N^\top
    },
\end{align*}
where $\hatSigma^{0,XY}_{N-1} = N^{-1}\sum_{n=1}^N X_nY_n$.
\end{lemma}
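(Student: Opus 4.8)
The plan is to transcribe, line for line, the proof of Lemma~\ref{lemma:SampleCovDeviationN}, replacing the second copy of the $X$-sequence by the $Y$-sequence throughout. First I would pass to the biased sample cross-covariance estimator $\frac1N\sum_{n=1}^N (X_n-\barX_N)(Y_n-\barY_N)^\top$, which coincides with $\hatSigma_N^{XY}$ up to the harmless constant factor $\tfrac{N-1}{N}$ (this is exactly what the $\asymp$ in the statement absorbs), and record the identity $\frac1N\sum_{n=1}^N (X_n-\barX_N)(Y_n-\barY_N)^\top = \hatSigma_N^{0,XY} - \barX_N\barY_N^\top$, where $\hatSigma_N^{0,XY} \equiv N^{-1}\sum_{n=1}^N X_nY_n^\top$. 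The whole computation then reduces to evaluating the one-step increment $\hatSigma_N^{XY}-\hatSigma_{N-1}^{XY}$ of this representation.

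The two inputs needed are the analogs of \eqref{eq:diffSampleMeans} and \eqref{eq:diffSampleCovars}. For the first, I would use the running-mean recursions $\barX_N = \tfrac1N X_N + \tfrac{N-1}{N}\barX_{N-1}$ and $\barY_N = \tfrac1N Y_N + \tfrac{N-1}{N}\barY_{N-1}$, multiply them out to get
\begin{align*}
  \barX_N\barY_N^\top
  = \frac{1}{N^2}X_NY_N^\top
  + \inparen{\frac{N-1}{N}}^2 \barX_{N-1}\barY_{N-1}^\top
  + \frac{N-1}{N^2}\inparen{X_N\barY_{N-1}^\top + \barX_{N-1}Y_N^\top},
\end{align*}
and subtract $\barX_{N-1}\barY_{N-1}^\top$. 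For the second, I would telescope the averaged outer-product sums exactly as in the symmetric case: since $\tfrac1N-\tfrac1{N-1} = -\tfrac1{N(N-1)}$, one has $\bigl(\tfrac1N-\tfrac1{N-1}\bigr)\sum_{n=1}^{N-1}X_nY_n^\top = -\tfrac1N\hatSigma_{N-1}^{0,XY}$, so that $\hatSigma_N^{0,XY}-\hatSigma_{N-1}^{0,XY} \asymp \tfrac1N X_NY_N^\top - \tfrac1N\hatSigma_{N-1}^{0,XY}$. Combining these two expressions through $\hatSigma_N^{XY} \asymp \hatSigma_N^{0,XY} - \barX_N\barY_N^\top$ yields the claimed formula.

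I do not expect any genuine obstacle: the argument is a routine algebraic transcription of the proof of Lemma~\ref{lemma:SampleCovDeviationN}, and the only things to watch are keeping the signs and the $N$-versus-$(N-1)$ normalizations straight and remembering that all equalities hold up to the universal constant implied by $\asymp$ (the biased/unbiased discrepancy). The one minor bookkeeping point specific to the non-symmetric setting is that $X_N\barY_{N-1}^\top$ and $\barX_{N-1}Y_N^\top$ are now distinct matrices rather than transposes of one another, but they still carry the common coefficient $\tfrac{N-1}{N^2}$ and therefore combine into the single bracketed term displayed in the statement.
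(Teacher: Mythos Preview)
Your proposal is correct and matches the paper's approach exactly: the paper simply states that the result follows by the same approach as in Lemma~\ref{lemma:SampleCovDeviationN} and omits the details. Your transcription of that argument with the second $X$-factor replaced by $Y$ is precisely what is intended.
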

\begin{proof}
    The result follows using the same approach utilized in the proof of Lemma~\ref{lemma:SampleCovDeviationN} and is omitted for brevity.
\end{proof}

\begin{lemma}[Covariance Estimation with Known Particle --- Operator-Norm Bound] \label{lemma:SampleCovDeviationNOpNorm}
Consider the set-up in Lemma~\ref{lemma:SampleCovDeviationN} and assume additionally that $X_n$ is known for some $n \in \{1,\dots, N\}$. Then with probability at least $1-ce^{-t}$
\begin{align*}
    \normn{\hatSigma^X_N - \Sigma^X}
&\lesssim
\frac{c(\normn{X_n}_2,\normn{\mu^X}_2)}{N}
    +
    \normn{\Sigma^X} \inparen{
    \sqrt{\frac{r_2(\Sigma^X)}{N}} 
    \lor \frac{r_2(\Sigma^X)}{N} 
    \lor \sqrt{\frac{t}{N}} 
    \lor \frac{t}{N}
    }.
\end{align*}
\end{lemma}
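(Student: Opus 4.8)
The plan is to separate the known particle from the rest and treat it deterministically, so that everything genuinely random reduces to the unstructured bounds already available for i.i.d.\ samples. Since the sample covariance is invariant under permuting the $X_i$, relabel so that the known particle is $X_N$. Then Lemma~\ref{lemma:SampleCovDeviationN} gives $\hatSigma_N^X = \hatSigma_{N-1}^X + \Delta$, where $\hatSigma_{N-1}^X$ is the sample covariance of $X_1,\dots,X_{N-1}$ and $\Delta$ is the explicit five-term expression of that lemma, each summand carrying a prefactor $1/N$ or $1/N^2$. By the triangle inequality it suffices to bound $\normn{\hatSigma_{N-1}^X - \Sigma^X}$ and $\normn{\Delta}$ separately, and the whole analysis is carried out conditionally on $X_N$.

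For the first term, $X_1,\dots,X_{N-1}$ are independent of $X_N$, so conditioning on $X_N$ does not change their law and Proposition~\ref{thm:Koltchinski} applies verbatim with $N-1$ samples; since the sample covariance is only defined for $N \ge 2$ we may use $N-1 \ge N/2$, and, together with the scale-invariance of $r_2$, this reproduces exactly the main term $\normn{\Sigma^X}\inparen{\sqrt{r_2(\Sigma^X)/N} \lor r_2(\Sigma^X)/N \lor \sqrt{t/N} \lor t/N}$ on an event of probability at least $1-ce^{-t}$.

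For $\normn{\Delta}$ I would bound each of the five summands in turn. The two summands built only from $X_N$ contribute at most $\normn{X_N}_2^2/N$ deterministically. For $-\tfrac1N\hatSigma^0_{N-1}$, write $\hatSigma^0_{N-1} \asymp \hatSigma^X_{N-1} + \barX_{N-1}\barX_{N-1}^\top$ and combine Lemma~\ref{lem:ForecastCovarianceBound} with the sub-Gaussian bound $\normn{\barX_{N-1}}_2 \lesssim \normn{\mu^X}_2 + \sqrt{\normn{\Sigma^X}\inparen{r_2(\Sigma^X)/N \lor t/N}}$ from Theorem~\ref{thm:SubGaussianConcentration}; the resulting contribution is $\lesssim \tfrac1N\inparen{\normn{\Sigma^X}\inparen{1 \lor r_2(\Sigma^X)/N \lor t/N} + \normn{\mu^X}_2^2}$, whose first part is dominated by the main term (e.g.\ $\normn{\Sigma^X}/N \le \normn{\Sigma^X}\,t/N$ for $t\ge1$) and whose second part is of the form $c(\normn{\mu^X}_2)/N$. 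The remaining two summands are quadratic in $\barX_{N-1}$, with coefficient $\bigl|(\tfrac{N-1}{N})^2-1\bigr|\le 2/N$, and bilinear in $X_N$ and $\barX_{N-1}$, with coefficient $\tfrac{N-1}{N^2}\le 1/N$; the same bound on $\normn{\barX_{N-1}}_2$ controls them. Intersecting the finitely many high-probability events invoked (using the union bound recalled at the start of the Appendix) and collecting terms yields the claim.

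I expect the only delicate point to be the bilinear cross term $\tfrac1N\normn{X_N}_2\sqrt{\normn{\Sigma^X}\inparen{r_2(\Sigma^X)/N \lor t/N}}$, where a crude bound would leave a spurious $t$- or $\normn{\Sigma^X}$-dependence attached to the $1/N$ piece. The remedy is an elementary AM--GM inequality: $\tfrac1N\normn{X_N}_2\sqrt{\normn{\Sigma^X}\,t/N} \le \tfrac{\normn{X_N}_2^2}{2N} + \tfrac{\normn{\Sigma^X}\,t}{2N^2} \le \tfrac{\normn{X_N}_2^2}{2N} + \tfrac12\normn{\Sigma^X}\tfrac{t}{N}$, which cleanly splits the cross term into a $c(\normn{X_N}_2)/N$ part and a part absorbed by the main bound, and the same works with $r_2(\Sigma^X)$ in place of $t$. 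Everything else is routine bookkeeping with $N-1 \asymp N$ and the elementary facts $a/N^2 \le a/N$ and $\sqrt{ab} \le a \lor b$ for $a,b\ge 0$.
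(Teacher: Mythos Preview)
Your proposal is correct and follows essentially the same route as the paper: relabel so the known particle is $X_N$, split $\hatSigma^X_N - \Sigma^X$ into $(\hatSigma^X_N - \hatSigma^X_{N-1}) + (\hatSigma^X_{N-1} - \Sigma^X)$, apply Proposition~\ref{thm:Koltchinski} to the second piece, and control the first piece term-by-term via Lemma~\ref{lemma:SampleCovDeviationN} together with Theorem~\ref{thm:SubGaussianConcentration} for $\barX_{N-1}$. The only tactical differences are that the paper bounds $\tfrac1N\normn{\hatSigma^0_{N-1}}$ by $\tfrac1N\normn{\hatSigma^0_{N-1}-\Sigma^X}+\tfrac1N\normn{\Sigma^X}$ rather than via your decomposition $\hatSigma^0_{N-1}\asymp\hatSigma^X_{N-1}+\barX_{N-1}\barX_{N-1}^\top$, and that your explicit AM--GM step for the bilinear cross term is a detail the paper absorbs silently into the final $\lesssim$.
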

\begin{proof}
    By symmetry, we can assume without loss of generality that $n = N$. Let $E_1$ denote the event on which $\normn{\barX_{N-1} - \mu^X}_2 \lesssim \sqrt{\normn{\Sigma^X}\frac{r_2(\Sigma^X) \lor t}{N-1}} \asymp  \sqrt{\normn{\Sigma^X}\frac{r_2(\Sigma^X) \lor t}{N}}$. Then by Theorem~\ref{thm:SubGaussianConcentration}, $\P(E_1) \ge 1-e^{-t}$ and on $E_1$ it holds that 
\begin{align}\label{eq:diffsCovarsOpNorm}
    \normn{\hatSigma^X_N - \hatSigma^X_{N-1}}
    &\lesssim 
    \frac{1}{N} \normn{X_N X_N^\top}
    +\frac{1}{N} \normn{\hatSigma^0_{N-1}}
    +\inparen{\inparen{\frac{N-1}{N}}^2-1}
    \normn{\barX_{N-1} \barX_{N-1}^\top}
    \nonumber \\
    &
    +\frac{N-1}{N^2} \inparen{ 
    \normn{X_{N} \barX_{N-1}^\top}+ \normn{\barX_{N-1} X_N^\top} 
    }\nonumber\\
    &\lesssim 
    \frac{1}{N} \normn{X_N}^2_2
    +\frac{1}{N} \normn{\hatSigma^0_{N-1}}
    +\frac{1}{N}
    \normn{\barX_{N-1}}^2_2
    +\frac{1}{N} 
    \normn{X_{N}}_2 \normn{\barX_{N-1}}_2
    \nonumber\\
    &\le
    \frac{1}{N} \normn{X_N}^2_2
    +\frac{1}{N} \normn{\hatSigma^0_{N-1} - \Sigma^X}
    +\frac{1}{N} \normn{\Sigma^X}
    +\frac{1}{N}
    \normn{\barX_{N-1}-\mu^X}^2_2
    +\frac{1}{N} \normn{\mu^X}^2_2 \nonumber\\
    &+\frac{1}{N} 
    \normn{X_{N}}_2 \normn{\barX_{N-1} - \mu^X}_2
    +\frac{1}{N}\normn{X_{N}}_2 \normn{\mu^X}_2
    \nonumber\\
    &\lesssim
    \frac{1}{N} \normn{X_N}^2_2
    +\frac{1}{N} \normn{\hatSigma^0_{N-1} - \Sigma^X}
    +\frac{1}{N} \normn{\Sigma^X}
    +
    \normn{\Sigma^X}\frac{r_2(\Sigma^X) \lor t}{N^2}
    +\frac{1}{N} \normn{\mu^X}^2_2 \nonumber\\
    &+
    \normn{X_{N}}_2 
    \sqrt{\normn{\Sigma^X}}\frac{\sqrt{r_2(\Sigma^X) \lor t}}{N^{3/2}}
    +\frac{1}{N}\normn{X_{N}}_2 \normn{\mu^X}_2,
\end{align}
where the first line follows by Lemma~\ref{lemma:SampleCovDeviationN}. Let $E_2$ denote the event on which
\begin{align*}
    \normn{\hatSigma^X_{N-1} - \Sigma^X} 
    &\lesssim \normn{\Sigma^X} \inparen{
    \sqrt{\frac{r_2(\Sigma^X)}{N-1}} \lor \frac{r_2(\Sigma^X)}{N-1} \lor \sqrt{\frac{t}{N-1}} \lor \frac{t}{N-1}
    }\\
    &\asymp \normn{\Sigma^X} \inparen{
    \sqrt{\frac{r_2(\Sigma^X)}{N}} 
    \lor \frac{r_2(\Sigma^X)}{N} 
    \lor \sqrt{\frac{t}{N}} 
    \lor \frac{t}{N}
    }.
\end{align*}
Then by Proposition~\ref{thm:Koltchinski}, $\P(E_2) \ge 1-ce^{-t}$. It holds on $E_1\cap E_2$ that 
\begin{align*}
    \normn{\hatSigma^X_N - \Sigma^X}
    &\le 
    \normn{\hatSigma^X_{N} - \hatSigma^X_{N-1}}
    +
    \normn{\hatSigma^X_{N-1} - \Sigma^X}\\
    &\lesssim
    \frac{1}{N} \normn{X_N}^2_2
    +\frac{1}{N} \normn{\hatSigma^0_{N-1} - \Sigma^X}
    +\frac{1}{N} \normn{\Sigma^X}
    +
    \normn{\Sigma^X}\frac{r_2(\Sigma^X) \lor t}{N^2}
    +\frac{1}{N} \normn{\mu^X}^2_2 \nonumber\\
    &+
    \normn{X_{N}}_2 
    \sqrt{\normn{\Sigma^X}}\frac{\sqrt{r_2(\Sigma^X) \lor t}}{N^{3/2}}
    +\frac{1}{N}\normn{X_{N}}_2 \normn{\mu^X}_2 
    +
    \normn{\Sigma^X} \inparen{
    \sqrt{\frac{r_2(\Sigma^X)}{N}} 
    \lor \frac{r_2(\Sigma^X)}{N} 
    \lor \sqrt{\frac{t}{N}} 
    \lor \frac{t}{N}
    }\\
    &\lesssim
    \frac{c(\normn{X_N}_2,\normn{\mu^X}_2)}{N}
    +
    \normn{\Sigma^X} \inparen{
    \sqrt{\frac{r_2(\Sigma^X)}{N}} 
    \lor \frac{r_2(\Sigma^X)}{N} 
    \lor \sqrt{\frac{t}{N}} 
    \lor \frac{t}{N}
    },
\end{align*}
where the first inequality holds by $\eqref{eq:diffsCovarsOpNorm}$. The result follows by noting that $\P(E_1 \cap E_2) \ge 1-ce^{-t}.$
\end{proof}

\begin{lemma}[Covariance Estimation with Known Particle --- Maximum-Norm Bound] \label{lemma:SampleCovDeviationNMaxNorm}
Consider the set-up in Lemma~\ref{lemma:SampleCovDeviationN} and assume additionally that $X_n$ is known for some $n \in \{1,\dots, N\}$. Then with probability at least $1-ce^{-t}$
\begin{align*}
    \normn{\hatSigma^X_N - \Sigma^X}_{\max}
&\lesssim
\frac{c(\normn{X_n}_\infty,\normn{\mu^X}_\infty)}{N}
    +
   \Sigma^X_{(1)}
    \inparen{
    \sqrt{\frac{r_\infty (\Sigma^X)}{N}}
    \lor 
    \sqrt{\frac{t}{N}}
    \lor 
    \frac{t}{N}
    \lor 
    \frac{t r_\infty(\Sigma^X)}{N}
    }.
\end{align*}
\end{lemma}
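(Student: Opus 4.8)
The plan is to mirror, in the maximum norm, the argument used for Lemma~\ref{lemma:SampleCovDeviationNOpNorm}. By permutation invariance of the ensemble we may assume without loss of generality that the known particle is $X_N$, and we split
\begin{align*}
\normn{\hatSigma^X_N - \Sigma^X}_{\max}
\le
\normn{\hatSigma^X_N - \hatSigma^X_{N-1}}_{\max}
+
\normn{\hatSigma^X_{N-1} - \Sigma^X}_{\max}.
\end{align*}
For the second term I would invoke Theorem~\ref{thm:MaxNormEffectiveDim} applied to the first $N-1$ samples (using $N-1 \asymp N$), which contributes, on an event $E_1$ of probability at least $1-ce^{-t}$, exactly the stated stochastic rate $\Sigma^X_{(1)}\bigl(\sqrt{r_\infty(\Sigma^X)/N}\lor\sqrt{t/N}\lor (t/N)\lor (tr_\infty(\Sigma^X)/N)\bigr)$.

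For the first term I would feed the algebraic decomposition of $\hatSigma^X_N - \hatSigma^X_{N-1}$ from Lemma~\ref{lemma:SampleCovDeviationN} into the maximum norm and bound each summand. Since $\normn{ab^\top}_{\max}\le \normn{a}_\infty\normn{b}_\infty$, the terms $\tfrac1N X_NX_N^\top$ and $\tfrac1{N^2}X_NX_N^\top$ are at most $\normn{X_N}_\infty^2/N$; the term $\tfrac1N\hatSigma^0_{N-1}$ is controlled by $\normn{\hatSigma^0_{N-1}-\Sigma^X}_{\max}/N + \Sigma^X_{(1)}/N$, where the centered max-norm deviation is handled by the centered bound \eqref{eq:E1forMaxNormBd} inside the proof of Theorem~\ref{thm:MaxNormEffectiveDim} and is of lower order once divided by $N$; and the remaining terms involve $\barX_{N-1}$. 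Writing $\barX_{N-1}=(\barX_{N-1}-\mu^X)+\mu^X$ and applying Lemma~\ref{lem:DimFreeGaussianMax} to the sub-Gaussian coordinates of $\sqrt{N-1}(\barX_{N-1}-\mu^X)$ (whose $\psi_2$-norms are of order $\sqrt{\Sigma^X_{jj}/N}$), one gets, on an event $E_2$ of probability at least $1-ce^{-t}$,
\begin{align*}
\normn{\barX_{N-1}-\mu^X}_\infty
\lesssim \sqrt{\frac{t\,\Sigma^X_{(1)}\,r_\infty(\Sigma^X)}{N}}.
\end{align*}
Hence $\tfrac1N\normn{\barX_{N-1}-\mu^X}_\infty^2 \lesssim t\,\Sigma^X_{(1)}\,r_\infty(\Sigma^X)/N^2$, which is dominated by the term $t\,r_\infty(\Sigma^X)/N$ in the claimed bound; the cross terms $\tfrac{N-1}{N^2}X_N\barX_{N-1}^\top$ and its transpose produce a factor $\normn{X_N}_\infty$ times $\normn{\barX_{N-1}}_\infty/N$, hence a contribution of order $\bigl(\normn{X_N}_\infty\normn{\mu^X}_\infty + \normn{X_N}_\infty\sqrt{t\Sigma^X_{(1)}r_\infty(\Sigma^X)/N}\bigr)/N$, all of which is absorbed into $c(\normn{X_N}_\infty,\normn{\mu^X}_\infty)/N$; and the $\bigl((\tfrac{N-1}{N})^2-1\bigr)\barX_{N-1}\barX_{N-1}^\top$ term carries an extra $1/N$ and is likewise lower order. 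Combining the contributions on $E_1\cap E_2$, which has probability at least $1-ce^{-t}$, yields the statement.

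The main obstacle, as in the operator-norm case, is the bookkeeping: one must verify that every term generated by the rank-one update in Lemma~\ref{lemma:SampleCovDeviationN} other than $\hatSigma^X_{N-1}-\Sigma^X$ is genuinely lower order, i.e. either decays at least like $N^{-3/2}$ (so that, after pulling out the now-deterministic factors $\normn{X_N}_\infty,\normn{\mu^X}_\infty$, it sits inside the $c(\cdot)/N$ remainder) or is dominated by the $t\,r_\infty(\Sigma^X)/N$ term. The only subtlety requiring care is the correct use of Lemma~\ref{lem:DimFreeGaussianMax} to obtain the $r_\infty(\Sigma^X)$-dependence of $\normn{\barX_{N-1}-\mu^X}_\infty$ in a dimension-free manner, rather than a crude $\sqrt{\log d}$ bound; everything else is routine triangle-inequality estimation together with the already-established Theorem~\ref{thm:MaxNormEffectiveDim}.
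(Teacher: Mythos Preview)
Your proposal is correct and follows essentially the same approach as the paper: split via $\hatSigma^X_{N-1}$, invoke Theorem~\ref{thm:MaxNormEffectiveDim} for the $\normn{\hatSigma^X_{N-1}-\Sigma^X}_{\max}$ piece, control $\normn{\barX_{N-1}-\mu^X}_\infty$ via Lemma~\ref{lem:DimFreeGaussianMax}, and then push the decomposition of Lemma~\ref{lemma:SampleCovDeviationN} through the max norm term by term. The only cosmetic difference is that the paper labels the two events in the opposite order; the substance is identical.
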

\begin{proof}
 As in the proof of Lemma~\ref{lemma:SampleCovDeviationNOpNorm}, we may assume that $n=N$. Let $E_1$ denote the event on which $\normn{\barX_{N-1} - \mu^X}_\infty \lesssim \sqrt{t \Sigma^X_{(1)} \frac{r_\infty(\Sigma^X) }{N-1}} \asymp  \sqrt{t \Sigma^X_{(1)} \frac{r_\infty(\Sigma^X) }{N}}$. Then by Lemma~\ref{lem:DimFreeGaussianMax}, $\P(E_1) \ge 1-ce^{-ct}$ and on $E_1$, using similar calculations to those used to derive \eqref{eq:diffsCovarsOpNorm}, it holds that
\begin{align}
    \normn{\hatSigma^X_N - \hatSigma^X_{N-1}}_{\max}
    &\lesssim
    \frac{1}{N} \normn{X_N}^2_\infty
    +\frac{1}{N} \normn{\hatSigma^0_{N-1} - \Sigma^X}_{\max}
    +\frac{1}{N} \normn{\Sigma^X}_{\max}
    +
    t \Sigma^X_{(1)} \frac{r_\infty(\Sigma^X) }{N^2}
    +\frac{1}{N} \normn{\mu^X}^2_\infty \nonumber\\
    &+
    \normn{X_{N}}_\infty 
    \sqrt{t \Sigma^X_{(1)}} \frac{\sqrt{r_\infty(\Sigma^X)} }{N^{3/2}}
    +\frac{1}{N}\normn{X_{N}}_\infty \normn{\mu^X}_\infty. \label{eq:diffsCovarsMaxNorm}
\end{align}
Let $E_2$ be the event on which 
\begin{align*}
    \normn{\hatSigma^X_{N-1} - \Sigma^X}_{\max} 
    \lesssim
    \Sigma^X_{(1)}
    \inparen{
    \sqrt{\frac{r_\infty (\Sigma^X)}{N}}
    \lor 
    \sqrt{\frac{t}{N}}
    \lor 
    \frac{t}{N}
    \lor 
    \frac{t r_\infty(\Sigma^X)}{N}
    }. 
\end{align*}
By Theorem~\ref{thm:MaxNormEffectiveDim}, $\P(E_2) \ge 1-ce^{-t}$. Finally, note that the desired result holds on $E_1 \cap E_2$ and that  $\P(E_1 \cap E_2) \ge 1-ce^{-t},$ which completes the proof. 
\end{proof}

\begin{lemma}[Cross-Covariance Estimation with Known Particle --- Maximum-Norm Bound] \label{lemma:SampleCrossCovDeviationNMaxNorm}
Consider the set-up in Lemma~\ref{lemma:SampleCrossCovDeviationN} and assume additionally that $(X_n,Y_n)$ is known for some $n \in \{1,\dots, N\}$. Then with probability at least $1-ce^{-t}$
\begin{align*}
    \normn{\hatSigma^{XY}_N - \Sigma^{XY}}_{\max}
&\lesssim
\frac{c(\normn{X_n}_\infty,\normn{\mu^X}_\infty,\normn{Y_n}_\infty,\normn{\mu^Y}_\infty)}{N}\\
    &+
   (\Sigma_{(1)}^X \lor \Sigma_{(1)}^Y)
        \inparen{ 
        \inparen{\frac{t}{N} \lor \sqrt{\frac{t}{N}}}
        \inparen{
        \sqrt{r_\infty(\Sigma^X)}
        \lor 
        \sqrt{r_\infty(\Sigma^Y)}
        }
        \lor 
        \sqrt{    \frac{ r_{\infty}(\Sigma^X)}{N}   } 
        \sqrt{    \frac{ r_{\infty}(\Sigma^Y)}{N}   }
        }.
\end{align*}
\end{lemma}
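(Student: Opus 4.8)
The plan is to mirror the proofs of Lemmas~\ref{lemma:SampleCovDeviationNOpNorm} and \ref{lemma:SampleCovDeviationNMaxNorm}, replacing the single-sequence objects by their two-sequence analogs and Theorem~\ref{thm:MaxNormEffectiveDim} by the max-norm cross-covariance bound of Theorem~\ref{thm:CrossCovarianceEffectiveDim}. Since the two samples are i.i.d., by permutation invariance we may assume without loss of generality that the known pair is indexed by $n=N$. The engine of the argument is the exact identity for $\hatSigma^{XY}_N-\hatSigma^{XY}_{N-1}$ provided by Lemma~\ref{lemma:SampleCrossCovDeviationN}, which expresses the increment as a sum of five terms, each built only from the fixed vectors $X_N,Y_N$, the running averages $\barX_{N-1},\barY_{N-1}$, and the running biased second-moment matrix $\hatSigma^{0,XY}_{N-1}$, and each carrying a prefactor of order $N^{-1}$ or $N^{-2}$.

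Concretely I would proceed as follows. Apply the triangle inequality $\normn{\hatSigma^{XY}_N-\Sigma^{XY}}_{\max}\le\normn{\hatSigma^{XY}_N-\hatSigma^{XY}_{N-1}}_{\max}+\normn{\hatSigma^{XY}_{N-1}-\Sigma^{XY}}_{\max}$ and bound the second summand directly by Theorem~\ref{thm:CrossCovarianceEffectiveDim} with sample size $N-1\asymp N$; this already yields the structured part of the claimed bound on an event $E_0$ of probability at least $1-ce^{-t}$. For the first summand, bound each of the five terms in $\normn{\cdot}_{\max}$ via $\normn{ab^\top}_{\max}\le\normn{a}_\infty\normn{b}_\infty$: the terms in $X_NY_N^\top$ are of order $\normn{X_N}_\infty\normn{Y_N}_\infty/N$ and feed the $c(\cdot)/N$ term; the term in $\hatSigma^{0,XY}_{N-1}$ is split as $\normn{\hatSigma^{0,XY}_{N-1}-\Sigma^{XY}}_{\max}+\normn{\Sigma^{XY}}_{\max}$, the first piece again controlled by Theorem~\ref{thm:CrossCovarianceEffectiveDim} (and trivially dominated after the extra $N^{-1}$), and $N^{-1}\normn{\Sigma^{XY}}_{\max}\le N^{-1}(\Sigma^X_{(1)}\lor\Sigma^Y_{(1)})$ absorbed into the structured term because $t\ge1$; and the terms containing $\barX_{N-1}$ or $\barY_{N-1}$ are treated by writing $\barX_{N-1}=(\barX_{N-1}-\mu^X)+\mu^X$, and likewise for $Y$, and invoking the dimension-free sub-Gaussian maxima bound of Lemma~\ref{lem:DimFreeGaussianMax}, which on events $E_1,E_2$ of probability at least $1-ce^{-t}$ gives $\normn{\barX_{N-1}-\mu^X}_\infty\lesssim\sqrt{t\,\Sigma^X_{(1)}r_\infty(\Sigma^X)/N}$ and the analogue for $Y$. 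Intersecting $E_0\cap E_1\cap E_2$ and using the union bound recalled at the start of the Appendix delivers the claim.

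The main obstacle --- exactly as in the single-sequence Lemmas~\ref{lemma:SampleCovDeviationNOpNorm} and \ref{lemma:SampleCovDeviationNMaxNorm}, whose proofs simply assert it --- is the bookkeeping required to check that every mixed term produced above, such as $N^{-1}\normn{\barX_{N-1}-\mu^X}_\infty\normn{\barY_{N-1}-\mu^Y}_\infty$ and $N^{-1}\normn{X_N}_\infty\normn{\barY_{N-1}-\mu^Y}_\infty$ (and their symmetric counterparts and lower-order $N^{-2}$ versions), is bounded, up to universal constants, either by $c(\normn{X_n}_\infty,\normn{\mu^X}_\infty,\normn{Y_n}_\infty,\normn{\mu^Y}_\infty)/N$ or by the structured term in the statement. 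Each such term is split into a ``constant-over-$N$'' piece and a ``spectral'' piece by repeated use of the elementary inequality $\sqrt{ab}\le a\lor b$ (equivalently $ab\le a^2\lor b^2$), choosing the factorization so that all dependence on $\normn{X_N}_\infty,\normn{Y_N}_\infty$ lands in the first piece and all dependence on $\Sigma^X_{(1)},\Sigma^Y_{(1)},r_\infty(\Sigma^X),r_\infty(\Sigma^Y)$ in the second, which then matches a term already present in Theorem~\ref{thm:CrossCovarianceEffectiveDim}. This is routine but somewhat tedious, and is the only step of real substance in the argument.
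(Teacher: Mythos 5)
Your proposal reproduces the paper's argument: the paper's own proof of this lemma is a one-line pointer ("the result follows using the same approach utilized in the proof of [the single-sequence lemma] and utilizing the statements of Lemma~\ref{lemma:SampleCrossCovDeviationN} and Theorem~\ref{thm:CrossCovarianceEffectiveDim}; we omit the details for brevity"), and your outline is exactly that plan carried out: WLOG $n=N$, split $\hatSigma^{XY}_N-\Sigma^{XY}$ into $(\hatSigma^{XY}_N-\hatSigma^{XY}_{N-1})+(\hatSigma^{XY}_{N-1}-\Sigma^{XY})$, control the second piece by Theorem~\ref{thm:CrossCovarianceEffectiveDim} with sample size $N-1\asymp N$, expand the first via the algebraic identity of Lemma~\ref{lemma:SampleCrossCovDeviationN}, use $\normn{ab^\top}_{\max}\le\normn{a}_\infty\normn{b}_\infty$ and Lemma~\ref{lem:DimFreeGaussianMax} on $\barX_{N-1}-\mu^X$ and $\barY_{N-1}-\mu^Y$, and intersect the (constantly many) good events. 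You also correctly identify the only real work as the bookkeeping that sorts the resulting cross terms between the $c(\normn{X_n}_\infty,\normn{\mu^X}_\infty,\normn{Y_n}_\infty,\normn{\mu^Y}_\infty)/N$ bucket and the spectral bucket — this is precisely the step the paper elides, so if anything your write-up is more explicit than the source.
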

\begin{proof}
    The result follows using the same approach utilized in the proof of Lemma~\ref{lemma:SampleCovDeviationN} and utilizing the statements of Lemma~\ref{lemma:SampleCrossCovDeviationN} and Theorem~\ref{thm:CrossCovarianceEffectiveDim}. We omit the details for brevity.
\end{proof}

\begin{lemma}[Covariance Estimation with Localized Sample Covariance and with Known Particle ---Operator-Norm Bound]\label{lem:SoftSparsityCovarianceBoundGeneralConditional}
Consider the set-up in Theorem~\ref{thm:SoftSparsityCovarianceBoundGeneral} and assume additionally that $X_n$ is known for some $n \in \{1,\dots,N\}$. For any $t \ge 1$, set 
\begin{align*}
    \rho_N \asymp 
    \frac{c(\normn{X_n}_\infty,\normn{\mu^X}_\infty)}{N}
    +
    \Sigma^X_{(1)}
    \inparen{
    \sqrt{\frac{r_\infty (\Sigma^X)}{N}}
    \lor 
    \sqrt{\frac{t}{N}}
    \lor 
    \frac{t}{N}
    \lor 
    \frac{t r_\infty(\Sigma^X)}{N}
    }
\end{align*}
and let $\hatSigma_{\rho_N}^X$ be the localized sample covariance estimator. There exists a constant $c>0$ such that, with probability at least $1-ce^{-t}$, it holds that
\begin{align*}
    \normn{\hatSigma_{\rho_N}^X - \Sigma^X } \lesssim R_q \rho_N^{1-q}.
\end{align*}
\end{lemma}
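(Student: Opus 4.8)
The plan is to reduce the statement to the unconditional Theorem~\ref{thm:SoftSparsityCovarianceBoundGeneral} by observing that its proof uses the sample covariance only through the max-norm deviation bound, and that the known-particle analogue of that bound is exactly Lemma~\ref{lemma:SampleCovDeviationNMaxNorm}. So I would run the argument of Theorem~\ref{thm:SoftSparsityCovarianceBoundGeneral} (equivalently \cite[Theorem 6.27]{wainwright2019high}) verbatim, substituting Lemma~\ref{lemma:SampleCovDeviationNMaxNorm} for Theorem~\ref{thm:MaxNormEffectiveDim} as the probabilistic input. The extra $c(\normn{X_n}_\infty,\normn{\mu^X}_\infty)/N$ summand appearing in the prescribed $\rho_N$ is precisely the correction that Lemma~\ref{lemma:SampleCovDeviationNMaxNorm} produces (it is obtained there by isolating, via Lemma~\ref{lemma:SampleCovDeviationN} and the dimension-free maxima bound of Lemma~\ref{lem:DimFreeGaussianMax}, the perturbation caused by conditioning on the known particle into a separate $O(1/N)$ term).

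Concretely, the first step is to let $E$ denote the event on which
\begin{align*}
    \normn{\hatSigma^X_N - \Sigma^X}_{\max}
    \lesssim
    \frac{c(\normn{X_n}_\infty,\normn{\mu^X}_\infty)}{N}
    +
    \Sigma^X_{(1)}
    \inparen{
    \sqrt{\tfrac{r_\infty (\Sigma^X)}{N}}
    \lor
    \sqrt{\tfrac{t}{N}}
    \lor
    \tfrac{t}{N}
    \lor
    \tfrac{t r_\infty(\Sigma^X)}{N}
    }
    \asymp \rho_N ,
\end{align*}
which has probability at least $1-ce^{-t}$ by Lemma~\ref{lemma:SampleCovDeviationNMaxNorm}. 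The remaining work is deterministic and is carried out on $E$: for each row $i$, split the columns into $\mcI_i(\rho_N/2) = \{ j : |\Sigma^X_{ij}| \ge \rho_N/2\}$ and its complement. On $\mcI_i(\rho_N/2)$, the triangle inequality together with $\abs{\hatSigma^X_{ij} - \hatSigma^X_{ij}\indicator_{|\hatSigma^X_{ij}|\ge\rho_N}} \le \rho_N$ bounds the row contribution by $\tfrac{3}{2}\rho_N |\mcI_i(\rho_N/2)|$, and the soft-sparsity membership $\Sigma^X \in \msU_d(q,R_q)$ gives $|\mcI_i(\rho_N/2)| \le 2^q \rho_N^{-q} R_q$. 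Off $\mcI_i(\rho_N/2)$, the bound $|\Sigma^X_{ij}|\le\rho_N/2$ plus $\normn{\hatSigma^X_N-\Sigma^X}_{\max}\lesssim\rho_N$ forces $\hatSigma^X_{ij}\indicator_{|\hatSigma^X_{ij}|\ge\rho_N}=0$, and the $\ell_q$-tail estimate $\sum_{j\notin\mcI_i(\rho_N/2)}|\Sigma^X_{ij}| \le (\rho_N/2)^{1-q}\sum_j |\Sigma^X_{ij}|^q \le \rho_N^{1-q}R_q$ controls the rest. Summing and maximizing over $i$ yields $\normn{\hatSigma^X_{\rho_N} - \Sigma^X} \le \normn{\hatSigma^X_{\rho_N} - \Sigma^X}_\infty \lesssim R_q \rho_N^{1-q}$, which is the claim.

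I do not anticipate a genuine obstacle here: the deterministic thresholding step uses only $\normn{\hatSigma^X - \Sigma^X}_{\max}\lesssim\rho_N$ and $\Sigma^X\in\msU_d(q,R_q)$, neither of which is altered by conditioning on $X_n$, and the one place the known particle enters — the form of $\rho_N$ — has already been handled inside Lemma~\ref{lemma:SampleCovDeviationNMaxNorm}. Thus the proof is essentially a one-line reduction to that lemma followed by the same computation as in Theorem~\ref{thm:SoftSparsityCovarianceBoundGeneral}; the only care needed is to verify that the $O(1/N)$ correction propagates through the thresholding bound, which it does since it is absorbed into $\rho_N$ before the argument begins.
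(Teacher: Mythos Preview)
Your proposal is correct and matches the paper's approach exactly: the paper's proof consists of the single sentence that the argument follows identically to Theorem~\ref{thm:SoftSparsityCovarianceBoundGeneral}, with Lemma~\ref{lemma:SampleCovDeviationNMaxNorm} substituted for Theorem~\ref{thm:MaxNormEffectiveDim} as the max-norm input. You have in fact spelled out more of the deterministic thresholding computation than the paper bothers to repeat.
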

\begin{proof}
The proof follows in identical fashion to that of Theorem~\ref{thm:SoftSparsityCovarianceBoundGeneral}, except that we now use the max-norm bound established in Lemma~\ref{lemma:SampleCovDeviationNMaxNorm} in place of Theorem~\ref{thm:MaxNormEffectiveDim}.
\end{proof}

\begin{lemma}[Cross-Covariance Estimation with Localized Sample Covariance and with Known Particle ---Operator-Norm Bound]\label{lem:SoftSparsityCrossCovarianceBoundGeneralConditional}
    Consider the set-up in Theorem~\ref{thm:SoftSparsityCrossCovarianceBoundGeneric} and assume additionally that $(X_n,Y_n)$ is known for some $n \in \{1,\dots, N\}$. For any $t \ge 1$, set 
    \begin{align*}
        \rho_N &\asymp         \frac{c(\normn{X_n}_\infty,\normn{\mu^X}_\infty,\normn{Y_n}_\infty,\normn{\mu^Y}_\infty)}{N}\\
    &+
   (\Sigma_{(1)}^X \lor \Sigma_{(1)}^Y)
        \inparen{ 
        \inparen{\frac{t}{N} \lor \sqrt{\frac{t}{N}}}
        \inparen{
        \sqrt{r_\infty(\Sigma^X)}
        \lor 
        \sqrt{r_\infty(\Sigma^Y)}
        }
        \lor 
        \sqrt{    \frac{ r_{\infty}(\Sigma^X)}{N}   } 
        \sqrt{    \frac{ r_{\infty}(\Sigma^Y)}{N}   }
        }.
    \end{align*}
    There exists positive universal constants $c_1,c_2$ such that, with probability at least $1-c_1 e^{-c_2 t}$,
    \begin{align*}
        \normn{\hatSigma^{XY}_{\rho_N} - \Sigma^{XY} } \lesssim R_{q_1}\rho_N^{1-q_1} \lor R_{q_2} \rho_N^{1-q_2}.
    \end{align*}
\end{lemma}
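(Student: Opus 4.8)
The plan is to transcribe, nearly verbatim, the argument proving Theorem~\ref{thm:SoftSparsityCrossCovarianceBoundGeneric}, changing only the max-norm concentration ingredient. The key observation is that the prescribed localization radius $\rho_N$ coincides, up to absolute constants, with the high-probability upper bound on $\normn{\hatSigma^{XY}_N - \Sigma^{XY}}_{\max}$ furnished by Lemma~\ref{lemma:SampleCrossCovDeviationNMaxNorm}, i.e.\ the known-particle analog of the max-norm bound of Theorem~\ref{thm:CrossCovarianceEffectiveDim}. Accordingly, I would let $E$ denote the event on which $\normn{\hatSigma^{XY}_N - \Sigma^{XY}}_{\max} = \normn{\hatSigma^{YX}_N - \Sigma^{YX}}_{\max} \lesssim \rho_N$; by Lemma~\ref{lemma:SampleCrossCovDeviationNMaxNorm} this gives $\P(E) \ge 1 - c_1 e^{-c_2 t}$. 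Note that a single invocation of that lemma simultaneously controls both max-norm deviations, since $\hatSigma^{YX}_N - \Sigma^{YX}$ is the transpose of $\hatSigma^{XY}_N - \Sigma^{XY}$; this is what lets the two deterministic thresholding arguments below run on one and the same event.

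Conditional on $E$, the remainder is purely deterministic and identical to the thresholding step in the proof of Theorem~\ref{thm:SoftSparsityCovarianceBoundGeneral}. Applying that elementwise argument with $\hatSigma^{XY}$, $\Sigma^{XY}$ in the roles of $\hatSigma^X$, $\Sigma^X$, and using the hypothesis $\Sigma^{XY} \in \mcU_{d,k}(q_1, R_{q_1})$, I would obtain $\normn{\hatSigma^{XY}_{\rho_N} - \Sigma^{XY}}_\infty \lesssim R_{q_1}\rho_N^{1-q_1}$; applying it once more with $\hatSigma^{YX}$, $\Sigma^{YX}$ and the hypothesis $\Sigma^{YX} \in \mcU_{k,d}(q_2, R_{q_2})$ yields $\normn{\hatSigma^{YX}_{\rho_N} - \Sigma^{YX}}_\infty \lesssim R_{q_2}\rho_N^{1-q_2}$. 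The splitting of each row into entries of $\Sigma^{XY}$ (resp.\ $\Sigma^{YX}$) above and below level $\rho_N/2$ carries through verbatim, because the only property of $\rho_N$ used there is that $\normn{\hatSigma^{XY} - \Sigma^{XY}}_{\max} \lesssim \rho_N$ on $E$, which is unaffected by the additive $c/N$ inflation present in the definition of $\rho_N$.

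Finally, I would pass from these two $\ell_\infty$ (maximum absolute row-sum) bounds to the stated operator-norm bound via the dilation operator $\mcH$ from the proof of Theorem~\ref{thm:CrossCovarianceEffectiveDim}, exactly as in the proof of Theorem~\ref{thm:SoftSparsityCrossCovarianceBoundGeneric}: on $E$,
\begin{align*}
\normn{\hatSigma^{XY}_{\rho_N} - \Sigma^{XY}} &= \normn{\mcH(\hatSigma^{XY}_{\rho_N} - \Sigma^{XY})} \le \normn{\mcH(\hatSigma^{XY}_{\rho_N} - \Sigma^{XY})}_\infty \\
&= \normn{\hatSigma^{XY}_{\rho_N} - \Sigma^{XY}}_\infty \lor \normn{\hatSigma^{YX}_{\rho_N} - \Sigma^{YX}}_\infty \lesssim R_{q_1}\rho_N^{1-q_1} \lor R_{q_2}\rho_N^{1-q_2}.
\end{align*}
Since $\P(E) \ge 1 - c_1 e^{-c_2 t}$, this establishes the claim.

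I do not anticipate any genuine obstacle here: the entire argument is a direct adaptation of two proofs already in hand (Theorems~\ref{thm:SoftSparsityCovarianceBoundGeneral} and~\ref{thm:SoftSparsityCrossCovarianceBoundGeneric}) with a single substituted input (Lemma~\ref{lemma:SampleCrossCovDeviationNMaxNorm} in place of Theorem~\ref{thm:CrossCovarianceEffectiveDim}). The only two points needing a sentence of justification are the simultaneous control of the two transposed max-norm deviations on a single event, and the insensitivity of the deterministic thresholding bound to the $c/N$ term in $\rho_N$; both are immediate. Everything else is a transcription, so the proof in the paper can legitimately be stated as ``follows in identical fashion to the proof of Theorem~\ref{thm:SoftSparsityCrossCovarianceBoundGeneric}, using the max-norm bound of Lemma~\ref{lemma:SampleCrossCovDeviationNMaxNorm}.''
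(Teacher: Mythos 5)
Your argument is exactly the paper's: substitute Lemma~\ref{lemma:SampleCrossCovDeviationNMaxNorm} for Theorem~\ref{thm:CrossCovarianceEffectiveDim} to get the high-probability event, then re-run the deterministic thresholding from Theorem~\ref{thm:SoftSparsityCovarianceBoundGeneral} on both $\hatSigma^{XY}$ and $\hatSigma^{YX}$ and pass through the dilation operator as in Theorem~\ref{thm:SoftSparsityCrossCovarianceBoundGeneric}. The paper states this in one sentence; your version makes explicit the two points worth a remark (simultaneous control of the transposed max-norm deviation on a single event, and the insensitivity of the thresholding step to the additive $c/N$ term), both of which are correct.
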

\begin{proof}
The proof follows in identical fashion to that of Theorem~\ref{thm:SoftSparsityCovarianceBoundGeneral}, except that we now use the max-norm bound established in Lemma~\ref{lemma:SampleCrossCovDeviationNMaxNorm} in place of Theorem~\ref{thm:MaxNormEffectiveDim}.
\end{proof}

\nc

\subsection{Proof of Main Results in Section \ref{sec:withlocalization}} \label{sec:localizationProofs}

\begin{proof}[Proof of Theorem~\ref{thm:EKI}]
    First, we may write  
    \begin{align}\label{eq:EKI1}
        \normn{ \upost_n - \upost_n^* }_2
        & = \norm{ \bigl(y-\mcG(\upr_n) -\eta_n \bigr) \bigl(\msP(\hatCpr^{\,up}, \hatCpr^{\,pp}) - \msP(\Cpr^{\,up}, \Cpr^{\,pp})\bigr)}_2 \nonumber \\
        &\le  \normn{y-\mcG(\upr_n) -\eta_n}_2 \normn{\msP(\hatCpr^{\,up}, \hatCpr^{\,pp}) - \msP(\Cpr^{\,up}, \Cpr^{\,pp})}_2.
    \end{align}
    For the second term in \eqref{eq:EKI1}, it follows by Lemma~\ref{lem:EKIopCtyBdd} that
    \begin{align*}
        \normn{\msP(\hatCpr^{\,up}, \hatCpr^{\,pp}) - \msP(\Cpr^{\,up}, \Cpr^{\,pp}))}_2
        &\le 
        \normn{\Gamma^{-1}} 
        \normn{\hatCpr^{\,up}-C^{\,up}} 
        +
        \normn{\Gamma^{-1}}^2 \normn{\Cpr^{\,up}}
        \normn{\hatCpr^{\,pp}-\Cpr^{\,pp}}.
    \end{align*}
    In order to control the two deviation terms, we write  $W_i \equiv [\upr_i^\top, \mcG^\top(\upr_i)]^\top$ for $1 
    \le i   \le N$. Further, let
    \begin{align*}
        \hatCpr^W = \frac{1}{N-1} \sum_{i=1}^N (W_i - \barW_N)(W_i - \barW_N)^\top,
        \qquad 
        \Cpr^W = \begin{bmatrix}
        \Cpr & \Cpr^{\, up}\\
        \Cpr^{\, pu} & \Cpr^{\, pp}
        \end{bmatrix}.
    \end{align*}
    with $\barW_N=[\hatm^\top, \overline{\mcG}^\top]^\top$ and $\overline{\mcG}$ the sample mean of $\{\mcG(u_n)\}_{n=1}^N.$
    Since $\upr \sim \mcN(m, \Cpr)$ and $\mcG$ is Lipschitz, by Gaussian concentration \cite[Theorem 5.2.2]{vershynin2018high} it holds that $\normn{\mcG(\upr) - \E[ \mcG(\upr)]}_{\psi_2} \le \normn{\mcG}_{\text{Lip}} \normn{\Cpr}^{1/2}$ and we can apply Lemma~\ref{lemma:SampleCovDeviationNOpNorm}. Letting $E_1$ be the event on which 
    \begin{align*}
        \normn{\hatCpr^{\,up} - C^{\,up}} \lor \normn{\hatCpr^{\,pp} - \Cpr^{\,pp}}
        &\le 
        \normn{\hatCpr^W - C^W} \\
        &\lesssim 
        \frac{c(\normn{W_n}, \normn{\E[W_n]})}{N}
        +
        \normn{C^W} 
        \biggl( \sqrt{\frac{r_2(C^W)}{N}}
        \lor \frac{r_2(C^W)}{N} 
        \lor \sqrt{\frac{t}{N}}
        \lor \frac{t}{N} 
        \biggr),
    \end{align*}
    then
    Lemma~\ref{lemma:SampleCovDeviationNOpNorm} ensures that
    $\P(E_1) \ge 1-c_1e^{-c_2t}$. It follows that on the event $E_1$, we also have 
    \begin{align*}
        \normn{\msP(\hatCpr^{\, up}, \hatCpr^{\, pp}) - \msP(\Cpr^{\, up}, \Cpr^{\,pp}))}_2
        \lesssim 
             \normn{\Gamma^{-1}} (1 \lor \normn{\Cpr^{\, up}})
        \normn{C^W} \biggl( \sqrt{\frac{r_2(C^W)}{N}}\lor \frac{r_2(C^W)}{N} 
        \lor \sqrt{\frac{t}{N}}
        \lor \frac{t}{N} 
        \biggr).
    \end{align*}
    The expression can be simplified by noting that since $C^W \succeq 0$, $\normn{C^W} \le \normn{C}+\normn{\Cpr^{\,pp}}$ and further since $\ttrace(C^W) = \ttrace(C) + \ttrace(\Cpr^{\,pp})$
    \begin{align*}
        &\normn{C^W} \inparen{\sqrt{\frac{r_2(C^W)}{N}}
        \lor \frac{r_2(C^W)}{N}
        \lor \sqrt{\frac{t}{N}}
        \lor \frac{t}{N} }
        \\
        &\lesssim 
        (\normn{C} \lor \normn{\Cpr^{\,pp}})
        \inparen{
        \sqrt{\frac{\ttrace(C) + \ttrace(\Cpr^{\,pp})}{N(\normn{C} \lor \normn{\Cpr^{\,pp}})} } 
        \lor 
        \frac{\ttrace(C) + \ttrace(\Cpr^{\,pp})}{N(\normn{C} \lor \normn{\Cpr^{\,pp}}) }
        \lor \sqrt{\frac{t}{N}}
        \lor \frac{t}{N} } 
        \\
        &\lesssim 
        (\normn{C} \lor \normn{\Cpr^{\,pp}})
        \inparen{
        \sqrt{\frac{r_2(C)}{N}}
        \lor 
        \frac{r_2(C)}{N}
        \lor 
        \sqrt{\frac{r_2(\Cpr^{\,pp})}{N}}
        \lor 
        \frac{r_2(\Cpr^{\,pp})}{N}
        \lor \sqrt{\frac{t}{N}}
        \lor \frac{t}{N} 
        },
    \end{align*}
    where the last inequality follows by similar reasoning to that used in the proof of Lemma~\ref{lem:ForecastNoiseCovarianceBound}.
\end{proof}

\begin{proof}[Proof of Theorem~\ref{thm:LEKI}]
    As in the proof of Theorem~\ref{thm:EKI}, we have that 
    \begin{align*}
        \normn{ \upost_n^{\rho} - \upost_n^* }_2
        \le  \normn{y-\mcG(\upr_n) -\eta_n}_2 
        \normn{\msP(\hatCpr^{\,up}_{\rho_N}, \hatCpr^{\,pp}_{\rho_N}) - 
        \msP(\Cpr^{\,up}, \Cpr^{\,pp})}_2.
    \end{align*}
    Further, by Lemma~\ref{lem:EKIopCtyBdd},
    \begin{align*}
        \normn{\msP(\hatCpr^{\,up}_{\rho_N}, \hatCpr^{\,pp}_{\rho_N}) - \msP(\Cpr^{\,up}, \Cpr^{\,pp})}_2
        &\le 
        (\normn{\Gamma^{-1}} \lor \normn{\Gamma^{-1}}^2) 
        (1 \lor \normn{\Cpr^{\, up}})
        (\normn{\hatCpr^{\,up}_{\rho_N}-C^{\,up}} 
        +
        \normn{\hatCpr^{\,pp}_{\rho_N}-\Cpr^{\,pp}}). 
    \end{align*}
     Let $E_1$ denote the event on which 
    \begin{align*}
        \normn{\hatCpr^{\,up}_{\rho_N}-C^{\,up}} 
        \lesssim 
        R_{q_1} \rho_{N, 1}^{1-q_1}
        \lor
        R_{q_2} \rho_{N, 2}^{1-q_2}
        .
    \end{align*}
    By Lemma~\ref{lem:SoftSparsityCrossCovarianceBoundGeneralConditional} 
    , $E_1$ has probability at least $1-c_1 e^{-c_2t}$. Let $E_2$ be the event on which
    \begin{align*}
        \normn{\hatCpr^{\,pp}_{\rho_N}-C^{\,pp}} 
        \lesssim R_{q_3} \rho_{N, 3}^{1-q_3}.
    \end{align*}
     By Lemma~\ref{lem:SoftSparsityCovarianceBoundGeneralConditional}, $E_2$ has probability at least $1-c_1 e^{-c_2t}$. Therefore, $E = E_1 \cap E_2$ has probability at least $1-c_1e^{-c_2 t}$, and on $E$ it holds that 
    \begin{equation*}
         \normn{\msP(\hatCpr^{\,up}_{\rho_N}, \hatCpr^{\,pp}_{\rho_N}) - \msP(\Cpr^{\,up}, \Cpr^{\,pp})}_2
        \lesssim
        (\normn{\Gamma^{-1}} \lor \normn{\Gamma^{-1}}^2) 
        (1 \lor \normn{\Cpr^{\, up}}) 
        (
        R_{q_1} \rho_{N, 1}^{1-q_1} 
        + R_{q_2} \rho_{N, 2}^{1-q_2}
        + R_{q_3} \rho_{N, 3}^{1-q_3}
        ). \qedhere
    \end{equation*}
\end{proof}
\nc

 \section{Proofs:  Section 4}\label{sec:AuxillaryResultsAppendix}
This appendix contains the proofs of the auxiliary results discussed in Section~\ref{sec:conclusions}.

\begin{lemma} [Kalman Gain Deviation with Localization] \label{thm:KalmanGainDeviationLoc}
   Let $\upr_1,\dots, \upr_N$ be $d$-dimensional i.i.d. sub-Gaussian random vectors with $\E [\upr_1] = m$ and $\E \bigl[ (\upr_1-m)(\upr_1-m)^\top \bigr] = \Cpr$. Assume further that $\Cpr \in \msU_d(q, R_q)$ for some $q \in [0,1)$ and $R_q>0$. For any $t\ge 1$, set
    \begin{align*}
        \rho_N \asymp 
        \Cpr_{(1)}
         \inparen{
         \sqrt{\frac{r_{\infty}(\Cpr)}{N} } 
         \lor \sqrt{\frac{t}{N}}
         \lor \frac{t}{N}
         \lor 
         \frac{t r_{\infty}(\Cpr)}{N}}
    \end{align*}
    and let $\hatCpr_{\rho_N}$ be the localized sample covariance estimator. There exists a positive universal constant $c$ such that, with probability at least $1-ce^{-t}$, 
    \begin{align*}
        \normn{\msK(\hatCpr_{\rho_N}) - \msK(\hatCpr)} 
        \lesssim
        \normn{A} \normn{\Gamma^{-1}}
        R_q (1+\normn{A}^2 \normn{\Gamma^{-1}}\normn{\Cpr}) \rho_N^{1-q}.
    \end{align*}
\end{lemma}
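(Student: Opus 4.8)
The plan is to reduce everything to the continuity estimate for the Kalman gain operator in Lemma~\ref{lem:KalmanOpCtyBdd}. Applying that bound with $P = \hatCpr$ and $Q = \hatCpr_{\rho_N}$ gives
\[
\normn{\msK(\hatCpr_{\rho_N}) - \msK(\hatCpr)} \le \normn{\hatCpr_{\rho_N} - \hatCpr}\,\normn{A}\normn{\Gamma^{-1}}\bigl(1 + \min(\normn{\hatCpr}, \normn{\hatCpr_{\rho_N}})\,\normn{A}^2\normn{\Gamma^{-1}}\bigr),
\]
so it suffices to show, on a single event of probability at least $1 - ce^{-t}$, that (i) $\normn{\hatCpr_{\rho_N} - \hatCpr} \lesssim R_q\rho_N^{1-q}$ and (ii) $\min(\normn{\hatCpr}, \normn{\hatCpr_{\rho_N}}) \lesssim \normn{\Cpr}$; substituting these and relabelling constants then yields the stated inequality.

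Part (ii) I would handle directly: the prescribed $\rho_N$ is exactly the hypothesis of Theorem~\ref{thm:SoftSparistyCovarianceBound}, so $\normn{\hatCpr_{\rho_N} - \Cpr} \lesssim R_q\rho_N^{1-q}$ with probability at least $1-ce^{-t}$, whence $\normn{\hatCpr_{\rho_N}} \le \normn{\Cpr} + R_q\rho_N^{1-q} \lesssim \normn{\Cpr}$ after absorbing the lower-order term (alternatively, one bounds $\normn{\hatCpr}$ via Lemma~\ref{lem:ForecastCovarianceBound}).

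Part (i) is the heart of the proof, and I would obtain it by mirroring the thresholding argument in the proof of Theorem~\ref{thm:SoftSparsityCovarianceBoundGeneral}, with $\hatCpr$ playing the role that $\Cpr$ plays there. First condition on the event $\{\normn{\hatCpr - \Cpr}_{\max} \lesssim \rho_N\}$, which has probability at least $1-ce^{-t}$ by Theorem~\ref{thm:MaxNormEffectiveDim} (the prescribed $\rho_N$ is precisely that max-norm rate). On this event, estimate $\normn{\hatCpr_{\rho_N} - \hatCpr} \le \normn{\hatCpr_{\rho_N} - \hatCpr}_\infty = \max_i \sum_j |\hatCpr_{ij}|\,\indicator_{|\hatCpr_{ij}| < \rho_N}$ and split each row sum at the soft-sparsity threshold $\rho_N/2$ on $|\Cpr_{ij}|$: the number of ``on-support'' indices $j$ with $|\Cpr_{ij}| \ge \rho_N/2$ is at most $2^q R_q\rho_N^{-q}$, each contributing at most $\rho_N$, while for the complementary indices $\sum_{j : |\Cpr_{ij}| < \rho_N/2} |\Cpr_{ij}| \le (\rho_N/2)^{1-q}R_q$, and the max-norm bound lets one reabsorb the surviving fluctuations of $\hatCpr$ there; each row sum is thus $\lesssim R_q\rho_N^{1-q}$.

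Finally, intersect the two events by a union bound and collect the constants from Lemma~\ref{lem:KalmanOpCtyBdd}. The main obstacle is Part (i): the thresholding bookkeeping must be arranged so that the small-magnitude sampling fluctuations present in the (possibly many) near-zero entries of $\hatCpr$ are reabsorbed into $R_q\rho_N^{1-q}$ through the $\ell_q$ constraint on the rows of $\Cpr$, rather than producing a dimension-dependent $\ell_1$-type term; once this deviation bound is secured, the remainder is a routine application of the Kalman gain continuity estimate.
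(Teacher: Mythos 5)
Your proposal has a genuine gap in Part (i), and the root cause is that you have taken the statement literally: the lemma as printed compares $\msK(\hatCpr_{\rho_N})$ with $\msK(\hatCpr)$, but the paper's own proof compares it with $\msK(\Cpr)$ --- it applies Lemma~\ref{lem:KalmanOpCtyBdd} with $P=\Cpr$ and then invokes Theorem~\ref{thm:SoftSparistyCovarianceBound} to control $\normn{\hatCpr_{\rho_N}-\Cpr}\lesssim R_q\rho_N^{1-q}$. The appearance of $\hatCpr$ rather than $\Cpr$ in the displayed statement is almost certainly a typo (the proof text writes $\normn{\hatCpr_{\rho_N}-\Cpr}$, and the companion Theorem~\ref{th:SRcovwithloc} likewise compares $\msC(\hatCpr_{\rho_N})$ to $\msC(\Cpr)$). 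That choice of reference point is not cosmetic but essential: with $P=\Cpr$, the bound $\min(\normn{\Cpr},\normn{\hatCpr_{\rho_N}})\le\normn{\Cpr}$ holds deterministically, and the whole proof is a two-line combination of existing results.

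The quantity you try to control, $\normn{\hatCpr_{\rho_N}-\hatCpr}$, cannot in general be bounded by $R_q\rho_N^{1-q}$, and your argument breaks precisely where you flag the danger. On the off-support indices you need $\sum_{j:|\Cpr_{ij}|<\rho_N/2}|\hatCpr_{ij}|$, and while $\sum_{j:|\Cpr_{ij}|<\rho_N/2}|\Cpr_{ij}|\le R_q(\rho_N/2)^{1-q}$, the residual $\sum_{j:|\Cpr_{ij}|<\rho_N/2}\bigl(|\hatCpr_{ij}|-|\Cpr_{ij}|\bigr)$ has summands of size $\lesssim\rho_N$ at each of up to $d$ indices, with no cancellation mechanism and no $\ell_q$ control over the fluctuations $\hatCpr_{ij}-\Cpr_{ij}$ (soft sparsity constrains $\Cpr$, not the noise). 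This produces exactly the dimension-dependent $\mathcal{O}(d\rho_N)$ term you were hoping to reabsorb. Nor can you retreat to $\normn{\hatCpr_{\rho_N}-\hatCpr}\le\normn{\hatCpr_{\rho_N}-\Cpr}+\normn{\hatCpr-\Cpr}$: the second summand is the unlocalized operator-norm error of Proposition~\ref{thm:Koltchinski}, which is in general much larger than $R_q\rho_N^{1-q}$ --- indeed, this discrepancy is the entire motivation for localization. Once the reference point is corrected to $\Cpr$, the remainder of your outline (Kalman-gain continuity plus a union bound) matches the paper's proof.
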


\begin{proof}
    By Lemma~\ref{lem:KalmanOpCtyBdd} and Theorem~\ref{thm:SoftSparistyCovarianceBound}, it follows immediately that
    \begin{align*}
        \normn{\msK(\hatCpr_{\rho_N}) - \msK(\hatCpr)} 
        &\le
        \normn{A} \normn{\Gamma^{-1}}
        \normn{\hatCpr_{\rho_N} - \Cpr}
        (1+\normn{A}^2 \normn{\Gamma^{-1}}\normn{\Cpr})\\
        &\lesssim \normn{A} \normn{\Gamma^{-1}}
        R_q \rho_N^{1-q}
        (1+\normn{A}^2 \normn{\Gamma^{-1}}\normn{\Cpr}). \qedhere
    \end{align*} 
\end{proof}

\begin{theorem} [Square Root Ensemble Kalman Covariance Deviation with Localization] \label{th:SRcovwithloc}
 Consider the localized SR ensemble Kalman update given by \eqref{eq:locSRupdate}, leading to an estimate $\hatCpost$ of the posterior covariance $\Cpost$ defined in \eqref{eq:KFmeancovariance}. Assume that $\Cpr \in \msU_{d}(q, R_q)$ for $q \in [0,1)$ and $R_q>0$. For any $t\ge 1$, set
    \begin{align*}
        \rho_N \asymp 
        \Cpr_{(1)}
         \inparen{
         \sqrt{\frac{r_{\infty}(\Cpr)}{N} } 
         \lor \sqrt{\frac{t}{N}}
         \lor \frac{t}{N}
         \lor 
         \frac{t r_{\infty}(\Cpr)}{N}}.
    \end{align*}
    There exists a positive universal constant $c$ such that, with probability at least $1-ce^{-t}$,
    \begin{align*}
        \normn{\hatCpost - \Cpost} 
        &\lesssim 
        R_q\rho_N^{1-q}
        \inparen{ 1 + 
        \normn{A}^2 \normn{\Gamma^{-1}}
        \inparen{2\normn{\Cpr} + R_q\rho_N^{1-q}}
        + \normn{A}^4 \normn{\Gamma^{-1}}^2 \normn{\Cpr}(\normn{\Cpr}+R_q\rho_N^{1-q}) }.
    \end{align*}
\end{theorem}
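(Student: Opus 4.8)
The plan is to derive the bound as an essentially immediate consequence of the Lipschitz-type continuity of the covariance-update operator $\msC$ (Lemma~\ref{lem:CovarOpCtyBdd}) combined with the localized covariance estimation bound of Theorem~\ref{thm:SoftSparistyCovarianceBound}. Since the localized SR update produces $\hatCpost = \msC(\hatCpr_{\rho_N})$ and the true posterior covariance is $\Cpost = \msC(\Cpr)$, the starting point is the continuity estimate
\begin{align*}
    \normn{\msC(\hatCpr_{\rho_N}) - \msC(\Cpr)}
    \le \normn{\hatCpr_{\rho_N} - \Cpr}
    \Bigl( 1 + \normn{A}^2\normn{\Gamma^{-1}}\bigl(\normn{\hatCpr_{\rho_N}} + \normn{\Cpr}\bigr) + \normn{A}^4\normn{\Gamma^{-1}}^2 \normn{\hatCpr_{\rho_N}}\normn{\Cpr} \Bigr),
\end{align*}
applied with $Q = \hatCpr_{\rho_N}$ and $P = \Cpr$.

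First I would invoke Theorem~\ref{thm:SoftSparistyCovarianceBound}: note that the choice of $\rho_N$ in the present statement coincides (up to reordering of the maxima) with the one required there, so with probability at least $1-ce^{-t}$ we have $\normn{\hatCpr_{\rho_N} - \Cpr} \lesssim R_q \rho_N^{1-q}$. The whole remainder of the argument is then deterministic on this event. Next I would control $\normn{\hatCpr_{\rho_N}}$ by the triangle inequality, $\normn{\hatCpr_{\rho_N}} \le \normn{\hatCpr_{\rho_N} - \Cpr} + \normn{\Cpr} \lesssim R_q\rho_N^{1-q} + \normn{\Cpr}$. Substituting these two bounds into the continuity estimate and collecting terms yields
\begin{align*}
    \normn{\hatCpost - \Cpost}
    \lesssim R_q \rho_N^{1-q}
    \Bigl( 1 + \normn{A}^2\normn{\Gamma^{-1}}\bigl(2\normn{\Cpr} + R_q\rho_N^{1-q}\bigr) + \normn{A}^4\normn{\Gamma^{-1}}^2\normn{\Cpr}\bigl(\normn{\Cpr} + R_q\rho_N^{1-q}\bigr) \Bigr),
\end{align*}
which is exactly the claimed inequality.

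There is no genuine obstacle here: the theorem is a packaging of Lemma~\ref{lem:CovarOpCtyBdd} and Theorem~\ref{thm:SoftSparistyCovarianceBound}, and the only point requiring a line of care is checking that $\rho_N$ as prescribed here is admissible for Theorem~\ref{thm:SoftSparistyCovarianceBound} (it is, since the two expressions for $\rho_N$ agree). The contrast with the SR result in Theorem~\ref{lem:covwithoutlocExpectation} is that here the leading error rate is $R_q\rho_N^{1-q}$ with $\rho_N$ governed by the max-log effective dimension $r_\infty(\Cpr)$ rather than by $r_2(\Cpr)$, which is the source of the improvement under soft sparsity; I would remark on this but it requires no additional computation.
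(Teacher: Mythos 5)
Your proof is correct and follows essentially the same path as the paper's: the continuity estimate for $\msC$ from Lemma~\ref{lem:CovarOpCtyBdd}, the high-probability bound $\normn{\hatCpr_{\rho_N}-\Cpr}\lesssim R_q\rho_N^{1-q}$ from the localized covariance estimation result, and the triangle inequality for $\normn{\hatCpr_{\rho_N}}$. (You cite Theorem~\ref{thm:SoftSparistyCovarianceBound} where the paper's proof cites Theorem~\ref{thm:SoftSparsityCrossCovarianceBoundGeneric}, which appears to be a typographical slip in the paper; your reference is the appropriate one.)
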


\begin{proof}
    For the localized SR update we have $\hatCpost = \msC(\hatCpr_{\rho_N})$. From Lemma~\ref{lem:CovarOpCtyBdd}, the continuity of $\msC$ implies that 
    \begin{align*}
        \normn{\msC(\hatCpr_{\rho_N}) - \msC(\Cpr)} 
        &\le \normn{\hatCpr_{\rho_N} - \Cpr} 
        \Bigl( 1 + 
        \normn{A}^2 \normn{\Gamma^{-1}} \bigl( \normn{\hatCpr_{\rho_N}} + \normn{\Cpr} \bigr)
        + \normn{A}^4 \normn{\Gamma^{-1}}^2\normn{\hatCpr_{\rho_N}} \normn{\Cpr} \Bigr).
    \end{align*}
    Let $E$ denote the event on which 
    \begin{align*}
        \normn{\hatCpr_{\rho_N} - \Cpr} \lesssim R_q\rho_N^{1-q}.
    \end{align*}
    By Theorem~\ref{thm:SoftSparsityCrossCovarianceBoundGeneric}, $E$ has probability at least $1-ce^{-t}$. It also holds on $E$ that 
    \begin{align*}
        \normn{\hatCpr_{\rho_N}} 
        \le 
        \normn{\Cpr} + \normn{\hatCpr_{\rho_N} - \Cpr} 
        \lesssim \normn{\Cpr} + R_q\rho_N^{1-q}.
    \end{align*}
    Therefore, it holds on $E$ that
    \begin{align*}
        \normn{\hatCpost - \Cpost} 
        &\lesssim 
        R_q\rho_N^{1-q}
        \inparen{ 1 + 
        \normn{A}^2 \normn{\Gamma^{-1}}
        \inparen{2\normn{\Cpr} + R_q\rho_N^{1-q}}
        + \normn{A}^4 \normn{\Gamma^{-1}}^2 \normn{\Cpr}(\normn{\Cpr}+ R_q\rho_N^{1-q}) }.\qedhere 
    \end{align*}
\end{proof}

\end{appendix}

\end{document}